\newcommand{\extraEdits}[1]{#1}
\newtheoremstyle{mystyle}%                % Name
  {}%                                     % Space above
  {}%                                     % Space below
  {\itshape}%                                     % Body font
  {}%                                     % Indent amount
  {\bfseries}%                            % Theorem head font
  {.}%                                    % Punctuation after theorem head
  { }%                                    % Space after theorem head, ' ', or \newline
  {\thmname{#1}\thmnumber{ #2}\thmnote{ (#3)}}% % Theorem head spec (can be left empty, meaning `normal')
\theoremstyle{mystyle}
\newtheorem{theorem}{Theorem}
\newtheorem{corollary}[theorem]{Corollary}
\newtheorem{lemma}[theorem]{Lemma}
\newtheorem{definition}[theorem]{Definition}
\newtheorem{proposition}[theorem]{Proposition}
\newtheorem{remark}[theorem]{Remark}
\newtheorem{example}[theorem]{Example}
\newcommand{\cf}{\emph{cf.}\xspace}
\newcommand{\bdmath}{\begin{dmath}}
\newcommand{\edmath}{\end{dmath}}
\newcommand{\beq}{\begin{equation}}
\newcommand{\eeq}{\end{equation}}
\newcommand{\bdm}{\begin{displaymath}}
\newcommand{\edm}{\end{displaymath}}
\newcommand{\bea}{\begin{eqnarray}}
\newcommand{\eea}{\end{eqnarray}}
\newcommand{\beal}{\beq \begin{array}{ll}}
\newcommand{\eeal}{\end{array} \eeq}
\newcommand{\beas}{\begin{eqnarray*}}
\newcommand{\eeas}{\end{eqnarray*}}
\newcommand{\ba}{\begin{array}}
\newcommand{\ea}{\end{array}}
\newcommand{\bit}{\begin{itemize}}
\newcommand{\eit}{\end{itemize}}
\newcommand{\ben}{\begin{enumerate}}
\newcommand{\een}{\end{enumerate}}
\newcommand{\expl}[1]{&&\qquad\text{\color{gray}(#1)}\nonumber}
\newcommand{\calA}{{\cal A}}
\newcommand{\calB}{{\cal B}}
\newcommand{\calC}{{\cal C}}
\newcommand{\calE}{{\cal E}}
\newcommand{\calF}{{\cal F}}
\newcommand{\calG}{{\cal G}}
\newcommand{\calH}{{\cal H}}
\newcommand{\calI}{{\cal I}}
\newcommand{\calJ}{{\cal J}}
\newcommand{\calL}{{\cal L}}
\newcommand{\calM}{{\cal M}}
\newcommand{\calN}{{\cal N}}
\newcommand{\calO}{{\cal O}}
\newcommand{\calS}{{\cal S}}
\newcommand{\calU}{{\cal U}}
\newcommand{\calV}{{\cal V}}
\newcommand{\calW}{{\cal W}}
\newcommand{\calX}{{\cal X}}
\newcommand{\calZ}{{\cal Z}}
\newcommand{\etal}{\emph{et~al.}\xspace}
\newcommand{\setal}{~\emph{et~al.}\xspace}
\newcommand{\eg}{\emph{e.g.,}\xspace}
\newcommand{\ie}{\emph{i.e.,}\xspace}
\newcommand{\myParagraph}[1]{{\bf #1.}\xspace}
\newcommand{\M}[1]{{\bm #1}} % Face for matrices
\renewcommand{\boldsymbol}[1]{{\bm #1}}
\newcommand{\hide}[1]{}
\newcommand{\wrt}{w.r.t.\xspace}
\newcommand{\hiddenText}{{\color{gray} hidden text.}}
\newcommand{\hideWithText}[1]{\hiddenText}
\newcommand{\kron}{\otimes}
\newcommand{\subject}{\text{ subject to }}
\DeclareMathOperator*{\argmin}{arg\,min}
\newcommand{\normsq}[2]{\left\|#1\right\|^2_{#2}}
\newcommand{\tran}{^{\mathsf{T}}}
\newcommand{\diag}[1]{\mathrm{diag}\left(#1\right)}
\newcommand{\trace}[1]{\mathrm{tr}\left(#1\right)}
\newcommand{\rank}[1]{\mathrm{rank}\left(#1\right)}
\newcommand{\inv}{^{-1}}
\newcommand{\ones}{{\mathbf 1}}
\newcommand{\zero}{{\mathbf 0}}
\newcommand{\eye}{{\mathbf I}}
\newcommand{\Real}[1]{ { {\mathbb R}^{#1} } }
\newcommand{\setdef}[2]{ \{#1 \; {:} \; #2 \} }
\newcommand{\SOthree}{\ensuremath{\mathrm{SO}(3)}\xspace}
\newcommand{\MA}{\M{A}}
\newcommand{\MB}{\M{B}}
\newcommand{\MJ}{\M{J}}
\newcommand{\MM}{\M{M}}
\newcommand{\MP}{\M{P}}
\newcommand{\MQ}{\M{Q}}
\newcommand{\MU}{\M{U}}
\newcommand{\MR}{\M{R}}
\newcommand{\MS}{\M{S}}
\newcommand{\MI}{\M{I}}
\newcommand{\MV}{\M{V}}
\newcommand{\MH}{\M{H}}
\newcommand{\MX}{\M{X}}
\newcommand{\MY}{\M{Y}}
\newcommand{\MW}{\M{W}}
\newcommand{\MZ}{\M{Z}}
\newcommand{\MOmega}{\M{\Omega}}
\newcommand{\MDelta}{\M{\Delta}}
\newcommand{\MLambda}{\M{\Lambda}}
\newcommand{\va}{\boldsymbol{a}} 
\newcommand{\vb}{\boldsymbol{b}}
\newcommand{\ve}{\boldsymbol{e}}
\newcommand{\vo}{\boldsymbol{o}}
\newcommand{\vq}{\boldsymbol{q}}
\newcommand{\vr}{\boldsymbol{r}}
\newcommand{\vu}{\boldsymbol{u}}
\newcommand{\vv}{\boldsymbol{v}}
\newcommand{\vt}{\boldsymbol{t}}
\newcommand{\vxx}{\boldsymbol{x}}
\newcommand{\vepsilon}{\boldsymbol{\epsilon}}
\newcommand{\scenario}[1]{{\smaller \sf#1}\xspace}
\newcommand{\MOSEK}{\scenario{MOSEK}}
\newcommand{\cvx}{{\sf cvx}\xspace}
\newcommand{\blue}[1]{{\color{blue}#1}}
\newcommand{\green}[1]{{\color{green}#1}}
\newcommand{\red}[1]{{\color{red}#1}}
\newcommand{\linkToPdf}[1]{\href{#1}{\blue{(pdf)}}}
\newcommand{\linkToPpt}[1]{\href{#1}{\blue{(ppt)}}}
\newcommand{\linkToCode}[1]{\href{#1}{\blue{(code)}}}
\newcommand{\linkToWeb}[1]{\href{#1}{\blue{(web)}}}
\newcommand{\linkToVideo}[1]{\href{#1}{\blue{(video)}}}
\newcommand{\linkToMedia}[1]{\href{#1}{\blue{(media)}}}
\newcommand{\award}[1]{\xspace} % {{\red{#1}}} % omit awards
\newcommand{\revone}[1]{#1\xspace}
\newcommand{\GNC}{\scenario{GNC}} 
\newcommand{\Eigen}{\scenario{Eigen3}}
\newcommand{\maxOutliers}{{99\%}\xspace}
\newcommand{\maxOutliersRot}{{90\%}\xspace}
\newcommand{\betaNoise}{\sigma}
\newcommand{\TWlong}{\TLS rotation estimation\xspace}
\newcommand{\psdCone}[1]{\calS^{#1}_{+}}
\newcommand{\QCQP}{QCQP\xspace}
\newcommand{\TLS}{\scenario{TLS}}
\newcommand{\TIM}{\scenario{TIM}}
\newcommand{\TIMs}{\scenario{TIMs}}
\newcommand{\TRIM}{\scenario{TRIM}}
\newcommand{\TRIMs}{\scenario{TRIMs}}
\newcommand{\bnb}{BnB\xspace}
\newcommand{\SPC}{SPC\xspace}
\newcommand{\TIMa}{\bar{\va}}
\newcommand{\TIMb}{\bar{\vb}}
\newcommand{\hats}{\hat{s}}
\newcommand{\hatMR}{\hat{\MR}}
\newcommand{\hatvt}{\hat{\vt}}
\newcommand{\hatvq}{\hat{\vq}}
\newcommand{\hattheta}{\hat{\theta}}
\newcommand{\barvxx}{\bar{\vxx}}
\newcommand{\qProd}{\circ}
\newcommand{\MZero}{\M{0}}
\newcommand{\barMQ}{\bar{\MQ}}
\newcommand{\eps}{{\epsilon}}
\newcommand{\veps}{\M{\epsilon}}
\newcommand{\nrPoints}{N}
\newcommand{\nrTIM}{K}
\newcommand{\sumAllPointsi}{\sum_{i=1}^{N}}
\newcommand{\sumAllPointsk}{\sum_{k=1}^{K}}
\newcommand{\sumAllIM}{\sum_{k=1}^{K}}
\newcommand{\barc}{\bar{c}}
\newcommand{\barcsq}{\barc^2}
\newcommand{\hatOmega}{\hat{\MOmega}}
\newcommand{\supp}{{Supplementary Material}\xspace}
\newcommand{\nameRotation}{\scenario{QUASAR}}
\newcommand{\name}{\scenario{TEASER}}
\newcommand{\namepp}{\scenario{TEASER++}}
\newcommand{\nameLong}{Truncated least squares Estimation And SEmidefinite Relaxation\xspace}
\newcommand{\mcis}{\scenario{MCIS}}
\newcommand{\FGR}{\scenario{FGR}}
\newcommand{\GORE}{\scenario{GORE}} 
\newcommand{\goICP}{\scenario{Go-ICP}} 
\newcommand{\ransac}{\scenario{RANSAC}}
\newcommand{\ransaconek}{\scenario{RANSAC1K}}
\newcommand{\ransaconemin}{\scenario{RANSAC1min}}
\newcommand{\ransactenk}{\scenario{RANSAC10K}}
\newcommand{\ICP}{\scenario{ICP}} 
\newcommand{\matchTD}{\scenario{3DMatch}}
\newcommand{\SIFT}{\scenario{SIFT}} 
\newcommand{\ORB}{\scenario{ORB}} 
\newcommand{\bunny}{\scenario{Bunny}} 
\newcommand{\armadillo}{\scenario{Armadillo}} 
\newcommand{\dragon}{\scenario{Dragon}}
\newcommand{\buddha}{\scenario{Buddha}}
\newcommand{\hatvb}{\hat{\vb}}
\newcommand{\hatva}{\hat{\va}}
\newcommand{\vSkew}[1]{[#1]_{\times}}
\newcommand{\bmat}{\left[ \begin{array}}
\newcommand{\emat}{\end{array} \right]}
\newcommand{\barva}{\bar{\va}}
\newcommand{\barvb}{\bar{\vb}}
\newcommand{\vxi}{\boldsymbol{\xi}}
\newcommand{\half}{\frac{1}{2}}
\newcommand{\quarter}{\frac{1}{4}}
\newcommand{\vphi}{\boldsymbol{\phi}}
\newcommand{\edit}[1]{#1\xspace}
\newcommand{\frob}{{\small \text{F}}}
\newcommand{\gt}{^\circ}
\newcommand{\sgt}{s^\circ}
\newcommand{\MRgt}{\MR^\circ}
\newcommand{\vtgt}{\vt^\circ}
\newcommand{\Fig}[1]{Fig.~\ref{#1}\xspace}
\newcommand{\resF}{\vr}
\newcommand{\conSet}{\calI}
\newcommand{\nrIn}{{N}_{\mathrm{in}}}
\newcommand{\nrOut}{{N}_{\mathrm{out}}}
\newcommand{\resInMC}{r^{\maxCon}_{\mathrm{in}}}
\newcommand{\nrOutMC}{\nrOut^\maxCon}
\newcommand{\nrInMC}{\nrIn^\maxCon}
\newcommand{\maxCon}{\scenario{MC}}
\newcommand{\maxConLong}{Consensus Maximization\xspace}
\newcommand{\ssym}{\mathfrak{S}}
\newcommand{\SOd}{\text{SO}(d)}
\newcommand{\usphere}{\mathbb{S}}
\newcommand{\TIMNoiseBound}{\delta} %{\bar{\beta}}
\newcommand{\hatvxx}{\hat{\vxx}}
\newcommand{\hatmu}{\hat{\mu}}
\renewcommand{\MDelta}{\boldsymbol{\Delta}}
\newcommand{\bareta}{\bar{\eta}}
\newcommand{\barcalH}{\bar{\calH}}
\newcommand{\blkdiag}{\text{blkdiag}}
\newcommand{\hatMZ}{\hat{\MZ}}
\newcommand{\barMM}{\bar{\MM}}
\newcommand{\barMDelta}{\bar{\MDelta}}
\newcommand{\hatxi}{\hat{\vxi}}
\newcommand{\barMH}{\bar{\MH}}
\newcommand{\vPPhi}{\boldsymbol{\Phi}}
\newcommand{\psdSub}{\psdCone{}}
\newcommand{\affineSub}{\bar{\calL}}
\newcommand{\affineSubNonRot}{\calL}
\newcommand{\urlTEASER}{{{\tt \smaller\url{https://github.com/MIT-SPARK/TEASER-plusplus}}}}
\newcommand{\urlVideo}{{\tt \smaller\url{https://youtu.be/xib1RSUoeeQ}}}
\newcommand{\sRt}{\substack{ s > 0,  \MR \in \SOthree, \vt \in \Real{3}}}
\newcommand{\setCon}{\bar{\calC}}
\newcommand{\sizeCon}{\bar{N}_{\mathrm{in}}}
\newcommand{\setTrueIn}{\calI}
\newcommand{\sizeTrueIn}{N_I}
\newcommand{\setFakeIn}{\calJ}
\newcommand{\sizeFakeIn}{N_J}
\newcommand{\tgt}{t^{\circ}}
\newcommand{\hatt}{\hat{t}}
\newcommand{\myProb}{\mathbb{P}}
\newcommand{\myData}{\mathbb{D}}
\newcommand{\myAlgo}{\mathbb{A}}
\newcommand{\myQuantile}{\gamma}
\newcommand{\arxivCite}{~in~\cite{Yang20arxiv-teaser}}
\newcommand{\myBlocks}{\calZ}
\newcommand{\gtadj}{ground-truth\xspace}
\newcommand{\bounds}{\zeta^s}
\newcommand{\boundR}{\zeta^R}
\newcommand{\boundt}{\zeta^t}
\newcommand{\vxxstar}{\vxx^\star}
\newcommand{\vqstar}{\vq^\star}
\newcommand{\DRS}{\scenario{DRS}}
\newcommand{\magenta}[1]{\textcolor{magenta}{#1}}
\newcommand{\tlssdp}{\scenario{TLS: SDP}}
\newcommand{\tlsgnc}{\scenario{TLS: GNC}}
\newcommand{\gnc}{\GNC}
\newcommand{\tls}{\TLS}
\newcommand{\teaser}{\name}
\newcommand{\spacebeforesection}{\vspace{-4mm}}
\newrobustcmd*{\mytriangle}[1]{\tikz{\filldraw[draw=#1,fill=#1] (0,0) --
(0.2cm,0) -- (0.1cm,0.2cm);}}
\renewcommand{\expl}[1]{\text{\small\color{gray}(#1)}\nonumber\\}
\renewcommand{\subject}{\text{ s.t. }}
\newcommand{\optional}[2]{#2\xspace} % 
\newcommand{\spaceBeforeSection}{\vspace{-2mm}}
\newcommand{\myparagraph}[1]{{\bf #1.}}
\newcommand{\isExtended}[2]{{#1}\xspace} % EXTENDED
\newcommand{\isTwoCols}[2]{#1\xspace} % 2col appendix
\title{\huge{\name: Fast and Certifiable Point Cloud Registration}}
\author{Heng Yang, Jingnan Shi, Luca Carlone \vspace{-7mm}
\thanks{H.\,Yang, J.\,Shi, and L.\,Carlone are with the Laboratory for 
Information \& Decision Systems (LIDS), Massachusetts Institute of Technology, Cambridge, MA 02139, USA, Email: {\sf \{hankyang,jnshi,lcarlone\}@mit.edu}}
\thanks{The authors would like to thank the associate editor and the anonymous reviewers for their constructive feedback, and \'Alvaro Parra and Russ Tedrake for providing comments on an early draft of this paper. This work was partially funded by ARL DCIST CRA W911NF-17-2-0181, ONR RAIDER
N00014-18-1-2828, Lincoln Laboratory ``Resilient Perception in Degraded Environments'', 
and the Google Daydream Research Program.}
}
\begin{document}

\maketitle

\begin{tikzpicture}[overlay, remember picture]
\path (current page.north east) ++(-4.4,-0.4) node[below left] {
This paper has been accepted for publication in the IEEE Transactions on Robotics.
};
\end{tikzpicture}
\begin{tikzpicture}[overlay, remember picture]
\path (current page.north east) ++(-6.5,-0.8) node[below left] {
Please cite the paper as: H. Yang, J. Shi, and L. Carlone,
};
\end{tikzpicture}
\begin{tikzpicture}[overlay, remember picture]
\path (current page.north east) ++(-3.1,-1.2) node[below left] {
``TEASER: Fast and Certifiable Point Cloud Registration'', \emph{IEEE Transactions on Robotics (T-RO)}, 2020.
};
\end{tikzpicture}
% \begin{tikzpicture}[overlay, remember picture]
% \path (current page.north east) ++(-7.5,-1.4) node[below left] {
 
% };
% \end{tikzpicture}

%%%%%%%%%%%%%%%%%%%%%%%%%%%%%%%%%%%%
%!TEX root = ../main.tex

\begin{abstract}
We propose the first fast and certifiable algorithm  
for the registration of two sets of 3D points  in the presence of large amounts of outlier correspondences.
A \emph{certifiable algorithm} is one that attempts to solve an intractable optimization problem (\eg robust estimation with outliers) and provides readily checkable conditions to {verify} if the returned solution is optimal (\eg if the algorithm produced the most accurate estimate in the face of outliers) or bound its sub-optimality or accuracy.
%if the algorithm was able to produce the most accurate estimate in the face of outliers) or bound its sub-optimality or accuracy. % reject all outliers and preserve all inliers).
% Our approach is able to compute accurate solutions and verify their optimality 

Towards this goal, we first reformulate the registration problem using a \emph{Truncated Least Squares} (\TLS) cost that makes the estimation insensitive to a large fraction of spurious correspondences.  % point-to-point
Then, we provide a general graph-theoretic framework to decouple scale, rotation, and translation estimation,
% rotation, translation, and scale estimation, 
which allows solving in cascade for the three transformations. 
Despite the fact that each subproblem (scale, rotation, and translation estimation) is still non-convex and combinatorial in nature, 
we show that (i) \TLS scale and (component-wise) translation estimation can be solved in polynomial time via an \emph{adaptive voting} scheme, (ii) \TLS rotation estimation can be relaxed to a semidefinite program (SDP) and the relaxation is tight, even in the presence of extreme outlier rates, \revone{and (iii) the graph-theoretic framework allows drastic pruning of outliers by finding the maximum clique.}
We name the resulting algorithm \name (\emph{\nameLong}). While solving large SDP relaxations is typically slow, we develop a 
second fast and certifiable algorithm, named \namepp, \revone{that uses \emph{graduated non-convexity} to solve the rotation subproblem and leverages \emph{Douglas-Rachford Splitting} to efficiently certify global optimality.}
% that circumvents the need to solve an SDP and runs in  \emph{milliseconds}.}
%  fast verification technique 
% that enables the design of a real-time registration algorithm, named \namepp, that runs in \emph{milliseconds}.

For both algorithms, we provide theoretical bounds on the estimation errors, which are 
the first of their kind for robust registration problems. 
Moreover, we test their performance on standard benchmarks, object detection datasets, and 
the \emph{\matchTD} scan matching dataset, and show that (i) both algorithms dominate the state of the art (\eg{ }\ransac, branch-\&-bound, heuristics) and are robust to more than $99\%$ outliers \revone{when the scale is known}, (ii) \namepp can run in milliseconds and it is currently the fastest robust registration algorithm, and (iii) \namepp is so robust it can also solve problems without correspondences (\eg hypothesizing all-to-all correspondences) where it largely outperforms \ICP and \revone{it is more accurate than \goICP while being orders of magnitude faster.}
We release a fast open-source C++ implementation of \namepp. 

\end{abstract}
% \vspace{-3mm}
\begin{keywords}
3D registration, scan matching, point cloud alignment, robust estimation, 
certifiable algorithms, outliers-robust estimation, object pose estimation, 3D robot vision.
\end{keywords}

\vspace{-2mm}
\section*{Supplementary Material}
\vspace{-1mm}
\begin{itemize}
  \item Video: {\smaller \urlVideo}
  \item Code: {\smaller \urlTEASER}
\end{itemize}
\vspace{-2mm}

%%%%%%%%%%%%%%%%%%%%%%%%%%%%%%%%%%%%
% \LC{fix typo in TEASER algorithm2i-1}

%!TEX root = ../main.tex

\section{Introduction}
\label{sec:intro}

%==========================
%!TEX root = ../main.tex

\newcommand{\myhspacet}{\hspace{-4mm}}
\newcommand{\myhspace}{\hspace{-2mm}}
\newcommand{\myvspace}{\hspace{-4mm}}
\newcommand{\mpwt}{2.8cm}
\newcommand{\mpw}{4cm}

\begin{figure}[t!]
	\begin{center}
	\begin{minipage}{\textwidth}
	\hspace{-2mm}
	\begin{tabular}{p{0.2cm}p{\mpwt}p{\mpwt}p{\mpwt}}%
		\begin{minipage}{0.3cm}%
		\rotatebox{90}{Correspondence-based\hspace{-2cm}}
		\end{minipage}
		&
			\begin{minipage}[t][][t]{\mpwt}%
			\centering%
			\includegraphics[trim= 0mm 0mm 10mm 10mm, clip, width=1\columnwidth]{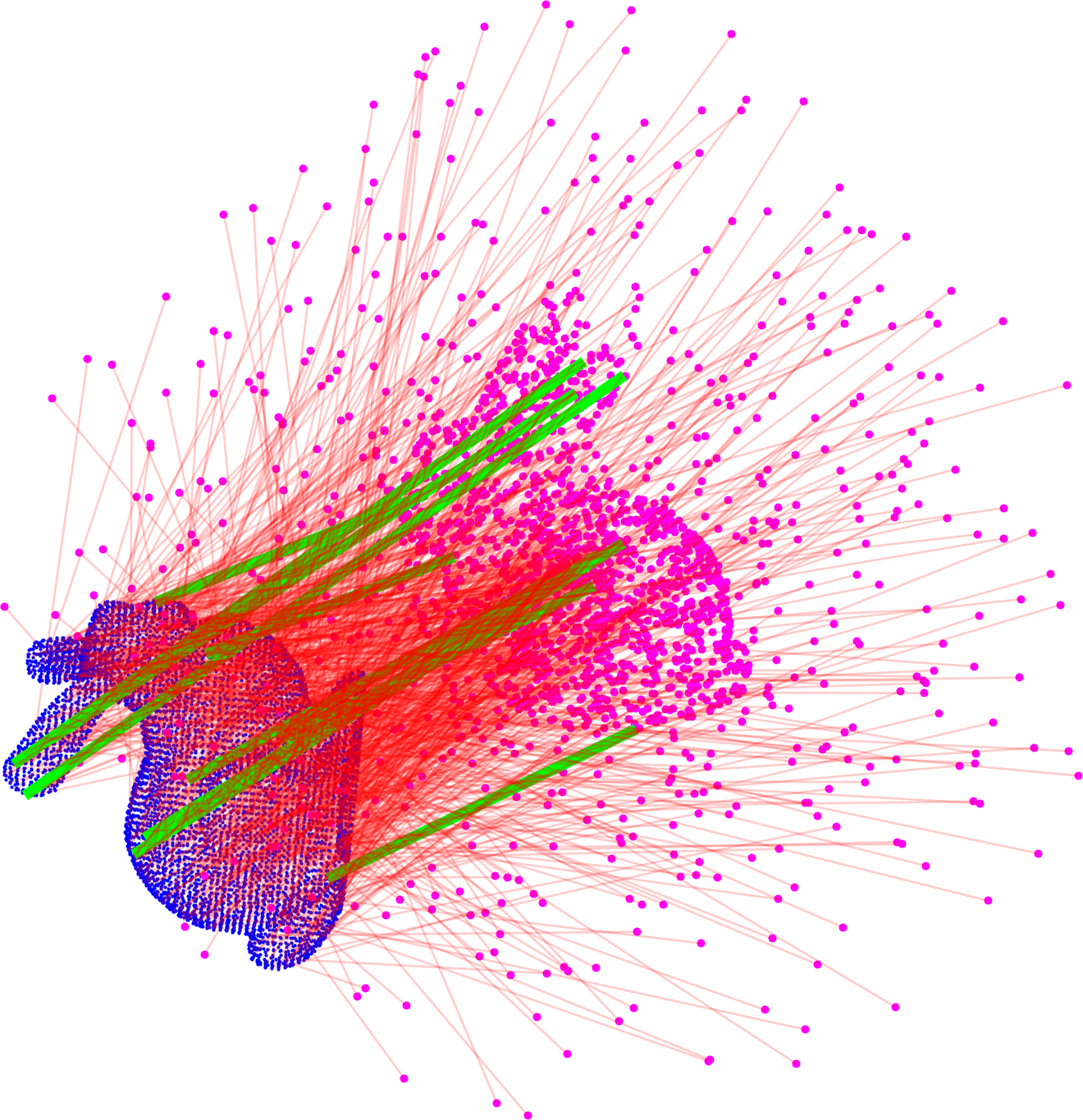} \myvspace\\
			(a) Input %(99\% outlier rate)
			\end{minipage}
		\myhspacet& \myhspacet
			\begin{minipage}[t][][t]{\mpwt}%
			\centering%
			\includegraphics[width=0.9\columnwidth]{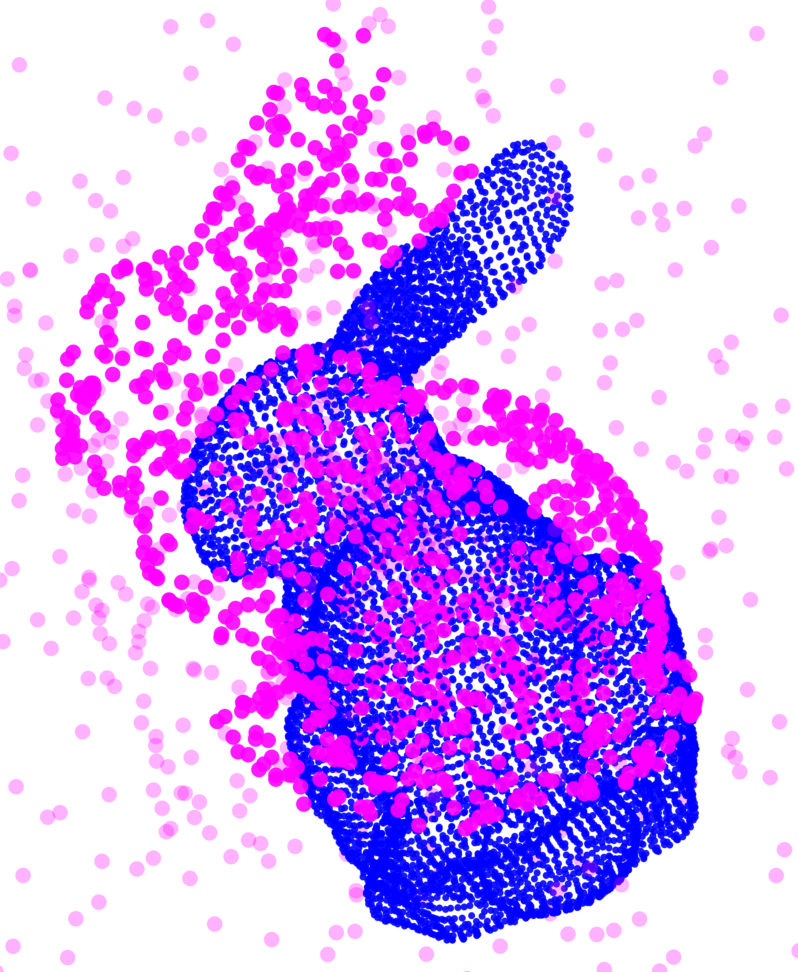} \myvspace\\
			(b) \ransac 
			\end{minipage}
		\myhspacet& \myhspacet\myhspacet
			\begin{minipage}[t][][t]{\mpwt}%
			\centering%
			\includegraphics[width=0.9\columnwidth]{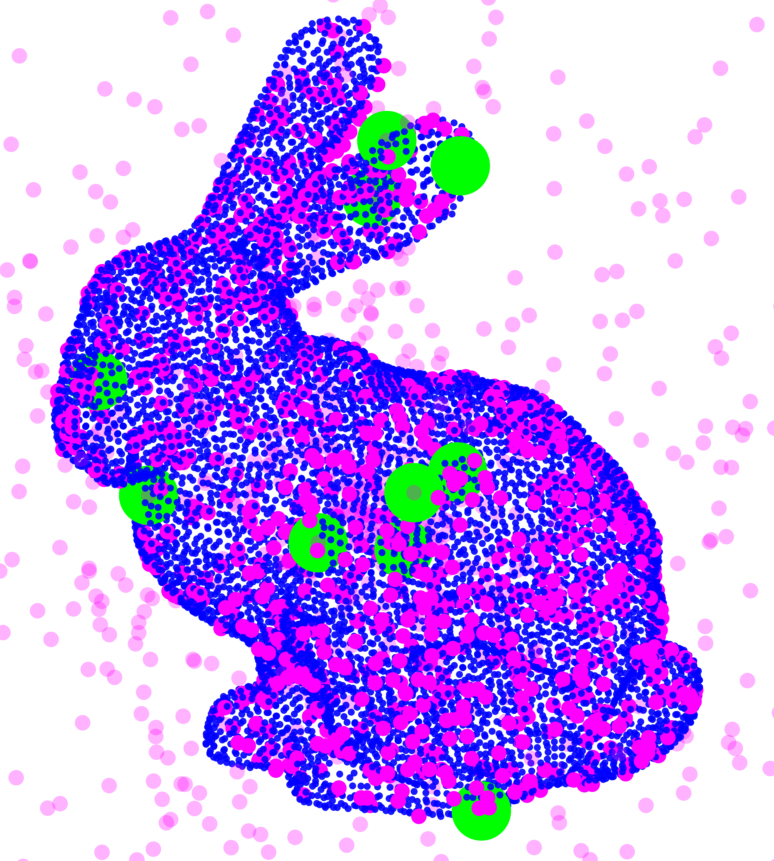} \myvspace\\
			(c) \namepp %(green dots indicate the inliers selected by \namepp)
			\end{minipage}
		\vspace{2mm}\\
		\begin{minipage}{0.2cm}%
		\rotatebox{90}{Correspondence-free\hspace{-2.3cm}}
		\end{minipage}
		& 
			\begin{minipage}[t][][t]{\mpwt}%
			\centering%
			\includegraphics[width=0.7\columnwidth]{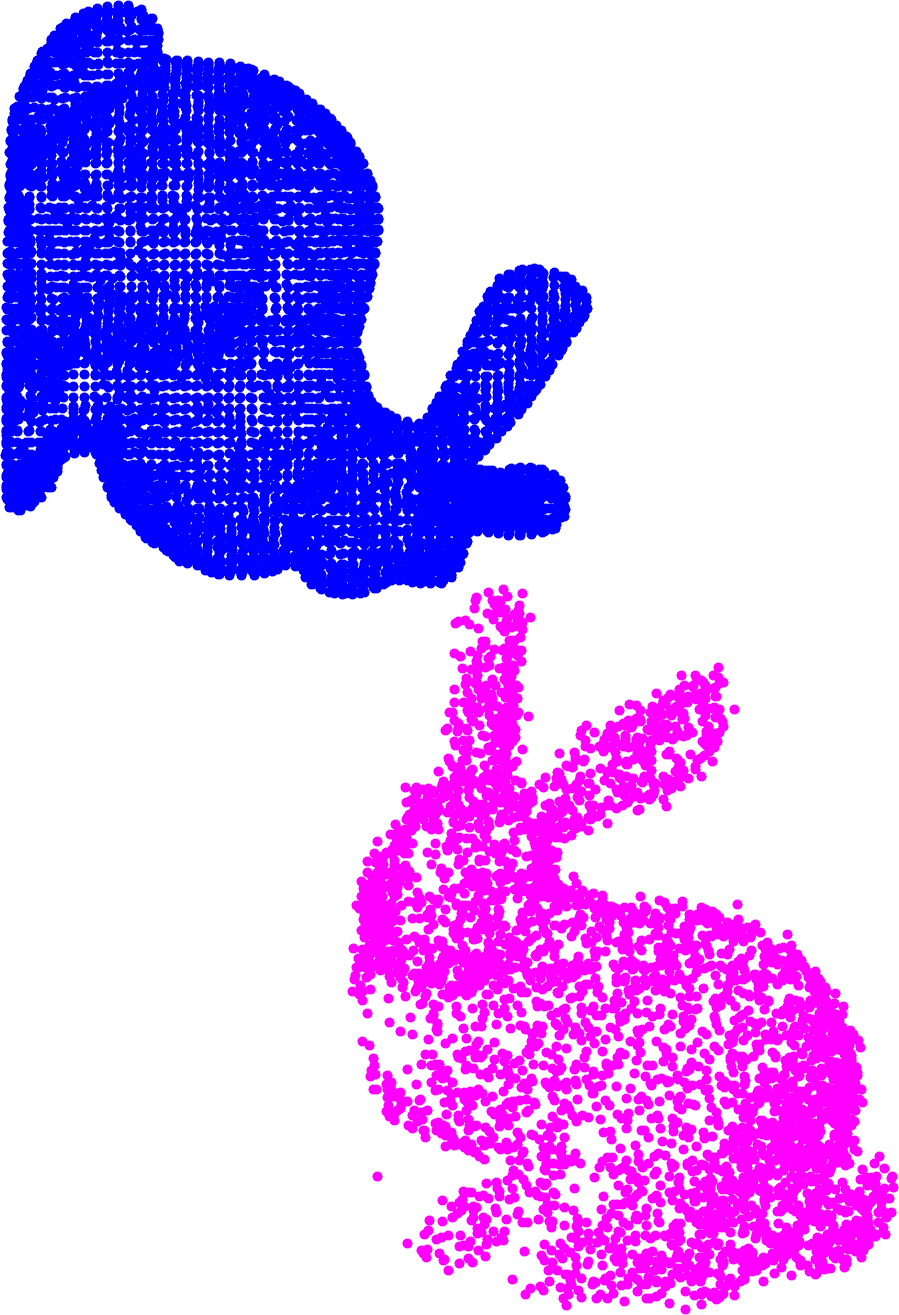} \myvspace\\
			(d) Input %(50\% overlap)
			\end{minipage}
		\myhspacet& \myhspacet
			\begin{minipage}[t][][t]{\mpwt}%
			\centering%
			\includegraphics[width=0.9\columnwidth]{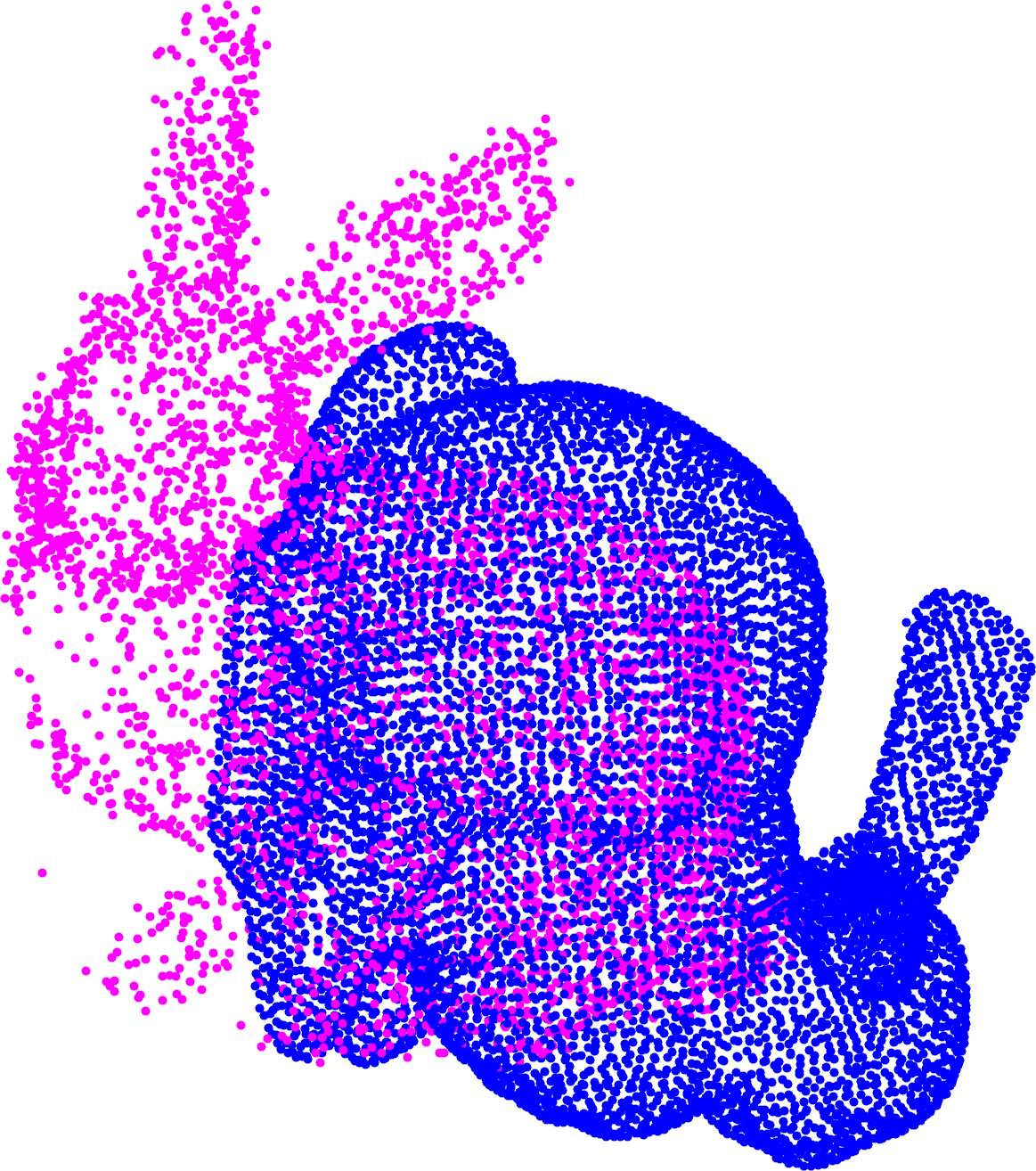} \myvspace\\
			(e) \ICP
			\end{minipage}
		\myhspacet& \myhspacet\myhspacet
			\begin{minipage}[t][][t]{\mpwt}%
			\centering%
			\includegraphics[width=0.77\columnwidth]{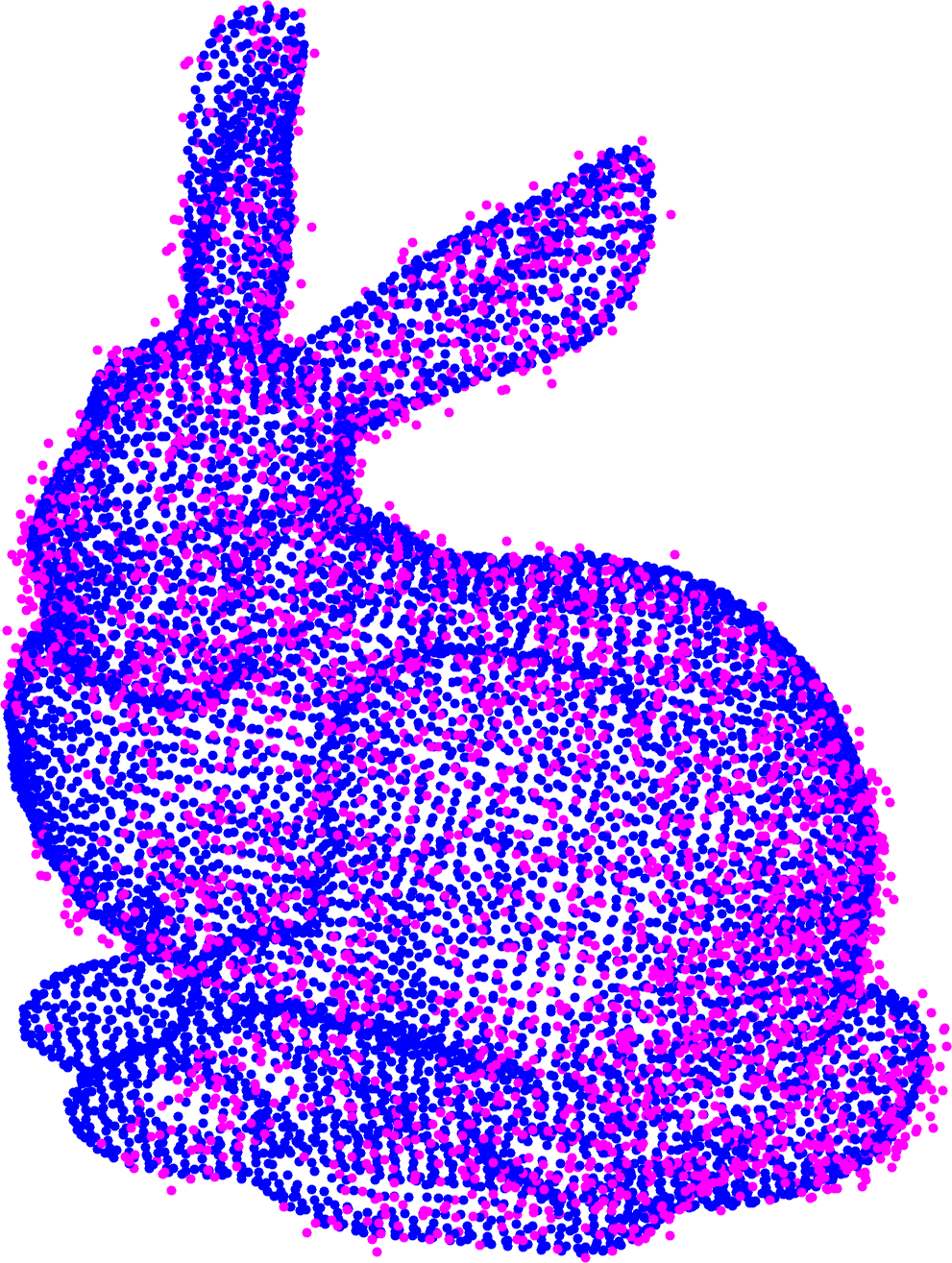} \myvspace\\
			(f) \namepp
			\end{minipage}
		\end{tabular}
		\vspace{2mm}\\
	\begin{tabular}{p{0.3cm}p{\mpw}p{\mpw}}%
	% ================================================================
		\begin{minipage}{0.3cm}%
		\rotatebox{90}{Object Localization}
		\end{minipage}
		& \hspace{-5mm}
			\begin{minipage}{\mpw}%
			\centering%
			\includegraphics[trim= 30mm 20mm 30mm 0mm, clip, width=1.0\columnwidth]{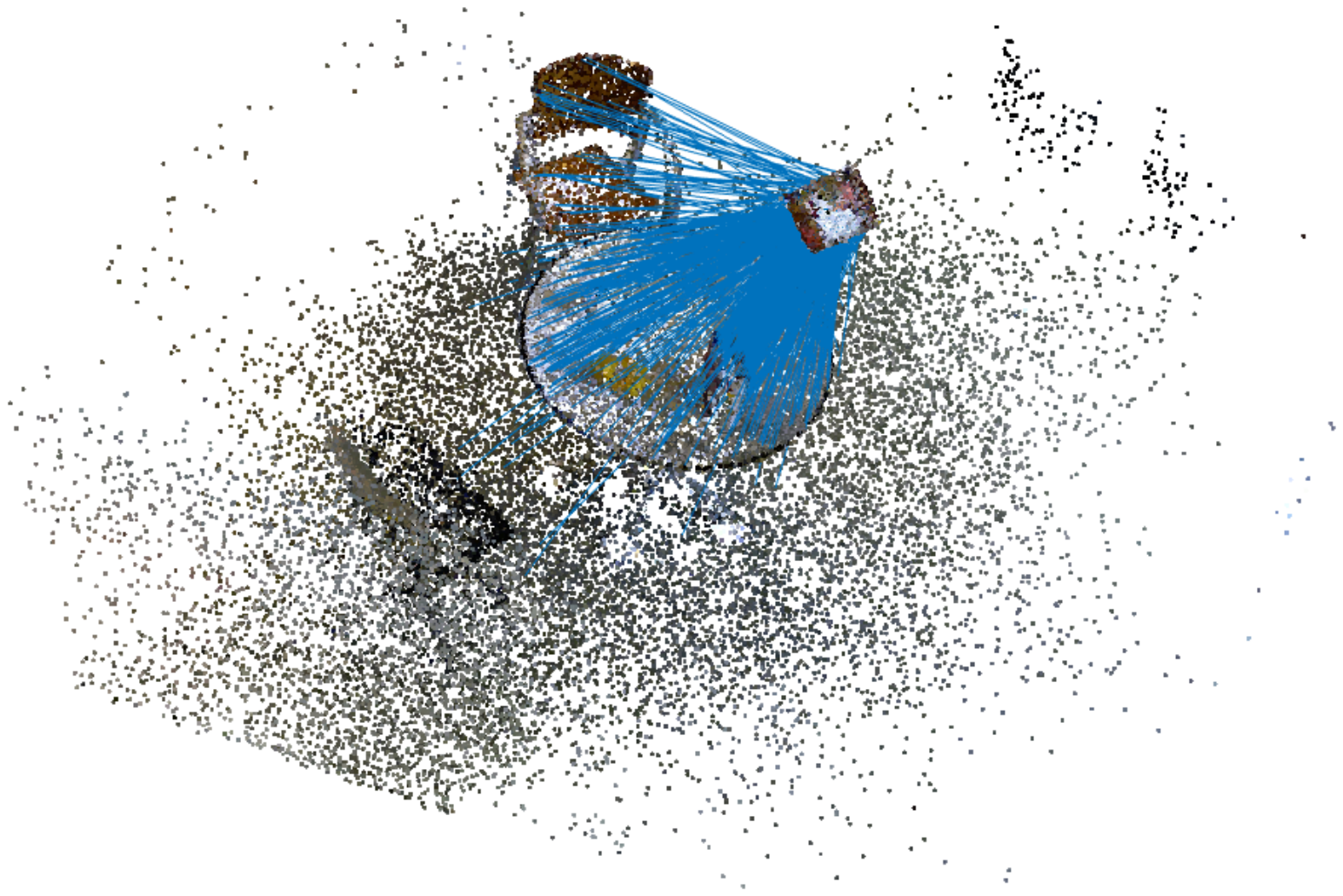} \myvspace
			\vspace{0mm}\\
			(g) Correspondences
			\end{minipage}
		& \hspace{-8mm}
			\begin{minipage}{\mpw}%
			\centering%
			\includegraphics[trim= 30mm 20mm 30mm 0mm, clip,width=1.0\columnwidth]{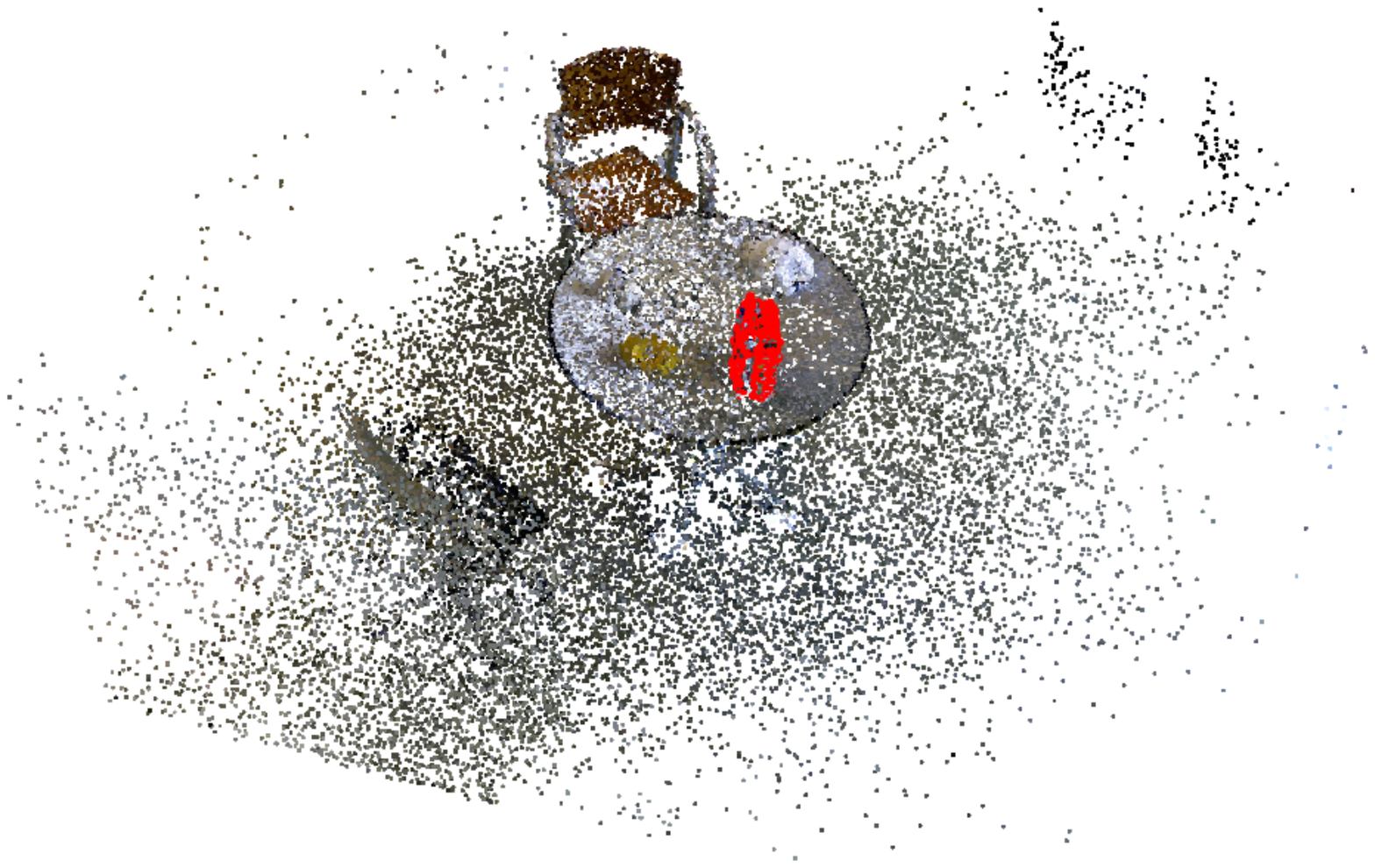} \myvspace
			\vspace{0mm}\\
			(h) \namepp
			\end{minipage}
	% ================================================================
			\vspace{2mm}\\ 
		\begin{minipage}{0.2cm}%
		\rotatebox{90}{Scan Matching}
		\end{minipage}
		& \hspace{-5mm}
			\begin{minipage}{\mpw}%
			\centering%
			\includegraphics[width=0.95\columnwidth]{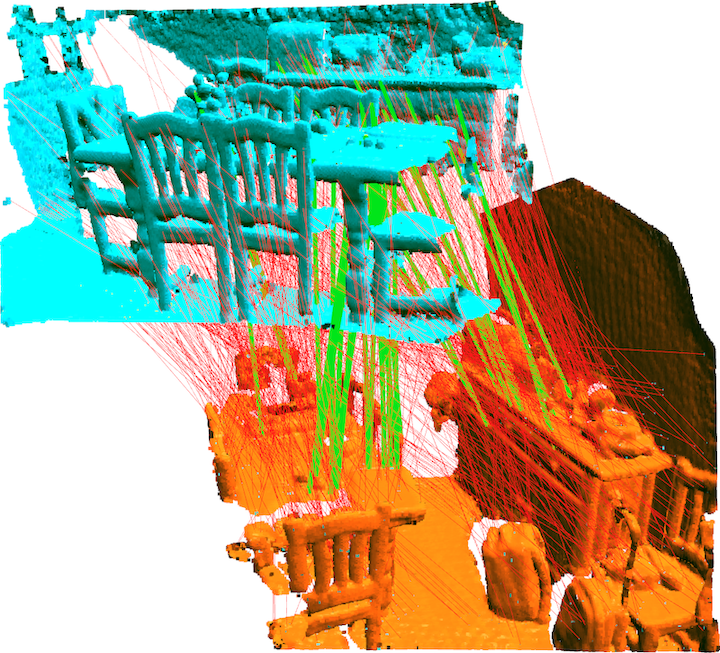} \myvspace\\
			(i) Correspondences
			\end{minipage}
		& \hspace{-8mm}
			\begin{minipage}{\mpw}%
			\centering%
			\includegraphics[width=1.0\columnwidth]{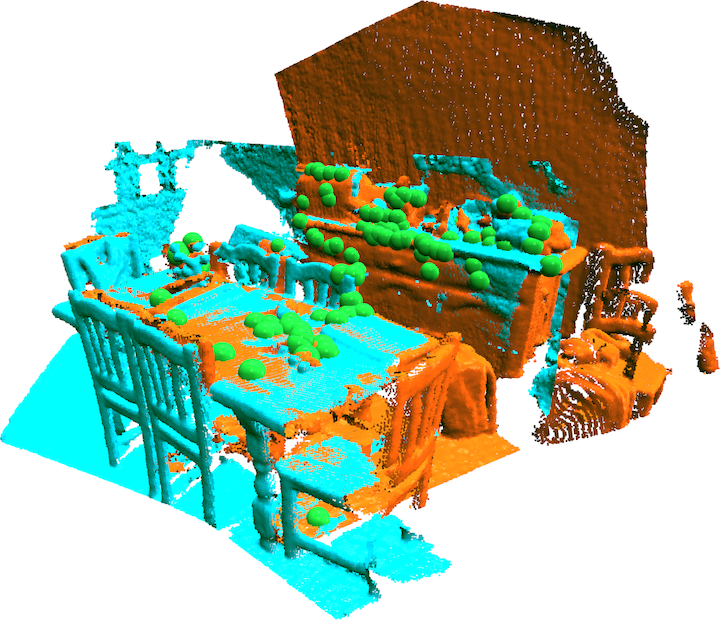} \myvspace\\
			(j) \namepp
			\end{minipage}
	\end{tabular}
	\end{minipage}
	\vspace{-3mm} 
	\caption{
	We address 3D registration in the realistic case where 
	many point-to-point correspondences are outliers due to incorrect matching.
	(a) \bunny dataset (source cloud in \blue{blue} and target cloud in \magenta{magenta}) with {95\% outliers (shown as \red{red} lines) and 5\% inliers (shown as \green{green} lines).}
	Existing algorithms, such as \ransac (b), can produce incorrect estimates without notice even after running for 10,000 iterations.
	Our certifiable algorithm, \name, largely outperforms the state of the art in 
	terms of robustness and accuracy, and a fast implementation, named \namepp (c), computes accurate estimates in milliseconds even with extreme outlier rates %(\eg 99\% of the correspondences are outliers) 
	and finds the small set of inliers (shown as \green{green} dots).
	The unprecedented robustness of \namepp enables the solution of correspondence-free registration problems (d), where \ICP (e) fails without a good initial guess, while \namepp~{(f)} succeeds without {requiring} an initial guess. We test our approach in challenging object localization {(g-h)} and scan matching {(i-j)} RGB-D datasets, using both traditional features (\ie FPFH~\cite{Rusu09icra-fast3Dkeypoints}) and deep-learned features (\ie~3DSmoothNet~\cite{gojcic19cvpr-3dsmoothnet}).
	%The algorithm also outperforms deep learning methods in realistic problems (e)-(f).
	 \label{fig:overview}\vspace{-1.0cm}}
	\end{center}
\end{figure}
%==========================
 
\emph{Point cloud registration} (also known as \emph{scan matching} or \emph{point cloud alignment})
is a fundamental problem in robotics and computer vision and consists in finding 
the best transformation (rotation, translation, and potentially scale) that aligns two point clouds.
It finds applications in motion estimation and 
%
% 3D reconstruction~\cite{Henry12ijrr-rgbdMapping,Blais95pami-registration,Endres12icra-evaluationSLAM,Choi15cvpr-robustReconstruction,Zhang15icra-vloam}, 
%
3D reconstruction~\cite{Henry12ijrr-rgbdMapping,Blais95pami-registration,Choi15cvpr-robustReconstruction,Zhang15icra-vloam},
object recognition and localization~\cite{Drost10cvpr,Wong17iros-segicp,Zeng17icra-amazonChallenge,Marion18icra-labelFusion}, 
panorama stitching~\cite{Bazin14eccv-robustRelRot},  
and medical imaging~\cite{Audette00mia-surveyMedical,Tam13tvcg-registrationSurvey}, to name a few.

When the ground-truth correspondences between the point clouds are known and the points are affected by zero-mean Gaussian noise, 
% the maximum likeligoo
the registration problem 
% leads to the following least squares problem:
% %
%  \bea
%  \label{eq:TLSRegistration}
%  \min_{ \substack{ s > 0, \vt \in \Real{3}, \MR \in \SOthree} } 
%  \sumAllPointsi \frac{1}{\beta_i^2} \normsq{  \vb_i - s \MR \va_i - \vt }{}  
%  \eea
% %
% where one looks for the unknown scale $s$, rotation $\MR$, and translation $\vt$ that best align 
% the given correspondences $(\va_i,\vb_i), i=\oneToN$,
can be readily solved, since elegant closed-form solutions~\cite{Horn87josa,Arun87pami} %in the least squares sense
exist for the case of isotropic noise.
% and recently proposed convex relaxations~\cite{Briales17cvpr-registration} 
% allow solving the anisotropic noise case even in presence of large noise. 
 % are empirically tight even in the presence of large anisotropic noise.
In practice, however, the correspondences are either unknown, or contain many outliers, 
leading these solvers to produce poor estimates. 
Large outlier rates are typical of 3D keypoint detection and matching~\cite{Tombari13ijcv-3DkeypointEvaluation}. 

Commonly used approaches for registration with unknown or uncertain correspondences 
% may fail without notice and they 
either rely on the availability of an initial guess for the 
unknown transformation (\eg the Iterative Closest Point, \ICP~\cite{Besl92pami}), or 
implicitly assume the presence of a small set of outliers
(\eg~\ransac~\cite{Fischler81}\footnote{\ransac's runtime grows exponentially with the outlier ratio~\cite{Bustos18pami-GORE} and it
typically performs poorly with large outlier rates (see Sections~\ref{sec:relatedWork} and~\ref{sec:experiments}).}). 
These 
algorithms may fail without notice (\Fig{fig:overview}(b),(e)) and may return estimates that are arbitrarily far from the 
ground-truth transformation.   \
In general, the literature is divided between \emph{heuristics}, that are fast but brittle, and 
\emph{global methods} that are guaranteed to be robust, but run in worst-case exponential time 
(\eg branch-\&-bound methods, such as \goICP~\cite{Yang16pami-goicp}).
% While \ransac is \edit{a} popular approach for several robust vision and robotics problems, 
% its runtime grows exponentially with the outlier ratio~\cite{Bustos18pami-GORE} and it
% can perform poorly with extreme outlier rates.
% it does not provide the optimal solution in general and its runtime grows exponentially with the outlier ratio~\cite{bustos2018pami-guaranteed}.
% it is not able to tolerate large amounts of outliers and may fail to converge to a consensus set in a reasonable time.
 %whose performance
% guarantees (expected number of iterations to find a set of inliers) rely on the assumption that a minimal set of measurements allows disambiguating inliers from outliers, which is (formally) true only in the noiseless case (see examples in Section~\ref{sec:experiments}). 

This paper is motivated by the goal of designing an approach that 
(i) can solve registration globally (without relying on an initial guess), 
(ii) can tolerate extreme amounts of outliers 
(e.g., when $\maxOutliers$ of the correspondences are outliers),  
(iii) runs in polynomial time and is fast in practice (\ie can operate in real-time on a robot), 
and (iv) provides formal performance guarantees. 
In particular, we look for \emph{a posteriori} guarantees, \eg conditions that one can check after 
executing the algorithm to assess the quality of the estimate. 
This leads to the notion of \emph{certifiable algorithms}, \ie an algorithm that attempts to solve an intractable problem and provides checkable conditions on whether it succeeded~\cite{Bandeira16crm,Carlone15icra-verification,Rosen18ijrr-sesync,RahmanIROS19-falseNegative,Seshia16arxiv-verifiedAutonomy,Desai17icrv-verification}. 
The interested reader can find a broader discussion in \isExtended{Appendix~\ref{sec:certifiableAlgorithms}.}{Appendix~\ref{sec:certifiableAlgorithms}\arxivCite.}
\myParagraph{Contribution}
% We propose a robust approach for 3D registration in the presence of a large amount of outliers. 
%
This paper proposes the first certifiable algorithm for 3D registration with outliers. 
% We review related work on registration in Section~\ref{sec:relatedWork}.
\revone{We reformulate the registration problem using a \emph{Truncated Least Squares} (\TLS) cost (presented in \prettyref{sec:TLSregistration}), which is insensitive to a large fraction of spurious correspondences but leads to a hard, combinatorial, and non-convex optimization.}
% Our first contribution (presented in \prettyref{sec:TLSregistration}) 
% is to reformulate the registration problem using a \emph{Truncated Least Squares} (\TLS) cost that is insensitive to a large fraction of spurious data. 
%We name the resulting problem the \emph{\tlsProblemLong} (\tlsProblem) problem.

The \revone{first} contribution (\prettyref{sec:decoupling}) 
 is a general framework to decouple scale, rotation, and translation estimation. 
 The idea of decoupling rotation and translation has appeared in related work, \edit{e.g.,~\cite{Makadia06cvpr-registration,Liu18eccv-registration,Bustos18pami-GORE}.}
 % and more recently in~\cite{}. 
 The novelty of our proposal is \revone{fourfold}: 
 (i) we develop invariant measurements to estimate the scale (\cite{Liu18eccv-registration,Bustos18pami-GORE} 
 assume the scale is given), 
 (ii) we make the decoupling formal \extraEdits{under the assumption of} 
 \emph{unknown-but-bounded} noise~\cite{Milanese89chapter-ubb,Bertsekas71thesis}, 
 (iii) we provide a general graph-theoretic framework to derive these invariant measurements, and 
 (iv) \extraEdits{we show that this framework}  allows pruning a large amount of outliers by finding the \emph{maximum clique} of the graph defined by the invariant measurements (Section~\ref{sec:teaser}).

%The decoupling is exact in the noiseless case and  
The decoupling allows solving in cascade for scale, rotation, and translation.
However, each subproblem is still combinatorial in nature. 
Our \revone{second} contribution is to show that (i) in the scalar case \TLS estimation 
%is directly related to \emph{consensus maximization}~\cite{Speciale17cvpr-consensusMaximization,Liu18eccv-registration}, 
% (ii) in the scalar case, \TLS 
can be solved exactly in polynomial time 
using an \emph{adaptive voting} scheme, and this enables efficient estimation of 
the scale and the (component-wise) translation (Section~\ref{sec:scaleTranslation_adaptiveVoting}); 
% (ii) we can prune a large amount of outliers by finding a \emph{maximal clique} of the graph defined by the
% invariant measurements \HY{add section references}; 
(ii) we  can formulate a tight \emph{semidefinite programming (SDP) relaxation} to estimate the rotation and establish {\emph{a posteriori}} conditions to check the quality of the relaxation (Section~\ref{sec:robustRotationSearch_SDPRelaxationVerification}). 
% (iv) we can provide per-instance bounds on the performance of the SDP relaxation. \HY{(iii) says the relaxation is always tight, here it says "bounds", is this a conflict/duplicate?}
 % To the best of our knowledge, this is the first polynomial-time algorithm for outlier-robust registration 
 % with computable performance guarantees.
 % We propose %relaxations when parameterizing the 
 % both a rotation-matrix-based and a quaternion-based relaxation (both are tight, but the latter leads to a simpler SDP). Note that 
 We remark that  the rotation subproblem addressed in this paper is in itself a 
 foundational problem in vision (where it is known as \emph{rotation search}~\cite{Hartley09ijcv-globalRotationRegistration}) and aerospace (where it is known as the \emph{Wahba problem}~\cite{wahba1965siam-wahbaProblem}). Our SDP relaxation  is the first certifiable algorithm for robust rotation search.
 %, and builds on our early work~\cite{Yang19iccv-QUASAR}.

Our \revone{third} contribution (Section~\ref{sec:guarantees}) is a set of theoretical results certifying the quality of the solution returned by our algorithm, named \emph{\nameLong} (\name). 
In the noiseless case, we provide easy-to-check conditions under which \name recovers the true transformation between the point clouds in the presence of outliers. 
In the noisy case, we provide bounds on the distance between the ground-truth transformation 
and \name's estimate.
 % and we show that when \name returns a valid certificate (this only requires checking the 
% rank of the solution of the SDP) it provably rejects outliers. 
To the best of our knowledge these are the first non-asymptotic error bounds for geometric estimation problems with outliers, while the literature on robust estimation in statistics (\eg~\cite{Diakonikolas16focs-robustEstimation}) typically studies simpler problems in Euclidean space and focuses on asymptotic bounds.
% These results make \name the first certifiable algorithm for registration in the presence of extreme amount of outliers.

Our \revone{fourth} contribution (Section~\ref{sec:teaserpp}) is to implement a fast version of \name, named \namepp, \revone{that uses \emph{graduated non-convexity} (\GNC)~\cite{Yang20ral-GNC} to estimate the rotation without solving a large SDP. We show that \namepp is also \emph{certifiable}, and \extraEdits{in particular we leverage} \emph{Douglas-Rachford Splitting}~\cite{Yang20arXiv-certifiablePerception} to design a scalable \emph{optimality certifier} that can assert global optimality of the estimate returned by \GNC.}
% circumvents the 
% need to solve a large SDP (computable in polynomial time, but typically slow using currently available SDP solvers). 
% We leverage the theoretical results discussed in the previous paragraph to 
% certify \namepp's performance. 
We release a fast open-source C++ implementation of \namepp.\optional{\footnote{\urlTEASER}}{}

Our last contribution (Section~\ref{sec:experiments}) is an extensive evaluation in both standard benchmarks and on real datasets for object detection~\cite{Lai11icra-largeRGBD} and scan matching~\cite{Zeng17cvpr-3dmatch}. In particular, we show that (i) both \name and \namepp dominate the state of the art (\eg{ }\ransac, branch-\&-bound, heuristics) and are robust to more than $99\%$ outliers \revone{when the scale is known}, (ii) \namepp can run in milliseconds and it is currently the fastest robust registration algorithm, (iii) \namepp is so robust it can also solve problems without correspondences (\eg hypothesizing all-to-all correspondences) where it largely outperforms \ICP~\revone{and it is more accurate than \goICP while being orders of magnitude faster}, and 
(iv) \namepp can boost registration performance when combined with deep-learned keypoint detection and matching.

\isExtended{}{All the appendices with proofs, extra experimental results, and discussions are given in the \supp~\cite{Yang20arxiv-teaser}.}

\myParagraph{Novelty with respect to~\cite{Yang19rss-teaser,Yang19iccv-QUASAR}} 
In our previous works, we introduced \name~\cite{Yang19rss-teaser} and the quaternion-based relaxation of the rotation subproblem~\cite{Yang19iccv-QUASAR} (named \nameRotation).
 The present manuscript brings \name to maturity by (i) providing explicit theoretical results on 
 \name's performance (Section~\ref{sec:guarantees}), 
 (ii) providing a fast optimality certification method  (Section~\ref{sec:fastCertification}),
 (iii) developing a fast algorithm, \namepp, that \revone{uses \GNC to estimate the rotation without solving an SDP}, while still being certifiable (Section~\ref{sec:teaserpp}), and 
 (iv) reporting a more comprehensive experimental evaluation, 
  including real tests on the \matchTD dataset and examples of registration without correspondences (Sections~\ref{sec:bruteForce} and~\ref{sec:roboticsApplication2}).
  %(we also add comparisons against deep learning methods and \ICP, 
  % which were not present in~\cite{Yang19rss-teaser}). 
 These are major improvements %~\cite{Yang19rss-teaser,Yang19iccv-QUASAR} 
 both on the theoretical side (\cite{Yang19rss-teaser,Yang19iccv-QUASAR} only certified performance on each subproblem, rather than end-to-end, and required solving a large SDP) and on the practical side (\namepp is more than three orders of magnitude faster than our proposal in~\cite{Yang19rss-teaser}).

% The new stuff:
% \bit
% \item fast verification 
% \item fast and open-source implementation in C++
% \item experiments on Scan2CAD
% \item experiments on jigsaw puzzle
% \eit

%!TEX root = ../main.tex
% \spacebeforesection
\section{Related Work}
\label{sec:relatedWork}
There are two popular paradigms for the registration of 3D point clouds: 
\emph{Correspondence-based} and 
\emph{Simultaneous Pose and Correspondence} (\ie \emph{correspondence-free}) methods. 
% alternate between finding the correspondences and computing the best transformation.
%%%%%%
% There are two established paradigms for the registration of 3D point clouds: 
% \emph{correspondence-based} methods extract and match 3D keypoints in the point clouds and 
% then solve for the optimal transformation, while 
% \emph{Simultaneous Pose and Correspondence} (\SPC) methods 
% alternate between finding the correspondences and computing the best transformation.
%%%%%
% one that iteratively alternates between finding the correspondences and computing the best transformation, also known as \emph{simultaneous pose and correspondence} (SPC) paradigm, and the other that first computes the correspondences using keypoint matching methods and then solve the optimal transformation, and we review them separately.
% \spaceBeforeSection

% ==================================================================================
\subsection{Correspondence-based Methods}
Correspondence-based methods first detect and match 3D keypoints between point clouds using feature descriptors~\cite{Tombari13ijcv-3DkeypointEvaluation,Rusu09icra-fast3Dkeypoints,Drost10cvpr,Choy19iccv-FCGF}, 
% local~\cite{Tombari13ijcv-3DkeypointEvaluation,Guo14pami-3Dkeypoints,Rusu09icra-fast3Dkeypoints} or global~\cite{Drost10cvpr,Choy19iccv-FCGF} descriptors, 
and then % to establish putative correspondences, and 
 use an estimator to infer the transformation from these putative correspondences.
% then either use closed-form solutions~\cite{Horn87josa,Arun87pami} in a \ransac~\cite{Fischler81} scheme, or apply robust optimization methods~\cite{Zhou16eccv-fastGlobalRegistration,Bustos18pami-GORE} to gain robustness against outliers. 
3D keypoint matching is 
known to be less accurate compared to 2D counterparts like \SIFT and \ORB,
% \SIFT~\cite{lowe2004ijcv-distinctive} and \ORB~\cite{rublee2011iccv-orb},
% due to essential complexity and lack of distinctive features in point clouds, 
thus causing much higher outlier rates, e.g., having 95\% spurious correspondences is considered common~\cite{Bustos18pami-GORE}. Therefore, a robust backend that can deal with extreme outlier rates is highly desirable.
% urgently required for accurate pose estimation. 

\myParagraph{Registration without Outliers} Horn~\cite{Horn87josa} and Arun~\cite{Arun87pami} show that optimal solutions 
(in the maximum likelihood sense) for scale, rotation, and translation can be computed in closed form 
when the correspondences are known and the points are affected by isotropic zero-mean Gaussian noise.
%, and assuming
%in the least squares sense under Gaussian noise distribution 
 % all the correspondences are known and correct. 
 Olsson\setal~\cite{Olsson09pami-bnbRegistration} propose a method based on Branch-\&-bound (\bnb) that is globally optimal and allows point-to-point, point-to-line, and point-to-plane correspondences. Briales and Gonzalez-Jimenez~\cite{Briales17cvpr-registration} propose a \revone{tight semidefinite relaxation to solve the same registration problem as in~\cite{Olsson09pami-bnbRegistration}.} %, and has per-instance optimality guarantees. 
 % However, these methods are known to perform badly at the presence of even a single outlier.
 % All these methods assume that all the correspondences are known and correct.

If the two sets of points only differ by an unknown rotation (i.e., we do not attempt to estimate the scale and translation), we obtain a simplified version of the registration problem that is known as 
\emph{rotation search} in computer vision~\cite{Hartley09ijcv-globalRotationRegistration}, or \emph{Wahba problem} in aerospace~\cite{wahba1965siam-wahbaProblem}. 
In aerospace, the vector observations are typically the directions to visible stars observed by sensors onboard the satellite.
Closed-form solutions to the Wahba problem are known using both quaternion~\cite{Horn87josa,Markley14book-fundamentalsAttitudeDetermine} and rotation matrix~\cite{markley1988jas-svdAttitudeDeter,Arun87pami} representations. 
The Wahba problem  is also related to the well-known 
%an alternative formulation of the well-known 
\emph{Orthogonal Procrustes} problem~\cite{Gower05oss-procrustes} where one searches for 
orthogonal matrices (rather than rotations), for which a closed-form solution also exists~\cite{Schonemann66psycho}. % and all the weights $w_i$ are set to be equal to one.
% Schonemann~\cite{Schonemann66psycho} provides a closed-form solution for the Orthogonal Procrustes problem using
%  singular value decomposition. 
% The Orthogonal Procrustes problem can be solved in closed form using
%  singular value decomposition~\cite{Schonemann66psycho}. 
 % Subsequent research effort across multiple communities led to the derivation of 
 % %Later, researchers from different communities have derived 
 % closed-form solutions for the Wahba problem~\eqref{eq:wahbaProblem} using both quaternion~\cite{Horn87josa,markley2014book-fundamentalsAttitudeDetermine} and rotation matrix~\cite{horn1988josa-rotmatsol,markley1988jas-svdAttitudeDeter,Arun87pami,Forbes15jgcd-wahba,Saunderson15siam,Khoshelham16jprs} representations. 
%
The computer vision community has investigated the rotation search problem in the context of  
 point cloud registration~\cite{Besl92pami,Yang16pami-goicp}, image stitching~\cite{Bazin14eccv-robustRelRot}, motion estimation and 3D reconstruction~\cite{Blais95pami-registration,Choi15cvpr-robustReconstruction}. 
 In particular, the closed-form solutions from  
 Horn~\cite{Horn87josa} and Arun\setal~\cite{Arun87pami} can be used for (outlier-free) rotation search 
 with isotropic Gaussian noise.
 % are now commonly used in point cloud registration techniques~\cite{Besl92pami}. 
 % While the formulation~\eqref{eq:wahbaProblem} implicitly assumes zero-mean isotropic Gaussian noise, 
 % several authors investigate a generalized version of the Wahba problem~\eqref{eq:wahbaProblem}, where the noise follows an anisotropic Gaussian~\cite{Briales17cvpr-registration,cheng2019aiaaScitech-totalLeastSquares,Crassidis07-survey}. 
 % Recent effort includes the development of solvers for the anisotropic case. 
 \revone{Ohta and Kanatani~\cite{Ohta98TIS-optimal} propose a quaternion-based optimal solution, and
 Cheng and Crassidis~\cite{cheng2019aiaaScitech-totalLeastSquares} develop a local optimization algorithm, for the case of anisotropic Gaussian noise.}
 % Briales and Gonzalez-Jimenez~\cite{Briales17cvpr-registration} propose a convex relaxation to compute global solutions for the anisotropic case. 
 Ahmed\setal~\cite{Ahmed12tsp-wahba} develop an SDP relaxation for the case with bounded noise and no outliers.
  % and we refer the reader to~\cite{markley2014book-fundamentalsAttitudeDetermine} for a comprehensive review.  On the other hand,
 % , with the underlying assumption of zero-mean isotropic Gaussian noise distribution. Recently, although in solving a different problem, Briales and Gonzalez-Jimenez propose a convex semidefinite relaxation that can handle large anisotropic Gaussian no{}ise in point cloud registration with per-instance optimality guarantees~\cite{Briales17cvpr-registration}. 
 % All these methods %assume the measurement noise to be Gaussian (in other words: no wrong correspondences), and 
 % are known to perform poorly in the presence of outliers.
% Typical examples are the unit vectors giving the direction to the sun or a star and the unit vector in the
% direction of the Earth’s magnetic field.

\myParagraph{Robust Registration} Probably the most widely used robust registration approach is based on \ransac~\cite{Fischler81}, which has enabled several early applications in vision and robotics~\cite{Hartley00,Meer91ijcv-robustVision}. Despite its efficiency in the low-noise and low-outlier regime, \ransac exhibits slow convergence and low accuracy 
 with large outlier rates~\cite{Bustos18pami-GORE}, where it becomes harder to sample a ``good'' consensus set.
 % with large noise and high outlier rates and it loses optimality guarantee because not all the inliers can be kept at convergence. 
 Other approaches resort to \emph{M-estimation}, which replaces the least squares objective function with robust costs that are less sensitive to outliers~\cite{MacTavish15crv-robustEstimation,Black96ijcv-unification,Lajoie19ral-DCGM}. Zhou\setal~\cite{Zhou16eccv-fastGlobalRegistration} propose \emph{Fast Global Registration} (\FGR) that uses the Geman-McClure cost function and leverages 
 % the Black-Rangarajan duality and  
 graduated non-convexity to solve the resulting non-convex optimization.
 % through multiple iterations of Horn's method~\cite{Zhou16eccv-fastGlobalRegistration}. 
 \revone{Despite its efficiency, \FGR offers no optimality guarantees.} Indeed, \FGR tends to fail when the outlier ratio is high (>80\%), as we show in~\prettyref{sec:experiments}.
 % is only robust to modest level of outliers (e.g. 10\%-70\%), and tends to fail when the outlier ratio is extremely high (>80\%), which will be shown in \prettyref{sec:experiments}. 
 \revone{Enqvist~\etal~\cite{Enqvist09iccv} propose to optimally find correct correspondences by solving the vertex cover problem and demonstrate robustness against over $95\%$ outliers in 3D registration.}
 Parra and Chin~\cite{Bustos18pami-GORE} propose a \emph{Guaranteed Outlier REmoval} (\GORE) technique, that uses 
 % In addition,  has been recently demonstrated by  to use 
 geometric operations to significantly reduce the amount of outlier correspondences before passing them to the optimization backend. \GORE has been shown to be robust to 95\% spurious correspondences~\cite{Bustos18pami-GORE}.
 In an independent effort, Parra~\etal~\cite{Parra19arXiv-practicalMaxClique} find pairwise-consistent correspondences in 3D registration using a practical maximum clique (PMC) algorithm. 
 While similar in spirit to our original proposal~\cite{Yang19rss-teaser}, 
 % \red{(submitted at the same time of~\cite{Parra19arXiv-practicalMaxClique})},   
 both \GORE and PMC do not estimate the \emph{scale} of the registration and 
 are typically slower since they rely on \bnb (see Algorithm 2 in~\cite{Bustos18pami-GORE}).
 Yang and Carlone~\cite{Yang19rss-teaser} propose the first certifiable 
 algorithm for robust registration, which however requires solving a large-scale SDP hence being limited to 
 small problems.
 \revone{Recently, physics-based registration has been proposed in~\cite{Golyanik16CVPR-gravitationalRegistration,Jauer18PAMI-physicsBasedRegistration,Yang20arxiv-dynamical}.}
 % \revone{Recently, physics-based registration via simulation of dynamical systems under gravitational~\cite{Golyanik16CVPR-gravitationalRegistration,Jauer18PAMI-physicsBasedRegistration} and mechanical~\cite{Yang20arxiv-dynamical} forces have been proposed.}
 % \LC{Quasar is first polynomial-time certifiably robust algorithm. but slow..}
 % \GORE can be considered a pre-processing step, rather than a registration technxique. More importantly, 
 % it does not provide means to estimate the \emph{scale} of the registration.\todo{more on BnB?}
  % resulting in robustness to >95\% outlier ratios~\cite{Bustos18pami-GORE}. Nevertheless, GORE itself only focuses on outlier pruning and does not output accurate transformation. Furthermore, the outlier rejection scheme in GORE is complex, hard to interpret, and takes considerable computation. Instead, as shown in \prettyref{sec:scaleEstimation}, after scale estimation, our algorithm uses straightforward scale invariance and graph theory to significantly reject outliers and decrease the problem size, which has similar robustness to extremely high outlier rates.
% \myParagraph{Novelty} 
% polynomial time
% general
 % \LC{TEASER is ultra-robust and polynomial-time but slow, and unclear end-to-end guarantees.}

Similar to the registration problem, rotation search with outliers has also been  investigated in the computer vision community. Local techniques for robust rotation search are again based on \ransac or M-estimation, but 
are brittle and do not provide performance guarantees.
% In computer vision applications, the vector-to-vector correspondences are typically established through descriptor matching, which may lead to spurious correspondences and outliers~\cite{Bustos2015iccv-gore3D,Bustos18pami-GORE}. This observation triggered research into outlier-robust variants of the Wahba problem.
%, that can tolerate outliers.
% \myParagraph{Local Methods} The most widely used method for handling outliers is \ransac, which is efficient and accurate in the low-noise and low-outlier regime~\cite{Fischler81,Meer91ijcv-robustVision}. However, \ransac is non-deterministic (different runs give different solutions), sub-optimal (the solution may not capture all the inlier measurements), and its performance quickly deteriorates in the large-noise and high-outlier regime. Other approaches resort to \emph{M-estimators}, which replace the least squares cost in eq.~\eqref{eq:wahbaProblem} with robust cost functions that are less sensitive to outliers~\cite{Black96ijcv-unification,MacTavish15crv-robustEstimation}. Zhou \etal~\cite{Zhou16eccv-fastGlobalRegistration} propose \emph{Fast Global Registration} (\FGR) that employs the Geman-McClure robust cost function and solves the resulting optimization iteratively using a continuation method. 
% %via graduated non-convexi
% % by gradually increasing the non-convexity. 
% Since the problem becomes more and more non-convex at each iteration, \FGR does not guarantee global optimality in general. In fact, as we show in Section \ref{sec:experiments}, \FGR tends to fail when the outlier ratio is above 70\% of the observations. 
Global methods (that guarantee to compute globally optimal solutions) are based on
% \myParagraph{Global Methods} The most popular class of global methods for robust rotation search is based on
\emph{\maxConLong}~\cite{Chin17slcv-maximumConsensusAdvances,Wen19TRO-efficientMaxConsensus,Cai19ICCV-CMtreeSearch,Le19pami-deterministicApproxMaxConsensus,Tzoumas19iros-outliers,Chin18eccv-robustFitting} and \bnb~\cite{Campbell17iccv-2D3Dposeestimation}. Hartley and Kahl~\cite{Hartley09ijcv-globalRotationRegistration} first proposed using \bnb for rotation search, and Bazin \etal~\cite{Bazin12accv-globalRotSearch} adopted consensus maximization to extend their \bnb algorithm with a robust formulation. \bnb is guaranteed to return the globally optimal solution, but it runs in exponential time in the worst case. % due to exhaustive search space exploration. 
Another class of global methods for \maxConLong enumerates all possible subsets of measurements with size no larger than the problem dimension ({3 for rotation search}) to analytically compute candidate solutions, and then verify global optimality using computational geometry~\cite{Olsson08cvpr-polyRegOutlier,Enqvist12eccv-robustFitting}. \revone{Similarly, Ask~\etal~\cite{Ask13cvpr-optimalTLS} show the \TLS estimation can also be solved globally by enumerating subsets with size no larger than the model dimension.}
% Para and Chin~\cite{Bustos2015iccv-gore3D}  propose a \emph{guaranteed outlier removal} (\GORE) algorithm 
% which provides a preprocessing step for \bnb is shown to boost the speed of \bnb.
\revone{However, these methods 
require exhaustive enumeration and become intractable when the problem dimension is large (\eg~$6$ for 3D registration).}
In~\cite{Yang19iccv-QUASAR}, we propose the first quaternion-based certifiable algorithm for robust rotation search (reviewed in Section~\ref{sec:robustRotationSearch_SDPRelaxationVerification}).

\spaceBeforeSection
% ==================================================================================
\subsection{Simultaneous Pose and Correspondence Methods}
\emph{Simultaneous Pose and Correspondence} (\SPC) methods 
 alternate between finding the correspondences and computing the best transformation given the correspondences.

\myParagraph{Local Methods} The \emph{Iterative Closest Point} (\ICP) algorithm~\cite{Besl92pami} is considered a milestone in point cloud registration. 
% and remains one of the most widely used approaches. 
However, \ICP is prone to converge
% known for its vulnerability 
to \emph{local minima} and only performs well given a good initial guess. 
% of the transformation.
%without good initial guesses, due to the high nonconvexity of the problem. 
Multiple variants of \ICP~\cite{Granger02eccv,
Maier12pami-convergentICP,Chetverikov05ivc-trimmedICP,Kaneko03pr-robustICP} \edit{have proposed to use}
 % showing that the use of 
 robust cost functions to improve convergence.
 % and better convergence has been observed with robust kernels. 
Probabilistic interpretations have also been proposed to improve \ICP as a minimization of the Kullback-Leibler divergence between two Mixture models~\cite{Myronenko10pami-coherentPointDrift,Jian11pami-registrationGMM}. 
Clark~\etal~\cite{Clark20arXiv-nonparametricRegistration} alignn point clouds as continuous functions using Riemannian optimization. \revone{Le~\etal~\cite{Le19cvpr-sdrsac} use semidefinite relaxation to generate hypotheses for randomized methods.}
\revone{All these methods do not provide global optimality guarantees.}
%without a good initial guess.
% no global optimality can be guaranteed.

\myParagraph{Global Methods} Global \SPC approaches compute a globally optimal solution without initial guesses,
% register point clouds to guaranteed global optima without requirement of an initial guess 
and are usually based on \bnb.
% which at each iteration divides the parameter space into multiple sub-domains (branch) and computes the bounds of the objective function for each sub-domain (bound). 
A series of geometric techniques have been proposed to improve the bounding tightness~\cite{Hartley09ijcv-globalRotationRegistration,
Breuel03cviu-BnBimplementation,Yang16pami-goicp,Parra14cvpr-fastRotationRegistration} and increase the search speed~\cite{Li07iccv-3DRegistration,Yang16pami-goicp}. However, the runtime of \bnb increases exponentially with the size of the point cloud and is made worse by the explosion of the number of local minima resulting from high outlier ratios~\cite{Bustos18pami-GORE}. \edit{Global SPC registration can  be also formulated as a mixed-integer program~\cite{Izatt17isrr-MIPregistration}, though the runtime remains exponential.} \revone{Maron~\etal~\cite{Maron16tog-PMSDP} design a tight semidefinite relaxation, which, however, only works for SPC registration with full overlap.}

\myParagraph{Deep Learning Methods} The success of deep learning on 3D point clouds (\eg PointNet~\cite{Qi17cvpr-pointnet} and DGCNN~\cite{Wang19tog-DGCNN}) opens new opportunities for learning point cloud registration from data. Deep learning methods first learn to embed points clouds in a high-dimensional feature space, then learn to match keypoints to generate correspondences, after which optimization over the space of rigid transformations is performed for the best alignment. PointNetLK~\cite{Aoki19cvpr-pointnetlk} uses PointNet to learn feature representations and then iteratively align the features representations, instead of the 3D coordinates. DCP~\cite{Wang19ICCV-DeepClosestPoint} uses DGCNN features for correspondence matching and Horn's method for registration in an end-to-end fashion (Horn's method is differentiable).
 PRNet~\cite{Wang19nips-prnet} extends DCP to aligning partially overlapping point clouds. Scan2CAD~\cite{Avetisyan19CVPR-scan2cad} and its improvement~\cite{Avetisyan19ICCV-e2eCADAlign} apply similar pipelines to align CAD models to RGB-D scans. 
3DSmoothNet~\cite{gojcic19cvpr-3dsmoothnet} uses a siamese deep learning architecture
to establish keypoint correspondences between two point clouds. \revone{FCGF~\cite{Choy19iccv-FCGF} leverages sparse high-dimensional convolutions to extract dense  feature descriptors from point clouds. Deep global registration~\cite{Choy20cvpr-deepGlobalRegistration} uses FCGF feature descriptors for point cloud registration.}
 {In Section~\ref{sec:roboticsApplication2} we show that (i) our approach provides a robust back-end for deep-learned keypoint matching algorithms (we use~\cite{gojcic19cvpr-3dsmoothnet}), and (ii) that current 
 deep learning approaches still struggle to produce acceptable \revone{inlier} rates in real problems.} 
 % However, current deep learning methods provide no performance guarantees, only work on clean point clouds used for computer graphics and struggle to generalize to novel scenes. In addition, as we will show in Section~\ref{sec:experiments}, our algorithm, which serves as a robust registration back-end, can be paired with any learned feature descriptors to boost robustness and offer formal performance guarantees.  
% In addition to registration in the Euclidean space reviewed above, some recent work considers registration using alternative representations. For example, Clark~\etal~\cite{Clark20arXiv-nonparametricRegistration} represent point clouds as continuous functions on the manifold and propose to align continuous functions using Riemannian optimization. Similarly, PointNetLK~\cite{Aoki19cvpr-pointnetlk} essentially first converts point clouds from 3D coordinates to high-dimension features and then performs registration of the feature maps.

% ==================================================================================
% \subsection{Reconciling Correspondence-based and \SPC methods}
% We remark that
\begin{remark}[Reconciling Correspondence-based and \SPC Methods]
\label{rmk:spc}
\optional{
While correspondence-based and \SPC methods seem to pursue different strategies they are tightly coupled. First of all, approaches like \ICP alternate between guessing the correspondences and solving a correspondence-based
registration problem. More importantly, one}{
Correspondence-based and \SPC methods are tightly coupled. First of all, approaches like \ICP alternate between finding the correspondences and solving a correspondence-base
 problem. More importantly, one} 
 can always reformulate an \SPC problem as a correspondence-based problem by hypothesizing all-to-all correspondences, \ie associating each point in the first point cloud to all the points in the second. \revone{To the best of our knowledge, only~\cite{Enqvist09iccv,Fredriksson16cvpr-optimalRelativePose} have pursued this formulation since it leads to an extreme number of outliers. In this paper, we show that our approach can indeed solve the \SPC problem thanks to its unprecedented robustness to outliers (\prettyref{sec:bruteForce}).}
%since it is able to quickly find accurate solutions even in the presence of extreme outlier rates.
%  to solve \SPC with such a  
% (or course many of which will be outliers). While to the best of our knowledge, no previous work attempted to solve \SPC with such a  
\end{remark}

\spacebeforesection
\section{Notations and Preliminaries}
\label{sec:notationPreliminary}

{\bf Scalars, Vectors, Matrices.} We use lowercase characters (\eg~$s$) to denote real scalars, bold lowercase characters (\eg~$\vv$) for real vectors, and bold uppercase characters (\eg~$\MM$) for real matrices. 
$\MM_{ij}$ denotes the $i$-th row and $j$-th column \emph{scalar} entry of matrix $\MM \in \Real{m\times n}$, and $[\MM]_{ij,d}$ (or simply $[\MM]_{ij}$ when $d$ is clear from the context) denotes the $i$-th row and $j$-th column 
$d\!\times\!d$ \emph{block} of the matrix $\MM\!\in\!\Real{md \times nd}$. $\eye_d$ is the identity matrix of size $d$.
We use ``$\kron$'' to denote the Kronecker product. \revone{For a square matrix $\MM$, $\det(\MM)$ and $\trace{\MM}$ denote its determinant and trace.}
The 2-norm of a vector is denoted as $\|\cdot\|$. The Frobenious norm of a matrix is denoted as $\|\cdot\|_\frob$.
For a symmetric matrix $\MM$ of size $n \times n$, we use $\lambda_1 \leq \dots \leq \lambda_n$ to denote its  real eigenvalues. % in ascending order.

{\bf Sets.} 
We use calligraphic
fonts to denote sets (\eg~$\calS$).
We use $\calS^n$ (resp. $\ssym^n$) to denote the group of real symmetric (resp. skew-symmetric) matrices with size $n\!\times\!n$. $\psdCone{n} \doteq \{ \!\MM\! \in\! \calS^n: \!\MM\! \succeq 0\}$ denotes the set of $n\!\times\!n$ symmetric \emph{positive semidefinite} matrices. $\SOd \doteq \{ \MR \!\in\! \Real{d \times d}\!:\! \MR\tran\MR \!=\! \eye_d, \det(\MR)=+1 \}$ denotes the 
$d$-dimensional \emph{special orthogonal group}, while  
$\usphere^{d-1} \!\!=\!\! \{ \vu \!\!\in\!\! \Real{d}\!:\!\! \| \vu \| =\! 1 \}$ \mbox{denotes the $d$-dimensional unit sphere.}

{\bf Quaternions.} \revone{Unit quaternions are a representation for a 3D rotation $\MR \in \SOthree$.}
We denote a unit quaternion as a unit-norm column vector $\vq=[\vv\tran\ s]\tran \in \usphere^3$, where $\vv \in \Real{3}$ is the \emph{vector part} of the quaternion and the last element $s$ is the \emph{scalar part}. We  also use $\vq = [q_1\ q_2\ q_3\ q_4]\tran$ to denote the four entries of the quaternion.
Each quaternion represents a 3D rotation and the composition of two rotations $\vq_a$ and $\vq_b$ can be 
computed using the \emph{quaternion product} $\vq_c = \vq_a \qProd \vq_b$:
\bea \label{eq:quatProduct}
\vq_c = \vq_a \qProd \vq_b = \MOmega_1(\vq_a) \vq_b = \MOmega_2(\vq_b) \vq_a,
\eea
where $\MOmega_1(\vq)$ and $\MOmega_2(\vq)$ are defined as follows:
% If we define two linear mappings $\MOmega_1(\vq)$ and $\MOmega_2(\vq)$ that each maps a vector in $\Real{4}$ to a matrix in $\Real{4\times4}$ as:
% \vspace{-3mm}
\begin{equation} \label{eq:Omega_1}
\arraycolsep=2pt\def\arraystretch{0.5}
\scriptscriptstyle \MOmega_1(\vq) \scriptscriptstyle = 
\scriptscriptstyle \left[ \scriptscriptstyle \begin{array}{cccc}
\scriptscriptstyle q_4 & \scriptscriptstyle -q_3 &  \scriptscriptstyle q_2 &  \scriptscriptstyle q_1 \\
\scriptscriptstyle q_3 & \scriptscriptstyle q_4 & \scriptscriptstyle -q_1 & \scriptscriptstyle q_2 \\
\scriptscriptstyle -q_2 & \scriptscriptstyle q_1 & \scriptscriptstyle q_4 & \scriptscriptstyle q_3 \\
\scriptscriptstyle -q_1 & \scriptscriptstyle -q_2 & \scriptscriptstyle -q_3 & \scriptscriptstyle q_4 
\scriptscriptstyle \end{array} \scriptscriptstyle \right],\quad 
\arraycolsep=2pt\def\arraystretch{0.5}
\scriptscriptstyle \MOmega_2(\vq) = \scriptscriptstyle
\bmat{cccc}
\scriptscriptstyle q_4 & \scriptscriptstyle q_3 & \scriptscriptstyle -q_2 & \scriptscriptstyle q_1 \\
\scriptscriptstyle -q_3 & \scriptscriptstyle q_4 & \scriptscriptstyle q_1 & \scriptscriptstyle q_2 \\
\scriptscriptstyle q_2 & \scriptscriptstyle -q_1 & \scriptscriptstyle q_4 & \scriptscriptstyle q_3 \\
\scriptscriptstyle -q_1 & \scriptscriptstyle -q_2 & \scriptscriptstyle -q_3 & \scriptscriptstyle q_4 
\emat.
\end{equation}
% \vspace{-3mm}
The inverse of a quaternion $\vq=[\vv\tran\ s]\tran$ is defined as $\vq\inv = [-\vv\tran\ s]\tran$,
%is straightforward and can be done by reversing the sign of the vector part:
% \bea
% \label{eq:q_inverse}
% \vq\inv = \bmat{c}
% -\vv \\
% s
% \emat,
% \eea
where one simply reverses the sign of the vector part. 
The rotation of a vector $\va \in \Real{3}$ can be expressed in terms of quaternion product. 
Formally, if $\MR$ is the (unique)  rotation matrix corresponding to a unit quaternion $\vq$, then:
\bea
\label{eq:q_pointRot}
\bmat{c}
\MR \va \\
0 
\emat = \vq \qProd \hat{\va} \qProd \vq\inv,
\eea
where $\hat{\va}=[\va\tran\ 0]\tran$ is the homogenization of $\va$, obtained by augmenting $\va$ with an extra entry equal to zero.
The set of unit quaternions, \ie the $4$-dimensional unit sphere $\usphere^3$, is a \emph{double cover} of $\SOthree$ since $\vq$ and $-\vq$ represent the same rotation (this fact can be easily seen by examining eq.~\eqref{eq:q_pointRot}).
%!TEX root = ../main.tex
\spacebeforesection
\section{Robust Registration with \\ Truncated Least Squares Cost}
\label{sec:TLSregistration}

In the robust registration problem, we are given two 3D point clouds 
% $\calA = \{\va_1,\ldots,\va_\nrPoints\}$ 
$\calA = \{\va_i\}_{i=1}^N$ 
and 
% $\calB=\{\vb_1,\ldots,\vb_\nrPoints\}$, 
$\calB=\{\vb_i\}_{i=1}^N$, 
with $\va_i, \vb_i \in \Real{3}$. % (we are typically interested in 3D points, i.e., $d=3$),
 We consider a \emph{correspondence-based} setup, where 
 we are given putative correspondences $(\va_i,\vb_i), i=1,\ldots,\nrPoints$, that 
 obey the following generative model:
%  compute $(s,\MR,\vt)$ given putative correspondences $(\va_i, \vb_i), i=1,\ldots,\nrPoints$. 
% such that:
 \bea
 \label{eq:robustGenModel}
 \vb_i = \sgt \MRgt \va_i + \vtgt + \vo_i + \veps_i,
 \eea
 % where $s \!\in\! \Real{}$, 
 where $\sgt > 0$,
 %$(\MR,\vt)\!\!\in\! \SEthree$ is an unknown pose with 
$\MRgt \in \SOthree$,
% \footnote{$\SOthree \doteq \setdef{\MR\in\Real{3 \times 3}}{\MR\tran \MR = \eye_3, \det(\MR)=+1}$ is the set of proper 3D rotation matrices, where $\eye_d$ is the identity matrix of size $d$, and $\det(\cdot)$ denotes the matrix determinant.} 
and $\vtgt \in\Real{3}$ are the unknown (to-be-computed) scale, rotation, and translation,
% $\vt \!\!\in\!\!\Real{3}$ 
%  rotation $\MR \!\!\in\!\! \SOthree$ and translation $\vt \!\!\in\!\!\Real{3}$, 
 $\veps_i$ models the measurement noise, and $\vo_i$ is a vector of zeros if the pair $(\va_i, \vb_i)$ is an \emph{inlier}, or a vector of arbitrary numbers for \emph{outlier correspondences}. 
 In words, if the $i$-th correspondence $(\va_i,\vb_i)$ is an inlier correspondence, $\vb_i$ corresponds to a 3D transformation of $\va_i$ (plus noise $\veps_i$), while if 
 $(\va_i,\vb_i)$ is an outlier correspondence,  $\vb_i$ is just an arbitrary vector.

 \myParagraph{Registration without Outliers} 
 When $\veps_i$ is a zero-mean Gaussian noise with isotropic covariance $\sigma_i^2 \eye_3$, and all the correspondences are correct (i.e., $\vo_i=\zero,\forall i$), the Maximum Likelihood estimator of $(\sgt,\MRgt,\vtgt)$ can be computed 
 by solving the following nonlinear least squares problem:
 \bea
 \label{eq:standardRegistration}
 \min_{ \sRt } \sumAllPointsi \frac{1}{\sigma_i^2} \normsq{  \vb_i - s \MR \va_i - \vt }.
 \eea
 %
 % where $\SOthree \doteq \setdef{\MR\in\Real{3 \times 3}}{\MR\tran \MR = \eye_3, \det(\MR)=1}$ is the set of proper rotation matrices, 
 % and $\eye_d$ is the identity matrix of size $d$.
 Although~\eqref{eq:standardRegistration} is a non-convex problem, due to the non-convexity of the set $\SOthree$, 
 its optimal solution can be computed  
 in closed form by decoupling the estimation of the scale, rotation, and translation, using Horn's~\cite{Horn87josa} or Arun's method~\cite{Arun87pami}. 
 A key contribution of the present paper is to provide a way to decouple scale, rotation, and translation in 
 the more challenging case with outliers.

% \myParagraph{Robust registration} 
In practice, a large fraction of the correspondences are \emph{outliers}, due to 
%large measurement noise or 
incorrect keypoint matching.
% between some of the  points in the two sets. 
% Therefore, in practical applications, a more realistic model is:
 % \bea
 % \label{eq:robustGenModel}
 % \vb_i = s \MR \va_i + \vt + \vo_i + \veps_i
 % \eea
 % where $\vo_i \in \Real{3}$ is a vector of zeros for the inliers, or is a vector of arbitrary numbers for the outliers 
 % (in other words, inliers follow the generative model~\eqref{eq:standardGenModel}, while outliers are just arbitrary vectors).
Despite the elegance of the closed-form solutions~\cite{Horn87josa,Arun87pami}, they 
are not robust to outliers, and a single ``bad'' outlier can 
 compromise the correctness of the resulting estimate. 
 % Therefore, one typically ``wraps'' these methods 
 % within a \ransac scheme. While \ransac is a popular method for registration, it is not able to deal with extreme amounts of outliers, as shown in Section~\ref{sec:experiments} and in related work~\cite{Bustos18pami-GORE}. 
 Hence, we propose a \emph{truncated least squares registration} formulation that can tolerate extreme amounts of spurious data.

%  \LC{inspired by Horn}
% \LC{TLS is different from max consensus and has the desirable property of rejecting measurements biased towards the boundaries of the inlier set}
%%%% ===================================================================================================
% \myParagraph{Robust registration with \TLS cost}
\myParagraph{Truncated Least Squares Registration}  
We depart from the Gaussian noise model and assume the noise is \emph{unknown but bounded}~\cite{Milanese89chapter-ubb}.
Formally, we assume
the \emph{inlier} noise $\veps_i$ in~\eqref{eq:robustGenModel} is such that $\| \veps_i \| \leq \beta_i$, where 
$\beta_i$ is a given bound. %, while $\vo_i$ is zero if the $k$-th pair is an inlier and it is arbitrary otherwise.

Then we adopt the following \emph{Truncated Least Squares (\TLS) Registration} formulation:
% estimator (\TLS):
%
 \beq
 \hspace{-2mm}
 \label{eq:TLSRegistration}
 \min_{ \sRt } 
 \sumAllPointsi \min \left( \frac{1}{\beta_i^2} \normsq{  \vb_i - s \MR \va_i - \vt }{} \!, \; \barcsq \right) ,
 % \tag{TLS}
 \eeq
 which computes a least squares solution of measurements with small residuals ($\frac{1}{\beta_i^2} \normsq{  \vb_i - s \MR \va_i - \vt }{} \!\leq\! \barcsq$), while discarding measurements with large residuals (when $\frac{1}{\beta_i^2} \normsq{  \vb_i - s \MR \va_i - \vt }{} \!>\! \barcsq$ the $i$-th summand becomes a constant and does not influence the optimization). 
 Note that one can always divide each summand in~\eqref{eq:TLSRegistration}  by $\barcsq$: therefore, 
  one can safely assume $\barcsq$ to be $1$. % (for a suitable choice of $\beta_i$). 
  For the sake of generality, in the following we keep $\barcsq$ since it provides a more direct ``knob'' 
  to be stricter or more lenient towards potential outliers.
 % The reader can safely assume the $\barcsq$ is chosen to be $1$, 
 % in which case, $\beta_i$ become the maximum admissible noise for an inlier in~\eqref{eq:TLSRegistration}. 
 % In the following, we keep $\barcsq$ 
 %  while one may use a different $\barcsq$ 
 % to be stricter or more lenient towards potential outliers. 
 % \HY{I feel we can remove $\barcsq$ and just use 1. In this way we have a single hyper-parameter $\beta_i$, which is the noise bound that controls how much noise we can tolerate.} 

% The parameters $\beta_i$ and $\barcsq$ are fairly easy to set in practice, as discussed in the following remark.
The noise bound $\beta_i$ is fairly easy to set in practice and can be understood as a ``3-sigma'' noise bound 
or as the maximum error we expect from an inlier. 
The interested reader can find a more formal discussion on how to set $\beta_i$ and $\barc$ 
\isExtended{in~Appendix~\ref{sec:choiceBeta}.}{in~Appendix~\ref{sec:choiceBeta}\arxivCite.}
We remark that while we assume to have a bound on the maximum error we expect from the inliers ($\beta_i$), we do not make assumptions on the generative model for the outliers, which is typically unknown in practice.

\begin{remark}[\TLS vs. \maxConLong]
\TLS estimation is related to \emph{\maxConLong}~\cite{Chin17slcv-maximumConsensusAdvances}, a popular robust estimation approach in computer vision. \maxConLong looks for an estimate that maximizes the number of inliers, while \TLS simultaneously computes a least squares estimate for the inliers. 
% The two methods typically return similar estimates, but can produce a different choice of inliers 
The two methods are not guaranteed to produce the same choice of inliers in general, 
since \TLS also penalizes inliers with large errors. 
% and the \TLS formulation tends to remove measurements which are heavily biased 
 \isExtended{Appendix~\ref{sec:TLSvsMC}}{Appendix~\ref{sec:TLSvsMC}\arxivCite} provides a toy example to illustrate the potential mismatch between the two techniques and provides necessary conditions under which the two formulations find the same set of inliers.
 % to
%, but typically perform similarly in practice.
\end{remark}

% as discussed in the following remarks.
% \begin{remark}[Probabilistic choice of $\beta_i$ and $\barcsq$]\label{rmk:chi2}
% While Problem~\eqref{eq:TLSWahbaRot} 
Despite being insensitive to outlier correspondences, the truncated least squares formulation~\eqref{eq:TLSRegistration} is much more challenging to solve globally, compared to the outlier-free case~\eqref{eq:standardRegistration}. This is the case even in simpler estimation problems where the feasible set of the unknowns is convex, as stated below.
%when the feasible set is convex.
% than the least squares formulation~\eqref{eq:standardRegistration}, even if the feasible set were convex.

\begin{remark}[Hardness of \TLS Estimation~\cite{Liu19JCGS-minSumTruncatedConvexFunctions,Barratt19arXiv-minSumClippedConvex}]\label{rmk:tlsHardness} 
The minimization of a sum of truncated convex functions, $\sumAllPointsi \min(f_i(\vxx), \barcsq)$, over a convex feasible set $\vxx \in \calX \subseteq \Real{d}$, is NP-hard in the dimension $d$~\cite{Liu19JCGS-minSumTruncatedConvexFunctions,Barratt19arXiv-minSumClippedConvex}. A simple exhaustive search can obtain a globally optimal solution in exponential time $O(2^N)$~\cite{Barratt19arXiv-minSumClippedConvex}. 
\optional{In low dimensions ($d < 3$), global optimization can be done efficiently by partitioning the feasible set $\calX$ using each convex function $f_i(\vxx)$~\cite{Liu19JCGS-minSumTruncatedConvexFunctions}.}{} 
% In high dimensions, local optimization is possible using~\eg~cyclic coordinate descent~\cite{Liu19JCGS-minSumTruncatedConvexFunctions} and mixed-integer convex programming~\cite{Barratt19arXiv-minSumClippedConvex}.
\end{remark}

Remark~\ref{rmk:tlsHardness} states the NP-hardness of \TLS estimation over a convex feasible set. 
The \TLS problem~\eqref{eq:TLSRegistration} is even more challenging due to the non-convexity of $\SOthree$. 
While problem~\eqref{eq:TLSRegistration} is hard to solve directly, 
in the next section, we show how to decouple the estimation of scale, rotation, and translation 
 using \emph{invariant measurements}.
 % \LC{add remark on difference wrt max consensus and theorem~\eqref{thm:TLSandMC}?}

%!TEX root = ../main.tex
\spacebeforesection
\section{Decoupling Scale, Rotation, \\ and Translation Estimation}
\label{sec:decoupling}

% Solving problem~\eqref{eq:TLSRegistration} directly is computationally expensive 
% due to the non-convexity of the objective and the constraint $\MR\!\in\!\Othree$.
% and related work 
% typically resort to \bnb. 
We propose a general approach to % that 
decouple the estimation of scale, translation, and rotation in problem~\eqref{eq:TLSRegistration}. The key insight is that we can reformulate 
the measurements~\eqref{eq:robustGenModel} to obtain quantities that are invariant to a subset of the 
transformations (scaling, rotation, translation). 
% For instance, if we are able to obtain measurements that 
% are invariant to rotation and translation, we can use them to estimate the scale only. 
% This idea is developed in the following.

%%%%%%%%%%%%%%%%%%%%%%%%%%%%%%%%%%%%%%%%%%%%%%%%%%%%%%%%%%%%%%%%%%%%%%%%%%%%%%%%%%%%%
\subsection{Translation Invariant Measurements (\TIMs)}

% In this section we process the measurements~\eqref{eq:robustGenModel} to obtain \emph{translation invariant measurements} (\TIM). 
While the absolute positions of the points in $\calB$ depend on the translation $\vt$, 
the relative positions are invariant to $\vt$. 
Mathematically, given two points $\vb_i$ and $\vb_j$ from~\eqref{eq:robustGenModel},
% $\vb_i = s \MR \va_i + \vt + \vo_i + \veps_i$
% and $\vb_j = s \MR \va_j + \vt + \vo_j + \veps_j$, 
the relative position of these two points is:  
% Basic ideas: 
% \bit
% \item relative positions of points is invariant to translation
% \item distances between points is invariant to rotations and translations
% \eit
% Consider noiseless setup but with outliers. Then evaluate experimentally effect of noise.
% Intuition: translation invariance
\bea
 \vb_j - \vb_i = s \MR (\va_j - \va_i) + (\vo_j - \vo_i) + (\veps_j - \veps_i),
 \eea
 where the translation $\vt$ cancels out in the subtraction.
 Therefore, we can obtain a \emph{Translation Invariant Measurement} (\TIM) by computing $\TIMa_{ij} \doteq \va_j - \va_i$
 and  $\TIMb_{ij} \doteq \vb_j - \vb_i$, and the \TIM satisfies the following generative model:
 \beq
 \label{eq:TIM}
 \tag{\TIM}
 \TIMb_{ij} = s \MR \TIMa_{ij} + \vo_{ij} + \veps_{ij}, %& \text{(\TIM)},
 \eeq
 where $\vo_{ij} \doteq \vo_j - \vo_i$ is zero if \emph{both} the $i$-th \emph{and} the $j$-th measurements are inliers (or arbitrary otherwise), while $\veps_{ij} \doteq \veps_j - \veps_i$ is the measurement noise. 
 It is easy to see that if $\|\veps_i\| \leq \beta_i$ and $\|\veps_j\| \leq \beta_j$,
 then $\|\veps_{ij}\| \leq \beta_i + \beta_j \doteq \TIMNoiseBound_{ij}$.
 % note that we lost something: $\vo_{ij}$ is outlier if $i$ is outlier or $j$ is outlier.

 The advantage of the \TIMs in eq.~\eqref{eq:TIM} is that their generative model \edit{only depends on} two unknowns, $s$ and $\MR$. 
 \edit{The number of \TIMs is upper-bounded by $\left( \substack{\nrPoints \\ 2} \right) = \nrPoints (\nrPoints-1) / 2$, where 
 pairwise relative measurements between all pairs of points are computed. 
 % For computational reasons, one might want to downsample the \TIMs.
 \prettyref{thm:TIM} below connects the \TIMs with the topology of a graph defined over the 
 3D points.}
 % below provides a graph-theoretic way to create the \TIMs.}
 % However, we can subsample \TIMs using different
 % corresponding to the edges in a \emph{complete graph}. However, we can subsample \TIMs using different graph topologies, as shown in Fig.~\ref{fig:TIMGraph}. This idea is formalized in~\prettyref{thm:TIM}.}

 % The number of \TIMs clearly depends on the topology of the graph $\calG$ in~\prettyref{thm:TIM} 
 % and is upper-bounded by $\nrPoints (\nrPoints-1) / 2$ (complete graph). Three potential topologies (complete, star, spanning tree) 
 % are shown in Fig.~\ref{fig:TIMGraph}, 
 % which also gives an intuitive visualization of the \TIMs.

% A general way to create \TIMs is given by~\prettyref{thm:TIM}.
 \begin{theorem}[Translation Invariant Measurements]\label{thm:TIM}
 Define the vectors $\va \in \Real{3\nrPoints}$ (resp. $\vb \in \Real{3\nrPoints}$), obtained by concatenating all vectors $\va_i$ (resp. $\vb_i$) in a single column vector.
 Moreover, define an arbitrary graph $\calG$ with nodes $\{1,\ldots,\nrPoints\}$ and an arbitrary set of edges $\calE$. 
 Then, the vectors $\TIMa = (\MA \kron \eye_3) \va$ and $\TIMb = (\MA \kron \eye_3) \vb$ are \TIMs, where $\MA \in \Real{\vert \calE \vert \times \nrPoints}$ is the incidence matrix of $\calG$~\cite{Chung96book}.
 % , and $\kron$ is the Kronecker product. 
 %Then 
 % Multiply by any incidence matrix of a graph defined over the points.
 \end{theorem}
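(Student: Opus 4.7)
My plan is to prove the theorem by unpacking the definition of the incidence matrix and showing that the Kronecker-product action on the stacked point vectors simply produces the edge-wise differences already identified as \TIMs at the start of the subsection.

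First, I would recall the definition of the incidence matrix $\MA \in \Real{\vert\calE\vert \times N}$ of $\calG$: fix an arbitrary orientation on each edge $e_k = (i,j) \in \calE$, then the $k$-th row of $\MA$ has $-1$ in column $i$, $+1$ in column $j$, and $0$ elsewhere. The key algebraic observation is that, because the $\eye_3$ factor in $\MA \kron \eye_3$ acts identically on each coordinate block, the $k$-th $3$-dimensional block of $(\MA \kron \eye_3) \va$ equals $\va_j - \va_i$, which is precisely $\TIMa_{ij}$ as defined above eq.~\eqref{eq:TIM}. An analogous computation gives that the $k$-th block of $(\MA \kron \eye_3) \vb$ is $\vb_j - \vb_i = \TIMb_{ij}$.

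Next, I would verify that each such pair $(\TIMa_{ij}, \TIMb_{ij})$ obeys the \TIM generative model. Substituting the generative model $\vb_i = \sgt \MRgt \va_i + \vtgt + \vo_i + \veps_i$ from~\eqref{eq:robustGenModel} into the difference yields
\begin{equation}
\TIMb_{ij} = \sgt \MRgt \TIMa_{ij} + (\vo_j - \vo_i) + (\veps_j - \veps_i),
\end{equation}
with the translation $\vtgt$ canceling. Setting $\vo_{ij} \doteq \vo_j - \vo_i$ and $\veps_{ij} \doteq \veps_j - \veps_i$ gives exactly eq.~\eqref{eq:TIM}, which completes the derivation.

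I do not anticipate a real obstacle: the entire argument is a direct manipulation of the incidence matrix together with the generative model~\eqref{eq:robustGenModel}, and the Kronecker product with $\eye_3$ is purely bookkeeping to apply the scalar edge differences coordinate-wise to the 3D points. The only point worth stating carefully is that the conclusion is independent of the choice of edge orientation (flipping an edge merely negates both $\TIMa_{ij}$ and $\TIMb_{ij}$, which preserves the generative model since $s\MR$ is linear), and that the claim holds for \emph{any} edge set $\calE$, which is what makes the result useful as a template for later graph-theoretic constructions (\eg the maximum-clique step in Section~\ref{sec:teaser}).
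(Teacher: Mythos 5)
Your proposal is correct and follows essentially the same argument as the paper: the paper's proof simply carries out the identical computation in stacked form, multiplying the vectorized generative model by $(\MA \kron \eye_3)$ and using the Kronecker identities together with $\MA \ones_\nrPoints = \zero$ (the null-space property of the incidence matrix) to cancel the translation globally, whereas you identify each $3\times 1$ block as the edge difference $\va_j - \va_i$ and invoke the pointwise derivation preceding eq.~\eqref{eq:TIM}. Your added remark on orientation independence is a fine (if not strictly necessary) observation, and no gap remains.
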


A proof of the theorem is given in~\isExtended{Appendix~\ref{sec:proof:thm:TIM}.}{Appendix~\ref{sec:proof:thm:TIM}\arxivCite.} 
\revone{\TIMs generated from a complete graph on the \bunny dataset~\cite{Curless96siggraph} are illustrated in Fig.~\ref{fig:TIMGraph}.}

\begin{figure}[t]
	\begin{center}
			\includegraphics[width=0.7\columnwidth]{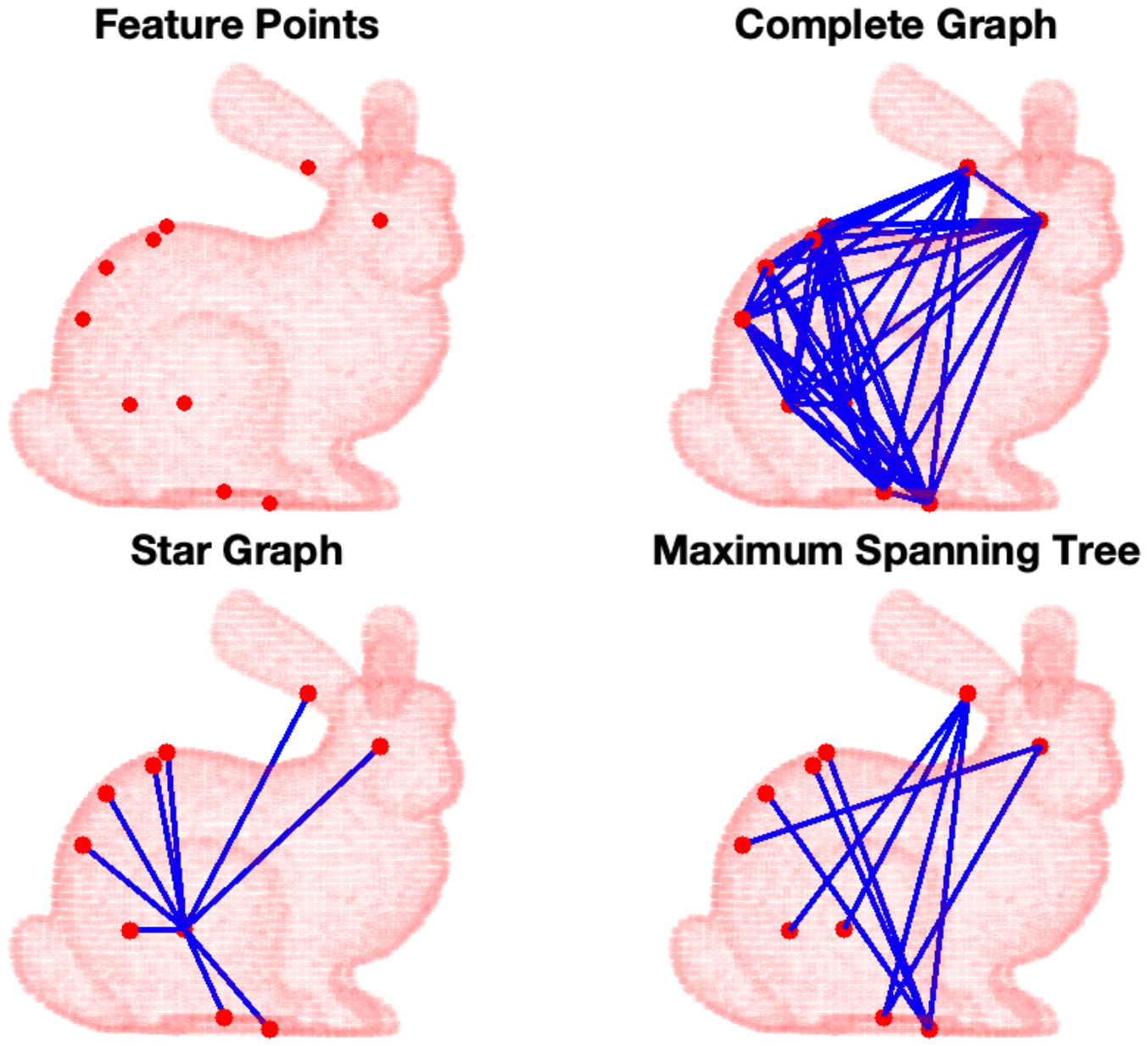}
	\caption{\TIMs generated from a complete graph in the \bunny dataset~\cite{Curless96siggraph}.
	 \label{fig:TIMGraph}}
	 	\vspace{-8mm} 
	\end{center}
\end{figure}
%==========================

%%%%%%%%%%%%%%%%%%%%%%%%%%%%%%%%%%%%%%%%%%%%%%%%%%%%%%%%%%%%%%%%%%%%%%,%%%%%%%
 \subsection{Translation and Rotation Invariant Measurements (\TRIMs)}

 % In this section we process the measurements~\eqref{eq:TIM} to obtain \emph{translation and rotation invariant measurements} (\TRIM).
While the relative locations of pairs of points (\TIMs) still depend on the rotation $\MR$, their distances 
are invariant to both $\MR$ and $\vt$. Therefore, to build 
rotation invariant measurements, we compute  
%Hence we can compute 
the norm of each \TIM vector:
\bea
\label{eq:TRIM01}
 \| \TIMb_{ij} \|  = \| s \MR \TIMa_{ij} + \vo_{ij} + \veps_{ij} \|.
 \eea
 We now note that for the inliers ($\vo_{ij} = \bm{0}$) it holds (using $\| \veps_{ij} \| \leq \TIMNoiseBound_{ij}$ and the triangle inequality):
 \bea
  \| s \MR \TIMa_{ij} \| - \TIMNoiseBound_{ij}
 \leq 
 \| s \MR \TIMa_{ij} + \veps_{ij} \| 
 \leq 
 \| s \MR \TIMa_{ij} \| + \TIMNoiseBound_{ij},
 \eea
 % \bea
 % \max_{ \| \veps_{ij} \| \leq \beta_{ij}} \| s \MR \TIMa_{ij} + \veps_{ij} \| = \| s \MR \TIMa_{ij} \| + \beta_{ij} \;\; \text{and} \\
 % \min_{ \| \veps_{ij} \| \leq \beta_{ij}} \| s \MR \TIMa_{ij} + \veps_{ij} \| = \| s \MR \TIMa_{ij} \| - \beta_{ij}
 % \eea
 hence we can write~\eqref{eq:TRIM01} equivalently as:
 \bea
\label{eq:TRIM02}
 \| \TIMb_{ij} \|  = \| s \MR \TIMa_{ij} \| + \tilde{o}_{ij} + \tilde{\eps}_{ij},
 \eea
 with $|\tilde{\eps}_{ij}| \leq \TIMNoiseBound_{ij}$, and $\tilde{o}_{ij} = 0$ if both $i$ and $j$ are inliers 
 or is an arbitrary scalar otherwise. Recalling that the norm is rotation invariant and that $s>0$, and dividing both sides of~\eqref{eq:TRIM02}
 by $\|\TIMa_{ij}\|$, we obtain new measurements $s_{ij} \doteq \frac{\| \TIMb_{ij} \|}{\| \TIMa_{ij} \|}$:
 \beq
 \label{eq:TRIM}
 \tag{\TRIM}
 s_{ij} = s + o^s_{ij} + \eps^s_{ij}, %& \text{(\TIM)}
 \eeq
 where $\eps^s_{ij} \doteq \frac{\tilde{\eps}_{ij}}{ \| \TIMa_{ij} \| }$, 
 and $o^s_{ij} \doteq \frac{\tilde{o}_{ij}}{ \| \TIMa_{ij} \| }$. 
 It is easy to see that  $|\eps^s_{ij} | \leq \TIMNoiseBound_{ij} / \| \TIMa_{ij} \|$ since $|\tilde{\eps}_{ij}| \leq \TIMNoiseBound_{ij}$. 
 We define $\alpha_{ij} \doteq \TIMNoiseBound_{ij} / \| \TIMa_{ij} \|$.

 Eq.~\eqref{eq:TRIM} describes a \emph{Translation and Rotation Invariant Measurement} (\TRIM) whose generative model is only function of the unknown scale $s$.

\begin{remark}[Novelty of Invariant Measurements] 
Ideas similar to the \emph{translation invariant measurements} (\TIMs) have been used in recent work~\cite{Li19arxiv-fastRegistration,Bustos18pami-GORE,Liu18eccv-registration,Agarwal17icra-RFM-SLAM,Carlone14ijrr-lagoPGO2D} while (i) \revone{the novel {graph-theoretic interpretation} of~\prettyref{thm:TIM}
generalizes previously proposed methods and allows pruning outliers as described in Section~\ref{sec:maxClique},} and (ii) the notion of \emph{translation and rotation invariant measurements} (\TRIMs) is completely new. We also remark that while related work uses invariant measurements 
to filter-out outliers~\cite{Bustos18pami-GORE} or to speed up \bnb~\cite{Li19arxiv-fastRegistration,Liu18eccv-registration}, we show that they also allow 
% computing a \emph{polynomial-time} robust registration solution. 
decoupling the estimation of scale, rotation, and translation. %registration problem into 
\end{remark}

A summary table of the invariant measurements and the corresponding noise bounds is given \isExtended{in Appendix~\ref{sec:summaryIM}.}{in Appendix~\ref{sec:summaryIM}\arxivCite.}

 % \careful{A summary table of the invariant measurements and a remark on the novelty of creating \TIMs and \TRIMs is presented in the Appendix.}
 %  Note that one can obtain a \TRIM for each \TIM. %, hence the number of \TRIM 
 %  %is also upper-bounded by $\nrPoints^2$.
 % The derivation of the \TIM and \TRIM allows solving the registration problem 
 % by decoupling the estimation of scale, rotation, and translation, as described below. 

%!TEX root = ../main.tex

%%%%%%%%%%%%%%%%%%%%%%%%%%%%%%%%%%%%%%%%%%%%%%%%%%%%%%%%%%%%%%%%%%%%%%%%%%%%%
\spacebeforesection
 \section{\nameLong (\name): Overview}
 \label{sec:teaser}

% The invariance of the \TIM and \TRIM enables a decoupled approach to solve the truncated least squares registration problem~\eqref{eq:TLSRegistration},
%  which solves for  the scale, rotation, and translation in cascade, rather than jointly. 
 % This section provides an overview of the proposed approach. 
 % In particular, 
 We propose a decoupled approach % using \TIMs and \TRIMs 
 to solve in cascade for the scale, the rotation, and the translation in~\eqref{eq:TLSRegistration}.
 The approach, named \emph{\nameLong} (\name),
  works as follows:
 \begin{enumerate}
 \item we use the \TRIMs to estimate the scale $\hats$
 \item we use $\hats$ and the \TIMs to estimate the rotation $\hat\MR$
 \item we use $\hats$ and $\hat\MR$ to estimate the translation $\hat\vt$ from $(\va_i, \vb_i)$ in the original 
 \TLS problem~\eqref{eq:TLSRegistration}.
\end{enumerate}

We state each subproblem in the following subsections.

% =====================================================
\subsection{Robust Scale Estimation}
\label{sec:scaleEstimation}

The generative model~\eqref{eq:TRIM} describes linear \emph{scalar} measurements $s_{ij}$ of the unknown scale $s$, affected 
by bounded noise $|\eps^s_{ij}| \leq\!\alpha_{ij}$ including potential outliers (when $o^s_{ij} \neq 0$).
Again, we estimate the scale given the measurements $s_{ij}$ and the bounds $\alpha_{ij}$ using a 
\TLS estimator:

\vspace{-5mm}
\bea
\label{eq:TLSscale}
\hats = \argmin_{s} \sumAllIM \min\left( \frac{  (  s-s_k )^2 }{  \alpha^2_k }  \;,\; \barcsq \right),
\eea
where for simplicity we numbered the invariant measurements from $1$ to $K = |\calE|$ and adopted the notation $s_k$ instead of $s_{ij}$. 
Section~\ref{sec:scaleTranslation_adaptiveVoting} shows that~\eqref{eq:TLSscale} can be solved exactly and in polynomial time via \emph{adaptive voting}~(Algorithm~\ref{alg:adaptiveVoting}). 

% =====================================================
\subsection{Robust Rotation Estimation}
\label{sec:rotEstimation}

Given the scale estimate $\hats$ produced by the scale estimation, 
the generative model~\eqref{eq:TIM} describes measurements $\TIMb_{ij}$ affected 
by bounded noise $\| \veps_{ij} \| \leq\!\TIMNoiseBound_{ij}$ including potential outliers (when $\vo_{ij} \neq \bm{0}$). Again, we compute $\MR$ from the estimated scale $\hats$, the \TIM measurements $(\TIMa_{ij}, \TIMb_{ij})$ and the bounds $\TIMNoiseBound_{ij}$ using a 
\TLS estimator:
\bea
\label{eq:TLSrotation}
\hatMR = \argmin_{\MR \in \SOthree} \sumAllIM \min
\left( \frac{  \|  \TIMb_k - \hats \MR \TIMa_k \|^2 }{  \TIMNoiseBound^2_k }  
\;,\; 
\barcsq \right),
\eea
where for simplicity we numbered the measurements from $1$ to $K = |\calE|$ and adopted the notation $\TIMa_k, \TIMb_{k}$ instead of $\TIMa_{ij}, \TIMb_{ij}$. Problem~\eqref{eq:TLSrotation} is known as the \emph{Robust Wahba} or \emph{Robust Rotation Search} problem~\cite{Yang19iccv-QUASAR}\optional{, which is hard to optimize globally even if one were to drop the non-convex  constraint $\MR \in \SOthree$ (see Remark~\ref{rmk:tlsHardness}).}{.}
Section~\ref{sec:robustRotationSearch_SDPRelaxationVerification} shows that~\eqref{eq:TLSrotation} can be solved exactly and in polynomial time (in practical problems) via a \emph{tight} semidefinite relaxation. 

% =====================================================
\subsection{Robust Component-wise Translation Estimation}
\label{sec:tranEstimation}

After obtaining the scale and rotation
 % (presented in Section~\ref{sec:robustRotationSearch_SDPRelaxationVerification}) 
 estimates $\hats$ and $\hatMR$ by solving~\eqref{eq:TLSscale}-\eqref{eq:TLSrotation}, we can substitute them back into problem~\eqref{eq:TLSRegistration} to estimate the translation $\vt$. Although~\eqref{eq:TLSRegistration} operates on the $\ell_2$ norm of the vector, we propose to 
 %relax the $\ell_2$ norm to the $\ell_\infty$ norm and 
 solve 
for the translation component-wise, i.e., 
we compute the entries $t_1, t_2, t_3$ of $\vt$ independently: 
% as the minimizers of: 
% %
 \bea
 \label{eq:TLStranslation}
 \hatt_j = \argmin_{ \substack{ t_j } } 
 \sumAllPointsi \min \left( \frac{(t_j - [\vb_i - \hats\hatMR\va_i]_j)^2}{\beta_i^2},  \barcsq \right),% \;\; j=1,2,3,
 \eea
 for $j=1, 2, 3$, and 
 where $[\cdot]_j$ denotes the $j$-th entry of a vector.
 % \footnote{\revone{Component-wise translation estimation is a simple approximation for the full translation estimate. The full \TLS translation estimate can be obtained using techniques described in~\cite{Liu19JCGS-minSumTruncatedConvexFunctions,Yang20arXiv-certifiablePerception}. In \isExtended{Appendix~\ref{sec:app-compareTranslation},}{Appendix~\ref{sec:app-compareTranslation}\arxivCite,} we provide numerical experiments to show that the component-wise translation estimation is sufficiently close to the full \TLS translation estimate.}}
 Since $\vb_i - \hats\hatMR\va_i$ is a known vector in this stage, it is easy to see that~\eqref{eq:TLStranslation} is a scalar \TLS problem.
 Therefore, similarly to~\eqref{eq:TLSscale},
Section~\ref{sec:scaleTranslation_adaptiveVoting} shows that~\eqref{eq:TLStranslation} can be solved exactly and in polynomial time via \emph{adaptive voting}~(Algorithm~\ref{alg:adaptiveVoting}). \revone{The interested reader can find a discussion on component-wise versus full TLS translation estimation in~\isExtended{Appendix~\ref{sec:app-compareTranslation},}{Appendix~\ref{sec:app-compareTranslation}\arxivCite.}}
  %!TEX root = ../main.tex

\subsection{Boosting Performance: Max Clique Inlier Selection (\mcis)} 
\label{sec:maxClique}

While in principle we could simply execute the cascade of scale, rotation, and translation estimation described above, 
our graph-theoretic interpretation of~\prettyref{thm:TIM} affords further opportunities to prune outliers.

% The first key observation is that the \TLS scale and rotation estimation discards outliers while computing an estimate. The second observation
Consider the \TRIMs as edges in the complete graph $\calG(\calV,\calE)$ (where the vertices $\calV$ are the \revone{correspondences} and the 
edge set $\calE$ induces the \TIMs and \TRIMs per~\prettyref{thm:TIM}).
 After estimating the scale~\eqref{eq:TLSscale} (discussed in Section~\ref{sec:scaleTranslation_adaptiveVoting}), we can prune the edges 
$(i,j)$ in the graph 
whose associated \TRIM $s_{ij}$ have been classified as outliers by the \TLS formulation \revone{(\ie~$|s_{ij}-\hats| > \barc \alpha_{ij}$)}.
% is such that $\frac{(s_{ij} - \hats)^2}{\alpha_{ij}^2} \!\!> \!\!\barcsq$ (these are exactly the measurement rejected as outliers in~\eqref{eq:TLSscale}).
% We can use the scale estimate $\hats$ from Algorithm~\ref{alg:adaptiveVoting} to prune edges $(i,j)$ in the graph 
% whose associated \TRIM $s_{ij}$ is such that $\frac{(s_{ij} - \hats)^2}{\alpha_{ij}^2} \!\!> \!\!\barcsq$ (these are exactly the measurement rejected as outliers in~\eqref{eq:TLSscale}).
This allows us 
% Algorithm~\ref{alg:adaptiveVoting} returns a solution $\hats$ of~\eqref{eq:TLSscale}.
 % and, implicitly, the corresponding consensus set. Therefore, it enables to reject as outliers the measurements that are not in the consensus set, i.e., for which $\frac{(s_{k} - \hats)^2}{\alpha_{k}^2} > \barcsq$. 
% Indeed, it is easy to prove that 
to obtain a pruned graph $\calG'(\calV,\calE')$, with $\calE'\subseteq \calE$, where gross outliers are discarded.
% rejecting a measurement means removing an edge in $\calG(\calV,\calE)$. The following result 
%provides a more effective way to 
The following result ensures that inliers form a clique in the graph $\calG'(\calV,\calE')$, enabling an even more substantial rejection of outliers. 
%result provides an even stronger method

% Denote $\calE'$ as the edge set of the graph after discarding outliers, we have the following theorem.
\begin{theorem}[Maximal Clique Inlier Selection]\label{thm:maxClique}
% Inlier nodes form a clique in $\calE'$, and there is at least one maximal clique in 
% $\calE'$ that contains all the inlier nodes.\todo{inlier nodes is not a concept}
Edges corresponding to inlier \TIMs form a clique in $\calE'$, and there is at least one maximal clique in 
$\calE'$ that contains all the inliers. %\todo{inlier nodes is not a concept}
\end{theorem}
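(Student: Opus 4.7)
The plan is to show that every pair of inlier correspondences remains connected in $\calE'$, and then to invoke the standard fact that every clique is contained in a maximal clique. First I would fix two inlier correspondences $i$ and $j$, i.e., $\vo_i = \vo_j = \zero$ in~\eqref{eq:robustGenModel}. Taking the difference $\vb_j-\vb_i$ and invoking \prettyref{thm:TIM} together with~\eqref{eq:TIM} gives $\TIMb_{ij} = \sgt \MRgt \TIMa_{ij} + \veps_{ij}$ with $\|\veps_{ij}\| \leq \TIMNoiseBound_{ij}$, and then taking norms and dividing by $\|\TIMa_{ij}\|$ yields the corresponding \TRIM $s_{ij} = \sgt + \eps^s_{ij}$ with $|\eps^s_{ij}| \leq \alpha_{ij}$ --- this is just eq.~\eqref{eq:TRIM} specialized to an inlier pair.

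Next I would use the correctness of the \TLS scale estimator (to be established by the adaptive voting analysis of Section~\ref{sec:scaleTranslation_adaptiveVoting}) to conclude $\hats = \sgt$, so that $|s_{ij} - \hats| = |\eps^s_{ij}| \leq \alpha_{ij} \leq \barc \alpha_{ij}$ for any $\barc \geq 1$. Hence the pruning rule $|s_{ij}-\hats| \leq \barc \alpha_{ij}$ is satisfied and $(i,j)\in\calE'$. Since $(i,j)$ was an arbitrary pair of inlier correspondences, every such pair is connected in $\calE'$, which is precisely the statement that the inlier vertices induce a complete subgraph, i.e., a clique, in $\calG'$. The second assertion follows from the standard greedy argument: iteratively add any vertex of $\calV$ that is adjacent in $\calE'$ to every current member of the inlier clique, stopping when no such vertex remains; the result is a maximal clique in $\calG'$ that by construction still contains every inlier.

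The main subtlety is Step 2: the argument relies on $\hats$ being close enough to $\sgt$ that inlier residuals remain within $\barc\alpha_{ij}$; more precisely it suffices that $|\hats - \sgt| \leq (\barc - 1) \min_{(i,j) \text{ inlier}} \alpha_{ij}$. This is not a structural property of the graph but a property of the scalar \TLS estimator, which is addressed in the analysis of adaptive voting in Section~\ref{sec:scaleTranslation_adaptiveVoting}; alternatively the theorem can be read as conditional on the correctness of $\hats$. Once that is in hand, no further graph-theoretic machinery beyond the greedy clique-extension argument is needed.
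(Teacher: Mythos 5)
Your proof is correct and follows essentially the same route as the paper's: every pair of correct correspondences yields a \TIM/\TRIM consistent with the true scale, so the inlier edges are pairwise present in $\calE'$ and hence form a clique, which is then contained in at least one maximal clique. The only substantive difference is that you make explicit the requirement that $\hats$ be close enough to $\sgt$ for the inlier edges to survive the scale-based pruning, a point the paper's proof leaves implicit; note, however, that this closeness is not delivered by the adaptive-voting analysis of Section~\ref{sec:scaleTranslation_adaptiveVoting} (which only certifies global optimality of the scalar \TLS problem) but by the inlier/outlier assumptions of the estimation contracts in Section~\ref{sec:guarantees}, so your conditional reading of the theorem is the right one.
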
 

A proof of Theorem~\ref{thm:maxClique} is presented in 
\isExtended{Appendix~\ref{sec:proof:thm:maxClique}.}{Appendix~\ref{sec:proof:thm:maxClique}\arxivCite.} Theorem~\ref{thm:maxClique} allows us to {prune outliers} by finding the maximal cliques of $\calG'(\calV,\calE')$. \revone{Similar idea has been explored for rigid body motion segmentation~\cite{Perera12accv-MCMotionSeg}.} Although finding the maximal cliques of a graph takes exponential time in general, there exist efficient approximation algorithms that scale to graphs with {millions of nodes}~\cite{Bron73acm-allCliques,Pattabiraman15im-maxClique,Wu15ejor-reviewMCPAlgs}. Under high outlier rates, the graph $\calG'(\calV,\calE')$ is sparse and the maximal clique problem can be solved quickly in practice~\cite{Eppstein10isac-maxCliques}. 
Therefore, in this paper, after performing scale estimation and removing the corresponding gross outliers,
we compute \edit{the maximal clique with largest cardinality, \emph{i.e.,} the \emph{maximum clique}},
% the maximal clique with largest size 
as the inlier set to pass to rotation estimation. 
% It is easy to prove that when the outlier ratio is below 50\%, the maximal clique with largest size must contain all the inliers. 
\prettyref{sec:separateSolver} shows that this method drastically reduces the number of outliers.
% \footnote{
% A straightforward extension of this idea would be to explore multiple cliques until a large set of inliers is found.}
%We also show experimentally in \prettyref{sec:separateSolver} that the maximal clique with largest size almost always selects the correct inlier set even under extreme outlier rates.
% All measurements that do not belong to the clique are 
% rejected as outliers.
% \LC{In summary, the function ${\tt estimate\_s}$ in Algorithm~\ref{alg:ITR} first calls  Algorithm~\ref{alg:adaptiveVoting}, then computes the \edit{maximum clique} in the resulting graph \edit{to reject all measurements outside the clique.} }
  %!TEX root = ../main.tex

\subsection{Pseudocode of \name}

The pseudocode of \name is summarized in Algorithm~\ref{alg:ITR}.

 \begin{algorithm}[h]
\SetAlgoLined
\textbf{Input:} \ points $(\va_i,\vb_i)$ and bounds $\beta_i$ ($i=1,\ldots,\nrPoints$), threshold $\barcsq$ (default: $\barcsq=1$), \revone{graph $\calG(\calV,\calE)$ (default: $\calG$ describes the complete graph)\;}  % ($i=1,\ldots,\nrPoints$)
\textbf{Output:} \  $\hats, \hatMR, \hatvt$\;
\% Compute \TIM and \TRIM \\
$\TIMb_{ij} = \vb_j\!-\!\vb_i \;,\; \TIMa_{ij} = \va_j\!-\!\va_i \;,\; \TIMNoiseBound_{ij} = \beta_i \!+\! \beta_j, \;\; \forall (i,j) \in \calE$ \\
$s_{ij} = \frac{ \| \TIMb_{ij} \| }{ \| \TIMa_{ij} \| }\;,\; \alpha_{ij} = \frac{\TIMNoiseBound_{ij}}{\| \TIMa_{ij} \|},  \;\; \forall (i,j) \in \calE$ \\
\% Decoupled estimation of $s, \MR, \vt$\\
$\hats = {\tt estimate\_s}( \{ s_{ij},\alpha_{ij} : \forall (i,j) \in \calE\}, \barcsq )$\label{line:est_s}\\\revone{
$\calG'(\calV',\calE'') = {\tt maxClique}(\calG(\calV,\calE'))$  \ \% prune outliers}  \label{line:mcis}\\ % using Theorem~\ref{thm:maxClique}
$\hatMR = {\tt estimate\_R}( \{ \TIMa_{ij},\TIMb_{ij},\TIMNoiseBound_{ij} : \forall (i,j) \in \calE''\}, \barcsq, \hats )$\label{line:est_R}\\
$\hatvt = {\tt estimate\_t}( \{ \va_{i},\vb_{i},\beta_{i} : i \in \calV' \}, \barcsq, \hats, \hatMR )$\label{line:est_t}\\
 \textbf{return:} $\hats, \hatMR, \hatvt$
 % \While{g(S) > R}{
 %  $S_{i+1} \leftarrow S_i \cup \{ \underset{a \in G \backslash S_i}{\text{arg min}} \ {\rm{MG}}_g(a | S_i)  \}$ \;
 % }
 \caption{\emph{\nameLong} (\name).\label{alg:ITR}}
\end{algorithm}
\setlength{\textfloatsep}{0pt}%

The following Sections~\ref{sec:scaleTranslation_adaptiveVoting}-\ref{sec:robustRotationSearch_SDPRelaxationVerification} describe how to implement the functions in 
lines~\ref{line:est_s}, \ref{line:est_R}, \ref{line:est_t} of Algorithm~\ref{alg:ITR}.  In particular, we show how to
obtain global and robust estimates of scale (${\tt estimate\_s}$) and translation (${\tt estimate\_t}$) in Section~\ref{sec:scaleTranslation_adaptiveVoting}, and 
rotation (${\tt estimate\_R}$) in Section~\ref{sec:robustRotationSearch_SDPRelaxationVerification}. 
 % Intuition: rotation invariance
 % \bea
 % \TIMb_{ij} = s \MR \TIMa_{ij} + \MR \vo_{ij}  \iff \\
 % \|  \TIMb_{ij} \|^2 = \| s \TIMa_{ij} + \vo_{ij} \|^2 = \| s \TIMa_{ij} \|^2 + \hat{o}_{ij} \iff \\ 
 % \frac{  \|  \TIMb_{ij} \| }{  \|  \TIMa_{ij} \| }  = s + \sqrt{\hat{o}_{ij}}  \\
 %  \text{(rotation and translation invariant)}
 % \eea
% Decoupling is inexact for 2 reasons: we neglect correlation or lose efficiency.
% In the noiseless case it is exact.
% \LC{In summary, the function ${\tt estimate\_s}$ in Algorithm~\ref{alg:ITR} first calls  Algorithm~\ref{alg:adaptiveVoting}, then computes the \edit{maximum clique} in the resulting graph \edit{to reject all measurements outside the clique.} }
%!TEX root = ../main.tex
\spacebeforesection
\section{Robust Scale and Translation Estimation: \\ Adaptive Voting}
\label{sec:scaleTranslation_adaptiveVoting}
In this section, we propose an \emph{adaptive voting} algorithm to solve \emph{exactly} the robust scale estimation and the robust component-wise translation estimation. 

\subsection{Adaptive Voting for Scalar \TLS Estimation}
Both the scale estimation~\eqref{eq:TLSscale} and the component-wise translation estimation~\eqref{eq:TLStranslation} resort to finding a \TLS estimate of an unknown \emph{scalar} given 
a set of outlier-corrupted measurements. 
% Therefore, scale and translation estimation reduce to solving a scalar \TLS problem.
% In the following we show that scalar \TLS can be solved exactly in polynomial time.
% \emph{inlier} and \emph{outlier} measurements, determined by if the residual is below or above a given bound ($\barc\alpha_k$ for scale and $\barc\beta_i$ for component-wise translation).
Using the notation for scale estimation~\eqref{eq:TLSscale}, the following theorem shows that one can solve \emph{scalar \TLS estimation} in polynomial time by a simple enumeration. 

\begin{theorem}[Optimal Scalar \TLS Estimation]\label{thm:scalarTLS}
Consider the scalar \TLS problem in~\eqref{eq:TLSscale}.
For a given $s \in \Real{}$, define the \emph{consensus set of $s$} as $\conSet(s) = \setdef{k}{ \frac{  (  s-s_k )^2 }{  \alpha^2_k } \leq \barcsq }$. 
Then, for any $s \in \Real{}$, there are at most $2\nrTIM-1$ different non-empty consensus sets. If we name these sets $\conSet_1, \ldots, \conSet_{2K-1}$, then the solution of~\eqref{eq:TLSscale} can be computed by enumeration as:
\bea \label{eq:solveScaleEnumerate}
\hats = \argmin 
\left\{ 
f_s(\hats_i) : \hats_i = 
\left(\sum_{k\in\conSet_i} \frac{1}{\alpha_k^2} \right) \inv \sum_{k\in\conSet_i} \frac{s_k}{\alpha_k^2}, \forall i 
\right\},
\eea   
where $f_s(\cdot)$ is the objective function of~\eqref{eq:TLSscale}. 
\end{theorem}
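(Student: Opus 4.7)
The plan is to argue via a piecewise-constant decomposition induced by the ``truncation'' structure of the TLS objective and then reduce optimization on each piece to weighted least squares.

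\textbf{Step 1 (counting the consensus sets).} For each measurement $k$, define the interval $\mathcal{S}_k = [s_k - \bar c\,\alpha_k,\, s_k + \bar c\,\alpha_k]$; then $k \in \mathcal{I}(s)$ iff $s \in \mathcal{S}_k$. As $s$ varies along $\mathbb R$, the set-valued map $s \mapsto \mathcal{I}(s)$ can only change at the $2K$ endpoints $\{s_k \pm \bar c\,\alpha_k\}_{k=1}^{K}$. These points partition $\mathbb R$ into at most $2K+1$ maximal intervals on which $\mathcal{I}(s)$ is constant. The two unbounded end-intervals (at $-\infty$ and $+\infty$) lie outside every $\mathcal{S}_k$ and hence have empty consensus set, so there are at most $2K-1$ distinct non-empty consensus sets, which I will denote $\mathcal{I}_1,\dots,\mathcal{I}_{2K-1}$.

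\textbf{Step 2 (WLS bound on each region).} For a fixed non-empty $\mathcal{I}\subseteq\{1,\dots,K\}$, define the weighted-least-squares surrogate
\[
g_\mathcal{I}(s) \;=\; \sum_{k\in\mathcal{I}} \frac{(s-s_k)^2}{\alpha_k^2} \;+\; (K-|\mathcal{I}|)\,\bar c^{\,2}.
\]
Because $(s-s_k)^2/\alpha_k^2 \geq \min\big((s-s_k)^2/\alpha_k^2,\bar c^{\,2}\big)$ term-by-term, we have the global upper bound $g_\mathcal{I}(s) \geq f_s(s)$ for all $s \in \mathbb R$, with equality whenever $\mathcal{I}(s)=\mathcal{I}$. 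The function $g_\mathcal{I}$ is a strictly convex quadratic in $s$, so its unconstrained minimizer is the closed-form weighted average $\hat s_\mathcal{I} = \big(\sum_{k\in\mathcal{I}} 1/\alpha_k^2\big)^{-1}\sum_{k\in\mathcal{I}} s_k/\alpha_k^2$ stated in the theorem.

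\textbf{Step 3 (enumeration attains a global optimum).} Let $s^\star$ be any global minimizer of $f_s$ and set $\mathcal{I}^\star = \mathcal{I}(s^\star)$; this is one of the $2K-1$ enumerated consensus sets (if $\mathcal{I}^\star=\emptyset$ the objective is the trivial constant $K\bar c^{\,2}$ and the claim is immediate, so assume it non-empty). Chaining the inequalities from Step 2,
\[
f_s(\hat s_{\mathcal{I}^\star}) \;\leq\; g_{\mathcal{I}^\star}(\hat s_{\mathcal{I}^\star}) \;\leq\; g_{\mathcal{I}^\star}(s^\star) \;=\; f_s(s^\star),
\]
where the first inequality uses the global bound $g_\mathcal{I}\geq f_s$, the second uses that $\hat s_{\mathcal{I}^\star}$ minimizes $g_{\mathcal{I}^\star}$, and the equality uses $\mathcal{I}(s^\star)=\mathcal{I}^\star$. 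Optimality of $s^\star$ then forces equality throughout, so $\hat s_{\mathcal{I}^\star}$ is itself a global optimum. Therefore the minimum of $f_s$ over the finite list $\{\hat s_{\mathcal{I}_1},\dots,\hat s_{\mathcal{I}_{2K-1}}\}$ coincides with $\min_s f_s(s)$, which is the claimed enumeration formula.

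\textbf{Main obstacle.} The only non-trivial point is the last one: $\hat s_{\mathcal{I}^\star}$ need not lie in the region on which $\mathcal{I}(s)=\mathcal{I}^\star$, so one cannot directly argue that $\hat s_{\mathcal{I}^\star}$ minimizes $f_s$ by ``minimizing each piece''. The surrogate inequality $g_\mathcal{I}\geq f_s$ neatly sidesteps this: it turns a region-constrained minimization into an unconstrained convex one whose minimizer is automatically a feasible competitor for the original TLS problem, independent of which region it falls in. Everything else (enumerating breakpoints, computing WLS estimates, evaluating $f_s$) is routine.
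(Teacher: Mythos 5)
Your proof is correct. Step 1 (attaching the interval $[s_k-\alpha_k\barc,\,s_k+\alpha_k\barc]$ to each measurement and counting breakpoints) is exactly the argument the paper gives for the ``at most $2K-1$ non-empty consensus sets'' claim, modulo the same glossed-over detail in both proofs about the value of $\conSet(s)$ exactly at a breakpoint (which only matters when two breakpoints coincide, and is then covered by the degenerate interval). Where you go beyond the paper is Steps 2--3: the published proof simply asserts that enumerating the consensus sets and taking the per-set weighted least-squares estimate yields the optimum, without addressing the obstacle you correctly flag, namely that $\hats_{\conSet^\star}$ need not lie in the region where $\conSet(s)=\conSet^\star$, so one cannot argue piecewise. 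Your majorization argument --- the surrogate $g_{\conSet}(s)=\sum_{k\in\conSet}(s-s_k)^2/\alpha_k^2+(K-|\conSet|)\barcsq$ upper-bounds $f_s$ everywhere, coincides with it where $\conSet(s)=\conSet$, and its unconstrained minimizer is a feasible competitor, giving the sandwich $f_s(\hats_{\conSet^\star})\leq g_{\conSet^\star}(\hats_{\conSet^\star})\leq g_{\conSet^\star}(s^\star)=f_s(s^\star)$ --- is the right way to close that gap, so your write-up is in effect a more rigorous version of the paper's proof rather than a different route.
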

Theorem~\ref{thm:scalarTLS}, whose proof is given in~\isExtended{Appendix~\ref{sec:proof:thm:scalarTLS},}{Appendix~\ref{sec:proof:thm:scalarTLS}\arxivCite,} is based on the insight 
that the consensus set can only change at the boundaries of the intervals $[s_k - \alpha_k\barc, s_k + \alpha_k\barc]$ (Fig.~\ref{fig:consensusMax}(a)) and there are at most $2\nrTIM$ such boundaries. The theorem
also suggests a straightforward \emph{adaptive voting} algorithm to solve~\eqref{eq:TLSscale}, with pseudocode given in Algorithm~\ref{alg:adaptiveVoting}. The algorithm first builds the boundaries of the 
intervals shown in Fig.~\ref{fig:consensusMax}(a) (line~\ref{line:boundaries}). Then, for each interval, it evaluates the consensus set (line~\ref{line:consensusSet}, see also Fig.~\ref{fig:consensusMax}(b)). 
Since the consensus set does not change within an interval, we compute it at the interval centers (line~\ref{line:midPoints}, see also Fig.~\ref{fig:consensusMax}(b)). Finally, the cost of each consensus set is computed and the smallest cost is returned as optimal solution (line~\ref{line:enumeration}).

\begin{remark}[Adaptive Voting]
 The adaptive voting algorithm generalizes the histogram voting method of Scaramuzza~\cite{Scaramuzza11ijcv}  (i) to adaptively adjust the bin size in order to obtain an optimal solution and (ii) to solve a \TLS (rather than \maxConLong) formulation. Adaptive voting can be also used for \maxConLong, by simply returning the largest consensus set $\conSet_i$ in Algorithm~\ref{alg:adaptiveVoting}.
 % The adaptive voting algorithm generalizes the histogram  
 We also refer the interested reader to the paper of Liu and Jiang~\cite{Liu19JCGS-minSumTruncatedConvexFunctions}, who recently developed a similar algorithm 
 in an independent effort and provide a generalization to 2D \TLS estimation problems.
 % After our initial paper~\cite{Yang19rss-teaser}, we discovered that
 % Liu and Jiang~\cite{Liu19JCGS-minSumTruncatedConvexFunctions} developed a similar algorithm 
 % in an independent effort and also showed a generalization to 2D \TLS estimation problems.
 %its applicability to 2D problems as well.  
% Similar methods have been developed in parallel to our proposal in~\cite{Liu19JCGS-minSumTruncatedConvexFunctions}.
% We recently found that the adaptive voting algorithm has the same flavor as the algorithm proposed in~\cite{Liu19JCGS-minSumTruncatedConvexFunctions}, which shows that the \TLS estimation can be solved exactly and efficiently in low dimensions by partitioning the feasible set using each of the residual functions.
\end{remark}

In summary, the function ${\tt estimate\_s}$ in Algorithm~\ref{alg:ITR} calls Algorithm~\ref{alg:adaptiveVoting} to compute the optimal scale estimate $\hats$, and the function ${\tt estimate\_t}$ in Algorithm~\ref{alg:ITR} calls Algorithm~\ref{alg:adaptiveVoting} three times 
(one for each entry of $\vt$) and returns the translation estimate $\hatvt = [\hat{t}_1\;\hat{t}_2\;\hat{t}_3]\tran$. 

% ========================
%!TEX root = ../main.tex

\begin{figure}[t]
	\begin{center}
	\begin{minipage}{\columnwidth}
	\includegraphics[width=1.0\columnwidth]{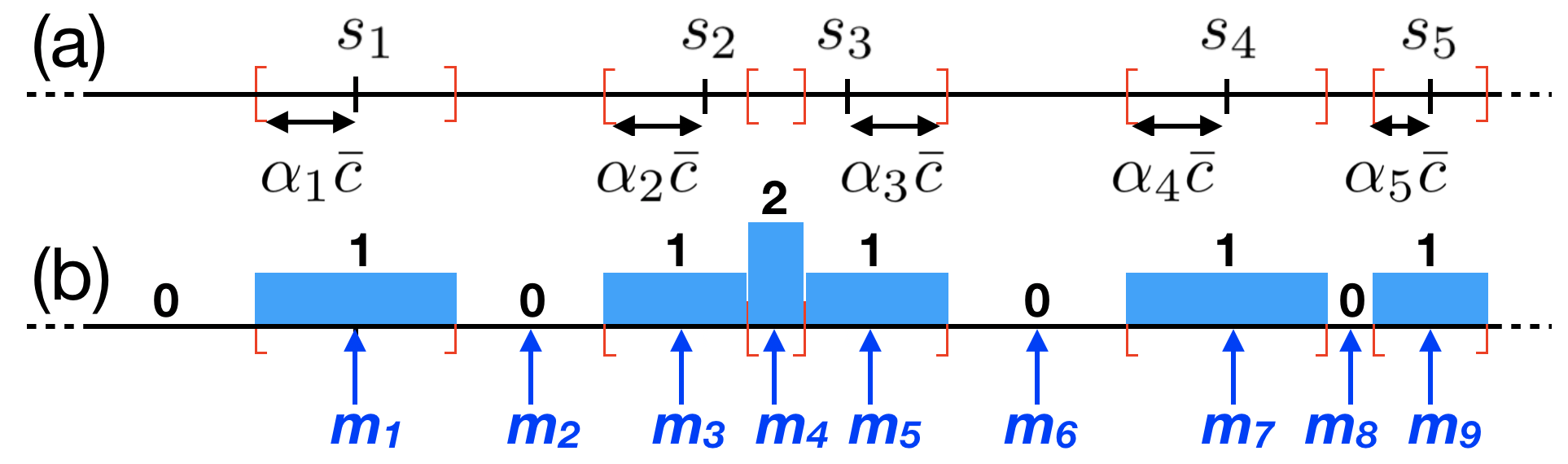}
	\end{minipage}
	\vspace{-3mm} 
	\caption{(a) confidence interval for each measurement $s_k$ (every $s$ in the $k$-th interval satisfies 
	%$\|s-s_k\|/\alpha_k \leq \barc$); 
	$\frac{(s-s_k)^2}{\alpha_k^2} \leq \barcsq$;
	(b) cardinality of the consensus set for every $s$ and middle-points $m_i$ for each interval with constant consensus set. 
	 \label{fig:consensusMax}}
	\vspace{-1mm} 
	\end{center}
\end{figure} 
% ========================

% ========================
%!TEX root = ../main.tex
% \newlength{\textfloatsepsave} 
% \setlength{\textfloatsepsave}{\textfloatsep} 
% \setlength{\textfloatsep}{0pt} 

\begin{algorithm}[t]
\SetAlgoLined
\textbf{Input:} \ $s_k, \alpha_k, \barc$\; 
\textbf{Output:} \  $\hats$, scale estimate solving~\eqref{eq:TLSscale}\;
\% Define boundaries and sort \\
$\vv = \text{sort}([s_1 - \alpha_1\barc, s_1 + \alpha_1\barc,\ldots,s_\nrTIM - \alpha_\nrTIM\barc, s_\nrTIM + \alpha_\nrTIM\barc])$\label{line:boundaries}\\ 
\% Compute middle points \\%\;
$m_i = \frac{\vv_{i} + \vv_{i+1}}{2}$ for $i=1,\ldots,2\nrTIM-1$ \label{line:midPoints} \\%\;
% \For{$i = 1,\ldots,2\nrTIM-1$}{$m_i = \frac{\vv_{2i-1} + \vv_{2i}}{2}$} \label{line:midPoints} 
 \% Voting \\%\;
 \For{$i = 1,\ldots,2\nrTIM-1$}{
 	$\conSet_i = \emptyset$\\
 	\For{$k = 1,\ldots,\nrTIM$}{
		\If{$m_i \in [s_k - \alpha_k\barc, s_k + \alpha_k\barc]$}{
			$\conSet_i = \conSet_i \cup \{k\}$ \% add to consensus set \label{line:consensusSet} \\
		}
	}
}
\% Enumerate consensus sets and return best\\
 \textbf{return:} \revone{$\hats$ from Eq.~\eqref{eq:solveScaleEnumerate}.} %\hats = 
%  $\!\argmin 
% \left\{ 
% f_s(\hats_i) : \hats_i = 
% \left(\displaystyle\sum_{k\in\conSet_i} \frac{1}{\alpha_k^2} \right)\inv \hspace{-3mm} \displaystyle\sum_{k\in\conSet_i} \frac{s_k}{\alpha_k^2}, \forall i 
% \right\}\!\!$  
\label{line:enumeration}
 % \While{g(S) > R}{
 %  $S_{i+1} \leftarrow S_i \cup \{ \underset{a \in G \backslash S_i}{\text{arg min}} \ {\rm{MG}}_g(a | S_i)  \}$ \;
 % }
 \caption{Adaptive Voting.\label{alg:adaptiveVoting}}
\end{algorithm}

% \setlength{\textfloatsep}{\textfloatsepsave}

%!TEX root = ../main.tex
\spacebeforesection
\section{Robust Rotation Estimation: \\ Semidefinite Relaxation and Fast Certificates}
\label{sec:robustRotationSearch_SDPRelaxationVerification}

This section describes how to compute an optimal solution to problem~\eqref{eq:TLSrotation} or certify that a given rotation estimate is globally optimal. 
\TLS estimation is NP-hard according to Remark~\ref{rmk:tlsHardness} and 
the robust rotation estimation~\eqref{eq:TLSrotation}  has the additional complexity of involving a non-convex domain (\ie \SOthree). 
This section provides a surprising result: we can compute globally optimal solutions to~\eqref{rmk:tlsHardness} (or certify that a given estimate is optimal) in polynomial time in virtually all practical problems 
%commonly found in practice 
using a \emph{tight} semidefinite (SDP) relaxation. \revone{In other words, while the NP-hardness implies the presence of worst-case instances that are not solvable in polynomial time, these instances are not observed to frequently occur in practice.} 
% In other words, while the NP-hardness implies the presence of worst-case instances that are not solvable in polynomial time, we found these instances are 

We achieve this goal in three steps.
First,  we show that the robust rotation estimation problem~\eqref{eq:TLSrotation} can be reformulated as a \emph{Quadratically Constrained Quadratic Program} (QCQP) by adopting a quaternion formulation and using a technique we name \emph{binary cloning} (Section~\ref{sec:quatQCQP}). 
Second, we show how to obtain a \emph{semidefinite relaxation} and relax the non-convex QCQP into a convex SDP with \emph{redundant constraints} (Section~\ref{sec:quatSDPRelaxation}).
The SDP relaxation enables solving~\eqref{eq:TLSrotation} with global optimality guarantees in polynomial time. 
Third, we show that the relaxation also enables a fast \revone{\emph{optimality certifier}}, which, given a rotation guess, can test if the rotation is the optimal solution to~\eqref{eq:TLSrotation} (Section~\ref{sec:fastCertification}). 
\revone{The latter will be instrumental in developing a fast and certifiable registration approach that circumvents the time-consuming task of solving a large relaxation with existing SDP solvers.}
% demonstrate a fast verification algorithm that can obtain a suboptimality bound and certify the correctness of any solution to problem~\eqref{eq:TLSrotation} (\eg obtained from a local optimizer), through the lens of semidefinite relaxation .
% (Section~\ref{sec:quatDualVerification})
The derivation in Sections~\ref{sec:quatQCQP} and \ref{sec:quatSDPRelaxation} is borrowed from our previous work~\cite{Yang19iccv-QUASAR}, while the fast certification in Section~\ref{sec:fastCertification} has not been presented before.

%% ==============================================================================
\subsection{Robust Rotation Estimation as a QCQP}
\label{sec:quatQCQP}

This section rewrites problem~\eqref{eq:q_pointRot} as a Quadratically Constrained Quadratic Program (QCQP).
Using the quaternion preliminaries introduced in Section~\ref{sec:notationPreliminary} --and in particular eq.~\eqref{eq:q_pointRot}-- it is easy to rewrite~\eqref{eq:TLSrotation}  
using unit quaternions:
\bea \label{eq:TLSquat} 
\min_{\vq \in \usphere^3}  \sumAllIM \min \left( \frac{\Vert \hatvb_k - \vq \qProd \hatva_k \qProd \vq\inv \Vert^2}{\TIMNoiseBound_k^2}  \;,\; \barcsq \right),
\eea
% where $\sphere{3}=\{\vq\in\Real{4}: \Vert \vq \Vert = 1\}$ is the \emph{3-Sphere} manifold that contains the set of all unit-norm vectors in $\Real{4}$ (unit quaternions), 
where we defined 
$\hatva_k \doteq [\hats \TIMa_k\tran \; 0]\tran$ and $\hatvb_k \doteq [\TIMb_k\tran \; 0]\tran$, and ``$\qProd$'' denotes the quaternion product. 
% The equivalence between~\eqref{eq:TLSquat} and~\eqref{eq:TLSrotation} can be easily understood from eq.~\eqref{eq:q_pointRot}.
The main advantage of using~\eqref{eq:TLSquat} is that we replaced the set \SOthree with a simpler set, the $4$-dimensional unit sphere $\usphere^3$.

{\bf From \TLS to Mixed-Integer Programming.}
Problem~\eqref{eq:TLSquat} is hard to solve globally, due to the non-convexity of both the cost function and the domain $\usphere^{3}$. As a first step towards obtaining a QCQP, we expose the non-convexity of the cost by 
rewriting the \TLS cost using binary variables. 
% The first technique we use is to introduce binary variables to write the cost function in additive form. 
% Towards this goal,
In particular, we rewrite the inner ``$\min$'' in~\eqref{eq:TLSquat} using the following property, that holds for any pair of scalars $x$ and $y$: 
%write the minimum of two real scalars $a$ and $b$ as a minimization over a binary variable $\theta$:
\bea
\label{eq:minxy}
\min(x,y)=\min_{\theta \in \{+1,-1\}} \frac{1+\theta}{2}x + \frac{1-\theta}{2}y.
\eea
Eq.~\eqref{eq:minxy} can be verified to be true by inspection: the right-hand-side returns $x$ (with minimizer $\theta=+1$) 
if $x<y$, and $y$ (with minimizer $\theta=-1$) if $x>y$. This enables us to rewrite problem \eqref{eq:TLSquat} as a mixed-integer program including the quaternion
$\vq$ and binary variables $\theta_k, \; i=1,\ldots,K$:
\bea \label{eq:TLSadditiveForm}
\min_{\substack{ \vq \in \usphere^3 \\ \theta_k = \{\pm1\}} } \sumAllIM \frac{1+\theta_k}{2}\frac{\Vert \hatvb_k - \vq \qProd \hatva_k \qProd \vq\inv \Vert^2}{\TIMNoiseBound_k^2}  + \frac{1-\theta_k}{2}\barcsq.
\eea
The reformulation is related to the Black-Rangarajan duality 
between robust estimation and line processes~\cite{Black96ijcv-unification}: the \TLS cost is an extreme case of robust function that results in a binary line process.
Intuitively, the binary variables $\{\theta_k\}_{k=1}^N$ in problem~\eqref{eq:TLSadditiveForm} decide whether a given measurement $k$ is an inlier 
($\theta_k=+1$) or an outlier ($\theta_k=-1$). \revone{Exposing the non-convexity of \TLS cost function using binary variables makes the \TLS cost more favorable than other non-convex robust costs (\eg~the Geman-McClure cost used in \FGR~\cite{Zhou16eccv-fastGlobalRegistration}), because it enables binary cloning (Proposition~\ref{prop:binaryCloning}) and semidefinite relaxation (\prettyref{sec:quatSDPRelaxation}).}
% inform the inlier or outlier set membership of the corresponding pair of measurements: $(\va_i, \vb_i)$ is a pair of inliers if $\theta_i=+1$, and outliers if $\theta_i=-1$.

{\bf From Mixed-Integer to Quaternions.}
Now we convert the mixed-integer program~\eqref{eq:TLSadditiveForm} to an optimization over $K+1$ quaternions. The intuition is that, if we define extra quaternions $\vq_k \doteq \theta_k \vq$, 
we can rewrite~\eqref{eq:TLSadditiveForm} as a function of $\vq$ and $\vq_k$ ($k=1,\ldots,K$). This is a key step towards getting a quadratic cost (Proposition~\ref{prop:qcqp}) and is formalized as follows.
 % This re-parametrization 
% The re-parametrization is formalized in the following proposition.
\vspace{-2mm}
\begin{proposition}[Binary Cloning] \label{prop:binaryCloning}
The mixed-integer program~\eqref{eq:TLSadditiveForm} is equivalent (in the sense that they admit the same 
optimal solution $\vq$) to the following optimization:
\bea
\label{eq:TLSadditiveForm2}
\min_{\substack{ \vq \in \usphere^3 \\ \vq_k = \{\pm\vq\}} } & \hspace{-4mm}
\displaystyle\sumAllIM 
\displaystyle\frac{\Vert \hatvb_k \!-\! \vq \qProd \hatva_k \qProd \vq\inv 
+ \vq\tran\vq_k \hatvb_k \!-\! \vq \qProd \hatva_k \qProd \vq_k\inv \Vert^2  
}{4\TIMNoiseBound_k^2} \nonumber \hspace{-5mm}\\
&\displaystyle  + \frac{1-\vq\tran\vq_k}{2}\barcsq.   \hspace{-5mm}
\eea
which involves $K+1$ quaternions ($\vq$ and $\vq_k$, $i=1,\ldots,K$).
\end{proposition}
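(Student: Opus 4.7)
My plan is to prove the equivalence by a direct substitution argument: the constraint $\vq_k \in \{\pm\vq\}$ is exactly saying that there exists a binary variable $\theta_k \in \{\pm 1\}$ with $\vq_k = \theta_k \vq$, so the two problems have identical feasible sets after relabeling. I then verify that, under this substitution, each summand of the cost in~\eqref{eq:TLSadditiveForm2} reduces to the corresponding summand of~\eqref{eq:TLSadditiveForm}, which gives equality of objective values and hence of minimizers.

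The main computation is to rewrite the quaternion residual. Setting $\vq_k = \theta_k \vq$, I will use two elementary identities. First, since $\vq \in \usphere^3$ is a unit quaternion and $\theta_k=\pm 1$, one has $\vq\tran \vq_k = \theta_k \|\vq\|^2 = \theta_k$, so the constant term $\frac{1-\vq\tran\vq_k}{2}\barcsq$ immediately becomes $\frac{1-\theta_k}{2}\barcsq$, matching the outlier penalty in~\eqref{eq:TLSadditiveForm}. Second, inspection of the inverse definition $\vq\inv = [-\vv\tran\,\;s]\tran$ gives $(\theta_k\vq)\inv = \theta_k \vq\inv$ for $\theta_k \in \{\pm 1\}$, and because the quaternion product is bilinear one obtains
\[
\vq \qProd \hatva_k \qProd \vq_k\inv \;=\; \theta_k\,\bigl(\vq \qProd \hatva_k \qProd \vq\inv\bigr).
\]
Denoting the residual $\vr_k \doteq \hatvb_k - \vq \qProd \hatva_k \qProd \vq\inv$, the numerator inside the squared norm in~\eqref{eq:TLSadditiveForm2} collapses to $(1+\theta_k)\vr_k$.

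To close the argument, I will use the identity $(1+\theta_k)^2 = 2(1+\theta_k)$ valid for $\theta_k \in \{\pm 1\}$ (both sides equal $4$ when $\theta_k=1$ and $0$ when $\theta_k=-1$). Then
\[
\frac{\bigl\|(1+\theta_k)\vr_k\bigr\|^2}{4\TIMNoiseBound_k^2} \;=\; \frac{2(1+\theta_k)\|\vr_k\|^2}{4\TIMNoiseBound_k^2} \;=\; \frac{1+\theta_k}{2}\,\frac{\|\vr_k\|^2}{\TIMNoiseBound_k^2},
\]
which is exactly the inlier term in~\eqref{eq:TLSadditiveForm}. Summing over $k$ and combining with the matching constant term shows the two objectives coincide on corresponding feasible points. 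Conversely, any $(\vq,\vq_k)$ feasible for~\eqref{eq:TLSadditiveForm2} can be pulled back to $(\vq,\theta_k)$ feasible for~\eqref{eq:TLSadditiveForm} by reading off the unique sign $\theta_k$ with $\vq_k = \theta_k\vq$, giving the reverse direction and establishing that the two problems attain the same optimal value at the same optimal $\vq$.

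I do not expect a real obstacle in this argument: the only subtlety is to handle the quaternion inverse under the sign flip $\vq \mapsto \theta_k \vq$ correctly, and the one-line check using $(\theta_k\vq)\inv = \theta_k \vq\inv$ resolves it. Everything else is algebraic bookkeeping driven by the key identity $(1+\theta_k)^2 = 2(1+\theta_k)$ on $\{\pm 1\}$, which is what makes the ``binary cloning'' trick lift a linear combination with binary coefficients into a genuinely quadratic expression in the cloned variables, as needed in Section~\ref{sec:quatSDPRelaxation} for the subsequent SDP relaxation.
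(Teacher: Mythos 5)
Your proof is correct and follows essentially the same route as the paper's: both rest on the substitution $\vq_k=\theta_k\vq$ (with $\vq\tran\vq_k=\theta_k$ and $(\theta_k\vq)\inv=\theta_k\vq\inv$) together with the observation that the binary coefficient can be moved in or out of the squared norm, i.e. $(1+\theta_k)^2=2(1+\theta_k)$ on $\{\pm 1\}$, which is exactly the paper's remark that $\frac{1+\theta_k}{2}\in\{0,1\}$ equals its own square. The only cosmetic difference is the direction of the computation (you expand~\eqref{eq:TLSadditiveForm2} down to~\eqref{eq:TLSadditiveForm}, the paper lifts~\eqref{eq:TLSadditiveForm} up to~\eqref{eq:TLSadditiveForm2}), and your explicit pull-back of the sign $\theta_k$ covers the equivalence of feasible sets just as the paper does implicitly.
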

While a formal proof is given in \isExtended{Appendix~\ref{sec:proof:prop:binaryCloning},}{Appendix~\ref{sec:proof:prop:binaryCloning}\arxivCite,} it is fairly easy to see that 
if $\vq_k = \{\pm\vq\}$, or equivalently, $\vq_k = \theta_k \vq$ with $\theta_k \in \{\pm1\}$, then 
$\vq_k\tran\vq = \theta_k$, and $\vq \qProd \hatva_k \qProd \vq_k\inv =  \theta_k ( \vq \qProd \hatva_k \qProd \vq\inv)$ which exposes the relation between~\eqref{eq:TLSadditiveForm} and~\eqref{eq:TLSadditiveForm2}.  
We dubbed the re-parametrization~\eqref{eq:TLSadditiveForm2} \emph{binary cloning} since now we created a ``clone'' $\vq_k$ for each measurement, such that $\vq_k = \vq$ for inliers (recall $\vq_k = \theta_k \vq$) and $\vq_k = -\vq$ for outliers.

{\bf From Quaternions to \QCQP.}
We conclude this section by showing that~\eqref{eq:TLSadditiveForm2} can be actually written as a \QCQP. 
This observation is non-trivial since~\eqref{eq:TLSadditiveForm2} has a \emph{quartic} cost and $\vq_k = \{\pm\vq\}$ 
is not in the form of a quadratic constraint. 
The re-formulation as a \QCQP is given in the following.
\vspace{-2mm}
\begin{proposition}[Binary Cloning as a \QCQP] \label{prop:qcqp}
Define a single column vector $\vxx = [\vq\tran\ \vq_1\tran\ \dots\ \vq_K\tran]\tran$ 
stacking all variables in Problem~\eqref{eq:TLSadditiveForm2}. 
Then, Problem~\eqref{eq:TLSadditiveForm2}  
is equivalent (in the sense that they admit the same 
optimal solution $\vq$) to the following Quadratically-Constrained Quadratic Program:
\bea \label{eq:TLSBinaryClone}
\displaystyle\min_{\vxx \in \Real{4(K+1)}} & \vxx\tran \MQ \vxx \\
\subject &  \vxx_q\tran \vxx_q = 1 \hspace{30mm} \nonumber \\
& \vxx_{q_k} \vxx_{q_k}\tran = \vxx_{q} \vxx_{q}\tran, \forall \revone{k} = 1,\dots,K. \nonumber  
\eea
where $\MQ \in \calS^{4(K+1)}$ is a known symmetric matrix that depends on the \TIM measurements $\TIMa_k$ and 
$\TIMb_k$ \revone{(the explicit expression is given in \isExtended{Appendix~\ref{sec:proof:prop:qcqp}}{Appendix~\ref{sec:proof:prop:qcqp}\arxivCite})}, and the notation $\vxx_q$ (resp. $\vxx_{q_k}$) denotes the 4D subvector of $\vxx$ corresponding to $\vq$ (resp. $\vq_k$).
\end{proposition}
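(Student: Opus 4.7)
The plan has three parts: (i) show the feasible set of~\eqref{eq:TLSBinaryClone} is exactly the binary-cloning feasible set of~\eqref{eq:TLSadditiveForm2}; (ii) express the (seemingly quartic) objective of~\eqref{eq:TLSadditiveForm2} as a homogeneous quadratic form $\vxx\tran \MQ \vxx$; (iii) assemble $\MQ$.

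First, I would handle the feasible set. The constraint $\vxx_q\tran \vxx_q = 1$ is literally $\vq \in \usphere^3$. For the matrix equality $\vxx_{q_k}\vxx_{q_k}\tran = \vxx_q \vxx_q\tran$, the direction $\vq_k = \pm \vq \Rightarrow \vq_k\vq_k\tran = \vq\vq\tran$ is immediate. For the converse, both sides are rank-one PSD matrices; taking traces yields $\|\vq_k\|^2 = \|\vq\|^2 = 1$, and since a rank-one PSD matrix determines its principal unit eigenvector up to sign, we conclude $\vq_k = \pm \vq$. Hence the two feasible sets coincide.

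Second, I would rewrite the objective. Using eq.~\eqref{eq:quatProduct}, every triple product can be written in either of the forms
\[
\vq \qProd \hatva_k \qProd \vq\inv = \MOmega_1(\vq)\,\MOmega_2(\vq\inv)\,\hatva_k,\qquad \vq \qProd \hatva_k \qProd \vq_k\inv = \MOmega_1(\vq)\,\MOmega_2(\vq_k\inv)\,\hatva_k,
\]
each bilinear in one of the pairs $(\vq,\vq)$ or $(\vq,\vq_k)$. Hence the vector inside the squared norm in~\eqref{eq:TLSadditiveForm2} is a \emph{linear} function of the entries of the $4\times 4$ blocks $\vxx_q\vxx_q\tran$ and $\vxx_q\vxx_{q_k}\tran$ of the lifted matrix $\vxx\vxx\tran$, once the stand-alone $\hatvb_k$ is homogenized to $(\vq\tran\vq)\hatvb_k$ using $\vq\tran\vq=1$. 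The key algebraic observation is the multiplicativity of the quaternion norm, $\|\vp\qProd\vr\|=\|\vp\|\|\vr\|$: combined with $\|\vq\|=\|\vq_k\|=1$ (the latter coming from Step~1), it collapses self-squared terms such as $\|\vq\qProd\hatva_k\qProd\vq\inv\|^2$ to $\|\hatva_k\|^2$, and mixed terms such as $\|\vq\qProd\hatva_k\qProd\vq_k\inv\|^2$ likewise to $\|\hatva_k\|^2$, each of which is re-homogenized to $\|\hatva_k\|^2 (\vq\tran\vq)$. The cross terms in $\|\cdot\|^2$ pair two bilinear-in-$\vxx$ factors; after using the unit-norm constraints to cancel the extra quadratic factors, every surviving monomial is of degree exactly two in $\vxx$. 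The linear penalty $\frac{1-\vq\tran\vq_k}{2}\barcsq$ is homogenized as $\frac{\barcsq}{2}(\vq\tran\vq-\vq\tran\vq_k)$, which is already a quadratic form in $\vxx$.

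Third, collecting coefficients and symmetrizing yields a symmetric $\MQ \in \calS^{4(K+1)}$ whose nonzero $4\times 4$ blocks are indexed by the pairs $(\vq,\vq)$, $(\vq,\vq_k)$, and $(\vq_k,\vq_k)$ that appear in the sum over $k$; the explicit block entries, being just sums of outer products of $\hatva_k$, $\hatvb_k$, $\MOmega_1$, and $\MOmega_2$ with $\beta_k$ weights, are deferred to the appendix. The main obstacle is purely algebraic book-keeping in Step~2: one must verify term-by-term that \emph{every} monomial produced by expanding the squared norm is degree-$\leq 2$ in $\vxx$ after invoking $\|\vq\|^2=1$ and $\|\vq_k\|^2=1$, so that no quartic residue survives; once this cancellation is checked, the rest is mechanical.
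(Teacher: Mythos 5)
Your proposal is correct and follows essentially the same route as the paper's proof: expand the squared norm, use the operator identities $\MOmega_1,\MOmega_2$ and the norm-invariance of unit-quaternion multiplication to collapse or rewrite each term, homogenize constants and the penalty via $\vq\tran\vq=1$, and assemble $\MQ$ blockwise; your constraint-equivalence argument (trace plus uniqueness of the unit eigenvector of a rank-one PSD matrix) is a slightly cleaner variant of the paper's entrywise sign-matching. One caution on your Step~2 wording: the quartic cross-terms carrying the scalar factor $\vq\tran\vq_k$, such as $(\vq\tran\vq_k)\,\hatvb_k\tran(\vq \qProd \hatva_k \qProd \vq\inv)$, are \emph{not} reducible by the unit-norm constraints alone—they become quadratic only through the cloning relation $\vq_k=(\vq\tran\vq_k)\,\vq$ (equivalently $\vq_k=\theta_k\vq$, $\theta_k^2=1$), which your Step~1 makes available on the feasible set and which is exactly the device the paper's proof invokes.
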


A complete proof of Proposition~\ref{prop:qcqp} is given in~\isExtended{Appendix~\ref{sec:proof:prop:qcqp}.}{Appendix~\ref{sec:proof:prop:qcqp}\arxivCite.}
Intuitively, (i) we developed the squares in the cost function~\eqref{eq:TLSadditiveForm2}, (ii) we used the properties of unit quaternions (Section~\ref{sec:notationPreliminary}) to simplify the expression to a quadratic cost, and (iii) we adopted the 
more compact notation afforded by the vector $\vxx$ to obtain~\eqref{eq:TLSBinaryClone}.

%% ==============================================================================
\subsection{Semidefinite Relaxation}
\label{sec:quatSDPRelaxation}
Problem~\eqref{eq:TLSBinaryClone} writes the \TWlong problem~\eqref{eq:TLSrotation} as a \QCQP. 
Problem~\eqref{eq:TLSBinaryClone} is still a non-convex problem (quadratic equality constraints are non-convex).
% but now the non-convexity is confined to the constraints, while the cost is convex. 
Here we develop a \emph{tight} convex semidefinite programming (SDP) relaxation for problem~\eqref{eq:TLSBinaryClone}. 

The crux of the relaxation consists in rewriting problem~\eqref{eq:TLSBinaryClone}  
% is that if we concatenate all quaternions $\vq$ and $\{\vq_i\}_{i=1}^N$ 
% into a single column vector $\vxx = [\vq\tran\ \vq_1\tran\ \dots\ \vq_N\tran]\tran$, then 
% Problem~\eqref{eq:TLSBinaryClone} can be written 
as a function of the following matrix:
% consists in noticing that we can write Problem~\eqref{eq:TLSBinaryClone} as 
% a function of the following matrix. 
% Let us concatnate all the decision variables $\vq$ and $\{\vq_i\}_{i=1}^N$ in a single column vector $\MX = [\vq\tran\ \vq_1\tran\ \dots\ \vq_N\tran]\tran$, and we observe the lifted version of $\MX$, $\MZ \doteq \MX\tran \MX$ contains all linear and quadratic terms in $\vq$ and $\vq_i$:
\bea \label{eq:matrixZ}
\MZ = \vxx \vxx\tran = \bmat{cccc}
\vq\vq\tran & \vq \vq_1\tran & \cdots & \vq\vq_K\tran \\
\star & \vq_1\vq_1\tran & \cdots & \vq_1\vq_K\tran \\
\vdots & \vdots & \ddots & \vdots \\
\star & \star & \cdots & \vq_K \vq_K\tran 
\emat \in \psdCone{4(K+1)}. % \calS^{4(K+1)}.
\eea
% In particular, we can write~\eqref{eq:TLSBinaryClone} as:
% %
%  \bea \label{eq:with_x}
% \min_{\vxx} & \trace{\vxx\tran \MQ \vxx} \\
% \subject & \vxx\tran \ME_1 \vxx = 1 \nonumber \\
% & \vxx\tran \ME_{i+1} \vxx = \vxx\tran \ME_1 \vxx, \\
% & \forall i=1,\dots,N \nonumber
% \eea
For this purpose we note that the objective function of~\eqref{eq:TLSBinaryClone} is a linear function of $\MZ$:
\beal
\vxx\tran \MQ \vxx = %\trace{\vxx\tran \MQ \vxx} = 
\trace{ \MQ \vxx \vxx\tran} = \trace{\MQ \MZ},
\eeal
and that $\vxx_q\vxx_q\tran = [\MZ]_{00}$, where  
$[\MZ]_{00}$ denotes the top-left $4\times4$ diagonal block of $\MZ$, and $\vxx_{q_k} \vxx_{q_k}\tran = [\MZ]_{kk}$, where $[\MZ]_{kk}$ denotes the $k$-th diagonal block of $\MZ$ (we number the row and column block index from $0$ to $K$ for notation convenience).
Since any matrix in the form $\MZ = \vxx \vxx\tran$ is a positive-semidefinite rank-1 matrix, we obtain:

\begin{proposition}[Matrix Formulation of Binary Cloning] \label{prop:qcqpZ}
Problem~\eqref{eq:TLSBinaryClone}
is equivalent (in the sense that optimal solutions of a problem can be mapped to optimal solutions of the other) 
to the following non-convex rank-constrained program:
\bea \label{eq:qcqpZ}
\min_{\MZ \succeq 0} & \trace{\MQ \MZ} \\
\subject & \trace{[\MZ]_{00}} = 1 \nonumber \\
& [\MZ]_{kk} = [\MZ]_{00}, \;\; \forall \revone{k}=1,\dots,K \nonumber \\
& \rank{\MZ} = 1. \nonumber 
\eea
% where $\MQ_i \in \Real{4(N+1)\times 4(N+1)}$ are known symmetric matrices that depend on the 3D vectors $\va_i$ and 
% $\vb_i$ (the explicit expression is given in the supplemental), and the notation $\vxx_q$ (resp. $\vxx_{q_i}$) denotes the 4D subvector of $\vxx$ corresponding to $\vq$ (resp. $\vq_i$).
\end{proposition}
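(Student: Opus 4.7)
\textbf{Proof proposal for Proposition~\ref{prop:qcqpZ}.}
The plan is to establish a bijection (up to a global sign) between feasible points of~\eqref{eq:TLSBinaryClone} and feasible points of~\eqref{eq:qcqpZ} that preserves the objective value. Set $\MZ \doteq \vxx \vxx\tran$. By construction $\MZ$ is symmetric, positive semidefinite, and has rank exactly one; conversely, any $\MZ \succeq 0$ with $\rank{\MZ}=1$ admits the factorization $\MZ = \vxx\vxx\tran$ for some $\vxx \in \Real{4(K+1)}$, uniquely determined up to a sign flip. This is the standard lifting argument underpinning Shor-type relaxations and will handle both directions of the equivalence, since the QCQP~\eqref{eq:TLSBinaryClone} is invariant under $\vxx \mapsto -\vxx$ (its cost is quadratic and its constraints are quadratic-in-$\vxx$).

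Next I would translate the three ingredients of~\eqref{eq:TLSBinaryClone} into the variable $\MZ$. For the cost, I use the trace identity $\vxx\tran \MQ \vxx = \trace{\MQ \vxx\vxx\tran} = \trace{\MQ \MZ}$, which gives the linear objective in~\eqref{eq:qcqpZ}. For the unit-quaternion constraint on $\vq$, I note that $\vxx_q\tran \vxx_q = \trace{\vxx_q \vxx_q\tran} = \trace{[\MZ]_{00}}$, so $\vxx_q\tran \vxx_q = 1$ is equivalent to $\trace{[\MZ]_{00}} = 1$. For the binary cloning constraints $\vxx_{q_k}\vxx_{q_k}\tran = \vxx_q \vxx_q\tran$, I simply observe that the left- and right-hand sides are precisely the diagonal blocks $[\MZ]_{kk}$ and $[\MZ]_{00}$ of $\MZ$, yielding the constraint $[\MZ]_{kk} = [\MZ]_{00}$ for $k=1,\dots,K$.

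It then remains to check that the two problems are equivalent in the sense stated. In the forward direction, given $\vxx$ feasible for~\eqref{eq:TLSBinaryClone}, the matrix $\MZ = \vxx\vxx\tran$ satisfies $\MZ \succeq 0$, $\rank{\MZ}=1$, and the two block constraints by the identities above, with $\trace{\MQ\MZ}$ equal to the original cost. In the reverse direction, given $\MZ$ feasible for~\eqref{eq:qcqpZ}, I factorize $\MZ = \vxx\vxx\tran$ (possible because $\MZ \succeq 0$ and $\rank{\MZ}=1$) and partition $\vxx = [\vxx_q\tran\;\vxx_{q_1}\tran\;\cdots\;\vxx_{q_K}\tran]\tran$. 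The constraint $\trace{[\MZ]_{00}} = 1$ forces $\vxx_q\tran\vxx_q = 1$, and $[\MZ]_{kk} = [\MZ]_{00}$ forces $\vxx_{q_k}\vxx_{q_k}\tran = \vxx_q\vxx_q\tran$, so $\vxx$ is feasible for~\eqref{eq:TLSBinaryClone} with the same cost $\vxx\tran\MQ\vxx = \trace{\MQ\MZ}$.

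The main subtlety (rather than obstacle) is the global sign ambiguity in the factorization of a rank-one PSD matrix: $\vxx$ and $-\vxx$ yield the same $\MZ$. This is benign because~\eqref{eq:TLSBinaryClone} is sign-invariant (equivalently, $\vq$ and $-\vq$ represent the same rotation per the double-cover remark in Section~\ref{sec:notationPreliminary}), so the mapping descends to a genuine bijection on equivalence classes $\{\vxx,-\vxx\}$ and preserves optimality. No other conditions (for instance, the per-block quaternion constraints $\vxx_{q_k}\tran\vxx_{q_k}=1$) need to be added, because they are automatically implied by combining $\trace{[\MZ]_{00}}=1$ with $[\MZ]_{kk} = [\MZ]_{00}$.
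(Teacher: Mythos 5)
Your proof is correct and follows essentially the same route as the paper's: lift via $\MZ = \vxx\vxx\tran$, use the trace identity $\vxx\tran\MQ\vxx = \trace{\MQ\MZ}$ for the objective, map the constraints $\vxx_q\tran\vxx_q=1$ and $\vxx_{q_k}\vxx_{q_k}\tran = \vxx_q\vxx_q\tran$ to $\trace{[\MZ]_{00}}=1$ and $[\MZ]_{kk}=[\MZ]_{00}$, and recover $\vxx$ from any feasible $\MZ$ by the rank-one PSD factorization. Your extra remark on the $\pm\vxx$ sign ambiguity is a harmless refinement consistent with the paper's observation that $\pm\vxx^\star$ are the two global minimizers of the QCQP.
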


The proof is given in~\isExtended{Appendix~\ref{sec:proof:prop:matrixBinaryClone}.}{Appendix~\ref{sec:proof:prop:matrixBinaryClone}\arxivCite.}
At this point we are ready to develop our SDP relaxation 
by dropping the non-convex rank-1 constraint and adding redundant constraints.

\begin{proposition}[SDP Relaxation with Redundant Constraints] \label{prop:relaxationRedundant}
The following SDP is a convex relaxation of~\eqref{eq:qcqpZ}:
\bea \label{eq:relaxationRedundant}
\min_{\MZ \succeq 0} & \trace{\MQ \MZ} \label{eq:SDPrelax} \\
\subject %& \MZ \succeq 0 \nonumber \\
& \trace{[\MZ]_{00}} = 1 \nonumber \\
& [\MZ]_{kk} = [\MZ]_{00}, \forall k=1,\dots,K \nonumber \\
& [\MZ]_{ij} = [\MZ]\tran_{ij}, \forall 0\leq i < j \leq K \nonumber.
\eea
\end{proposition}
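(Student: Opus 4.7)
The plan is to verify two things: (i) that the optimization in \eqref{eq:relaxationRedundant} is indeed convex, and (ii) that its feasible set contains the feasible set of the non-convex problem \eqref{eq:qcqpZ}, so that any solution of \eqref{eq:qcqpZ} remains feasible (with identical objective value) in \eqref{eq:relaxationRedundant}. Together, these ensure that \eqref{eq:relaxationRedundant} lower-bounds \eqref{eq:qcqpZ} and is therefore a valid convex relaxation.

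Convexity follows from elementary considerations, so I would dispense with it first. The cost $\trace{\MQ\MZ}$ is linear in $\MZ$; the cone $\psdCone{4(K+1)}$ is convex; the constraints $\trace{[\MZ]_{00}}=1$, $[\MZ]_{kk}=[\MZ]_{00}$, and $[\MZ]_{ij}=[\MZ]_{ij}\tran$ are all linear equalities in the entries of $\MZ$. The feasible set is therefore the intersection of a convex cone with finitely many affine hyperplanes, which is convex.

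The main content is the relaxation step. The cost, the constraint $\trace{[\MZ]_{00}}=1$, and the constraints $[\MZ]_{kk}=[\MZ]_{00}$ are identical in both problems, so it suffices to show that every $\MZ$ feasible for \eqref{eq:qcqpZ} also satisfies the symmetry conditions $[\MZ]_{ij}=[\MZ]_{ij}\tran$ for all $0\le i<j\le K$. For such a $\MZ$, the rank-1 condition plus $\MZ\succeq 0$ gives $\MZ=\vxx\vxx\tran$ for some $\vxx\in\Real{4(K+1)}$, which I partition as $\vxx=[\vq\tran\;\vq_1\tran\;\cdots\;\vq_K\tran]\tran$. Then $\trace{[\MZ]_{00}}=\trace{\vq\vq\tran}=\|\vq\|^2=1$ forces $\vq\in\usphere^3$, and $[\MZ]_{kk}=[\MZ]_{00}$ becomes $\vq_k\vq_k\tran=\vq\vq\tran$, which for vectors in $\Real{4}$ implies $\vq_k=\theta_k\vq$ for some $\theta_k\in\{\pm 1\}$ (setting $\theta_0=1$ by convention). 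Consequently,
\begin{equation*}
[\MZ]_{ij}=\vq_i\vq_j\tran=\theta_i\theta_j\,\vq\vq\tran,
\end{equation*}
which is symmetric because $\vq\vq\tran$ is symmetric. Hence every rank-1 feasible point of \eqref{eq:qcqpZ} lies in the feasible set of \eqref{eq:relaxationRedundant} with the same cost, which establishes the relaxation inequality.

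There is no real obstacle here: once binary cloning is in place the argument is a direct check. I would only emphasize, as a concluding remark, why these constraints are worth adding even though they are redundant for \eqref{eq:qcqpZ}. Dropping the rank-1 constraint admits feasible $\MZ$'s whose off-diagonal blocks $[\MZ]_{ij}$ need not be symmetric, so the equalities $[\MZ]_{ij}=[\MZ]_{ij}\tran$ genuinely cut the relaxed feasible set, tightening the SDP without excluding any point of \eqref{eq:qcqpZ}; the tightness of the resulting relaxation is the nontrivial phenomenon studied in the next section.
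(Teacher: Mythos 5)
Your proof is correct and takes essentially the same route as the paper: the paper also obtains \eqref{eq:relaxationRedundant} by dropping the rank-1 constraint of \eqref{eq:qcqpZ} and observing, via binary cloning, that every rank-1 feasible point has $[\MZ]_{ij}=\vq_i\vq_j\tran=\theta_i\theta_j\,\vq\vq\tran$, which is symmetric, so the extra constraints exclude nothing and the linear cost/constraints make the problem a convex relaxation. (The paper's appendix additionally recovers the same SDP as the dual of the Lagrangian dual, but that is offered there as an alternative derivation, not a different argument you were required to reproduce.)
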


The redundant constraints in the last row of eq.~\eqref{eq:SDPrelax} enforce all the off-diagonal $4\times4$ blocks of $\MZ$ to be symmetric, which is a redundant constraint for~\eqref{eq:TLSBinaryClone}, since:
%  to verify by considering:
\bea
[\MZ]_{ij} = \vq_i \vq_j\tran = (\theta_i \vq) (\theta_j \vq)\tran = \theta_i \theta_j \vq \vq\tran
\eea
is indeed a signed copy of $\vq\vq\tran$ and must be symmetric. \revone{Although being redundant for the QCQP~\eqref{eq:TLSBinaryClone}, these constraints are critical in tightening the SDP relaxation (\cf~\cite{Yang19iccv-QUASAR}).}
% In~\cite{Yang19iccv-QUASAR}, we named the relaxation~\eqref{eq:SDPrelax} \nameRotation (\emph{\nameRotationLong}). 
\isExtended{Appendix~\ref{sec:app-proof-globalOptimalityGuarantee}}{Appendix~\ref{sec:app-proof-globalOptimalityGuarantee}\arxivCite} provides a different approach to develop the convex relaxation using Lagrangian duality theory.

The following theorem provides readily checkable \emph{a posteriori} conditions under which~\eqref{eq:SDPrelax}  
computes an optimal solution to robust rotation estimation.
% states that global optimality can be guaranteed by \emph{a posteriori} by checking the rank of the solution returned by 
% \nameRotation.

\begin{theorem}[Global Optimality Guarantee]\label{thm:quasarOptimalityGuarantee}
Let $\MZ^\star$ be the optimal solution of~\eqref{eq:relaxationRedundant}. If $\rank{\MZ^\star} = 1$, then the SDP relaxation is said to be \emph{tight}, \extraEdits{and:}
\begin{enumerate}
	\item the optimal cost of~\eqref{eq:relaxationRedundant} matches the optimal cost of the QCQP~\eqref{eq:TLSBinaryClone}, 
	\item $\MZ^\star$ can be written as $\MZ^\star = (\vxx^\star)(\vxx^\star)\tran$, where $\vxx^\star = [(\vq^\star)\tran, (\vq_1^\star)\tran,\dots,(\vq_K^\star)\tran]\tran$, with $\vq^\star_k = \theta_k^\star \vq^\star$, $\theta^\star_k \in \{\pm 1\}$, 
	\item $\pm \vxx^\star$ are the two unique global minimizers of the original QCQP~\eqref{eq:TLSBinaryClone}.
\end{enumerate}
\end{theorem}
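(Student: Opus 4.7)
The strategy is the standard Shor-relaxation ``lift and project'' argument: show that the SDP value bounds the QCQP value from below (by construction), then use the rank-one hypothesis to exhibit a feasible point of the QCQP attaining the SDP value, which pinches the two values together and proves (1) and (2). Claim (3) will follow from the fact that the rank-one factorization of a positive semidefinite matrix is unique up to an overall sign.

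First I would verify that~\eqref{eq:relaxationRedundant} really is a relaxation of~\eqref{eq:qcqpZ}. For any $\MZ$ feasible in~\eqref{eq:qcqpZ} we have $\MZ = \vxx\vxx\tran \succeq 0$, $\trace{[\MZ]_{00}}=1$, $[\MZ]_{kk}=[\MZ]_{00}$, and every off-diagonal block $[\MZ]_{ij}=\vxx_{q_i}\vxx_{q_j}\tran$ satisfies $[\MZ]_{ij}=[\MZ]_{ij}\tran$ because for $\vxx_{q_k}=\theta_k\vxx_q$ (with $\theta_k\in\{\pm1\}$ and $\theta_0\doteq 1$) each off-diagonal block equals $\theta_i\theta_j\,\vxx_q\vxx_q\tran$, which is symmetric. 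Hence the feasible set of~\eqref{eq:qcqpZ} is contained in that of~\eqref{eq:relaxationRedundant}, and since the cost $\trace{\MQ\MZ}$ is the same, the SDP optimum satisfies $\trace{\MQ\MZ^\star}\le$ the QCQP optimum of~\eqref{eq:qcqpZ}, which by Proposition~\ref{prop:qcqpZ} equals the QCQP optimum of~\eqref{eq:TLSBinaryClone}.

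Next, using $\rank{\MZ^\star}=1$ together with $\MZ^\star\succeq 0$, I would factor $\MZ^\star=\vxx^\star(\vxx^\star)\tran$ for some $\vxx^\star\in\Real{4(K+1)}$, unique up to sign. Partitioning $\vxx^\star=[(\vq^\star)\tran,(\vq_1^\star)\tran,\ldots,(\vq_K^\star)\tran]\tran$ in the obvious way, the constraint $\trace{[\MZ^\star]_{00}}=1$ yields $\|\vq^\star\|=1$, so $\vq^\star\in\usphere^3$. Each remaining constraint $[\MZ^\star]_{kk}=[\MZ^\star]_{00}$ reads $\vq_k^\star(\vq_k^\star)\tran=\vq^\star(\vq^\star)\tran$; since both sides are rank-one PSD matrices with identical trace-one spectrum, their (unique up to sign) unit eigenvectors coincide, so $\vq_k^\star=\theta_k^\star\vq^\star$ for some $\theta_k^\star\in\{\pm 1\}$. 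Thus $\vxx^\star$ is feasible for the QCQP~\eqref{eq:TLSBinaryClone}, and its cost satisfies $(\vxx^\star)\tran\MQ\vxx^\star=\trace{\MQ\MZ^\star}$. Combined with the lower bound from the previous paragraph, this forces equality of the two optima, establishing (1) and (2).

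For (3), note that any QCQP minimizer $\vxx'$ of~\eqref{eq:TLSBinaryClone} lifts to a feasible $\MZ'=\vxx'(\vxx')\tran$ for~\eqref{eq:relaxationRedundant} that attains the common optimal value, hence $\MZ'$ is SDP-optimal. Since $\MZ^\star$ is the unique PSD rank-one matrix with factors $\pm\vxx^\star$, any QCQP minimizer $\vxx'$ with $\vxx'\ne\pm\vxx^\star$ would give a distinct SDP optimum $\MZ'\ne\MZ^\star$; by convexity, $\tfrac12(\MZ^\star+\MZ')$ would then be an SDP minimizer of rank two, contradicting the standard fact that whenever the SDP admits a rank-one optimizer, every point of the optimal face of~\eqref{eq:relaxationRedundant} has the same image $\vq\vq\tran$ in the top-left block (because the trace-one constraint and $\MZ\succeq 0$ pin down this block for any rank-one optimizer). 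Thus the only QCQP minimizers are $\pm\vxx^\star$, giving (3).

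The one subtlety I anticipate being the most delicate is the uniqueness argument in (3): I need to rule out a second QCQP minimizer $\vxx'$ whose lift $\MZ'$ is a distinct rank-one SDP optimum. The cleanest way is probably to argue that, under the rank-one hypothesis on $\MZ^\star$, the optimal face of the SDP is a singleton; this can be shown either via strict complementarity (dual certificate with maximal rank) or directly by observing that any rank-one optimum $\MZ'=\vxx'(\vxx')\tran$ gives a quaternion $\vq'$ solving the inlier/outlier partitioned least-squares problem with the same $\theta_k$ pattern, and quaternion least squares with at least one active residual has a unique (up to sign) solution. Everything else in the proof is bookkeeping on block-matrix constraints.
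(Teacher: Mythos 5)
Your treatment of claims (1) and (2) is correct and in substance the same as the paper's: the paper reaches the SDP \eqref{eq:relaxationRedundant} as the dual of the dual of the QCQP with redundant constraints and invokes Slater's condition ($\MZ = \tfrac{1}{4}\eye_{4(K+1)}$ is strictly feasible), but the decisive step — a rank-one optimizer of the relaxation is feasible for the rank-constrained problem and attains the same cost, pinching the two optima — is identical to yours. The genuine gap is in claim (3). Your main argument appeals to a ``standard fact'' that whenever the SDP admits a rank-one optimizer, every point of its optimal face has the same top-left block $\vq\vq\tran$; no such fact exists, and your parenthetical justification (the trace-one constraint plus $\MZ \succeq 0$) does not pin down that block across different optimizers. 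In general an SDP can possess a rank-one optimizer alongside higher-rank optimizers, so exhibiting the rank-two point $\tfrac12(\MZ^\star + \MZ')$ in the optimal face is not, by itself, a contradiction. Your fallback suggestions do not close the gap either: asserting that the optimal face is a singleton under the rank-one hypothesis is precisely what must be proved, and the ``same $\theta_k$ pattern'' argument fails because a competing QCQP minimizer $\vxx'$ may encode a \emph{different} inlier/outlier assignment $\theta'_k$, so it does not solve the same partitioned least-squares problem.

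The paper's proof of uniqueness is dual-based: with $(\mu^\star, \MLambda^\star, \MW^\star)$ optimal for the dual SDP, it forms the certificate $\MM^\star = \MQ - \mu^\star \MJ + \MLambda^\star + \MW^\star \succeq 0$, uses complementary slackness $\MM^\star \vxx^\star = \MZero$ together with the corank-one property $\ker(\MM^\star) = \{a\vxx^\star\}$, and concludes that every QCQP-feasible $\vxx \notin \{\pm\vxx^\star\}$ satisfies $\vxx\tran \MM^\star \vxx > 0$; since $\vxx\tran\MJ\vxx = 1$ and $\vxx\tran\MLambda^\star\vxx = \vxx\tran\MW^\star\vxx = 0$ on the feasible set, this yields the strict inequality $\vxx\tran\MQ\vxx > \mu^\star$, hence $\pm\vxx^\star$ are the only minimizers of \eqref{eq:TLSBinaryClone}. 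To repair your proof you would need to produce such a maximal-rank (strictly complementary) dual certificate — which is the route you only gesture at — rather than argue uniqueness by convexity of the primal optimal face alone.
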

A formal proof is given in \isExtended{Appendix~\ref{sec:app-proof-globalOptimalityGuarantee}.}{Appendix~\ref{sec:app-proof-globalOptimalityGuarantee}\arxivCite.}
Theorem~\ref{thm:quasarOptimalityGuarantee} states that if the SDP~\eqref{eq:SDPrelax} admits a rank-1 solution, then one can extract the unique global minimizer (remember: $\vq$ and $-\vq$ represent the same rotation) of the non-convex QCQP~\eqref{eq:TLSBinaryClone} from the rank-1 decomposition of the SDP solution. 

\emph{In practice, we observe that~\eqref{eq:relaxationRedundant} always returns a rank-1 solution:}
in Section~\ref{sec:experiments}, we empirically demonstrate the tightness of~\eqref{eq:SDPrelax} 
in the face of noise and extreme outlier rates {($>95\%$~\cite{Yang19iccv-QUASAR})}, and show its superior performance compared to a similar SDP relaxation using a rotation matrix parametrization (see eq.~(13) in Section V.B of~\cite{Yang19rss-teaser}).
%Despite the phenomenal performance observed in practice, 
As it is often the case, we can only partially justify with theoretical results the phenomenal performance observed in practice.
The interested reader can find a formal proof of tightness in the low-noise and outlier-free case 
in  Theorem 7 and Proposition 8 of~\cite{Yang19iccv-QUASAR}

\subsection{Fast Global Optimality Certification}
\label{sec:fastCertification}

Despite the superior robustness and (\emph{a posteriori}) global optimality guarantees, the approach outlined in the previous section requires solving the large-scale SDP~\eqref{eq:SDPrelax}, which is known to be computationally expensive 
 (\eg solving the SDP~\eqref{eq:SDPrelax} for $K=100$ takes about $1200$ seconds with \MOSEK~\cite{mosek}). On the other hand, there exist fast heuristics that compute high-quality solutions to the non-convex \TLS rotation problem~\eqref{eq:TLSrotation} in milliseconds. For example, the \emph{graduated non-convexity} (\GNC) approach in~\cite{Yang20ral-GNC}
  computes globally optimal \TLS solutions with high probability when the outlier ratio is \extraEdits{below $80\%$}, while not being 
  certifiable.
 % as we will show in \careful{Section~\ref{sec:experiments}}, the \emph{graduated non-convexity} (\GNC) scheme~\cite{Yang20ral-GNC} can compute indeed the \emph{globally optimal} solutions with high probability when the outlier ratio is below $90\%$. 

% This section shows how to leverage the insigh
Therefore, in this section, we ask the question: \emph{given an estimate from a fast heuristic, such as \GNC,\footnote{Our certification approach also applies to \ransac, which however has the downside of being non-deterministic and typically 
less accurate  than \GNC.} can we certify the optimality of the estimate or reject it as suboptimal?} In other words: can we make \GNC certifiable?
The answer is \emph{yes}: we can leverage the insights of Section~\ref{sec:quatSDPRelaxation} to obtain a 
fast optimality \revone{certifier} that \revone{is orders of magnitude faster than solving the relaxation~\eqref{eq:relaxationRedundant}.}\footnote{\revone{\GNC (or \ransac) can be seen as a fast \emph{primal} solver that returns a rank-$1$ \emph{guess} of~\eqref{eq:relaxationRedundant} by estimating $\vxx$ (\ie~$\vq$ and $\theta_k,k=1,\dots,K$) and $\MZ = \vxx\vxx\tran$, and the optimality certifier can be seen as a fast \emph{dual-from-primal} solver that attempts to compute a dual optimality certificate for the primal rank-$1$ guess.}}
% Therefore, in this section, we present a fast geometry-based algorithm that can \emph{certify} the global optimality of any solution to the QCQP~\eqref{eq:TLSBinaryClone}. 
The pseudocode is presented in Algorithm~\ref{alg:optCertification} and the following theorem proves its soundness.

\begin{theorem}[Optimality Certification]\label{thm:optimalityCertification}
Given a feasible (but not necessarily optimal) solution $(\hatvq,\hattheta_1,\dots,\hattheta_K)$ of the \TLS rotation estimation problem~\eqref{eq:TLSquat}, denote with $\hatvxx = [\hatvq\tran,\hattheta_1\hatvq\tran,\dots,\hattheta_K\hatvq\tran]\tran$, and $\hatmu = \hatvxx\tran\MQ\hatvxx$, the corresponding solution and cost of the QCQP~\eqref{eq:TLSBinaryClone}.
Then,  Algorithm~\ref{alg:optCertification} produces a \emph{sub-optimality bound} $\eta$ that satisfies:
\bea \label{eq:relSuboptBound}
\frac{\hatmu - \mu^\star}{\hatmu} \leq \eta,
\eea
where $\mu^\star$ is the (unknown) global minimum of the non-convex \TLS rotation estimation problem~\eqref{eq:TLSquat} and the QCQP~\eqref{eq:TLSBinaryClone}.
Moreover, when the relaxation~\eqref{eq:relaxationRedundant} is tight and the parameter $\gamma$ in line~\ref{line:gamma} \revone{satisfies $0<\gamma<2$}, 
and if $\hatvq = \vq^\star$, $\hattheta_k = \theta^\star_k$ 
 is a global minimizer of the \TLS problem~\eqref{eq:TLSquat}, 
 then the sub-optimality bound $\eta^{(t)}$ (line~\ref{line:relativeSuboptBound}) converges to zero. 
 % and $\hatmu = \mu^\star$ achieves the global minimum cost, then using $\gamma = 0$ (line~\ref{line:gamma}), each iteration of Algorithm~\ref{alg:optCertification} produces a relative suboptimality bound $\eta^{(t)}$ (line~\ref{line:relativeSuboptBound}) that asymptotically converges to zero. 
% Moreover, assuming strong duality holds (\ie the relaxation~\eqref{eq:relaxationRedundant} is tight), if $\hatvq = \vq^\star$, $\hattheta_k = \theta^\star_k$ (resp. $\hatvxx = \vxx^\star$) is a global minimizer of the \TLS rotation search problem~\eqref{eq:TLSquat} (resp. the QCQP~\eqref{eq:TLSBinaryClone}) and $\hatmu = \mu^\star$ achieves the global minimum cost, then using $\gamma = 0$ (line~\ref{line:gamma}), each iteration of Algorithm~\ref{alg:optCertification} produces a relative suboptimality bound $\eta^{(t)}$ (line~\ref{line:relativeSuboptBound}) that asymptotically converges to zero.
\end{theorem}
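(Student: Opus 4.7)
The plan is to prove both claims by combining Lagrangian duality for the SDP relaxation~\eqref{eq:relaxationRedundant} with the classical convergence theory of \DRS.

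First, I would form the Lagrangian dual of~\eqref{eq:relaxationRedundant}. Dualizing the linear equality constraints with multipliers $\MLambda$ produces the dual problem $\max_{\MLambda} d(\MLambda)$ subject to the PSD constraint $\MM(\MLambda) \doteq \MQ - \mathcal{A}^\star(\MLambda) \succeq 0$, where $\mathcal{A}^\star$ is the adjoint of the equality-constraint operator. By weak duality, any $\MLambda$ with $\MM(\MLambda) \succeq 0$ satisfies $d(\MLambda) \leq p^\star_{\mathrm{SDP}} \leq \mu^\star$, where the second inequality holds because~\eqref{eq:relaxationRedundant} is a convex relaxation of the QCQP~\eqref{eq:TLSBinaryClone}.

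Next, I would show that at iteration $t$, Algorithm~\ref{alg:optCertification} applies \DRS to the feasibility problem of finding $\MLambda$ in the intersection of (i) the dual feasible set $\{\MLambda : \MM(\MLambda) \succeq 0\}$ and (ii) the affine subspace $\{\MLambda : \MM(\MLambda)\hatvxx = \zero\}$ that encodes complementary slackness at the primal guess $\hatvxx$. After projection onto the PSD cone, the iterate $\hatMLambda^{(t)}$ is dual feasible, and the reported bound $\eta^{(t)} = (\hatmu - d(\hatMLambda^{(t)}))/\hatmu$ satisfies the chain $d(\hatMLambda^{(t)}) \leq \mu^\star \leq \hatmu$, which immediately yields~\eqref{eq:relSuboptBound}.

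For the convergence claim, I would invoke the classical fact that the relaxed \DRS (Lions--Mercier) iteration with relaxation parameter $\gamma \in (0,2)$ converges to a point in the intersection of two closed convex sets whenever this intersection is nonempty. Under the tightness hypothesis, Theorem~\ref{thm:quasarOptimalityGuarantee} gives $\hatMZ = \hatvxx\hatvxx\tran$ as a primal optimum of~\eqref{eq:relaxationRedundant}. Since~\eqref{eq:relaxationRedundant} admits strictly feasible perturbations of $\hatMZ$, Slater's condition yields strong duality and the existence of an optimal $\MLambda^\star$ satisfying $\MM(\MLambda^\star) \succeq 0$ together with $\MM(\MLambda^\star)\hatMZ = 0$; the latter in particular gives $\MM(\MLambda^\star)\hatvxx = \zero$, so the intersection is nonempty. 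Strong duality then forces $d(\MLambda^\star) = p^\star_{\mathrm{SDP}} = \mu^\star = \hatmu$, and \DRS convergence produces $d(\hatMLambda^{(t)}) \to \hatmu$, whence $\eta^{(t)} \to 0$.

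The main obstacle will be verifying that the concrete projection operators inside Algorithm~\ref{alg:optCertification} coincide with the exact Euclidean projections onto the PSD cone (requiring an eigendecomposition of $\MM(\MLambda)$) and onto the affine subspace (a linear system solve), as well as carefully establishing Slater's condition for the specific constraint structure of~\eqref{eq:relaxationRedundant} so that a dual optimum is actually attained. These are largely technical points; the structural insight is that \DRS is searching for a dual certificate of optimality at $\hatvxx$, which exists precisely when the relaxation is tight and $\hatvxx$ is globally optimal, and the stated step-size condition $0<\gamma<2$ is exactly the regime in which the underlying Krasnosel'ski\u{\i}--Mann iteration is provably nonexpansive.
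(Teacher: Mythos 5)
Your overall strategy---dual certificates for the relaxation~\eqref{eq:relaxationRedundant} combined with Lions--Mercier/\DRS convergence for $0<\gamma<2$---is the same as the paper's, and your treatment of the second claim (tightness plus a strictly feasible point gives strong duality and an attained dual optimum satisfying complementary slackness at $\hatvxx$, so the intersection is nonempty and \DRS converges to it) is essentially the paper's argument in the appendix. The genuine gap is in the first claim, the any-time bound~\eqref{eq:relSuboptBound}. You assert that after the PSD projection the iterate is dual feasible and that the algorithm reports $\eta^{(t)} = (\hatmu - d(\hatMLambda^{(t)}))/\hatmu$. Neither is true of Algorithm~\ref{alg:optCertification}: the projection $\Pi_{\psdSub}$ in line~\ref{line:projectK} acts in the space of symmetric matrices and generically destroys the structural constraint $\barMM - \barMQ + \hatmu\MJ \in \calH$, so the PSD-projected iterate does not correspond to any admissible dual pair $(\MLambda,\MW)$ and there is no dual value to evaluate; conversely, the iterate $\barMM^{(t)}_{\affineSub}$ has the correct structure but is generally \emph{not} PSD at finite $t$. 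What the algorithm actually reports in line~\ref{line:relativeSuboptBound} is $\eta^{(t)} = |\lambda_1^{(t)}|(K+1)/\hatmu$, with $\lambda_1^{(t)}$ the minimum eigenvalue of the (possibly indefinite) affine iterate, and justifying this requires the key lemma your proposal is missing: for \emph{any} matrix $\MM$ in the affine subspace, writing $\MM = \MQ - \hatmu\MJ + \MLambda + \MW$ and using that every feasible $\vxx$ of~\eqref{eq:TLSBinaryClone} satisfies $\|\vxx\|^2 = K+1$, $\vxx\tran\MJ\vxx = 1$, and $\vxx\tran\MLambda\vxx = \vxx\tran\MW\vxx = 0$, one obtains $\vxx\tran\MQ\vxx \geq \hatmu + \lambda_1(\MM)(K+1)$, and evaluating this at the global minimizer yields $(\hatmu-\mu^\star)/\hatmu \leq |\lambda_1(\MM)|(K+1)/\hatmu$. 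This eigenvalue-shift argument is precisely what makes the certificate valid at every iteration, including when $\hatvxx$ is suboptimal or the relaxation is not tight; without it, your proof only covers the asymptotic regime where a dual-feasible certificate has actually been reached.

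Once that lemma is in place, your convergence argument goes through as stated: under tightness, Slater's condition (a strictly feasible point such as the scaled identity, rather than a ``perturbation of $\hatMZ$,'' which is rank one) gives strong duality and dual attainment, complementary slackness puts the dual certificate in $\psdSub\cap\affineSub$, \DRS with $0<\gamma<2$ converges to that intersection, and continuity of eigenvalues gives $\lambda_1^{(t)}\to 0$, hence $\eta^{(t)}\to 0$. So the fix is not a change of strategy but the insertion of the relative sub-optimality lemma and a correct identification of which iterate the bound is computed from.
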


\begin{algorithm}[t] 
\SetAlgoLined
\textbf{Input:} a feasible solution $(\hatvq,\hattheta_1,\dots,\hattheta_K)$ attaining cost $\hatmu$ in the \TLS rotation estimation problem~\eqref{eq:TLSquat}; 
homogeneous and scaled \TIMs $(\hatva_k,\hatvb_k)$, noise bounds $\TIMNoiseBound_k$ and $\barcsq$; 
maximum number of iterations $T$ (default $T=200$); 
desired relative suboptimality gap $\bareta$ (default $\bareta = 0.1\%$); \revone{initial suboptimality gap $\eta = +\infty$}; \revone{relaxation parameter $\gamma$ ($0<\gamma<2)$}  \\
\textbf{Output:} \extraEdits{bound on}  relative suboptimality gap $\eta$ \\
\% Compute data matrix $\MQ$ (Proposition~\ref{prop:qcqp}) \\
$\MQ = {\tt get\_Q}(\hatva_k,\hatvb_k,\TIMNoiseBound_k,\barcsq)$ \\
\% Compute $\barvxx$, $\barMQ$ (Appendix~\ref{sec:app-optimalityCertification}) \\
$\barvxx = [1,\hattheta_1,\dots,\hattheta_K]\tran \kron [0\;0\;0\;1]\tran$, \quad $\barMQ = \hatOmega_q\tran \MQ \hatOmega_q$ \\
\% \revone{Compute initial dual certificate $\barMM^{(0)} \in \affineSub$} \\
%(Appendix~\ref{sec:app-optimalityCertification}) \\
$\barMM^{(0)} = \barMQ - \hatmu\MJ + {\tt get\_Delta0}(\hatva_k,\hatvb_k,\TIMNoiseBound_k,\barcsq,\barMQ,\barvxx)$ \label{line:initialGuess}\\
\% \revone{Douglas-Rachford Splitting} \\
% $\barMM^{(0)}_{\affineSub} = \barMM^{(1)}_{\affineSub} = \barMM_0$, \quad $\eta = +\infty$ \\
\For{$t = 0,\dots,T$}{
	% \% Compute minimum eigenvalue \\
	% $\lambda_1^{(t)} = {\tt get\_min\_eig}(\barMM^{(t)}_{\affineSub})$ \\
	% \% Project to positive semidefinite cone $\psdSub$   \\
	\revone{$\barMM^{(t)}_{\psdSub} = \Pi_{\psdSub} (\barMM^{(t)})$} \% Project to $\psdSub$ \label{line:projectK} \\
	% \% Project to affine subspace $\affineSub$  \\
	\revone{$\barMM^{(t)}_{\affineSub} = \Pi_{\affineSub} (2\barMM^{(t)}_{\psdSub} - \barMM^{(t)})$} \% Project to $\affineSub$ \label{line:projectM} \\
	% \% Relaxation \\
	\revone{$\barMM^{(t+1)} = \barMM^{(t)} + \gamma(\barMM^{(t)}_{\affineSub} - \barMM^{(t)}_{\psdSub})$} \% Relaxation \label{line:gamma} \\
	\% Compute relative suboptimality bound \\
	\revone{$\lambda_1^{(t)} = {\tt get\_min\_eig}(\barMM^{(t)}_{\affineSub})$, $\eta^{(t)} = \frac{|\lambda_1^{(t)}|(K+1)}{\hatmu}$} \label{line:relativeSuboptBound}\\
	\algorithmicif\ $\eta^{(t)} < \eta$\ \algorithmicthen\ $\eta = \eta^{(t)}$\ \algorithmicend \\
	\algorithmicif\ $\eta < \bareta$\ \algorithmicthen\ break\ \algorithmicend
	% \If{ $\eta^{(t)} < \eta$ }{
	% 	$\eta = \eta^{(t)}$
	% }
	% \If{ $\eta < \bareta$}{
	% 	\textbf{break}
	% }
	% \% Nesterov-style acceleration \\
	% $\gamma = \frac{t-2}{t+1}$ \label{line:gamma} \\
	% $\MV^{(t)}_{\affineSub} = \barMM^{(t)}_{\affineSub} + \gamma(\barMM^{(t)}_{\affineSub} - \barMM^{(t-1)}_{\affineSub})$ \\
	% \% Project to positive semidefinite cone $\psdSub$   \\
	% $\MV^{(t)}_{\psdSub} = \Pi_{\psdSub} (\MV^{(t)}_{\affineSub})$ \label{line:projectK}\\
	% \% Project to affine subspace $\affineSub$  \\
	% % $\MM^{(t+1)}_{\affineSub} = \MV^{(t)}_{\affineSub} + \Pi_{\barcalH}(\MV^{(t)}_{\psdSub} - \MV^{(t)}_{\affineSub})$
	% $\barMM^{(t+1)}_{\affineSub} = \Pi_{\affineSub}(\MV^{(t)}_{\psdSub})$ \label{line:projectM}
}
\textbf{return:} $\eta$
\caption{Optimality Certification. \label{alg:optCertification}}
\end{algorithm}

Although a complete derivation of Algorithm~\ref{alg:optCertification} and the proof of Theorem~\ref{thm:optimalityCertification} is postponed to \isExtended{Appendix~\ref{sec:app-optimalityCertification},}{Appendix~\ref{sec:app-optimalityCertification}\arxivCite,} now we describe  our key intuitions. 
The first insight is that, given \revone{a globally optimal solution $(\vq^\star,\theta_1^\star,\dots,\theta_K^\star)$ to the non-convex \TLS problem~\eqref{eq:TLSquat}, whenever the SDP relaxation~\eqref{eq:SDPrelax} is tight, $\MZ^\star = (\vxxstar)(\vxxstar)\tran$, where $\vxxstar = [(\vqstar)\tran,(\theta_1^\star\vqstar)\tran,\dots,(\theta_K^\star\vqstar)\tran]$, is a globally optimal solution to the convex SDP~\eqref{eq:relaxationRedundant}. Therefore, we can compute a certificate of global optimality from the Lagrangian dual SDP of the primal SDP~\eqref{eq:relaxationRedundant}~\cite[Section 5]{Boyd04book}.}\footnote{\revone{The tight SDP relaxation ensures certifying global optimality of the non-convex problem (NP-hard) is equivalent to certifying global optimality of the convex SDP (tractable).}}
 % we can construct a dual  
 
 The second insight is that Lagrangian duality shows that the dual certificate is a matrix 
 at the intersection between the positive semidefinite cone $\psdCone{}$  and an affine subspace $\affineSub$ (whose expression is derived in 
 \isExtended{Appendix~\ref{sec:app-optimalityCertification}).}{Appendix~\ref{sec:app-optimalityCertification}\arxivCite).} Algorithm~\ref{alg:optCertification} looks for such a matrix ($\barMM^{(t)}_{\affineSub}$ in line~\eqref{line:projectM})
  using \revone{\emph{Douglas-Rachford Splitting} (\DRS)~\cite{Henrion12handbook-conicProjection,Bauschke96SIAM-projectionFeasibility,Higham88LA-nearestSPD,Yang20arXiv-certifiablePerception,Combettes11book-proximalSplitting,Jegelka13NIPS-DRSreflection} (line~\ref{line:projectK}-\ref{line:gamma}),} 
   % to find a point at the intersection $\psdCone{} \cap \calM$ from any 
   starting from a cleverly chosen initial guess (line~\ref{line:initialGuess}).
% Because the dual problem of the QCQP~\eqref{eq:TLSBinaryClone} is an SDP, certifying global optimality of any given solution $\hatvxx$ to the QCQP is equivalent to solving a feasibility SDP, which involves deciding whether the intersection of the PSD cone $\calK$ and an affine subspace $\barcalM$ is non-empty. 
% When $\hatvxx = \vxx^\star$ is the globally optimal solution and strong duality holds, the intersection $\calK \cap \calM$ must be non-empty, therefore we can use the method of \emph{alternating projections to convex sets} (line~\ref{line:projectK} and line~\ref{line:projectM})~\cite{Henrion12handbook-conicProjection,Bauschke96SIAM-projectionFeasibility} to find a point in the intersection from any initial point (line~\ref{line:initialGuess}). 
\revone{Since \DRS guarantees convergence to a point in $\psdCone{} \cap \calM$ from any initial guess whenever the intersection is non-empty, Algorithm~\ref{alg:optCertification} guarantees to certify global optimality if provided with an optimal solution, whenever the relaxation~\eqref{eq:SDPrelax} is tight.}
%with enough iterations. 

\revone{The third insight is that even when the provided solution %$\hatvxx$ 
is not globally optimal, or when the relaxation is not tight,}  Algorithm~\ref{alg:optCertification} can still compute a \emph{sub-optimality bound} from the minimum eigenvalue of the dual certificate matrix (line~\ref{line:relativeSuboptBound}), which is informative of the quality of the candidate solution. To the best of our knowledge, this is the first result that can assess the sub-optimality of outlier-robust estimation beyond the instances commonly 
investigated in statistics~\cite{Diakonikolas16focs-robustEstimation}.

% Our fourth insight is algorithmic:
% to improve the convergence speed of the alternating projection method, line~\ref{line:gamma} implements a Nesterov-style~\cite{Nesterov83smd-accGrad} acceleration which empirically converges much faster than the vanilla alternating projections ($\gamma=0$). 
Finally, \revone{we further speed up computation by exploiting the problem structure.} 
\isExtended{Appendix~\ref{sec:app-closedFormProjections}}{Appendix~\ref{sec:app-closedFormProjections}\arxivCite} shows that the projections on lines~\ref{line:projectK} and \ref{line:projectM} of Algorithm~\ref{alg:optCertification} can be computed in closed form, and
 \isExtended{Appendix~\ref{sec:app-initialGuess}}{Appendix~\ref{sec:app-initialGuess}}
 provides the expression of the initial guess in line~\ref{line:initialGuess}.
 Section~\ref{sec:experiments} shows the effectiveness of Algorithm~\ref{alg:optCertification} in certifying the solutions from \GNC~\cite{Yang20ral-GNC}.

\vspace{-2mm}
\section{Performance Guarantees}
\label{sec:guarantees}

% \LC{
% Guarantees \& Contracts:
% -	At each module:
% -	At the system level: 
% o	outliers are rejected
% o	inliers are preserved
% }

This section establishes theoretical guarantees for \name. 
While Theorems~\ref{thm:scalarTLS},~\ref{thm:quasarOptimalityGuarantee}, and~\ref{thm:optimalityCertification} establish when we obtain a globally optimal solution from optimization problems involved in \name, this section uses these theorems to bound the distance between the estimate produced by \name and the ground truth.
 Section~\ref{sec:guaranteesNoiseless} investigates the noiseless (but outlier-corrupted) case, 
 while Section~\ref{sec:guaranteesNoisy} considers the general case with noise and outliers.

  %!TEX root = ../main.tex

\subsection{Exact Recovery with Outliers and Noiseless Inliers}
\label{sec:guaranteesNoiseless}

% The assumption of noiseless is just a gentle start before the next theorem.
We start by analyzing the case in which the inliers are noiseless since it allows deriving stronger 
performance guarantees and highlights the  fundamental difference the outlier generation mechanism can make.
As we show below, in a noiseless case with randomly outliers generated, \name is able to recover the \gtadj transformation from only 3 inliers (and an arbitrary number of outliers!). 
On the other hand, when the outlier generation is adversarial, \name (and any other approach) 
requires that more than 50\% of the measurements are inliers in order to retrieve the ground truth.

%%%%%%%%%%%%%%%%%%%%%%%%%%%%%%%%%%%%%%%%%%%%%%%%%%%%%%%%%%%%%%%%%%%%%%%%%%%%%%%%
%% NO NEED FOR THIS: we just move to the estimation contract and leave certification to Wahba
% \begin{theorem}[Certifiable Registration with Noiseless Inliers]
% \label{thm:certifiableRegNoiseless0}
% Consider a \tlsProblemLong problem~\eqref{eq:TLSRegistration} with noiseless inliers
%  (\ie for an inlier $i$, it holds $\vb_i = \sgt \MRgt \va_i + \vtgt$, where 
%  $(\sgt, \MRgt, \vtgt)$ denotes the ground truth transformation). 
%   If the rotation subproblem in \name produces a valid certificate (in the sense of Theorem~\ref{thm:certifiableRotationEstimation}) and \name classified more than $50\%$ of the measurements to be inliers, 
% then \name produced the optimal solution of~\eqref{eq:TLSRegistration}.
% \end{theorem}

% \begin{proof}
% \LC{TODO: this might need assumptions on the number of inliers produced by \name, otherwise 
% the scale estimation can be tricked by an adversary. Note the distinction between 
% certifiable algorithm and contracts..}
% \end{proof}

%%%%%%%%%%%%%%%%%%%%%%%%%%%%%%%%%%%%%%%%%%%%%%%%%%%%%%%%%%%%%%%%%%%%%%%%%%%%%%%%
\begin{theorem}[Estimation Contract with Noiseless Inliers and Random Outliers]
\label{thm:certifiableRegNoiseless1}
Assume (i) the set of correspondences contains at least 3 noiseless non-collinear and distinct\footnote{Distinct in the sense that no two points occupy the same 3D location.} inliers, 
where for an inlier $i$, it holds $\vb_i = \sgt \MRgt \va_i + \vtgt$, and 
 $(\sgt, \MRgt, \vtgt)$ denotes the \gtadj transformation we want to recover. 
 Assume (ii) 
 % the inliers belong to the largest set  
 % $\{i \in \{1,\ldots,N\} : \vb_i = s \MR \va_i + \vt, \forall (s, \MR, \vt)\}$, 
 % and (iii) 
 the outliers are in \emph{generic position} (\eg they are perturbed by random noise).
  If
 (iii) the inliers computed by \name (in each subproblem) have zero residual error for the corresponding 
 subproblem, and (iv) the rotation subproblem produces a valid certificate (in the sense of Theorems~\ref{thm:quasarOptimalityGuarantee} and~\ref{thm:optimalityCertification}),
then with probability 1 the output $(\hats, \hatMR, \hatvt)$ of \name exactly recovers the ground truth,
 \ie $\hats=\sgt, \hatMR=\MRgt, \hatvt=\vtgt$.
\end{theorem}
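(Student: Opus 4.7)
The plan is to propagate exact recovery through the three stages of \name in cascade: scale estimation, rotation estimation on the \mcis-pruned graph, and component-wise translation estimation. The key leverage of the hypotheses is that, since inliers are noiseless and outliers are in generic position, any algebraic coincidence (an outlier \TRIM equal to $\sgt$, an outlier \TIM consistent with $\MRgt$, or an outlier point residual equal to $\vtgt$) occurs on a measure-zero set of outlier configurations, hence with probability zero.

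\emph{Scale.} For any pair of inliers $i,j$, the generative model gives $\TIMb_{ij} = \sgt \MRgt \TIMa_{ij}$, whence $s_{ij} = \|\TIMb_{ij}\|/\|\TIMa_{ij}\| = \sgt$. Three non-collinear distinct inliers therefore produce at least three \TRIMs equal to $\sgt$. By Theorem~\ref{thm:scalarTLS}, $\hats$ is the weighted mean over some consensus set $\conSet^\star$; by hypothesis (iii) every selected $k\in\conSet^\star$ satisfies $\hats=s_k$, and by hypothesis (ii) no three outlier \TRIMs collide at a common value with positive probability. A brief cost comparison (a consensus set containing the three concurring inlier \TRIMs pays $(\nrTIM-|\conSet^\star|)\barcsq$, which beats any outlier-only set by at least two units of $\barcsq$) then forces $\conSet^\star$ to contain the true inliers and $\hats = \sgt$.

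\emph{Rotation.} Because every inlier \TRIM equals $\hats=\sgt$ exactly, every inlier edge survives the \TRIM pruning, so the true inliers form a clique in $\calG'$ and Theorem~\ref{thm:maxClique} ensures the maximum clique $\calV'$ returned by \mcis contains them. On this clique, the inlier \TIMs satisfy $\TIMb_{ij}=\sgt\MRgt\TIMa_{ij}$; two linearly independent such \TIMs (which exist since the three inliers are non-collinear) uniquely determine a rotation. Hypothesis (iv) activates Theorem~\ref{thm:quasarOptimalityGuarantee} (or Theorem~\ref{thm:optimalityCertification} if certification is obtained \emph{a posteriori}), so $\hatMR$ is the global minimizer of~\eqref{eq:TLSrotation}; combined with hypothesis (iii) and the generic-position argument ruling out spurious outlier edges that are simultaneously scale-consistent and \TIM-consistent with $\MRgt$, this yields $\hatMR = \MRgt$.

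\emph{Translation and main obstacle.} With $\hats=\sgt$ and $\hatMR=\MRgt$, each inlier gives $\vb_i-\hats\hatMR\va_i=\vtgt$ exactly, so the component-wise \TLS problem~\eqref{eq:TLStranslation} for each coordinate $j$ contains at least three measurements equal to $[\vtgt]_j$. Applying Theorem~\ref{thm:scalarTLS} and hypothesis (iii) exactly as in the scale step gives $\hatt_j=[\vtgt]_j$, hence $\hatvt=\vtgt$. The main technical obstacle throughout is to make ``generic position'' rigorous: at each stage one must verify that the bad event---a consensus set (of size at least that of the inlier set) containing outlier measurements that mutually agree with some spurious transformation---is the vanishing locus of a non-trivial polynomial in the outlier coordinates, hence of Lebesgue measure zero under any absolutely continuous perturbation of the outliers. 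Once this measure-zero verification is established at each of the three stages, the three recoveries chain together to yield exact recovery with probability one.
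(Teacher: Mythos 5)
Your proposal follows essentially the same route as the paper's proof: decouple into the scale, rotation, and translation subproblems; use the noiseless inliers to produce at least three measurements exactly consistent with the ground truth in each subproblem; use condition (iii) to restrict the selected consensus set to zero-residual (hence exactly concurring) measurements and condition (iv) for global optimality of the rotation estimate; and use genericity of the outliers to rule out, with probability one, a competing set of exactly concurring outlier measurements. The only real difference is cosmetic: where you run a direct \TLS cost comparison, the paper first establishes \TLS $\equiv$ \maxCon in each subproblem via Lemma~\ref{thm:TLSandMC} (whose hypothesis holds because the maximum-consensus residual vanishes in the noiseless case) and then shows that maximum consensus recovers the ground truth; the content is the same. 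Two caveats. First, your statement that Theorem~\ref{thm:maxClique} ensures the \emph{maximum} clique returned by \mcis contains the inliers overstates that theorem, which only guarantees the inliers lie in \emph{some maximal} clique; with only three guaranteed inliers, the maximum clique could in principle be a larger clique of merely scale-consistent outliers, so this step is not justified as written. The paper's own proof sidesteps the issue by never invoking the clique structure at all --- it simply repeats the scale argument for the rotation and translation subproblems, implicitly taking for granted that the inlier \TIMs reach the rotation stage and letting conditions (iii)--(iv) carry the burden of excluding spurious solutions --- so your attempt to close that implicit step is welcome, but it needs a different justification than Theorem~\ref{thm:maxClique}. Second, the arithmetic in your cost comparison (``beats any outlier-only set by at least two units of $\barcsq$'') should read at least one unit: the inlier-containing set of size at least $3$ yields cost at most $(K-3)\barcsq$, while a zero-residual set of concurring outlier \TRIMs generically has size at most two, hence cost at least $(K-2)\barcsq$; this slip is harmless and does not affect the conclusion.
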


% ==============================================================================================
A proof of Theorem~\ref{thm:certifiableRegNoiseless1} is given in \isExtended{Appendix~\ref{sec:proof:certifiableRegNoiseless1}.}{Appendix~\ref{sec:proof:certifiableRegNoiseless1}\arxivCite.} 
Conditions (iii) and (iv) can be readily checked using the solution computed by \name, in the spirit of 
 certifiable algorithms. 
% Under the assumptions of the theorem (noiseless, at least 3 inliers),
% % they are empirically satisfied since our relaxation is tight (hence it returs certificate )
% we found that these are mild conditions that are indeed satisfied in all our tests. 
% (also thanks to the ).
 Assumptions (i) and (ii) only ensure that the problem is well-posed.
 In particular assumption (i) is the same assumption required to compute a unique transformation without outliers~\cite{Horn87josa}.
 % and can be readily checked using the given 3D points.  
 Assumption (ii) is trickier since it requires \emph{assuming} the 
 outliers to be ``generic'' rather than adversarial: 
 intuitively, if we allow an adversary to perturb an arbitrary number of points, 
 s/he can create an identical copy of the points in $\calA$ with an arbitrary transformation and induce \emph{any} 
 algorithm into producing an incorrect estimate. Clearly, in such a setup outlier rejection is ill-posed and 
 no algorithm can recover the correct solution.\footnote{In such a case, it might still be feasible to enumerate 
 all potential solutions, but the optimal solution of \maxConLong or the robust estimator~\eqref{eq:TLSRegistration} can no longer be guaranteed to be close to the ground truth.} 
% The assumptions in the proof rule out cases in which the robust registration problem is ill-poses:
% for instance, if there are 2 identical objects in the scene, asking to retrieve one is ill-pose.
% outliers do not form a consistent set (\eg another object that looks like the template point cloud).
% This is a fairly general assumption required by both our approach and by maximum consensus to be able to find a reasonable solution.
% The assumption that outliers are in generic position is instead needed to make sure that 
% outliers are not adversarial in a way that cannot be detected by our decoupled approach. 

This fundamental limitation is captured by the following theorem, which focuses on the case with adversarial outliers.
% We clarify this aspect by providing an ``adversarial'' alternative of Theorem~\ref{thm:certifiableRegNoiseless1}.

%%%%%%%%%%%%%%%%%%%%%%%%%%%%%%%%%%%%%%%%%%%%%%%%%%%%%%%%%%%%%%%%%%%%%%%%%%%%%%%%
\begin{theorem}[Estimation Contract with Noiseless Inliers and Adversarial Outliers]
\label{thm:certifiableRegNoiseless2}
Assume (i) the set of correspondences is such that the number of inliers $\nrIn$ and the number of outliers
$\nrOut$ satisfy $\nrIn \geq \nrOut+3$ and no three inliers are collinear.  
 If
 (ii) the inliers computed by \name (in each subproblem) have zero residual error for the corresponding 
 subproblem, and (iii) the rotation subproblem produces a valid certificate (in the sense of Theorems~\ref{thm:quasarOptimalityGuarantee} and~\ref{thm:optimalityCertification}), and 
 {(iv) \name returns a consensus set that satisfies (i),}
 % more than $50\%$ inliers, 
then the output $(\hats, \hatMR, \hatvt)$ of \name exactly recovers the ground truth 
transformation, \ie $\hats=\sgt, \hatMR=\MRgt, \hatvt=\vtgt$.
\end{theorem}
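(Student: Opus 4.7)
My approach is to combine a pigeonhole argument on the consensus set returned by \name with the uniqueness of the rigid transformation determined by three non-collinear point correspondences (Horn's result~\cite{Horn87josa}). The intuition is that, unlike the random-outlier setting of Theorem~\ref{thm:certifiableRegNoiseless1}, with adversarial outliers one cannot rely on the generic position of spurious points, so the cardinality condition must do the heavy lifting: whenever the final consensus set is large enough relative to the total outlier count, it is forced to contain enough genuine inliers to pin down a unique transformation.

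The plan is to first invoke assumption (iv) to conclude that the consensus set $\setCon$ returned by \name has cardinality at least $\nrOut + 3$. Since the entire input contains only $\nrOut$ outliers, the pigeonhole principle guarantees that $\setCon$ contains at least three true inliers, and by assumption (i) any three inliers are non-collinear. Second, by assumption (ii) every correspondence $(\va_i,\vb_i) \in \setCon$ has zero residual under \name's estimate, so in particular the three inliers in $\setCon$ satisfy $\vb_i = \hats\hatMR\va_i + \hatvt$. Since these same points are inliers with zero noise, they also satisfy $\vb_i = \sgt\MRgt\va_i + \vtgt$ by the generative model~\eqref{eq:robustGenModel}. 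Horn's closed-form solution~\cite{Horn87josa} tells us that three non-collinear correspondences uniquely determine the scale, rotation, and translation, so $(\hats,\hatMR,\hatvt) = (\sgt,\MRgt,\vtgt)$, which is the claim.

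The remaining subtlety is to verify that each decoupled subproblem of \name (scale by adaptive voting, rotation by the tight SDP or its certifiable \GNC surrogate, and component-wise translation by adaptive voting) actually attains a globally optimal zero-residual fit on $\setCon$: Theorem~\ref{thm:scalarTLS} handles the scalar subproblems, while assumption (iii) together with Theorems~\ref{thm:quasarOptimalityGuarantee} and~\ref{thm:optimalityCertification} handle the rotation subproblem. The main obstacle, relative to the random-outlier case, is that the cascaded structure of \name (invariant-measurement pruning, followed by maximum clique inlier selection, followed by scale, rotation, and translation estimation) could in principle strip away all true inliers at an earlier stage if an adversary aligned enough outliers along some spurious transformation; assumption (iv) is precisely what forbids this worst case by guaranteeing that the final consensus set is large enough for the pigeonhole step to force three inliers to survive. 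Without a genericity assumption on the outliers one cannot improve on the $\nrIn \geq \nrOut + 3$ threshold, and this matches the information-theoretic limit for any consensus-based estimator on rigid registration.
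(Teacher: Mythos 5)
Your argument is correct and reaches the same core mechanism as the paper --- forcing at least three noiseless, non-collinear correspondences to be exactly consistent with both \name's estimate and the ground truth, and then invoking uniqueness of the similarity transformation determined by three non-collinear points --- but you get there by a more direct route. The paper first proves \TLS $\equiv$ \maxCon in each subproblem (via Lemma~\ref{thm:TLSandMC} and the zero-residual assumption), then runs an adversary argument on maximum consensus: any consensus set of size at least $\nrIn$ must intersect the ground-truth consensus set in at least $2\nrIn - (\nrIn+\nrOut) = \nrIn - \nrOut \geq 3$ noise-free correspondences, so only the ground-truth transformation can attain it; assumption (iv) then transfers this to \name. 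You instead pigeonhole directly on the set returned by \name: (iv) gives $|\setCon| \geq \nrOut + 3$, at most $\nrOut$ of its members can be outliers, so at least three true (noiseless, non-collinear) inliers survive, and (ii) makes them zero-residual under $(\hats,\hatMR,\hatvt)$, which pins the estimate to the ground truth. Your counting and the paper's intersection bound are morally the same, but your version skips the \maxCon equivalence entirely and in fact does not need assumption (iii) for the stated conclusion; what the paper's longer detour buys is the additional fact, used in the discussion following the theorem, that under these hypotheses \name actually returns a maximum-consensus solution. One small caveat: your pigeonhole reads assumption (iv) as guaranteeing $|\setCon| \geq \nrOut + 3$ (with a single final consensus set having zero full residuals), which is a fair reading and no looser than the paper's own informal use of (iv), but it is worth stating explicitly since the translation stage is component-wise and the theorem does not define ``consensus set'' precisely.
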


A proof of Theorem~\ref{thm:certifiableRegNoiseless2} is given in \isExtended{Appendix~\ref{sec:proof:certifiableRegNoiseless2}.}{Appendix~\ref{sec:proof:certifiableRegNoiseless2}\arxivCite.}
We observe that now Theorem~\ref{thm:certifiableRegNoiseless2} provides an even clearer ``contract'' 
for \name: as long as the number of inliers is sufficiently larger than the number of outliers,
 \name provides readily-checkable conditions on whether it recovered the \gtadj transformation. If the percentage of inliers is below $50\%$ (\ie $\nrIn \leq \nrOut$), \name will attempt to retrieve the transformation that is consistent with the largest set of inliers (in the proof, we show that under the conditions of the theorem, \name produces a maximum consensus solution). 

To the best of our knowledge, Theorems~\ref{thm:certifiableRegNoiseless1}-\ref{thm:certifiableRegNoiseless2}
 provide the first exact recovery results for registration with outliers. 
 % Moreover, since the a-posterior conditions of 
 % Theorems~\ref{thm:certifiableRegNoiseless1}-\ref{thm:certifiableRegNoiseless2} 
 % can be checked in polynomial time, we beli
  %!TEX root = ../main.tex

\subsection{Approximate Recovery with Outliers and Noisy Inliers}
\label{sec:guaranteesNoisy}

The case with noisy inliers is strictly more challenging than the noise-free case.
Intuitively, the larger the noise, the more blurred is the boundary between inliers and outliers.
%, which  makes the noisy setup more challenging to analyze and to solve. 
From the theoretical standpoint, this translates into weaker guarantees.

\begin{theorem}[Estimation Contract with Noisy Inliers and Adversarial Outliers]
\label{thm:certifiableRegNoisy}
Assume (i) the set of correspondences contains at least 4 non-coplanar inliers, where
 for any inlier $i$, $\| \vb_i - \sgt \MRgt \va_i - \vtgt \| \leq \beta$, and 
 $(\sgt, \MRgt, \vtgt)$ denotes the \gtadj transformation. 
 Assume (ii) the inliers belong to the maximum consensus set in each 
 subproblem\footnote{In other words: the inliers belong to the largest set $\calS$  
 such that for any $i,j \in \calS$, 
 $| s_{ij} - s | \leq \alpha_{ij}$, 
 $\| \TIMb_{ij} - s R \TIMa_{ij}\| \leq \TIMNoiseBound_{ij}$, 
 and 
$\left| [\vb_{i} - s R \va_{i} - \vt]_l \right| \leq \beta_{i},\forall l=1,2,3$ 
for any transformation $(s,\MR,\vt)$. \optional{Note that for translation the assumption is applied component-wise.}{} }, and 
 (iii) the second largest consensus set is ``sufficiently smaller'' than the maximum consensus set 
(as formalized in \isExtended{Lemma~\ref{thm:TLSandMC}}{Lemma~\ref{thm:TLSandMC}\arxivCite}).
 If
 (iv) the rotation subproblem produces a valid certificate 
 (Theorems~\ref{thm:quasarOptimalityGuarantee}-\ref{thm:optimalityCertification}), 
 % and (v) \name returns more than $50\%$ inliers, 
 the output $(\hats, \hatMR, \hatvt)$ of \name satisfies:
\bea
|\hats - \sgt| &\leq& 2 \max_{ij} \alpha_{ij},\label{eq:bounds_scale} \\
\quad
\| \sgt \MRgt - \hats \hatMR \|_\frob &\leq& 2\sqrt{3}
 \frac{ \max_{ij} \alpha_{ij} }{ \min_{ijhk} \sigma_\min(\MU_{ijhk}) }
\label{eq:bounds_rot} \\ 
%\frac{4\beta}{ \min_{ij}(\| \TIMa_{ij} \|) }, \\
\quad 
\|\hatvt - \vtgt \| &\leq& (9+3\sqrt{3}) \beta, %2\beta + 4\beta \max_i \|\va_{i}\| 
 \label{eq:bounds_tran}
\eea
% where: 
% \bea
% \eta_s = 2 \max_{ij} \alpha_{ij} & \text{scale bound}, \\
% \eta_r =  & \text{rotation bound} \\
% \eta_t = 2\beta + 4\beta \max_i \|\va_{i}\| & \text{translation bound} 
% %  Moreover, if the scale is know ($\sgt = 1$), the rotation and translation bounds become:
% % \beq
% % \| \hatMR - \MRgt \| \leq 4 \beta,
% % \qquad 
% % \|\hatvt - \vtgt \| \leq \LC{???}.
% %  \label{item:bounds2}
% % \eeq
where $\sigma_\min(\cdot)$ denotes the smallest singular value of a matrix, and $\MU_{ijhk}$ is the $3\times 3$ matrix 
$\MU_{ijhk} = 
\left[\frac{\TIMa_{ij}}{\|\TIMa_{ij}\|} \;\; 
\frac{\TIMa_{ih}}{\|\TIMa_{ih}\|} \;\;
\frac{\TIMa_{ik}}{\|\TIMa_{ik}\|}
\right]$. Moreover, $\sigma_\min(\MU_{ijhk})$ is nonzero as long as (every 4 points $\va_i$ chosen among) the inliers are not coplanar. 
\end{theorem}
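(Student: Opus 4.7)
The plan is to propagate error bounds through the three cascaded subproblems of \name (scale, then rotation, then translation) via triangle-inequality arguments, using the consensus-set structure of the \TLS estimator established in Theorem~\ref{thm:scalarTLS}. A preliminary step is to invoke the assumed \isExtended{Lemma~\ref{thm:TLSandMC}}{Lemma (iii)} together with (ii)--(iii) to conclude that in every subproblem the \TLS estimate coincides with the maximum-consensus estimate and hence the selected consensus set contains all true inliers. Consequently, in each subproblem both the \name estimate and the ground truth agree with every inlier measurement within that subproblem's noise bound.

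For the scale bound~\eqref{eq:bounds_scale}, since $\hats$ solves~\eqref{eq:TLSscale}, Theorem~\ref{thm:scalarTLS} gives $|\hats - s_{ij}| \leq \alpha_{ij}$ for every inlier pair $(i,j)$, while the \TRIM generative model yields $|s_{ij} - \sgt| \leq \alpha_{ij}$; the triangle inequality and a $\max_{ij}$ closes the argument. For the rotation bound~\eqref{eq:bounds_rot}, assumption (iv) combined with Theorems~\ref{thm:quasarOptimalityGuarantee}--\ref{thm:optimalityCertification} certifies that $\hatMR$ is globally optimal for~\eqref{eq:TLSrotation} with scale $\hats$. For every inlier \TIM, both $\hats \hatMR$ and $\sgt \MRgt$ lie in the residual ball, so the triangle inequality gives $\|(\sgt \MRgt - \hats \hatMR)\TIMa_{ij}\| \leq 2\TIMNoiseBound_{ij}$, equivalently $\|(\sgt \MRgt - \hats \hatMR)\,\TIMa_{ij}/\|\TIMa_{ij}\|\| \leq 2\alpha_{ij}$. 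The non-coplanarity of inliers then supplies four inliers $(i,j,h,k)$ whose three normalized \TIM directions form a full-rank matrix $\MU_{ijhk}$; stacking the three inequalities column-wise and invoking $\|\MX\MU\|_\frob \geq \sigma_\min(\MU)\,\|\MX\|_\frob$ yields~\eqref{eq:bounds_rot}, with the factor $2\sqrt{3}$ coming from stacking three such column bounds into a Frobenius norm.

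For the translation bound~\eqref{eq:bounds_tran}, applying Theorem~\ref{thm:scalarTLS} component-wise to~\eqref{eq:TLStranslation} gives $|\hatt_j - [\vb_i - \hats \hatMR \va_i]_j| \leq \beta$ for every inlier $i$ and every $j \in \{1,2,3\}$. Writing $\vb_i - \hats \hatMR \va_i - \vtgt = \veps_i + (\sgt \MRgt - \hats \hatMR)\va_i$ and applying the triangle inequality produces $|\hatt_j - [\vtgt]_j| \leq 2\beta + \|(\sgt \MRgt - \hats \hatMR)\va_i\|$; aggregating with $\|\hatvt - \vtgt\| \leq \sqrt{3}\max_j |\hatt_j - [\vtgt]_j|$ gives the target shape. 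The main obstacle, and the step requiring care, is controlling the bias term $(\sgt \MRgt - \hats \hatMR)\va_i$: naively bounding it by $\|\sgt \MRgt - \hats \hatMR\|_\frob\|\va_i\|$ inherits the rotation-conditioning factor $\sigma_\min(\MU_{ijhk})^{-1}$ and depends on the unbounded absolute position $\|\va_i\|$. I would instead decompose $\va_i$ using inlier-difference structure: for any second inlier $i'$, write $(\sgt \MRgt - \hats \hatMR)\va_i = (\sgt \MRgt - \hats \hatMR)\TIMa_{i'i} + (\sgt \MRgt - \hats \hatMR)\va_{i'}$, apply the \TIM bound $\|(\sgt \MRgt - \hats \hatMR)\TIMa_{i'i}\| \leq 2\TIMNoiseBound_{i'i} = 4\beta$ to the first term, and iterate the argument so that the residual bias depends only on inlier differences and not on absolute positions. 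Accumulating the triangle inequalities and the factor $\sqrt{3}$ from the component-wise to Euclidean conversion is expected to yield the announced constant $(9 + 3\sqrt{3})$.
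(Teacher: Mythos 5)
Your preliminary reduction and your scale and rotation arguments are essentially the paper's own proof: assumptions (ii)--(iii) together with Lemma~\ref{thm:TLSandMC} give \TLS $\equiv$ \maxCon in each subproblem with every true inlier retained; the scale bound is the same two-sided triangle inequality $|\hats-\sgt|\leq |s_{ij}-\hats|+|\eps^s_{ij}|\leq 2\alpha_{ij}$; and the rotation bound is obtained exactly as in the paper, by normalizing the inlier residuals to $\|(\sgt\MRgt-\hats\hatMR)\,\TIMa_{ij}/\|\TIMa_{ij}\|\|\leq 2\alpha_{ij}$, stacking the three directions from four non-coplanar inliers into $\MU_{ijhk}$, and using $\|\MA\MB\|_\frob\geq\sigma_\min(\MB)\|\MA\|_\frob$, with $2\sqrt{3}=\sqrt{12}$.

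The translation step is where your proposal has a genuine gap. You correctly isolate the obstacle (the bias $(\sgt\MRgt-\hats\hatMR)\va_i$ depends on the absolute position of $\va_i$) and the right ingredient (differences of inliers are \TIMs, so $\|(\sgt\MRgt-\hats\hatMR)\TIMa_{i'i}\|\leq 2\TIMNoiseBound_{i'i}=4\beta$), but the device you propose --- substituting $\va_i=\TIMa_{i'i}+\va_{i'}$ and ``iterating'' --- cannot eliminate the absolute position: after any number of substitutions you still carry $(\sgt\MRgt-\hats\hatMR)\va_{i''}$ for some anchor inlier $i''$, and the only identity available to remove it, namely the per-inlier relation $\vtgt-\hatvt=(\hats\hatMR-\sgt\MRgt)\va_{i''}+(\vphi_{i''}-\veps_{i''})$, reintroduces $\vtgt-\hatvt$ on the right and the inequality collapses to the tautology $0=(\hats\hatMR-\sgt\MRgt)\TIMa_{i''i}+(\vphi_i-\veps_i)-(\vphi_{i''}-\veps_{i''})$, which contains no information about the translation error. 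The paper instead combines the four per-inlier identities $\vtgt-\hatvt=(\hats\hatMR-\sgt\MRgt)\va_m+(\vphi_m-\veps_m)$, with $\|\veps_m\|\leq\beta$ and $\|\vphi_m\|\leq\sqrt{3}\beta$ (the $\sqrt{3}$ playing the role of your component-to-Euclidean conversion), subtracting the equation of inlier $i$ from the sum of those of $j,h,k$ so that the translation error survives with coefficient $2$ on the left while the right-hand side is accounted for by three \TIM terms, each bounded by $2\TIMNoiseBound_{il}=4\beta$, plus residual/noise terms bounded by $(1+\sqrt{3})\beta$ each, giving $2\|\vtgt-\hatvt\|\leq(18+6\sqrt{3})\beta$ and hence the stated constant. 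Be aware, however, that this step is delicate even in the paper: if you track the algebra of that combination carefully, the leftover $2\va_i$ term cancels only by re-invoking the $i$-th identity, i.e., the absolute-position term disappears only together with the quantity being bounded (compare the position-dependent translation bound in the tighter Theorem~\ref{thm:certifiableRegNoisyTighter} of Appendix~\ref{sec:tighterBounds}). So your plan cannot be completed by the iteration you sketch, and simply announcing that the bookkeeping ``is expected to yield $(9+3\sqrt{3})$'' does not substitute for the explicit multi-inlier combination and the careful cancellation argument it requires.
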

% \LC{non tight? we should have only 3 inliers}

A proof of Theorem~\ref{thm:certifiableRegNoisy} is given in \isExtended{Appendix~\ref{sec:proof:certifiableRegNoisy}.}{Appendix~\ref{sec:proof:certifiableRegNoisy}\arxivCite.}
To the best of our knowledge, Theorem~\eqref{thm:certifiableRegNoisy} provides the first performance bounds for noisy registration problems with outliers.
Condition (i) is similar to the standard assumption that guarantees the existence of a unique alignment in the outlier-free case, with the exception that we now require 4 non-coplanar inliers.
 While conditions (ii)-(iii) seem different and less intuitive than the ones in~Theorem~\ref{thm:certifiableRegNoiseless2}, they 
 are a generalization of conditions (i) and (ii) in~Theorem~\ref{thm:certifiableRegNoiseless2}. 
 Condition (ii) in~Theorem~\ref{thm:certifiableRegNoisy} ensures that the inliers form a large consensus set 
 (the same role played by condition (i) in Theorem~\ref{thm:certifiableRegNoiseless2}). 
 Condition (iii) in~Theorem~\ref{thm:certifiableRegNoisy} ensures that the largest consensus set cannot be  confused 
 with other large consensus sets (the assumption of zero residual played a similar role in Theorem~\ref{thm:certifiableRegNoiseless2}). These conditions essentially limit the 
 amount of ``damage'' that outliers can do by avoiding that they form large sets of mutually consistent measurements. 

 \begin{remark}[Interpretation of the Bounds]
 The bounds~\eqref{eq:bounds_scale}-\eqref{eq:bounds_tran} have a natural geometric interpretation.
 First of all, we recall from Section~\ref{sec:decoupling} that we defined $\alpha_{ij} = \TIMNoiseBound_{ij} / \| \TIMa_{ij} \|$.
 %, which, assuming all points to have the same noise, becomes $\alpha_{ij} = 2\beta / \| \TIMa_{ij} \|$. 
 Hence $\alpha_{ij}$ can be understood as (the inverse of) a signal to noise ratio: 
$\TIMNoiseBound_{ij}$ measures the amount of noise, while $\| \TIMa_{ij} \| = \| \va_{j} - \va_{i}  \|$ measures the size of the point cloud (in terms of distance between two points). % compared to the noise. 
Therefore, the scale estimate will worsen when the noise becomes large when compared to the size of the point cloud.
The parameter $\alpha_{ij}$ also influences the rotation bound~\eqref{eq:bounds_rot}, which, however, also depends on the smallest singular value of the matrix $\sigma_\min(\MU_{ijhk})$. 
The latter measures how far is the point cloud from degenerate configurations\optional{ where the rotation cannot be 
uniquely identified.}{.} % (\eg collinear points).  
Finally, the translation bound does not depend on the scale of the point cloud, since for a given scale and rotation a single point would suffice to compute a translation (in other words, the distance among points does not play a role there).
 %, limiting their outliers' influence compared to the inliers. 
 % \LC{$\sigma_\min(\MU_{ijk})$ measures deviation from coplanarity, which would make the rotation subproblem ambiguous}
 \end{remark}  

 The interested reader can find tighter (but more expensive to compute and less intelligible) bounds, under similar assumptions as Theorem~\ref{thm:certifiableRegNoisy}, in~\isExtended{Appendix~\ref{sec:tighterBounds}.}{Appendix~\ref{sec:tighterBounds}\arxivCite.} %These improved bounds 
 %, however, require knoweldge  

%!TEX root = ../main.tex
\spacebeforesection
\section{\namepp: A Fast C++ Implementation}
\label{sec:teaserpp}

In order to showcase the real-time capabilities of \name in real robotics, vision, and graphics applications, we have developed a fast C++ implementation of \name, named \namepp.
\namepp has been released as an open-source library and can be found at \urlTEASER.
This section describes the algorithmic and implementation choices we made in \namepp.
% We have implemented a fast version of \name in C++, named \namepp, to show its potential in real-time robotics applications. This section discusses the overall design and major performance optimization techniques used in \namepp. Interested readers should refer to the released open-source library for more details.
% released at \urlTEASER

\namepp follows the same decoupled approach described in Section~\ref{sec:teaser}.
The only exception is that it circumvents the need to solve a large-scale SDP and uses 
the \GNC approach described in~\cite{Yang20ral-GNC} for the \TLS rotation estimation problem. 
As described in Section~\ref{sec:fastCertification}, we essentially run \GNC and certify \emph{a posteriori} that it retrieved the globally optimal solution using \revone{Algorithm~\ref{alg:optCertification}.}
 This approach is very effective in practice since --as shown in the experimental section--
the scale estimation (Section~\ref{sec:scaleEstimation}) 
and the max clique inlier selection (Section~\ref{sec:maxClique}) are already able to remove a large
 number of outliers, hence \GNC only needs to solve a more manageable problem with less outliers (a setup in which it has been shown to be very effective~\cite{Yang20ral-GNC}).
  Therefore, by combining a fast heuristic (\GNC) with \revone{a scalable optimality certifier (\DRS)}, \namepp is the first fast and certifiable registration algorithm in that 
   (i) it enjoys the performance guarantees of Section~\ref{sec:guarantees}, and (ii) it runs in milliseconds in practical problem instances.
% keeps the same decoupled approach for solving registration problems as Algorithm \ref{alg:ITR}. First, \namepp computes \TRIMs and uses Algorithm \ref{alg:adaptiveVoting} to solve for scales. To estimate rotations, instead of solving SDPs as described in Section \ref{sec:robustRotationSearch_SDPRelaxationVerification}, we use the \GNCTLS algorithm described in \cite{Yang20ral-GNC} instead. To estimate translations, we again use Algorithm \ref{alg:adaptiveVoting}.

\namepp uses \Eigen for fast linear algebra operations.
 % and can \red{optionally} use the Point Cloud Library~\cite{Rasu11icra} to input/output the point clouds.
To further optimize for performance, we have implemented a large portion of \namepp using shared-memory parallelism. Using OpenMP~\cite{OpenMP08}, an industry-standard interface for scalable parallel programming, we have parallelized Algorithm~\ref{alg:adaptiveVoting}, as well as the computation of 
the \TIMs and \TRIMs. In addition, 
%since the problem of finding maximum cliques is NP-hard, 
we use a fast exact parallel  maximum clique finder algorithm~\cite{Rossi15parallel} to find the maximum clique for outlier pruning. 

To facilitate rapid prototyping and visualization, in addition to the C++ library,
 we provide Python and MATLAB bindings.
% bindings to \namepp in both Python and MATLAB, in addition to the C++ library. 
Furthermore, we provide a ROS~\cite{Quigley09icra-ros} wrapper 
 to enable easy integration and deployment in real-time robotics applications.
%!TEX root = ../main.tex
\spacebeforesection
\section{Experiments and Applications}
\label{sec:experiments}

% We present experiments
The goal of this section is to 
(i) test the performance of each module
presented in this paper, including scale, rotation, translation estimation, the \mcis 
pruning, and the optimality certification Algorithm~\ref{alg:optCertification} (\prettyref{sec:separateSolver}), 
(ii) show that \name and \namepp outperform the state of the art 
 on standard benchmarks (\prettyref{sec:benchmark}),
% and in the presence of \emph{extreme} outliers rates (\prettyref{sec:benchmark}),
(iii) show that \namepp also solves 
Simultaneous Pose and Correspondence (\SPC) problems 
and 
dominates existing methods, such as \ICP (Section~\ref{sec:bruteForce}), 
(iv) show an application of \name for object localization in an RGB-D robotics dataset (\prettyref{sec:roboticsApplication1}), 
and (v) show an application of \namepp to difficult scan-matching problems with deep-learned 
correspondences (Section~\ref{sec:roboticsApplication2}).
% (iii) evaluate \name with \emph{extreme} outliers rates (\prettyref{sec:benchmarkExtreme}), and
% (iv) show an application of \name for object localization in an RGB-D robotics dataset (\prettyref{sec:roboticsApplication}).

 % for scale, rotation and translation estimation.
\myParagraph{Implementation Details} 
\namepp is implemented as discussed in Section~\ref{sec:teaserpp}.
\name is implemented  in MATLAB, uses \cvx~\cite{CVXwebsite} to solve the convex relaxation~\eqref{eq:SDPrelax}, and 
uses the algorithm in~\cite{Eppstein10isac-maxCliques} to find the maximum clique in the pruned \TIM graph (see~\prettyref{thm:maxClique}). %Since \cvbx
In all tests we set $\barc=1$. {All tests are run on a laptop with an i7-8850H CPU and 32GB of RAM.}

    %!TEX root = ../main.tex

% \subsection{Testing the Scale, Rotation, Translation and Maximal Clique Inlier Selection Solvers}
\subsection{Testing \name's and \namepp's Modules}
\label{sec:separateSolver}

% Here we test the scale, rotation, translation solvers, as well as the impact of the maximal clique inlier selection (\prettyref{thm:maxClique}). 
% (\prettyref{sec:subproblems})

%%% ======================================================================
\myParagraph{Testing Setup}
We use the \bunny~\edit{point cloud} from the Stanford 3D Scanning Repository~\cite{Curless96siggraph}. \revone{The bunny is downsampled to $\nrPoints=40$ points (unless specified otherwise) and resized to fit inside the $[0,1]^3$ cube to create point cloud $\calA$.} 
\revone{To create point cloud $\calB$ with $\nrPoints$ correspondences, first a random transformation $(s, \MR, \vt)$ (with $1\leq s \leq 5$, $\|\vt \|\leq\!\!1$, and $\MR \in \SOthree$) is applied according to eq.~\eqref{eq:robustGenModel}, and then random bounded noise $\veps_i$'s are added (we sample $\veps_i \sim \calN(\bm{0},\sigma^2 \MI)$ until $\|\veps_i\| \leq\!\!\beta_i$).} 
 We set $\sigma = 0.01$ and $\beta_i = 5.54 \sigma$ such that $\mathbb{P}\left(\|\veps_i \|^2  > \beta_i^2 \right) \leq 10^{-6}$ (\cf~Remark~\ref{rmk:chi2} in 
 \isExtended{Appendix~\ref{sec:choiceBeta}).}{Appendix~\ref{sec:choiceBeta} in~\cite{Yang20arxiv-teaser}).}
  To generate \revone{outlier correspondences}, we replace a fraction of the $\vb_i$'s with vectors uniformly sampled inside the sphere of radius 5.
  % we randomly choose a scalar between 0 and 5, multiply the scalar by a random unit vector (direction) and use them to replace the correct inliers.
  We test increasing outlier rates from $0\%$ (all inliers) to $90\%$; we test up to $99\%$ outliers when relevant (\eg we omit testing above $90\%$ if failures are already observed at $90\%$).
  %unless failures are observed at 
  %unless failures are already observed at $90\%$ outliers.
  %$\{0,20,40,60,70,80,90\}$ .
  All statistics are computed over 40 Monte Carlo runs unless mentioned otherwise.

%%% ======================================================================
\myParagraph{Scale Estimation} Given two point clouds $\calA$ and $\calB$, we first create $\nrPoints (\nrPoints-1)/2$ \TIMs corresponding to a complete graph and then use Algorithm~\ref{alg:adaptiveVoting} to solve for the scale. We compute both \emph{\maxConLong}~\cite{Speciale17cvpr-consensusMaximization} and \TLS estimates of the scale.
% and compare their performance. 
Fig.~\ref{fig:benchmark_separate_solvers}(a) shows box plots of the scale error with increasing outlier ratios. \edit{The scale error is computed as $|\hats - \sgt|$, where $\hats$ is the scale estimate and $\sgt$ is the ground-truth}. We observe the \TLS solver is robust against 80\% outliers, while \maxConLong failed \revone{twice} in that regime.

%%%%%%%%%%%%%%%%%%%%%%%%%
%!TEX root = ../main.tex

\renewcommand{\mpw}{4.3cm}

\begin{figure}[t]
	\begin{center}
	\begin{minipage}{\textwidth}
	\begin{tabular}{cc}%
	        \myhspace \hspace{-2mm}
			\begin{minipage}{\mpw}%
			\centering%
			\includegraphics[width=1\columnwidth]{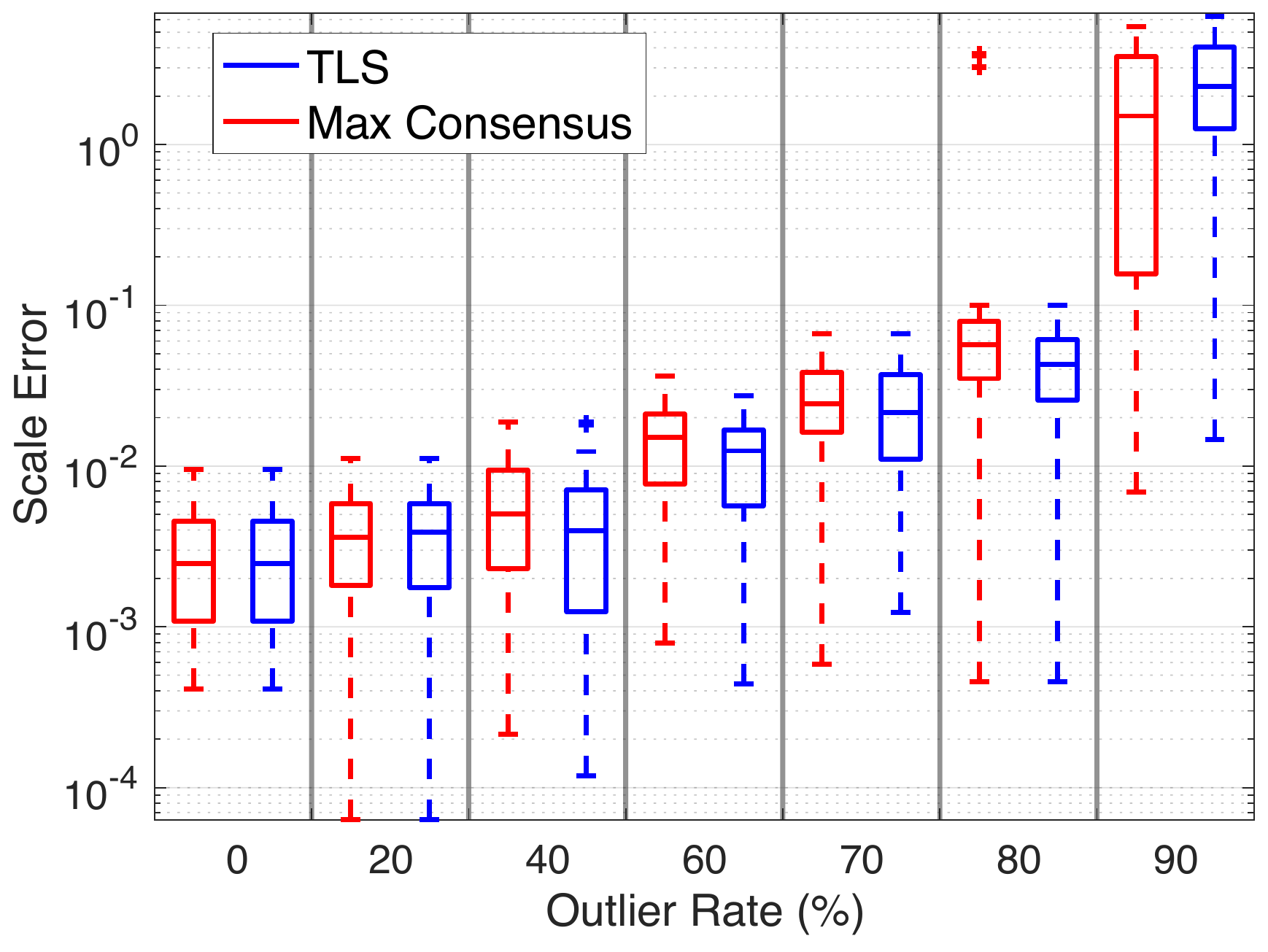} \\
			\vspace{-1mm}
			(a) Scale Estimation
			\end{minipage}
              & \myhspace \hspace{-3mm}
			\begin{minipage}{\mpw}%
			\centering%
			\includegraphics[width=1\columnwidth]{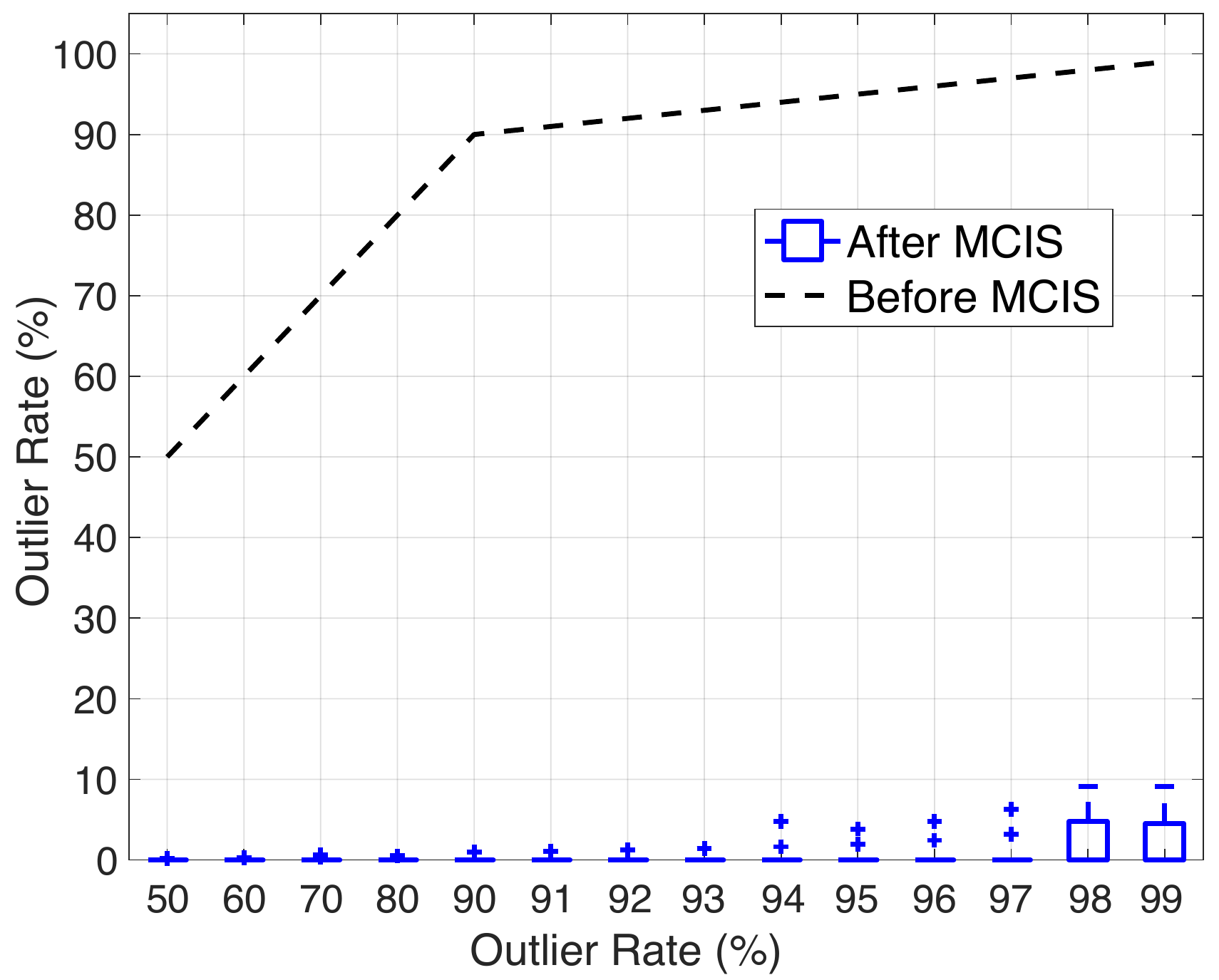} \\
			
			\vspace{-2mm}
			(b) \mcis
			\end{minipage} 
             \\
             \myhspace \hspace{-3mm}
			\begin{minipage}{\mpw}%
			\centering%
			\includegraphics[width=1.02\columnwidth]{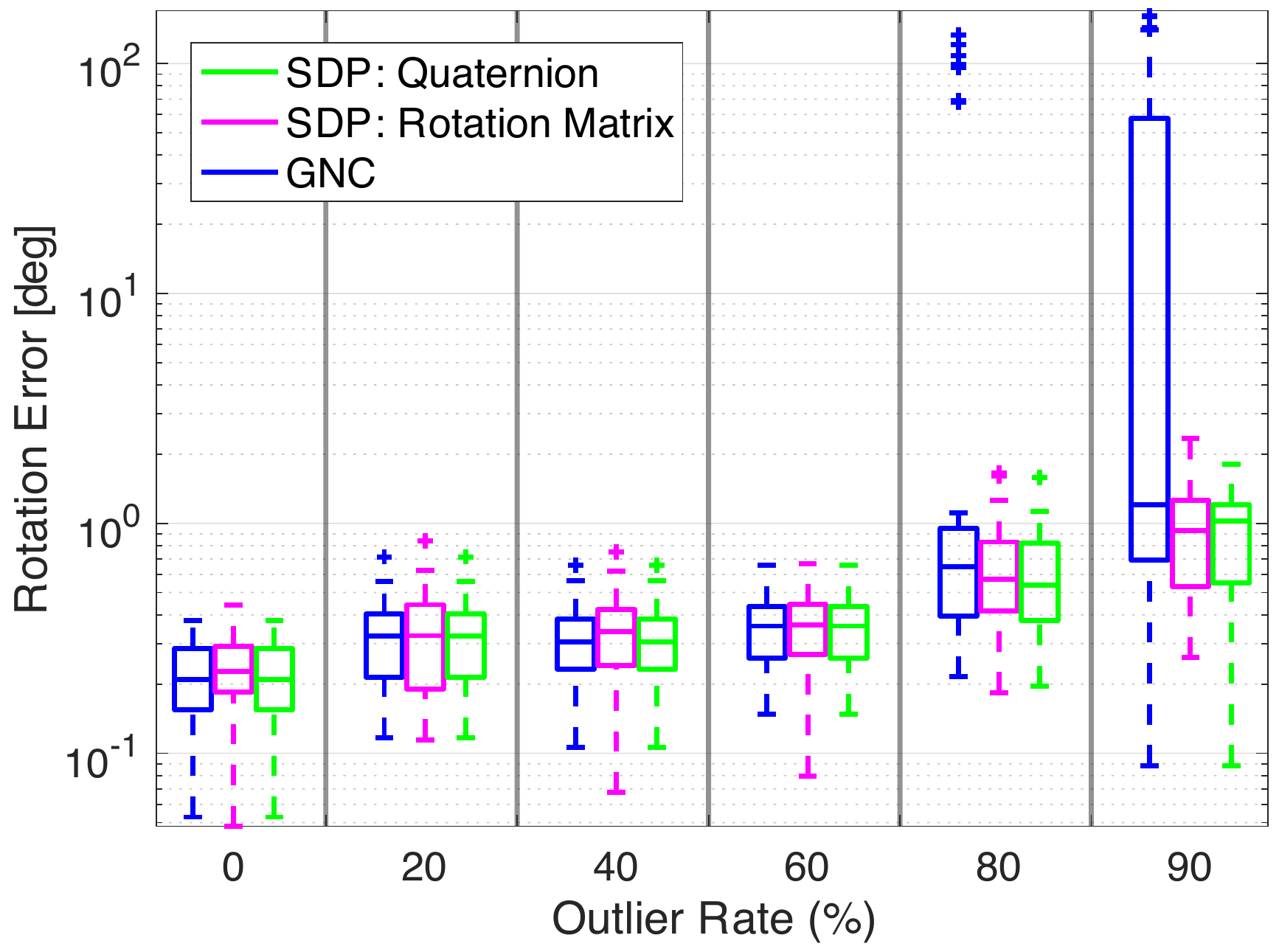} \\
			\vspace{-1mm}
			(c) Rotation Estimation
			\end{minipage}
			& \myhspace \hspace{-4mm}
			\begin{minipage}{\mpw}%
			\centering%
			\includegraphics[width=1\columnwidth]{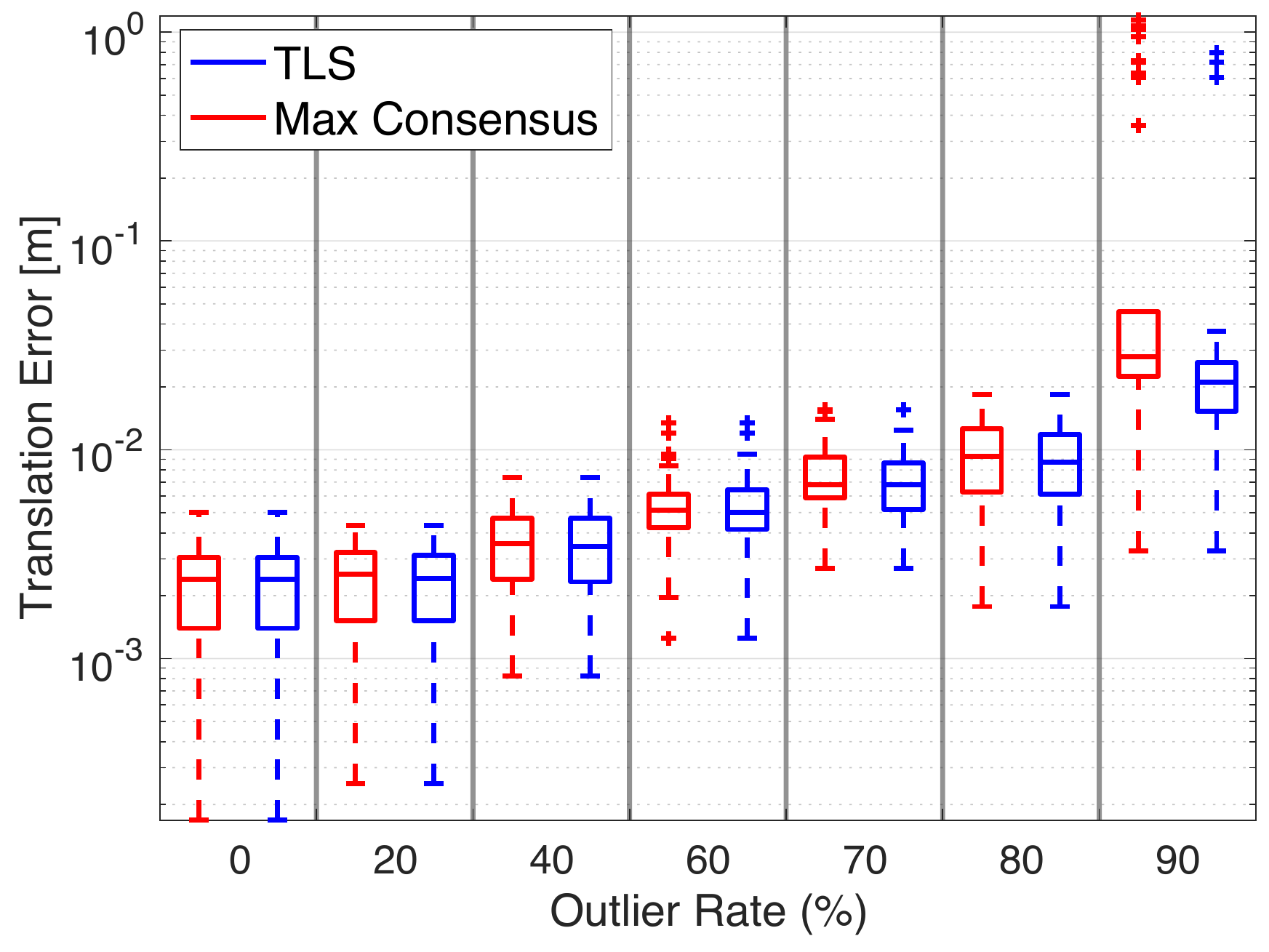} \\
			\vspace{-1mm}
			(d) Translation Estimation
			\end{minipage}
		\end{tabular}
	\end{minipage}
	\begin{minipage}{\textwidth}
	\end{minipage}
	\vspace{-4mm} 
	\caption{ Results for scale, rotation, translation estimation, and impact of  maximal clique inlier selection (\mcis) for increasing outlier rates.
	%inlier selection solvers under increasing levels of outlier ratios.
	 \label{fig:benchmark_separate_solvers}}
	 	\vspace{-1mm} 
	\end{center}
\end{figure}
%%%%%%%%%%%%%%%%%%%%%%%%%

%%% ======================================================================
\myParagraph{Maximal Clique Inlier Selection (\mcis)} 
% For this test, 
We downsample \bunny to $\nrPoints=1000$ and fix the scale to $s=1$ when applying the random transformation. 
We first prune the outlier \TIMs/\TRIMs (edges) that are not consistent with the scale $s=1$, while keeping all the points (vertices), to obtain 
the graph $\calG(\calV,\calE')$. Then we compute the \edit{\emph{maximum clique}}
% maximal clique with largest size 
in $\calG(\calV,\calE')$ %using the algorithm in~\cite{Eppstein10isac-maxCliques}, 
and remove all edges and vertices outside the maximum clique, obtaining a pruned graph $\calG'(\calV',\calE'')$ \revone{(\cf~line~\ref{line:mcis} in Algorithm~\ref{alg:ITR})}.
Fig.~\ref{fig:benchmark_separate_solvers}(b) shows the outlier ratio in $\calG$ (label: ``Before \mcis'') and $\calG'$ (label: ``After \mcis'').
% The Maximal Clique Inlier Selection (MCIS) solver first removes outlier TRIMs, then uses the algorithm in~\cite{Eppstein10isac-maxCliques} to find all the maximal cliques in the remaining graph $(\calV,\calE')$, and returns the maximal clique with maximum cardinality (MCMC). We use the grouth truth to calculate the inlier ratio in MCMC and compare it with the original inlier ratio, as shown in  We find that among all the simulations (40 runs per outlier level as before), there is only one case at 90\% outlier ratio where the MCMC returned by MCIS contains all outliers, which are pair-wise scale consistent. However, we comment that this can be easily resolved because if MCIS returns an MCMC that contains all outliers, the rotation SDP with not be tight and the algorithm can go back and select the second largest MCMC.
\mcis effectively reduces outlier rate to below 10\%, facilitating 
rotation and translation estimation, which, in isolation, can already tolerate more than \maxOutliersRot outliers ({$<90\%$ when using \GNC}). 

%%% ======================================================================
\myParagraph{Rotation Estimation}
% \myParagraph{Rotation Solver: \nameRotation} 
We simulate \TIMs by 
applying a random rotation $\MR$ to the \bunny, and fixing $s=1$ and $\vt = \zero$ (we set the number of \TIMs to $K = 40$).
We compare three approaches to solve the \TLS rotation estimation problem~\eqref{eq:TLSrotation}:
 (i) the quaternion-based SDP relaxation in eq.~\eqref{eq:SDPrelax} and~\cite{Yang19iccv-QUASAR} \revone{(SDP: Quaternion)}, 
 (ii) the rotation-matrix-based SDP relaxation we proposed in~\cite{Yang19rss-teaser} \revone{(SDP: Rotation Matrix)}, and 
 (iii) the \GNC heuristic in~\cite{Yang20ral-GNC}.
 For each approach, we evaluate the rotation error as
{\smaller $\left| \arccos \left( \left(\trace{\hatMR\tran\MRgt} - 1 \right) / 2 \right) \right|$},  
which is the geodesic distance between the rotation estimate $\hatMR$ (produced by each approach) and the ground-truth $\MRgt$~\cite{Hartley13ijcv}.   

% the scale is set to 1 and the translation is set to zero. 
% The same method as in testing the scale solver is used to generate noise and outliers, and same outlier levels and number of runs per outlier level are used.
% We use two SDP relaxations to estimate the rotation under increasing outlier rates: (i) relaxation~\eqref{eq:relaxationRedundant} derived from the unit quaternion formulation and (ii) relaxation (13) in~\cite{Yang19rss-teaser} derived from the rotation matrix formulation. 
% Fig.~\ref{fig:benchmark_separate_solvers}(b) reports two performance metrics: (i) the left $y$-axis (blue) boxplots the rotation estimation error in degrees, defined as 
% $\left| \arccos \left( \left(\trace{(\MRgt)\tran \MR^\star} - 1 \right) / 2 \right) \right|$,  
% \ie~the geodesic distance between the rotation estimate $\MR^\star$ and the ground-truth $\MRgt$, and (ii) the right $y$-axis (red) boxplots the relative duality gap, defined as $\frac{f_{\TLS} - f_{SDP}}{f_{\TLS}}$, where $f_{SDP}$ is the global minimum of the SDP relaxations, and $f_{\TLS}$ is the cost of the \TLS estimation~\eqref{eq:TLSrotation} evaluated at the solution obtained from, possibly rounding, the solution of the SDP relaxations (the relaxation is said to be \emph{tight} when the relative duality gap is zero). 

Fig.~\ref{fig:benchmark_separate_solvers}(c) reports the rotation error for increasing outlier rates. \revone{The \GNC heuristic performs well below $80\%$ outliers and then starts failing.}
 The two relaxations ensure similar performance, while the quaternion-based relaxation proposed in this paper 
 is slightly more accurate at high outlier rates. 
 % dominates the other approaches. 
 % In particular, the rotation-matrix relaxation we originally proposed in~\cite{Yang19rss-teaser} is slightly less accurate. 
 Results in~\isExtended{Appendix~\ref{sec:app-compare_quaternion_rotationMatrix}}{Appendix~\ref{sec:app-compare_quaternion_rotationMatrix}\arxivCite} show that the quaternion-based relaxation is always tighter than the relaxation in~\cite{Yang19rss-teaser}, which often translates into better estimates. 
 % The quaternion-based relaxation also has the advantage of enabling the fast certification technique of Section~\ref{sec:fastCertification}.
 The reader can find more experiments using our quaternion-based relaxation in~\cite{Yang19iccv-QUASAR}, where we also demonstrate its robustness against over $95\%$ outliers and discuss applications to panorama stitching.

 While the quaternion-based relaxation dominates in terms of accuracy and robustness, it requires solving a large SDP.
  Therefore, in \namepp, we opted to use the fast \GNC heuristic instead: this choice is motivated by the observation that \mcis is already able to remove most of the outliers (Fig.~\ref{fig:benchmark_separate_solvers}(b)) hence within \namepp the rotation estimation only requires solving a problem with less than 10\% outliers.

\begin{figure}[t]
\begin{center}
	\begin{minipage}{\columnwidth}
	\begin{center}
			\includegraphics[width=0.7\columnwidth]{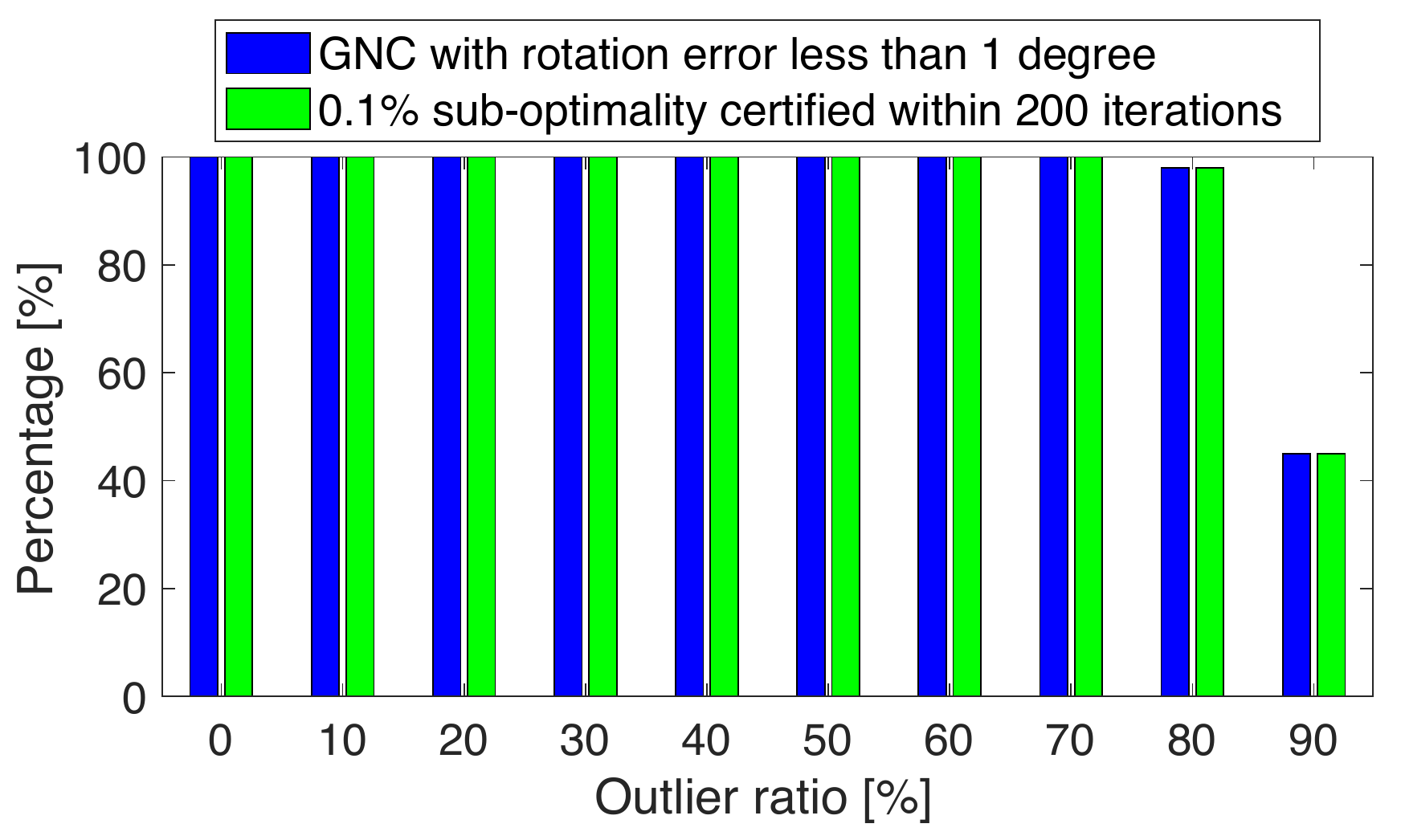}
			\vspace{-4mm}
	\end{center}
	\end{minipage} 
	\caption{\revone{\GNC and optimality certification for \TLS robust rotation search.} 
	% We test Algorithm~\ref{alg:optCertification} with 200 and 10,000 iterations. 
	 \label{fig:gnc_certification}}
	 	\vspace{-2mm} 
	\end{center}
\end{figure}
% \marginpar{fix labels fig~\ref{fig:gnc_certification}}

\myParagraph{Translation Solver} 
% To test the translation solver, 
We apply a random translation $\vt$ to the \bunny, and fix $s=1$ and $\MR = \eye_3$.
% while the scale is set to 1 and the rotation to the identity. 
% set the scale equal to 1 and the rotatio that only a random translation $vt$ is applied to the downsampled bunny point cloud. 
\edit{Fig.~\ref{fig:benchmark_separate_solvers}(d) shows that component-wise translation estimation using both \maxConLong and \TLS are robust against 80\% outliers.}
% Component-wise translation estimation using both \emph{maximum consensus} and \TLS are tested. 
\edit{ The translation error is defined as $\| \hatvt - \vtgt \|$, the  2-norm of the difference between the estimate $\hatvt$ and the ground-truth $\vtgt$. }
As mentioned above, most outliers are typically removed before translation estimation, hence enabling \name and \namepp to be robust to extreme outlier rates (more in Section~\ref{sec:benchmark}).
% Fig.~\ref{fig:benchmark_separate_solvers}(c) shows that both techniques are robust against 80\% outliers.
 % the robustness of "TLS" solver against 80\% outliers and the robustness of "Maximum consensus" solver against 70\% outliers, similar to what has been seen in the scale solver test.

\myParagraph{Optimality Certification for Rotation Estimation} 
We now test the effectiveness of Algorithm~\ref{alg:optCertification} in certifying optimality of 
a rotation estimate.
 We consider the same setup  
% Under the same setup 
we used for testing rotation estimation but with $K=100$ \TIMs. %\marginpar{(but with $K=100$ \TIMs)}
\revone{We use the \GNC scheme in~\cite{Yang20ral-GNC} to solve problem~\eqref{eq:TLSrotation} and then use Algorithm~\ref{alg:optCertification} to certify the solution of \GNC. 
Fig.~\ref{fig:gnc_certification} shows the performance of \GNC and the certification algorithm under increasing outlier rates, where 100 Monte Carlo runs are performed at each outlier rate. The blue bars show the percentage of runs for which \GNC produced a solution with less than 1 degree rotation error with respect to  the ground truth; the green bars show the percentage of runs for which Algorithm~\ref{alg:optCertification} produced a relative sub-optimality bound lower than $0.1\%$ in less than 200 iterations of \DRS. 
% the magenta bars show the percentage of runs for which Algorithm~\ref{alg:optCertification} produced a relative sub-optimality bound lower than $0.1\%$ in under 200 iterations of alternating projections. 
Fig.~\ref{fig:gnc_certification} demonstrates that: 
(i) the \GNC scheme typically produces accurate solutions to the \TLS rotation search problem with $<80\%$ outliers (a result that confirms the errors we observed in Fig.~\ref{fig:benchmark_separate_solvers}(c)); (ii) the optimality certification algorithm~\ref{alg:optCertification} can certify \emph{all} the correct \GNC solutions and reject \emph{all} incorrect \GNC solutions within 200 iterations.
% (green bars); (iii) even with only 200 iterations, the certification algorithm achieves success rate higher than $80\%$. In other words, we can have a fast certification if we are willing to sacrifice \emph{completeness} (\ie we possibly fail to certify  an optimal solution as optimal).  
On average, the certification algorithm takes $24$ iterations to obtain $<0.1\%$ sub-optimality bound, where each \DRS iteration takes 50ms in C++.}
% Although \GNC plus certification can be much faster than solving the SDP relaxation, we see that the SDP relaxation with quaternion always succeeds at $90\%$ outlier rate (\cf~Fig.~\ref{fig:benchmark_separate_solvers}(b)), while \GNC is limited to $<90\%$ outlier rates.

    %!TEX root = ../main.tex

%%%%%%%%%%%%%%%%%%%%%%%%%
%!TEX root = ../main.tex

\newcommand{\mpwthree}{5.7cm}

\begin{figure*}[h]
	\begin{center}
	\begin{minipage}{\textwidth}
	\hspace{-0.2cm}
	\begin{tabular}{ccc}%
		\begin{minipage}{\mpwthree}%
			\centering%
			\includegraphics[width=0.8\columnwidth]{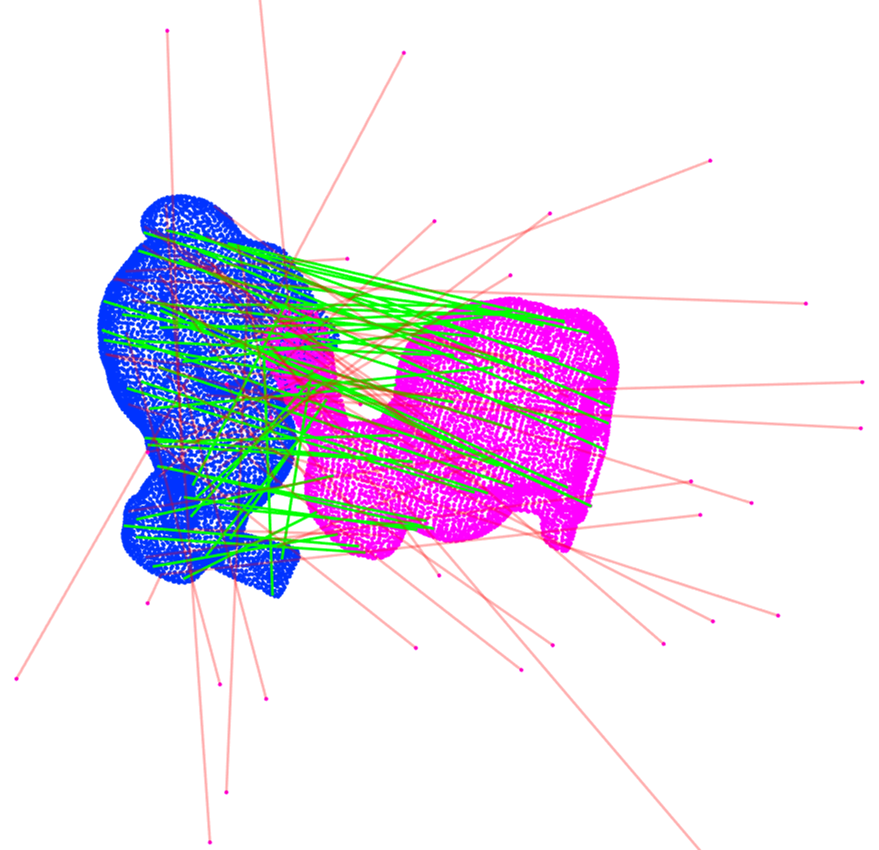} \\
			\end{minipage}
		& \myhspace
			\begin{minipage}{\mpwthree}%
			\centering%
			\includegraphics[width=0.8\columnwidth]{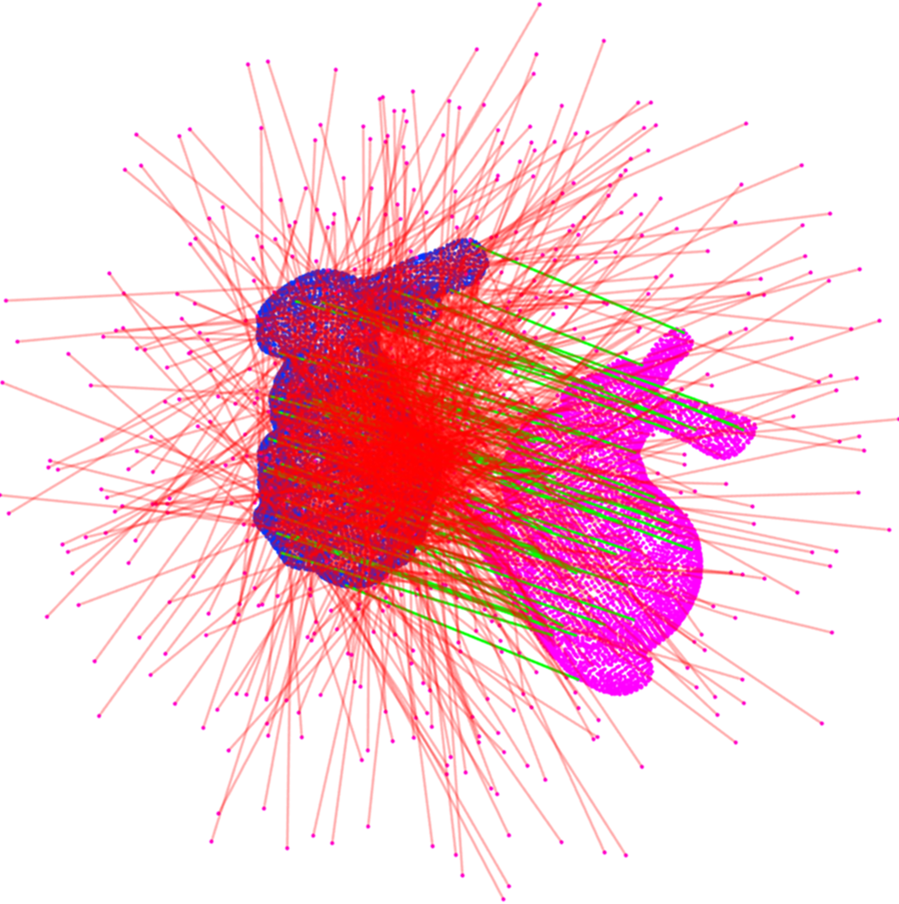}
			\\
			\end{minipage}
		& \myhspace
			\begin{minipage}{\mpwthree}%
			\centering%
			\includegraphics[width=0.9\columnwidth]{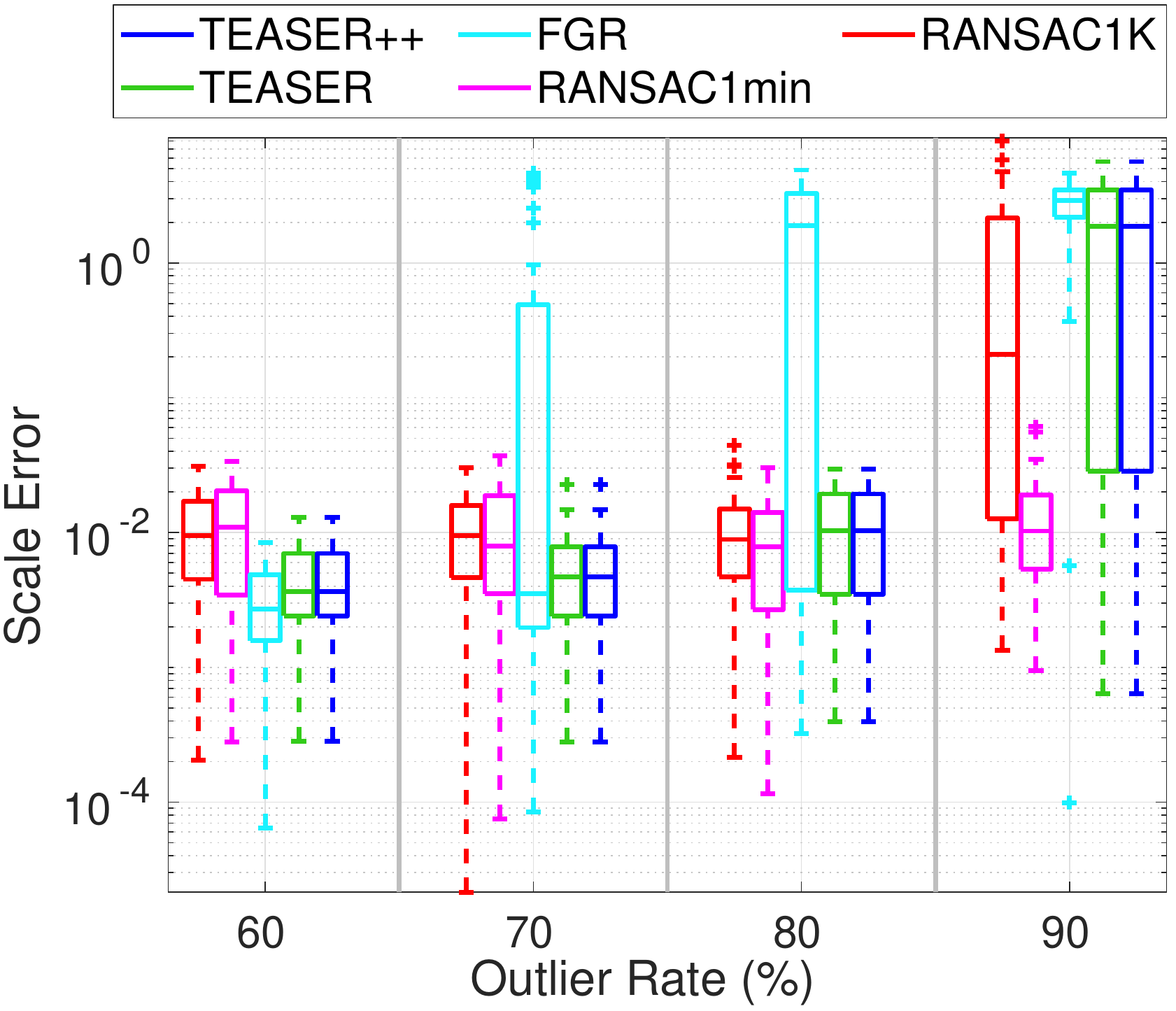}  \\
		\end{minipage}  \\
			% =========================================================================
		\begin{minipage}{\mpwthree}%
			\centering%
			\includegraphics[width=0.9\columnwidth]{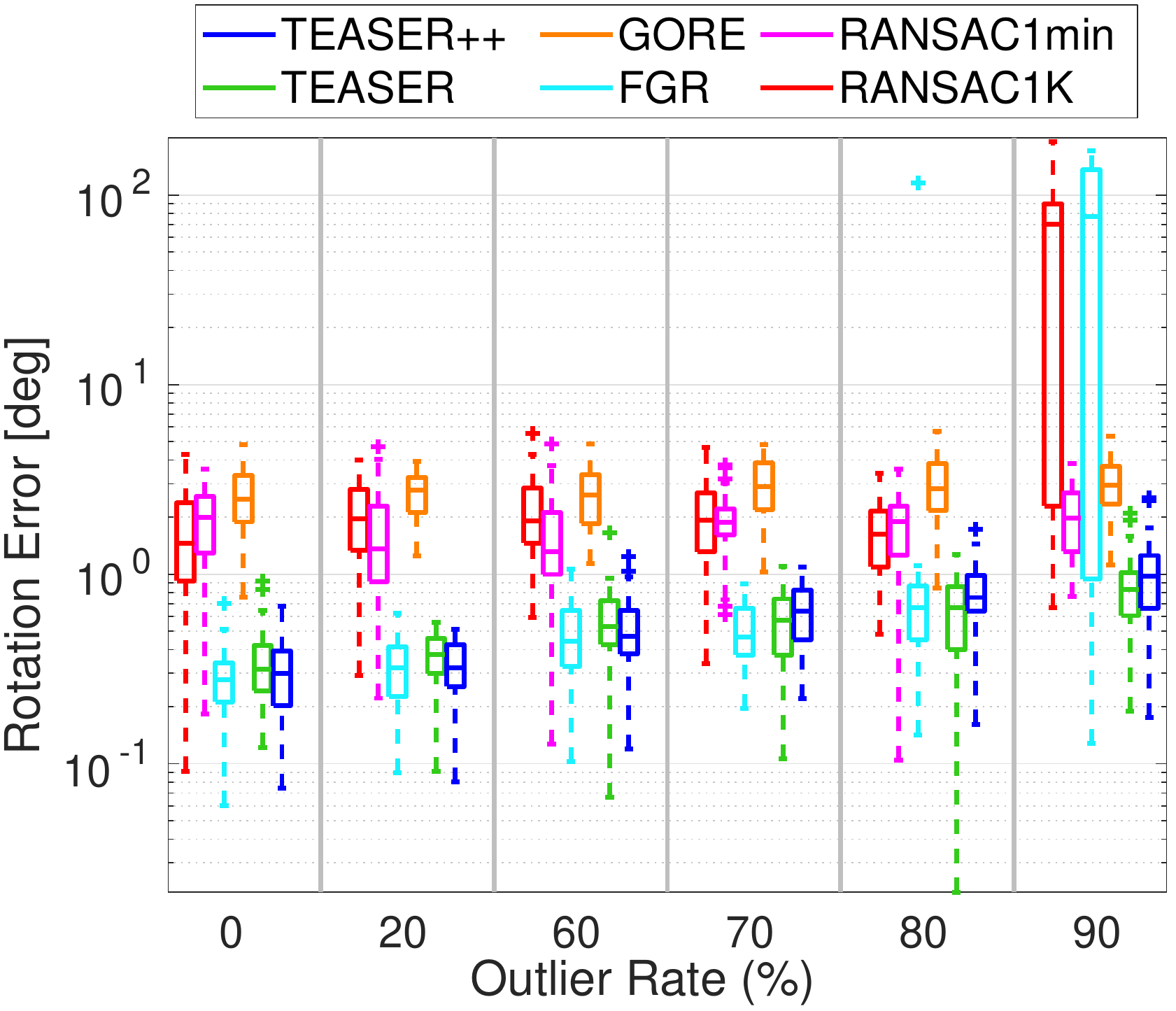} \\
			\end{minipage}
		& \myhspace
			\begin{minipage}{\mpwthree}%
			\centering%
			\includegraphics[width=0.9\columnwidth]{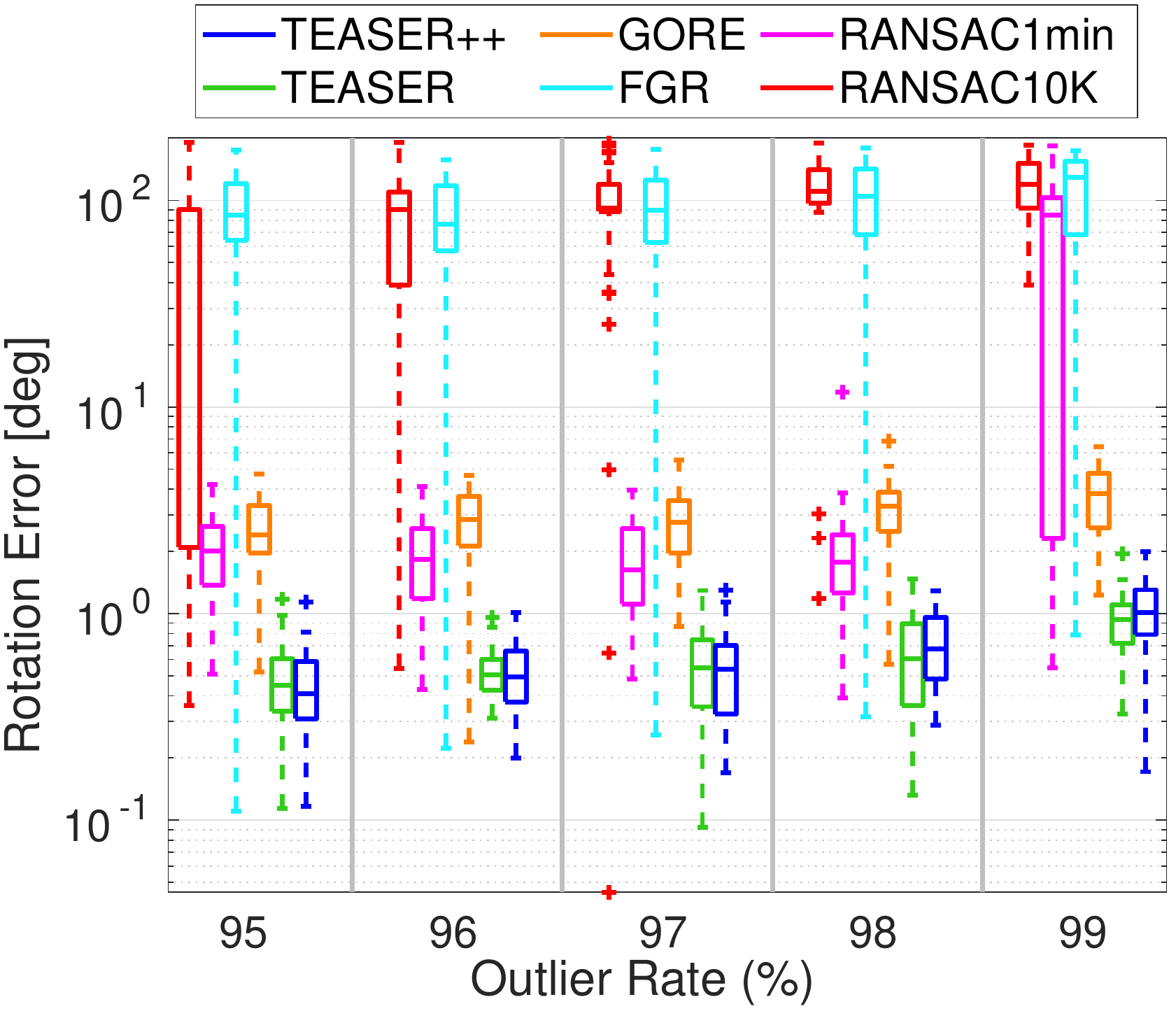}\\
			\end{minipage}
		& \myhspace
			\begin{minipage}{\mpwthree}%
			\centering%
			\includegraphics[width=0.9\columnwidth]{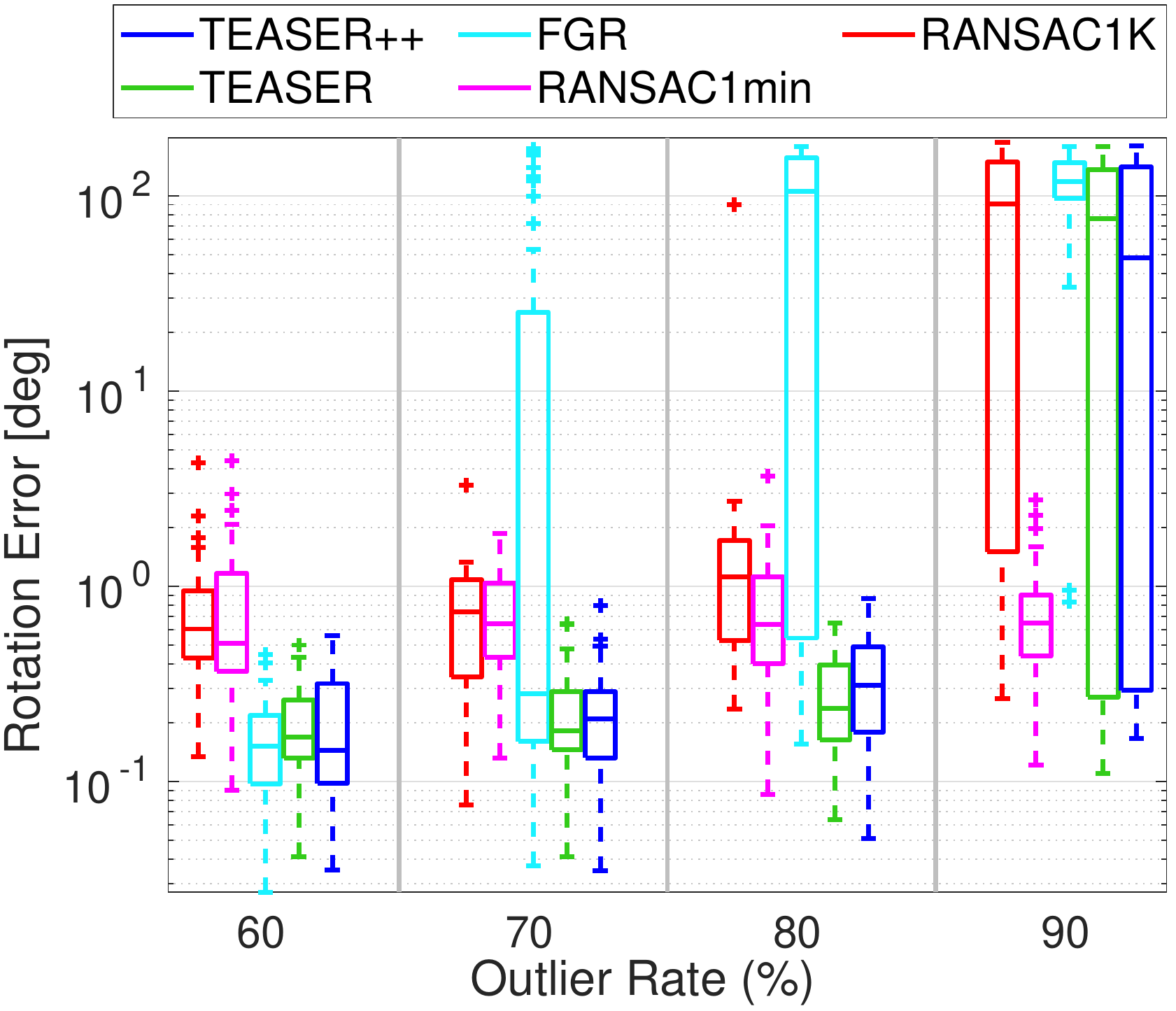} \\
			\end{minipage}\\
			% =========================================================================
		\begin{minipage}{\mpwthree}%
			\centering%
			\includegraphics[width=0.9\columnwidth]{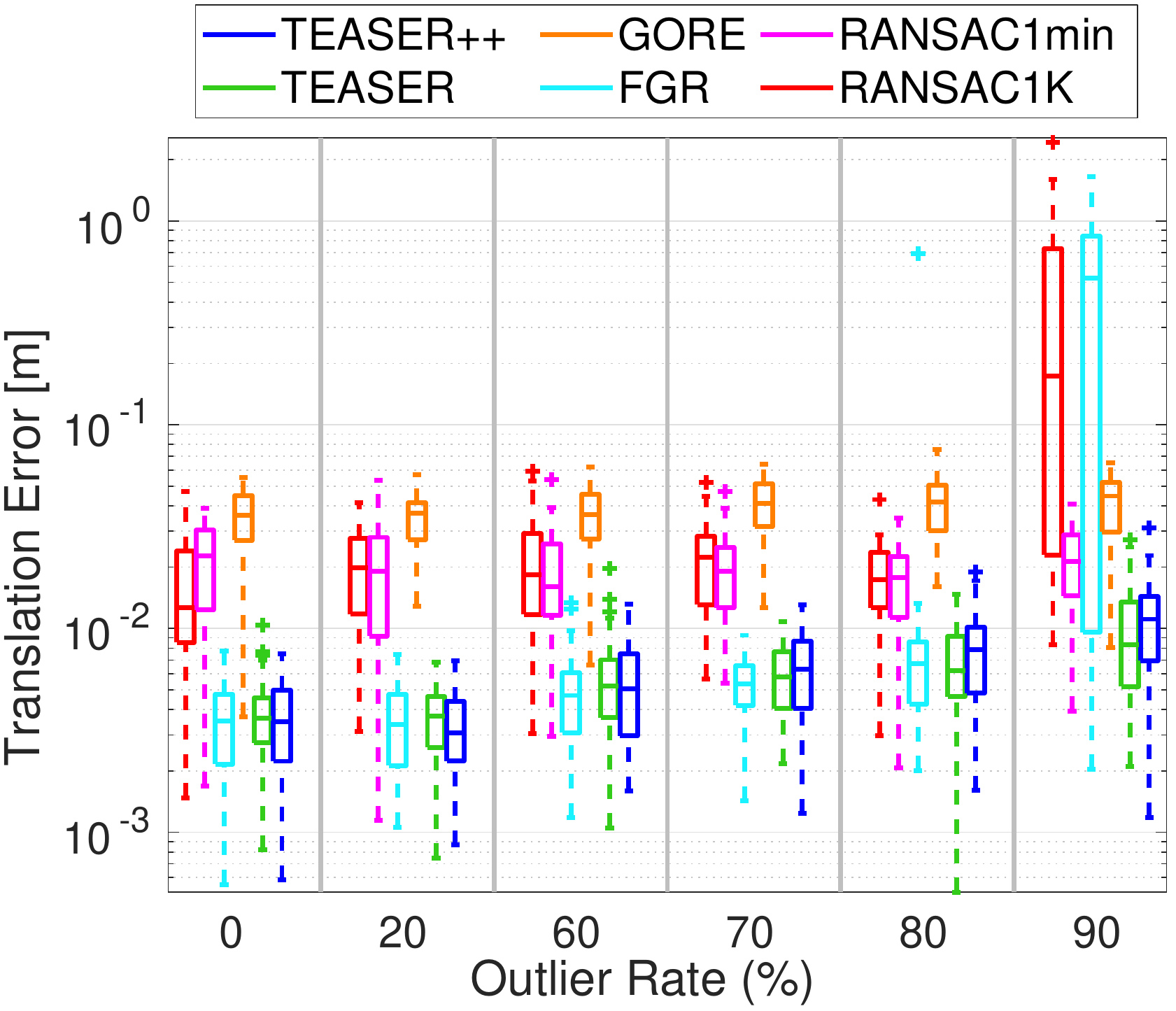} 
			\end{minipage}
		& \myhspace
			\begin{minipage}{\mpwthree}%
			\centering%
			\includegraphics[width=0.9\columnwidth]{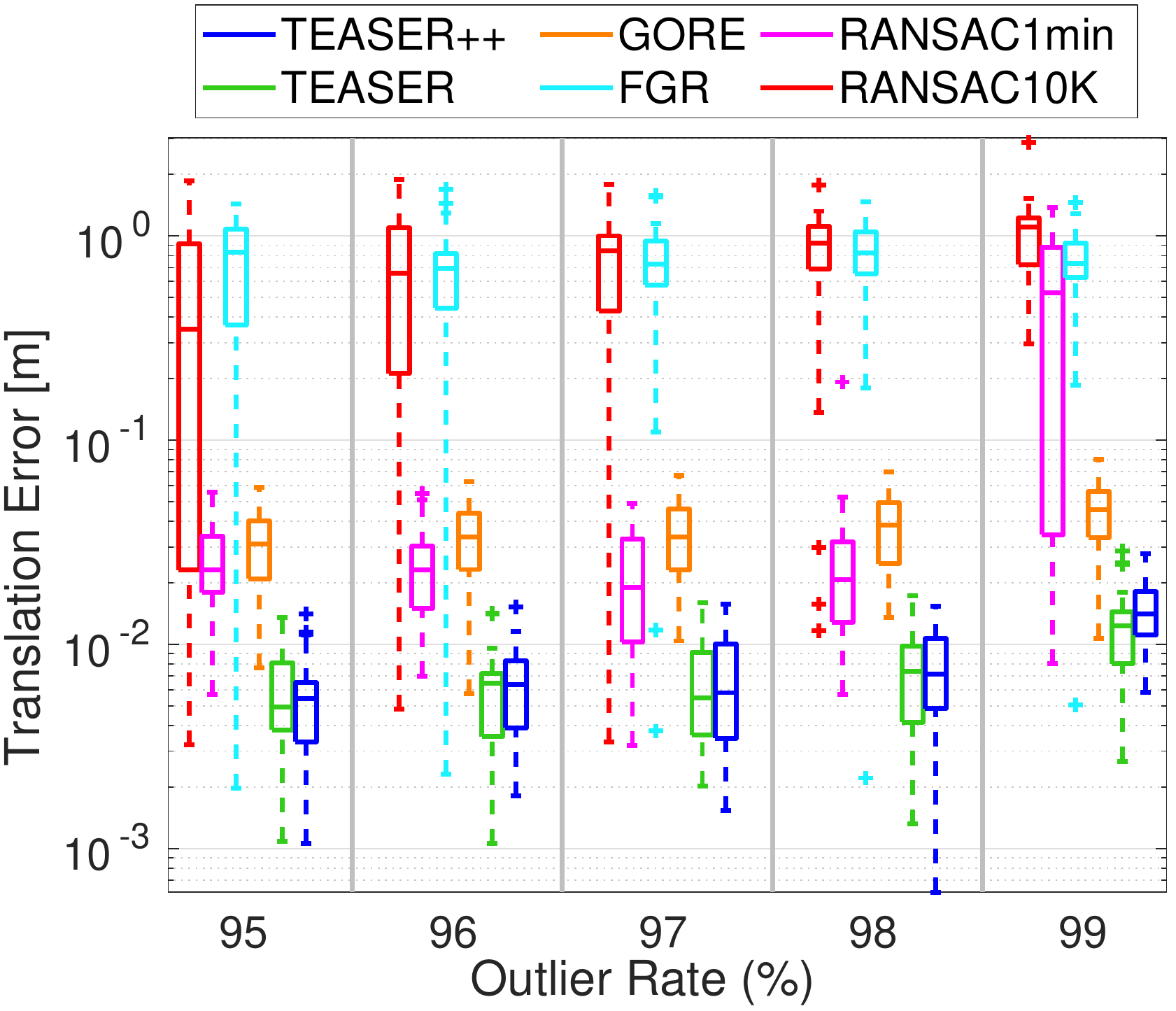}  
			\end{minipage}
		& \myhspace
			\begin{minipage}{\mpwthree}%
			\centering%
			\includegraphics[width=0.9\columnwidth]{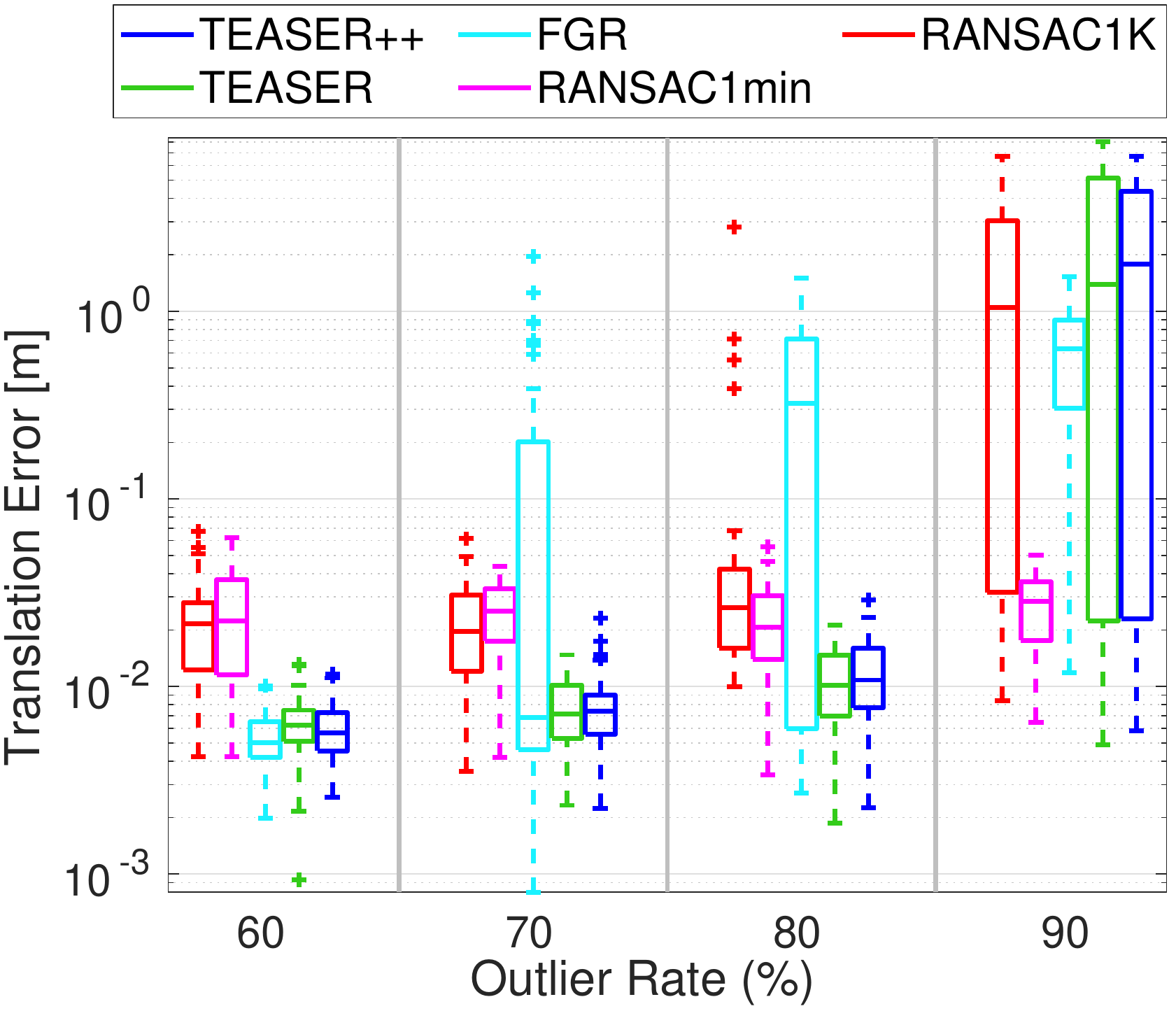}
			\end{minipage} \\
			% =========================================================================
		\begin{minipage}{\mpwthree}%
			\centering%
			\includegraphics[width=0.9\columnwidth]{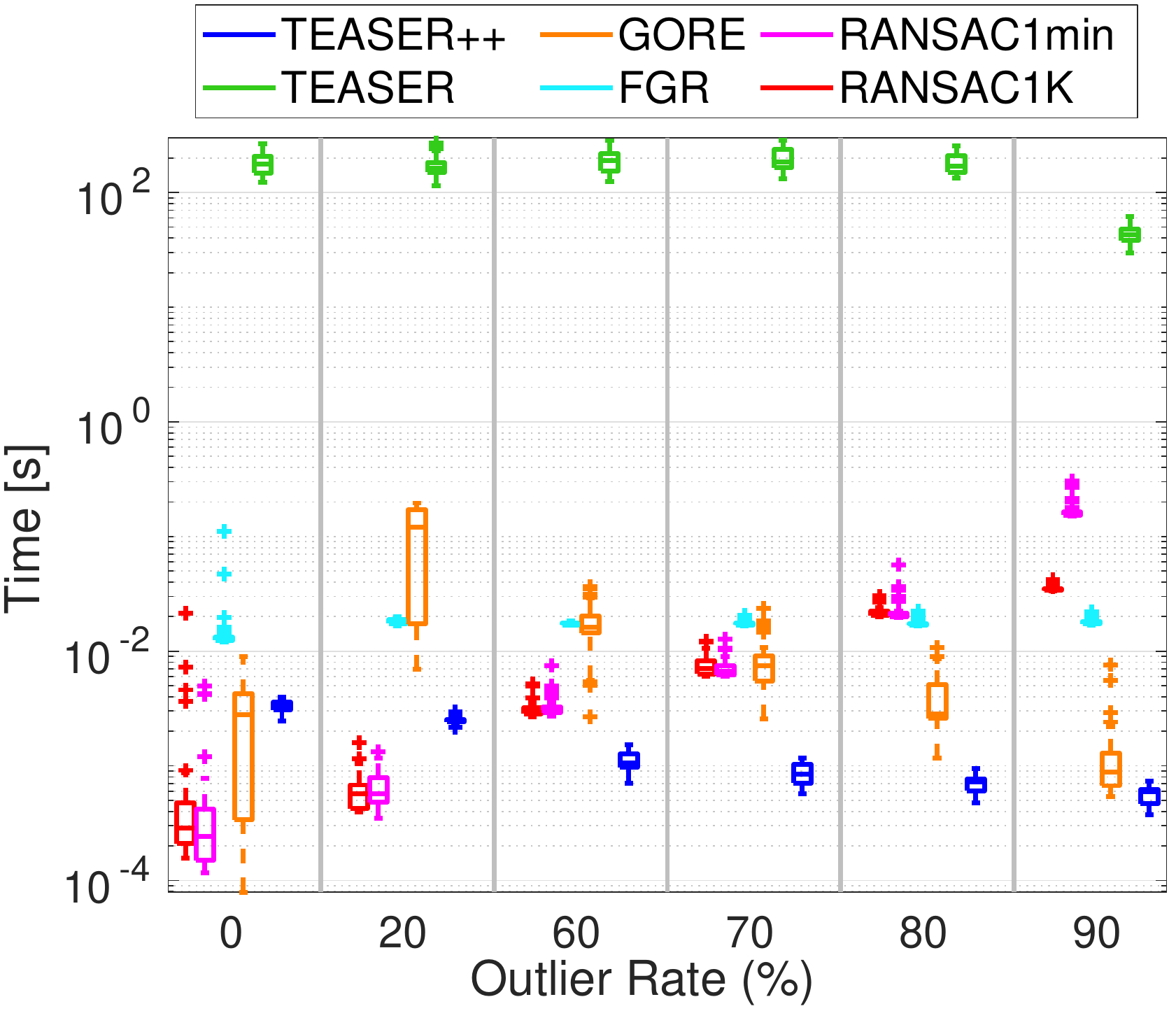} \\
			(a) Known Scale
			\end{minipage}
		& \myhspace
			\begin{minipage}{\mpwthree}%
			\centering%
			\includegraphics[width=0.9\columnwidth]{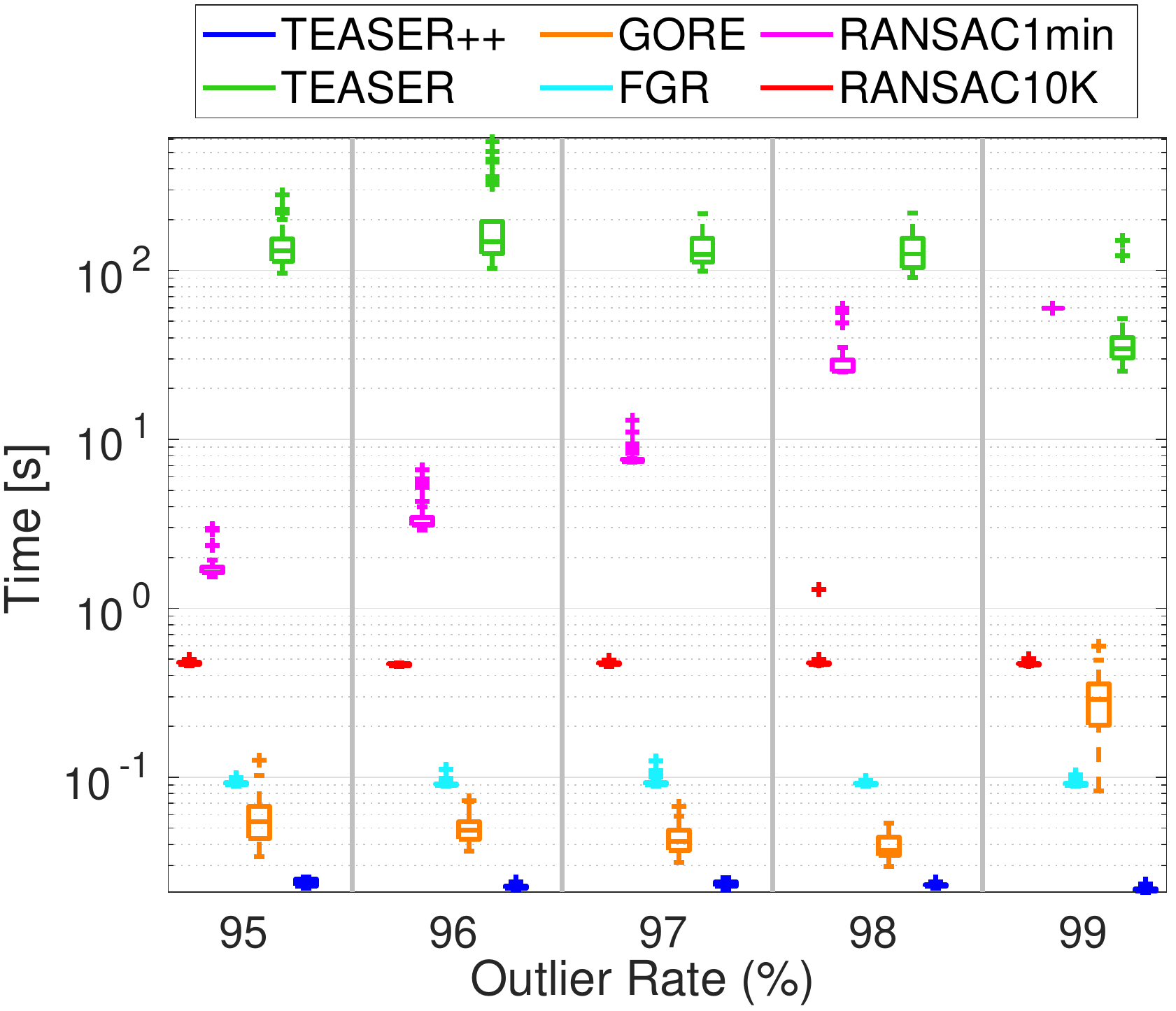}  \\
			(b) Known Scale (Extreme Outliers)
			\end{minipage}
		& \myhspace
			\begin{minipage}{\mpwthree}%
			\centering%
			\includegraphics[width=0.9\columnwidth]{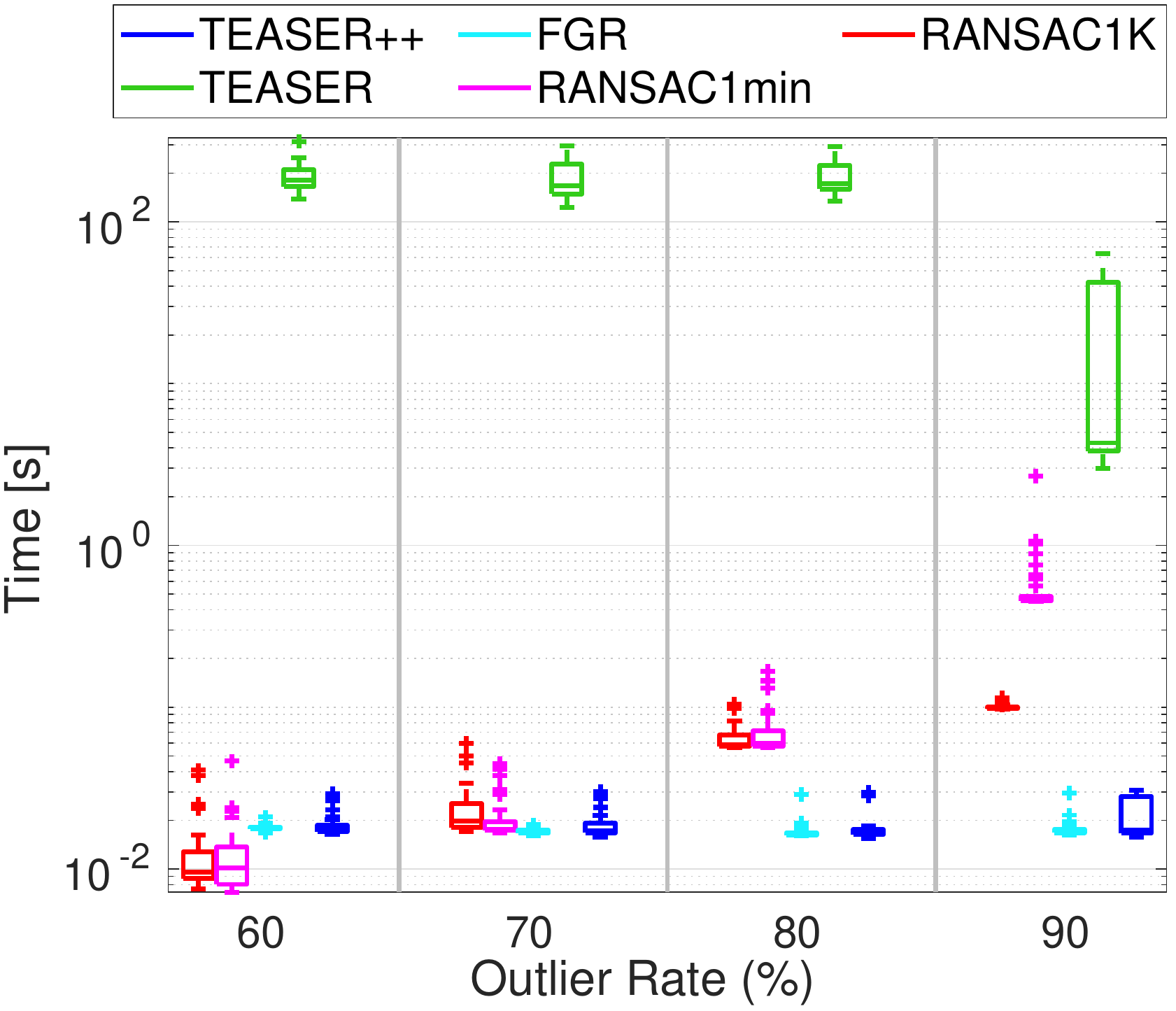} \\
			(c) Unknown Scale
			\end{minipage}
			% =========================================================================
		\end{tabular}
	\end{minipage}
	\vspace{-3mm} 
	\caption{Benchmark results. 
	(a) Boxplots of rotation errors, translation errors, and timing for the six compared methods on the \bunny dataset with known scale (the top figure shows a  registration example with 50\% outlier correspondences). 
	(b) Same as before, but for outlier rates between $95\%$ and $99\%$ (the top figure shows an example with 95\% outlier correspondences).
	(c) Boxplots of scale, rotation, translation errors and timing for the five registration methods that support scale estimation on the \bunny dataset with unknown scale.}
	 \label{fig:benchmark}
	\vspace{-7.5mm} 
	\end{center}
\end{figure*}

%%%%%%%%%%%%%%%%%%%%%%%%%

\subsection{Benchmarking on Standard Datasets}
\label{sec:benchmark}

\myParagraph{Testing Setup}
We benchmark \name and \namepp against two state-of-the-art robust registration techniques: \emph{Fast Global Registration} (\FGR)~\cite{Zhou16eccv-fastGlobalRegistration} and \emph{Guaranteed Outlier REmoval} (\GORE)~\cite{Bustos18pami-GORE}. % under increasing outlier ratios. 
%$\{0,0.2,0.4,0.6,0.7,0.8,0.9\}$. 
In addition, we test two \ransac variants \revone{(with $99\%$ confidence)}: a fast version where we terminate \ransac after a maximum of 1,000 iterations (\ransaconek) and 
a slow version where we terminate \ransac after 60 seconds (\ransaconemin). 
We use four datasets, 
{\emph{\bunny}, \emph{\armadillo}, \emph{\dragon}, and \emph{\buddha},} from the Stanford 3D Scanning Repository~\cite{Curless96siggraph} and downsample them to $\nrPoints=100$ points. 
The tests below follow the same protocol of \prettyref{sec:separateSolver}. 
Here we focus on the results on the \bunny dataset and we postpone the (qualitatively similar) results 
obtained on the other three datasets to  \isExtended{Appendix~\ref{sec:experiments_supp}.}{Appendix~\ref{sec:experiments_supp}\arxivCite.}
 Appendix~\ref{sec:experiments_supp} also  \extraEdits{showcases the proposed algorithms} on registration problems with high noise ($\sigma=0.1$), \extraEdits{and with up to 10,000 correspondences}.
 % \revone{In addition, we provide results of \namepp with up to 10,000 correspondences in Appendix~\ref{sec:experiments_supp}.}
%$$ for synthetic experiments.

{\bf Known Scale.} We first evaluate the compared techniques with known scale $s=1$. % and only transform the point cloud with random rotation and translation. Noise and outliers are generated as in \prettyref{sec:separateSolver}, and 40 experiments are carried out for each outlier level. 
Fig.~\ref{fig:benchmark}(a) shows the rotation error, translation error, and timing for increasing outlier rates on the \bunny dataset. 
From the rotation and translation errors, we note that \name, \namepp, \GORE, and \ransaconemin are robust against up to 90\% outliers, although \name and \namepp tend to produce more accurate estimates than \GORE, and \ransaconemin typically requires over $10^5$ iterations for convergence at 90\% outlier rate. \FGR can only resist 70\% outliers and \ransaconek starts breaking at 90\% outliers. 
\namepp's performance is on par with \name for all outlier rates, which is expected from the observations in Section~\ref{sec:teaserpp}. 
The timing subplot at the bottom of Fig.~\ref{fig:benchmark}(a) shows that \name is impractical for real-time robotics applications. On the other hand, \namepp  is one of the fastest techniques across the spectrum, and is able to solve problems with large number of outliers in less than 10ms {on a standard laptop}.  
% , which further validates the performance of the maximal clique inlier selection step in \name and \namepp. 
%These conclusions are confirmed by the results on the other three datasets ({\armadillo}, {\dragon}, {\buddha), which are \edit{given} in the \supp due to space constraints.

{\bf Extreme Outlier Rates.}
% \label{sec:benchmarkExtreme}
% Motivated by the fact that over 95\% of the correspondences generated by 3D keypoint matching methods can be outliers, 
We further benchmark the performance of \name and \namepp under extreme outlier rates from 95\% to 99\% with known scale and $N=1000$ correspondences on the \bunny. We replace \ransaconek with \ransactenk, since \ransaconek already performs poorly at 90\% outliers.
% (\prettyref{sec:benchmark}, Fig.~\ref{fig:benchmark}(a)). 
Fig.~\ref{fig:benchmark}(b) shows the boxplots of the rotation errors, translation errors, and timing.
% under extreme outlier rates. 
\name, \namepp, and \GORE are robust against up to 99\% outliers, while \ransaconemin with 60s timeout can resist 98\% outliers with about 
$10^6$ iterations. \ransactenk and \FGR perform poorly under extreme outlier rates. While \GORE, \name and \namepp are both robust against 99\% outliers, \name, and \namepp produce lower estimation errors, with \namepp being one order of magnitude faster than \GORE (bottom subfigure). \revone{We remark that \namepp's robustness against $99\%$ outliers is due in large part to the drastic reduction of outlier rates by \mcis.}
% with \name being the lowest. 
% compared to ground truth
%further validating that the results of GORE can be suboptimal. 
%%%%%%%%%%%%%%%%%%%%%%%%%
% \input{./sections/fig-benchmark_extreme_outlier}
%%%%%%%%%%%%%%%%%%%%%%%%%
% \careful{JS: reference timing plot}
% \input{./sections/fig-benchmark_timing}

{\bf Unknown Scale.} 
% When the scale is unknown, since 
\GORE is unable to solve for the scale, hence we only benchmark \name and \namepp against \FGR,\footnote{Although the original algorithm in~\cite{Zhou16eccv-fastGlobalRegistration} did not solve for the scale, we extend it by using Horn's method to compute the scale at each iteration} \ransaconek, and \ransaconemin. Fig.~\ref{fig:benchmark}(c) plots scale, rotation, translation error and timing for increasing outliers on the \bunny dataset. % $\{0.6,0.7,0.8,0.9\}$, and we comment that 
All the compared techniques perform well when the outlier ratio is below 60\%. \FGR has the lowest breakdown point and {fails at 70\%}. \ransaconek, \name, and \namepp only fail at 90\% outlier ratio when the scale is unknown. Although \ransaconemin with 60s timeout outperforms other methods at 90\% outliers, it typically requires more than $10^5$ iterations to converge, which is not practical for real-time applications. 
\namepp consistently runs in less than 30ms.
%%%%%%%%%%%%%%%%%%%%%%%%%
%\input{fig-benchmarkUnknownScale}
%%%%%%%%%%%%%%%%%%%%%%%%%

    %!TEX root = ../main.tex

\subsection{Simultaneous Pose and Correspondences (\SPC)}
\label{sec:bruteForce}

Here we provide a proof-of-concept of how to use \namepp in the case where we do not have correspondences. 
% As mentioned in Section~\ref{sec:relatedWork}, \ICP and variants are the go-to solutions in these cases, but these techniques (i) require an initial guess (which is not available in many robotics applications), and (ii) their performance is very sensitive to the quality of the initial guess. In the following, we show that \namepp can provide a more reliable alternative to \ICP.

{\bf Testing Setup.} 
\revone{We obtain the source point cloud $\calA$ by downsampling the \bunny dataset to $100$ points.
 % Figure \careful{JS: INSERT PLOT} shows the results on the \bunny dataset, downsampled to 100 points.
Then, we create the point cloud $\calB$ by applying a random rotation and translation to $\calA$. 
Finally. we remove a percentage of the points in $\calB$ to simulate \emph{partial overlap} between $\calA$ and $\calB$. For instance, when the overlap is $80\%$, we discard $20\%$ (randomly chosen) points 
from $\calB$. 
% Note that the point clouds can have different sizes because we hypothesize all-to-all $N_\calA*N_\calB$ correspondences.}
We compare \namepp against (i) \ICP initialized with the identity transformation; and (ii) \goICP~\cite{Yang16pami-goicp} with $30\%$, $60\%$ and $90\%$ trimming percentages to be robust to partial overlap.}
% We use the MATLAB~\cite{MATLAB-2019a} implementation of \ICP with default parameters and the identity transformation as the initial guess. 
For \namepp, we generate \emph{all possible correspondences}: in other words, for each point in $\calA$ we add all points in $\calB$ as a potential correspondence (for a total of $|\calA| \cdot |\calB|$ correspondences). We then feed the correspondences to \namepp that computes a transformation without the need for an initial guess. 
Clearly, most of the correspondences fed to \name are outliers, but we rely on \namepp to find the small set of inliers.
% As pointed out in Remark~\ref{rmk:spc}, even 
% Given that \namepp runs sufficiently fast, we further explore the possibility of using \namepp to solve registration problems without feature correspondences. 
% To be more precise, for two point clouds with $N_1$ and $N_2$ points respectively, we generate a total of $N_1 N_2$ pairs of points, representing all potential pairs of correspondences between the points of the two point clouds. We then feed them into \namepp to solve for the transformation.
%  Remarkably, \namepp only fails when the outlier ratio in the original point cloud exceeds 90\%, which corresponds to only a 100 correct pairs of correspondences among 10000.

Fig.~\ref{fig:bruteForce}(a)-(b) show the rotation and translation errors for different levels of overlap between $\calA$ and $\calB$. \ICP fails to compute the correct transformation in practically all instances, since the initial guess is not in the basin of convergence of the optimal solution. \revone{\goICP is a global method which does not require an initial guess and performs much better than \ICP. However, \goICP requires the user to set a trimming percentage to deal with partial overlap, and Fig.~\ref{fig:bruteForce}(a)-(b) show the performance of \goICP is highly sensitive to the trimming percentage (\goICP ($60\%$) performs the best but is still less accurate than \namepp). Moreover, \goICP takes 16 seconds on average due to its usage of \bnb.}
\emph{\namepp computes a correct solution across the spectrum without the need for an initial guess}. 
\namepp only starts failing when the overlap drops below $10\%$. %\footnote{.}
The price we pay for this enhanced robustness is an increase in runtime. We feed $|\calA| \cdot |\calB| \approx 10^4$  correspondences to \namepp, which increases the runtime compared to the correspondence-based setup in which the number of correspondences scales linearly (rather than quadratically) with the point cloud size. 
Fig.~\ref{fig:bruteForce}(c) reports the timing breakdown for the different modules in \namepp.
From the figure, we observe that (i) for small overlap, \namepp is not far from real-time, (ii) the timing is dominated by the maximum clique computation and scale estimation, where the latter includes the computation of the translation and rotation invariant measurements (\TRIMs), \revone{and (iii) \namepp is orders of magnitude faster than \goICP when the overlap is low (\eg~below $40\%$).} 
% which is typically in the order of $10^4$ correspondences.

%!TEX root = ../main.tex

\renewcommand{\mpw}{4.3cm}

\begin{figure}[t]
	\begin{center}
	\begin{minipage}{\textwidth}
	\begin{tabular}{cc}%
	        \myhspace \hspace{-4mm}
			\begin{minipage}{\mpw}%
			\centering%
			\includegraphics[trim=0.1cm 0.4cm 0.2cm 0cm, width=1\columnwidth]{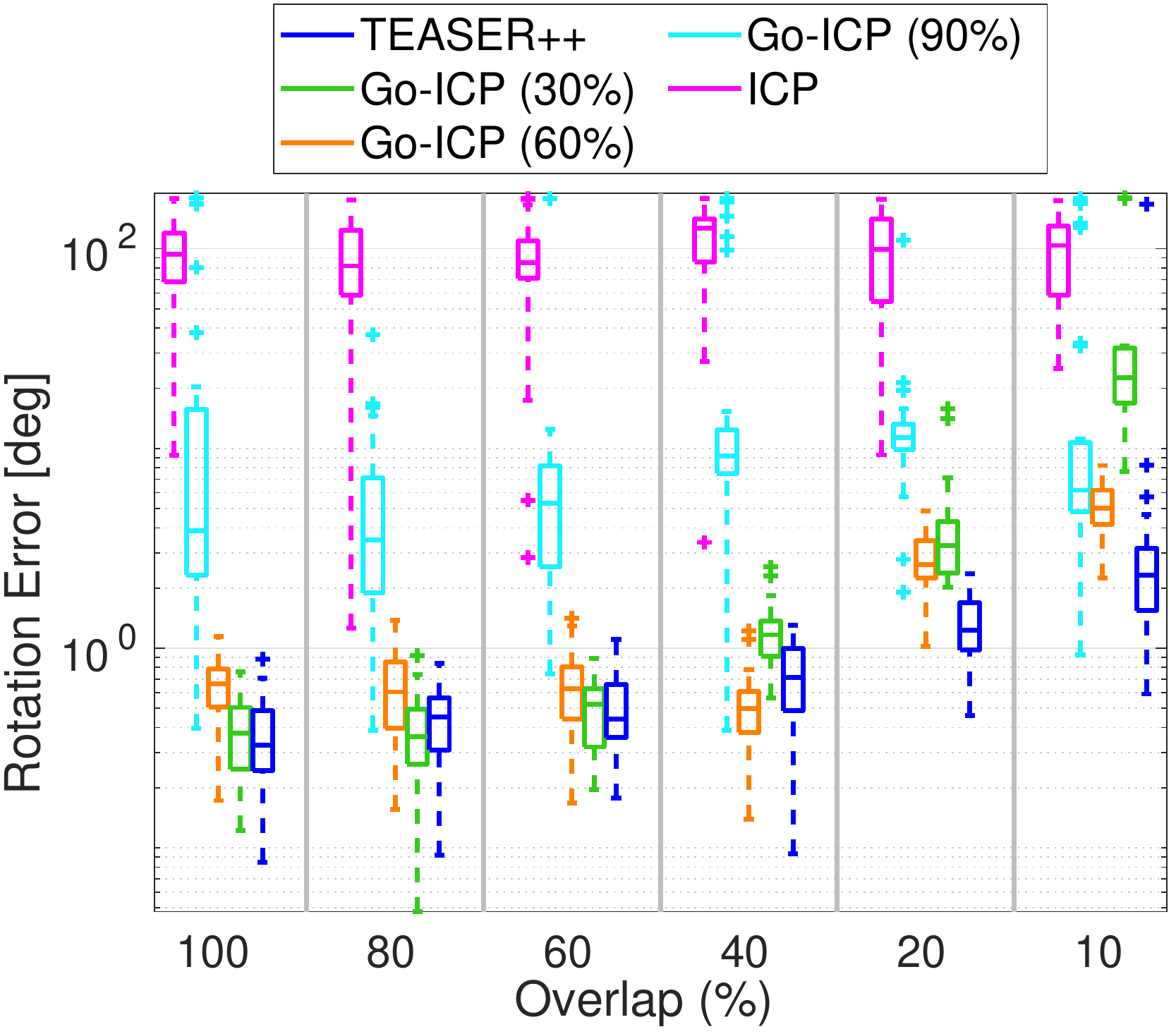} \\
			\vspace{-1mm}
			(a) Rotation Error
			\end{minipage}
              &
              \myhspace %\hspace{-3mm}
			\begin{minipage}{\mpw}%
			\centering%
			\includegraphics[trim=0.1cm 0cm 0.2cm 0cm, width=1\columnwidth]{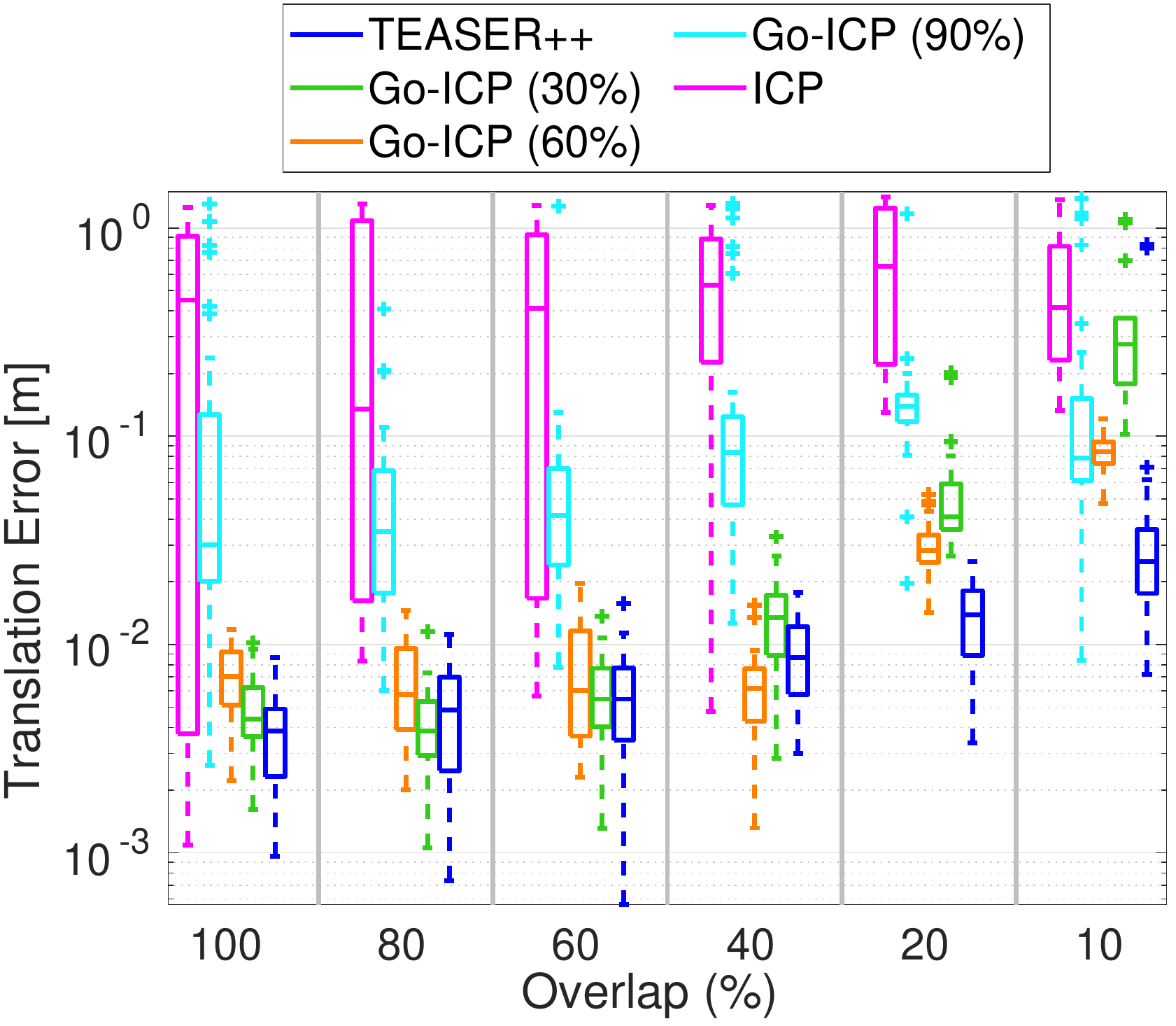} \\
			
			\vspace{-2mm}
			(b) Translation Error
			\end{minipage} 
		\\
		\multicolumn{2}{c}{
		\begin{minipage}{7cm}%
			%\myhspace \vspace{-20mm}
			\centering%
			\includegraphics[width=0.7\columnwidth]{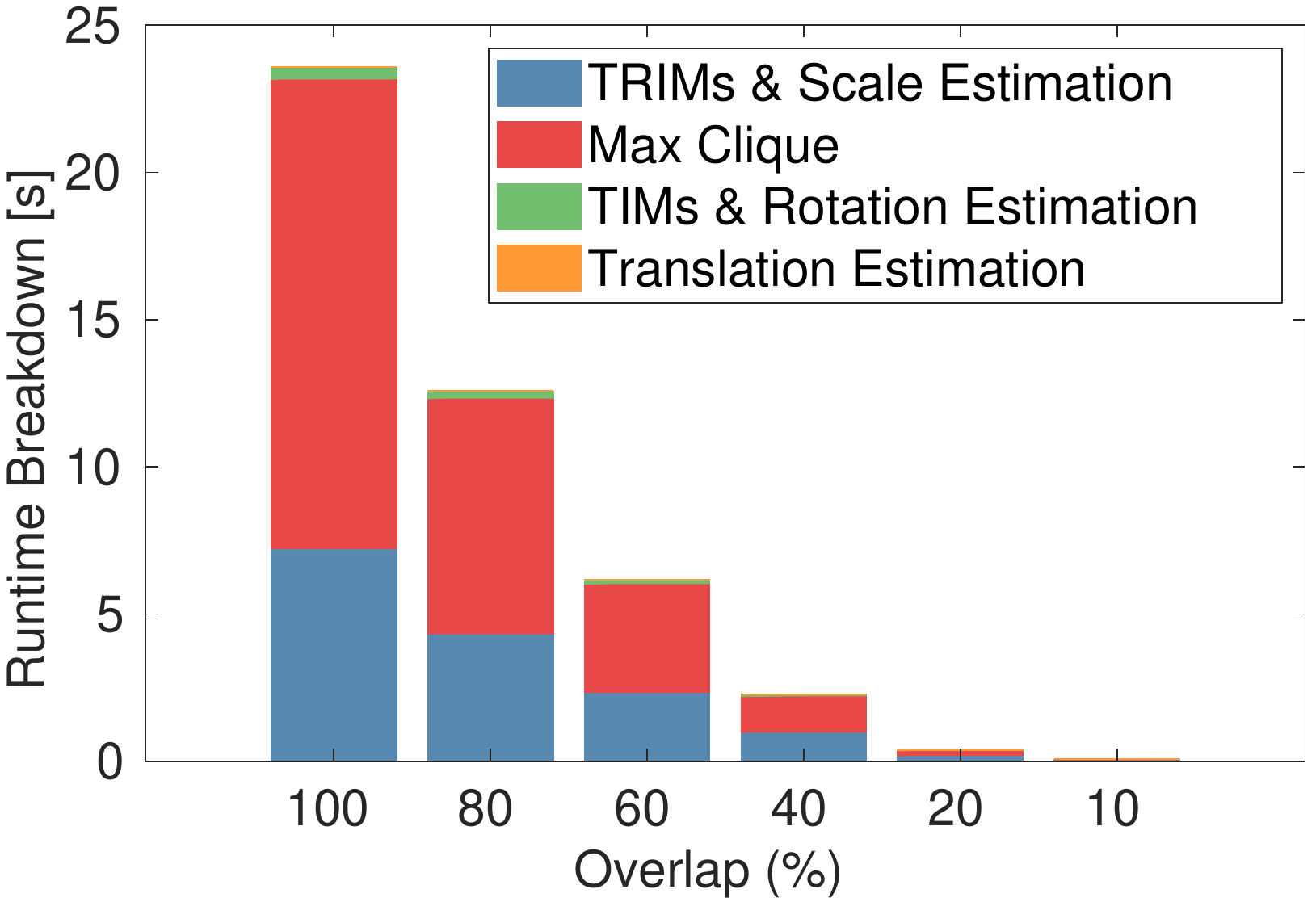} \\
			(c) Timing Breakdown
			\end{minipage}
		}
	\end{tabular}			
	\end{minipage}	
	\vspace{-2mm} 
	\caption{\revone{(a)-(b) Rotation and translation errors for \namepp, \ICP, and \goICP in a correspondence-free problem. (c) Timing breakdown for \namepp.
	 \label{fig:bruteForce}}}
	 	\vspace{-3mm} 
	\end{center}
\end{figure}
    %!TEX root = ../main.tex

% -=-------------------------
%!TEX root = ../main.tex

\newcommand{\mpwfour}{3.3cm}
\newcommand{\mpwtbl}{4.1cm}
\begin{figure*}[h]
	\begin{center}
	\begin{minipage}{0.58\textwidth}
		\vspace{-28mm}
		\hspace{-0.2cm}
		\begin{tabular}{ccc}%
			\begin{minipage}{\mpwfour}%
				\centering%
				\includegraphics[trim=1cm 0.3cm 1cm 0.2cm, width=1\columnwidth]{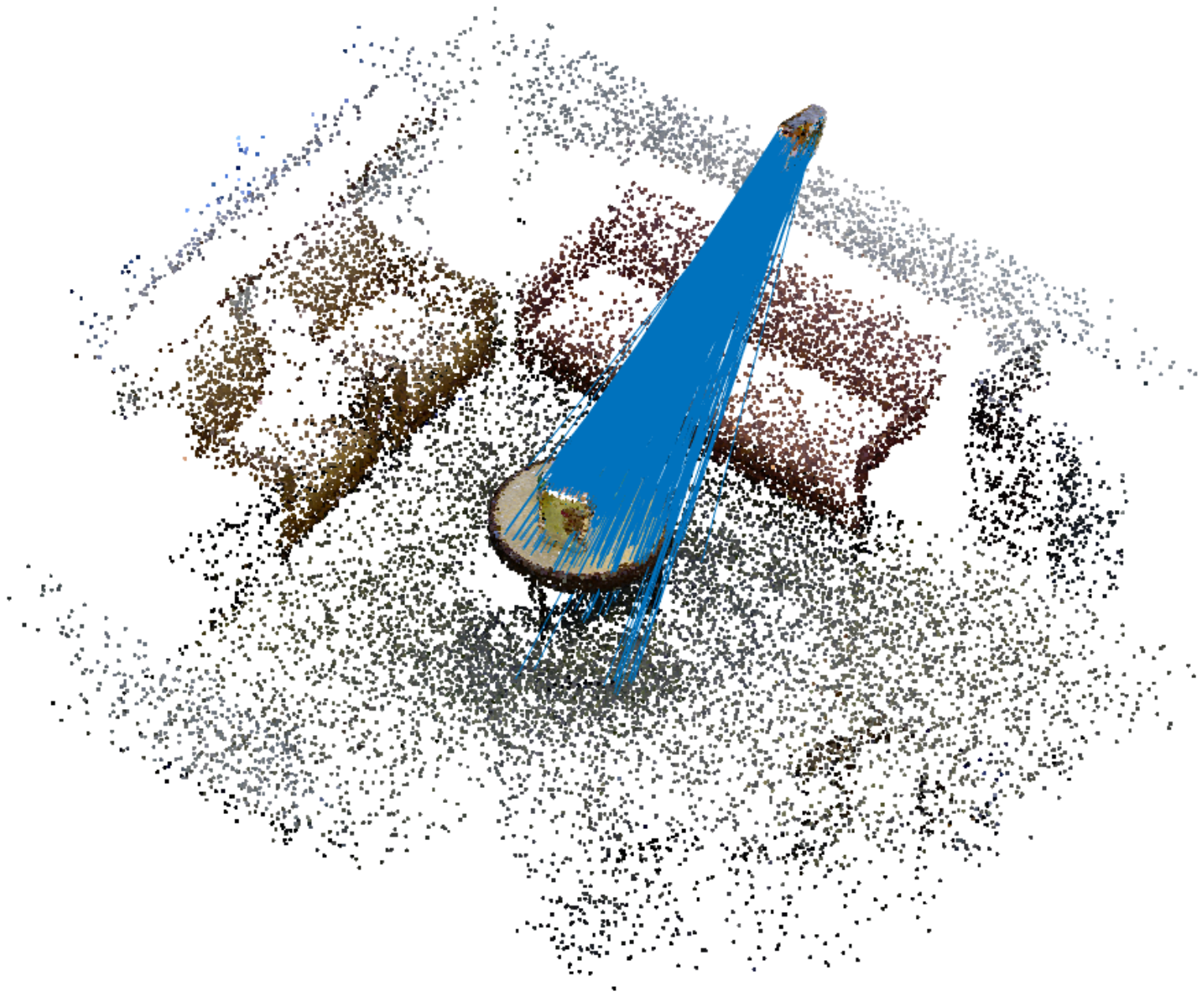} \\
			\end{minipage}
			\begin{minipage}{\mpwfour}%
				\centering%
				\includegraphics[trim=1cm 0.3cm 1cm 0.2cm, width=1\columnwidth]{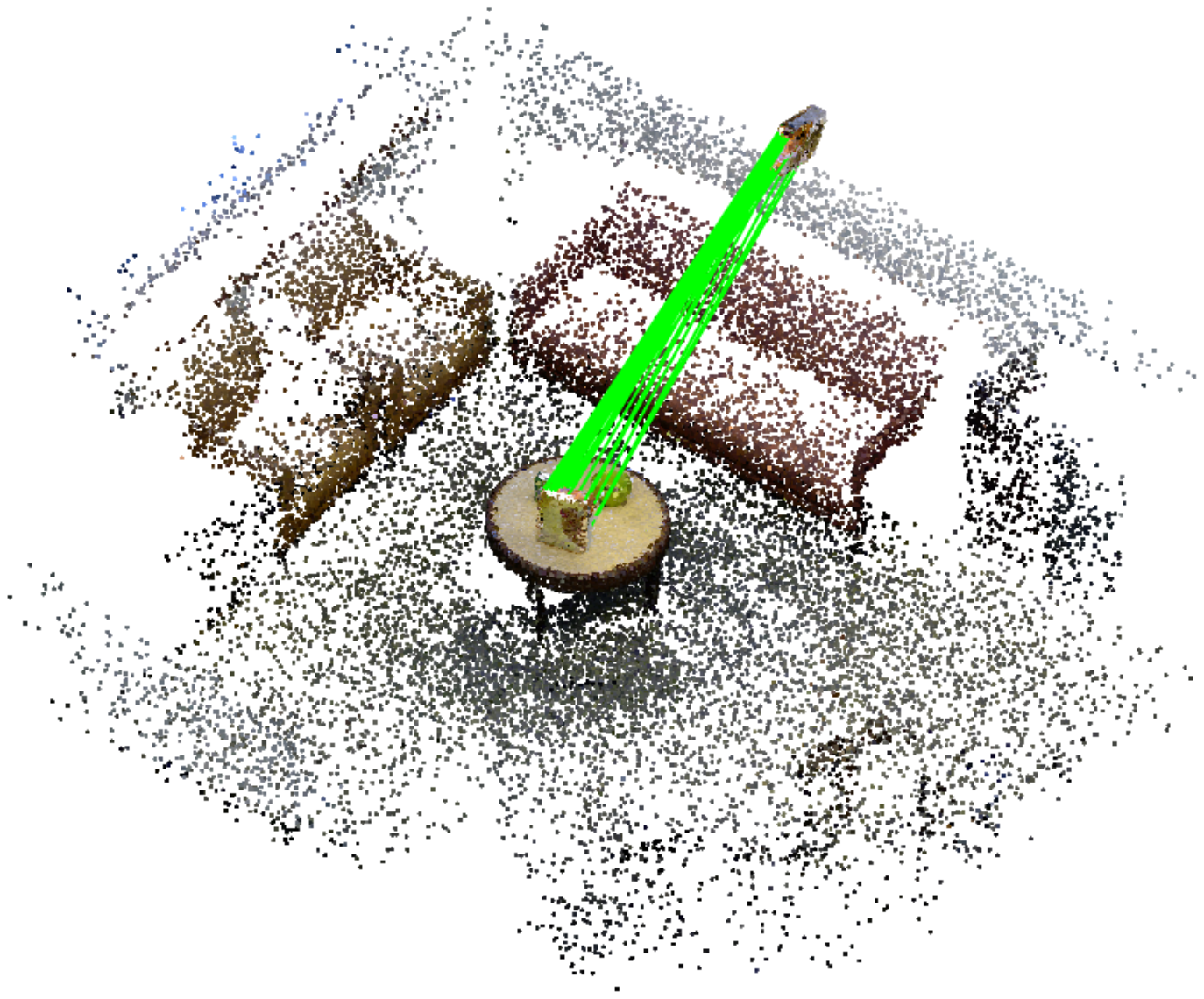} \\
			\end{minipage}
			\begin{minipage}{\mpwfour}%
				\centering%
				\includegraphics[trim=1cm 0.3cm 1cm 0.2cm, width=1\columnwidth]{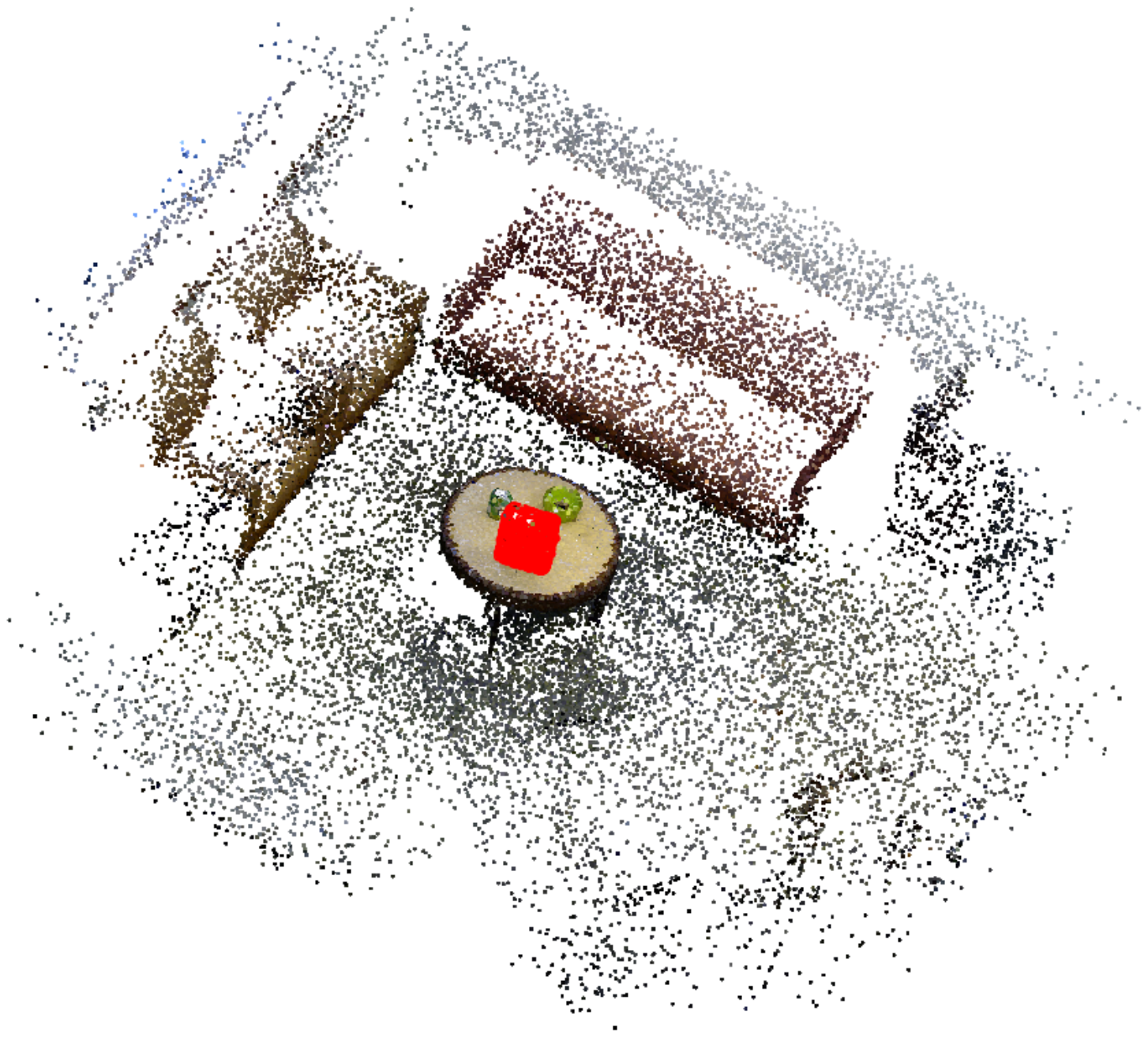} \\
			\end{minipage} 
		\end{tabular}
		\vspace{-3mm} 
		\caption{Successful object pose estimation by \name on a real RGB-D dataset. Blue lines are the original FPFH~\cite{Rusu09icra-fast3Dkeypoints} correspondences with outliers, green lines are the inlier correspondences computed by \name, and the final registered object is highlighted in red.}
		\label{fig:RGBD}
		\vspace{-35mm} 
	\end{minipage}% 
	\hspace{1mm}
	\begin{minipage}[t][][t]{0.34\textwidth}
		\centering
		\scalebox{0.8}{
		\begin{tabular}{ccc}
			& Mean & SD \\
			\hline
			Rotation error [rad] & 0.066 & 0.043 \\
			\hline
			Translation error [m] & 0.069 & 0.053 \\
			\hline
			\# of FPFH correspondences & 525 & 161 \\
			\hline
			FPFH inlier ratio [\%] & 6.53 & 4.59
		\end{tabular}}
		\captionof{table}{Registration results {on eight scenes} of the RGB-D dataset~\cite{Lai11icra-largeRGBD}.}
		\label{tab:objectPoseEstimation}
		\vspace{-5mm} 
	\end{minipage}
	\end{center}
\end{figure*}
% -=-------------------------

\subsection{Application 1: Object Pose Estimation and Localization}
\label{sec:roboticsApplication1}

{\bf Testing Setup.}
We use the large-scale point cloud datasets from~\cite{Lai11icra-largeRGBD} to test \name in \emph{object pose estimation and localization} applications. 
% Seven scenes containing a \emph{cereal box} and one scene containing a \emph{cap} are selected. 
We first use the ground-truth object labels to extract the \emph{cereal box/cap} out of the scene and treat it as the 
object, then apply a random transformation to the scene, to get an object-scene pair. To register the object-scene pair, we first use FPFH feature descriptors~\cite{Rusu09icra-fast3Dkeypoints} to establish putative correspondences.
Given correspondences from FPFH, \name is used to find the relative pose between the object and scene. We downsample the object and scene  using the same ratio (about 0.1) to make the object have 2,000 points.
%, so that FPFH and \name can run in reasonable time.  

{\bf Results.}
Fig.~\ref{fig:RGBD} shows the noisy FPFH correspondences, the inlier correspondences obtained by \name, and successful localization and pose estimation of the \emph{cereal box}.  
{Another example is given in Fig.~\ref{fig:overview}(g)-(h).
Qualitative results for eight scenes are given in \isExtended{Appendix~\ref{sec:objectPoseEstimationSupp}.}{Appendix~\ref{sec:objectPoseEstimationSupp}\arxivCite.}}
% The \supp provides qualitative results for the other scenes.
 The inlier correspondence ratios for \emph{cereal box} are all below 10\% and typically below 5\%. \name is able to compute a highly accurate estimate of the pose using a handful of inliers. 
Table~\ref{tab:objectPoseEstimation} shows the mean and standard deviation (SD) of the rotation and translation errors, {the number of FPFH correspondences, and the inlier ratio estimated by \name on the eight scenes.}
% The interested readers can refer to the \supp for the localization results on the other seven scenes.

%%%%%%%%%%%%%%%%%%%%%%%%%%%
% see fig-RGBD.tex
%\input{sections/table-objectPoseEstimation}
%%%%%%%%%%%%%%%%%%%%%%%%%%%%

%\grayout{
%from [0]: The source code for the compared methods were released by the authors. Go-ICP and Glob-GM-ML are implemented in C++, while APM is implemented using MATLAB. Our algorithms are all implemented in C. All the experiments were conducted on a computer equipped with an Intel Xeon E5 3.2 GHz CPU.
%
%
%[0] downsamples typical registration benchmarks (typically >10k points) to contain ~500 points
%
%Performance metrics:
%\bit
%\item runtime: of each stage
%\item rotation error
%\item translation error (with and without refinement?)
%\item RMS error between point clouds (with gt correspondences) produced by each approach
%\eit
%terminate slow approaches after 1000s
%
%Experimental setup in [0]:
%\bit
%\item get 500 points
%\item add different ratios of outliers
%\item  apply random transformation
%\item add some noise 
%\eit
%}
    %!TEX root = ../main.tex

\subsection{Application 2: Scan Matching}
\label{sec:roboticsApplication2}

{\bf Testing Setup.}
\namepp can also be used in robotics applications that need robust scan matching, such as 3D reconstruction and loop closure detection in SLAM~\cite{Cadena16tro-SLAMsurvey}. We evaluate \namepp's performance in such scenarios using the \emph{\matchTD} dataset~\cite{Zeng17cvpr-3dmatch}, which consists of RGB-D scans from 62 real-world indoor scenes. The dataset is divided into 54 scenes for training, and 8 scenes for testing. The authors provide 5,000 randomly sampled keypoints 
for each scan. 
On average, there are 205 pairs of scans per scene 
(maximum: 519 in the Kitchen scene, minimum: 54 in the Hotel 3 scene).
%\LC{say more about the number of scan pairs in each scene}
%for descriptor calculations for each scan. 

%!TEX root = ../main.tex

\newcommand*\rot{\rotatebox{90}}

\begin{figure*}[h]
	\begin{center}
    \begin{minipage}{0.34\textwidth}
    \vspace{-5mm}
        \begin{tabular}{c}
			\begin{minipage}{\columnwidth}%
				\centering%
		        \includegraphics[width=\columnwidth]{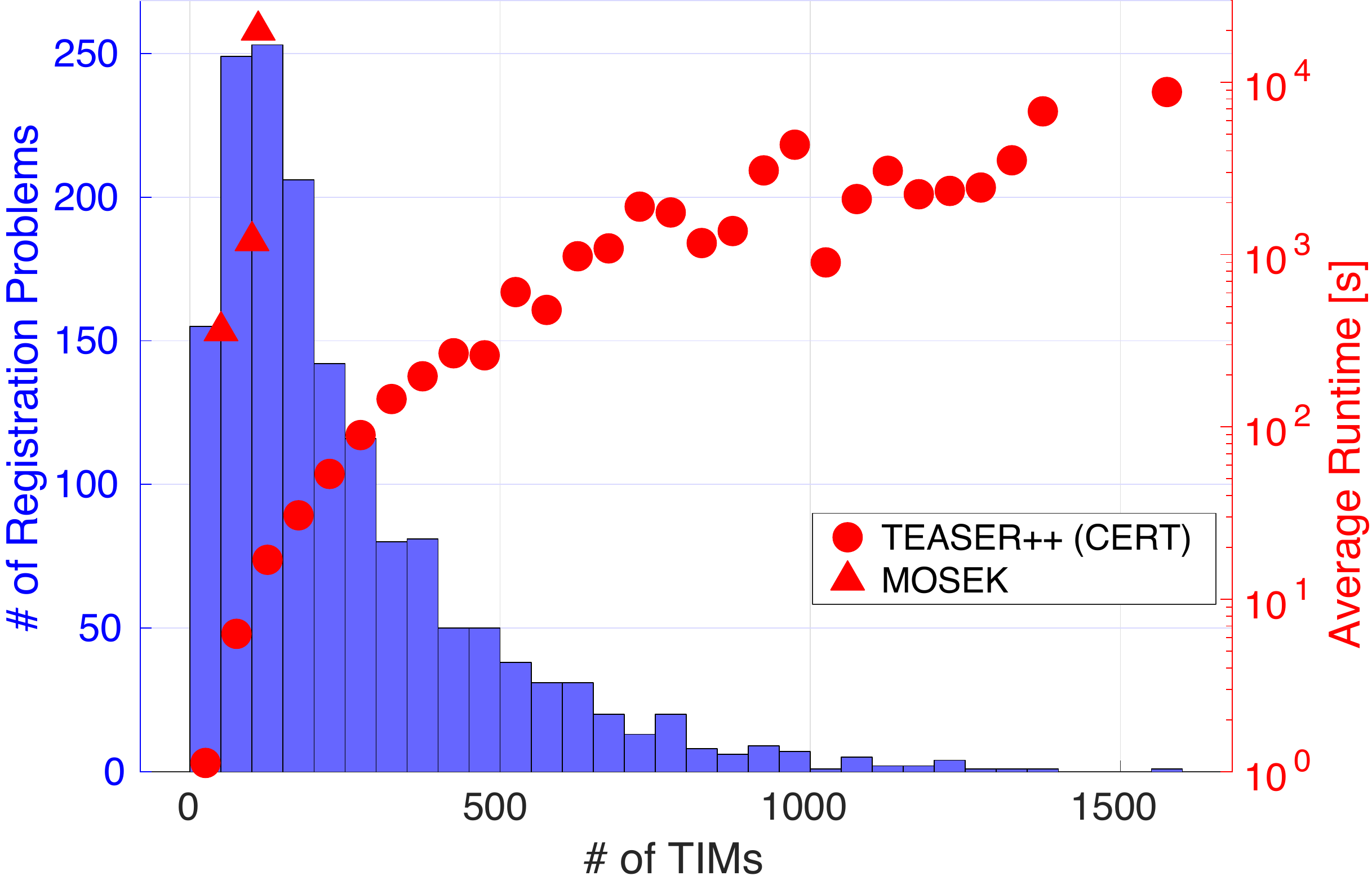} 
			\end{minipage}
            \vspace{-3mm}
        \end{tabular}
        \caption{\revone{\hspace{-3mm}\# of registration problems (histogram) and average runtime (scatter plot, compared with \MOSEK) for \namepp (CERT) \wrt \# of \TIMs passed to certification in \emph{3DMatch}.}}
        \label{fig:cert-runtime-vs-tims}
	\end{minipage}%
	\hspace{5mm}
	\begin{minipage}{0.58\textwidth}
    \vspace{-5mm}
        \hspace{-0.2cm}
        \resizebox{\columnwidth}{!}{
        \begin{tabular}{cccccccccc}
            & \multicolumn{8}{c}{Scenes}\\
        \cline{2-9}
             & \rot{\shortstack{Kitchen (\%)}} & \rot{\shortstack{Home 1  (\%)}}  & \rot{\shortstack{ Home 2 (\%)}} & \rot{\shortstack{Hotel 1 (\%)}} & \rot{\shortstack{Hotel 2  (\%)}} & \rot{\shortstack{ Hotel 3 (\%)}} & \rot{\shortstack{ Study (\%)}} & \rot{\shortstack{ MIT Lab (\%)}} & \rot{\shortstack{Avg.  Runtime {[}s{]}}} \\
        \hline
        RANSAC-1K      & 91.3        & 89.1        & 74.5      & 94.2       & 84.6       & 90.7       & 86.3          & 81.8   & 0.008                   \\
        \hline
        RANSAC-10K     & 97.2         & 92.3        & 79.3      & 96.5       & 86.5       & 94.4       & 90.4         & 85.7   & 0.074                  \\
        \hline
        TEASER++        & 98.6         & 92.9        & 86.5      & 97.8       & 89.4       & 94.4       & 91.1          & 83.1   & 0.059                  \\
        \hline
        TEASER++ (CERT) & 99.4         & 94.1       & 88.7      & 98.2       & 91.9       & 94.4       & 94.3          & 88.6   & 238.136     
        \end{tabular}
        }
        \captionof{table}{Percentage of correct registration results \revone{and average runtime} using \namepp, \namepp certified, and \ransac on the \emph{3DMatch} dataset.}
        \label{tab:scanMatching}
		\vspace{-3mm} 
	\end{minipage}% 
	\end{center}
\vspace{-9mm}
\end{figure*}

We use 3DSmoothNet \cite{gojcic19cvpr-3dsmoothnet}, a state-of-the-art neural network, to compute local descriptors for each 3D keypoint, and generate correspondences using nearest-neighbor search.
% \footnote{{3DSmoothNet descriptors have been shown to outperform FPFH in \cite{gojcic19cvpr-3dsmoothnet}}.} 
We then feed the correspondences to \namepp and \ransac (implemented in Open3D \cite{Zhou18arxiv-open3D}) and compare their performance in terms of percentage of successfully matched scans and runtime. \revone{Due to the large number of pairs we need to test, we run the experiments on a server with a Xeon Platinum 8259CL CPU at 2.50GHz, and allocate 12 threads for each algorithm under test.}
% point cloud matching, we compute local descriptors on the interest points and generate correspondences using nearest-neighbor matching. 
Two scans are successful matched when the transformation computed by a technique 
 has (i) rotation error smaller than \SI{10}{\degree}, and (ii) translation error less than \SI{30}{\centi\metre}. 
 We report \ransac's results with maximum number of iterations equal to 1,000 (\ransaconek) and 10,000 (\ransactenk).
 We also compare the percentage of successful registrations out of the cases where \namepp certified the rotation estimation as optimal, a setup we refer to as \namepp (CERT). 
 % \extraEdits{this amounts to running Algorithm~\ref{alg:optCertification} to check \namepp's estimate.} 
 The latter \extraEdits{executes Algorithm~\ref{alg:optCertification} to evaluate the poses from \namepp and uses a desired sub-optimality gap $\bar{\eta} = 3\%$ to certify optimality.} 
% the quality of the poses \namepp certified as optimal \revone{with a sub-optimality gap threshold of 3\%}. 
We use $\beta_i = \beta = 5\ \mathrm{cm}$ for \namepp in all tests.

{\bf Results.}
 Table \ref{tab:scanMatching} shows the percentage of successfully matched scans and the average timing for 
 the four compared techniques.
 \name dominates both \ransac variants \revone{with exception of the Lab scene.} %\marginpar{provide insights}.
	\ransaconek has a success rate up to \revone{$12\%$} worst than \namepp. %is often inferior by more than $6\%$.
  \ransactenk is an optimized C++ implementation and, while running slower than \namepp, it cannot reach the same accuracy for most scenes.  
% \namepp, and \ransaconek .
% \namepp outperforms \ransac by $5-10\%$ with a comparable timing.
 % comparison between \namepp and \ransac with various iterations. 
 These results further highlight that \namepp can be safely used as a faster and more robust replacement for \ransac in SLAM pipelines. 
 % achieve both faster and more accurate results. 
 This conclusion is further reinforced by the last row in Table \ref{tab:scanMatching}, where we show the 
 success rate for the poses certified as optimal by \namepp. 
 The success rate strictly dominates both \ransac variants  and \namepp, since \namepp (CERT) is able to identify and 
 reject unreliable registration results. This is a useful feature  when scan matching is used for loop closure detection in SLAM, since bad registration results lead to incorrect loop closures and can compromise the quality of the resulting map (see~\cite{Lajoie19ral-DCGM,Yang20ral-GNC}). 
 %We omitted the timing result for \namepp (CERT) since our certification routine (Algorithm~\ref{alg:optCertification}) is currently implemented in MATLAB (the certification time of our non-optimized MATLAB code is around 1 minute). \optional{\red{We plan to implement our certification algorithm in C++.}}{}
 \extraEdits{While running Algorithm~\ref{alg:optCertification} in \namepp (CERT) requires more than 200 seconds in average, the majority of instances are solved by \namepp (CERT) within 100 seconds and problems with fewer than 50 \TIMs can be certified in 1.12 seconds. Fig.~\ref{fig:cert-runtime-vs-tims} compares the runtime of our certification approach against \MOSEK (which directly solves the SDP relaxation) for 50, 100, and 110 \TIMs. We can see that \namepp (CERT) is orders of magnitude faster, and can certify large-scale problems beyond the reach of \MOSEK, which runs out of memory for over 150 \TIMs.}

\emph{Why is \namepp not able to solve 100\% of the tests in Table \ref{tab:scanMatching}?}
This should not come as a surprise from the statements in Theorem~\ref{thm:certifiableRegNoiseless1}-\ref{thm:certifiableRegNoisy}. \emph{The estimation contracts discussed in Section~\ref{sec:guarantees} require a minimum number of inliers:} \namepp can mine a small number of inliers among a large number of outliers, but it cannot
 solve problems where no inliers are given or where the inliers are not enough to identify a unique registration!
 For the experiments in this section, it is not uncommon to have no or fewer than 3 inliers in a scene due 
 to the quality of the keypoint descriptors. 
 Moreover, it is not uncommon to have symmetries in the scene, which make the registration non-unique. 
 To intuitively highlight this issue, 
 % Such symmetries might exist in different cliques, one of which contain the ground truth registration.
 % To explore such failure modes, we further certify \namepp's solutions with Algorithm \ref{alg:optCertification}. We chose the desired relative sub-optimality gap to be 0.1\%, and a maximum of 200 iterations. We then run a MATLAB implementation of Algorithm \ref{alg:optCertification} separately over all \namepp registrations on a cluster of desktop machines. 
 Fig.~\ref{fig:scanMatchingScatter} shows the rotation errors of \namepp, with different markers for certified (blue dots) and non-certified (red crosses) solutions. 
 The figure shows that (i) \namepp (CERT) is able to reject a large number of incorrect estimates (red crosses with large errors), and (ii) some of the incorrect but certified solutions exhibit errors around \SI{90}{\degree} and \SI{180}{\degree} which correspond to symmetries of the scene. 
 % The data verify the existence of symmetries: the certified solutions clustered around \SI{0}{\degree}, \SI{90}{\degree} and \SI{180}{\degree}. 
 A visual example of the phenomenon is shown in Fig.~\ref{fig:symmetry}. 
 % Note that the lack of sufficient number of outliers would prevent any 
 These results highlight the need for 
 better keypoint detectors, since even state-of-the-art deep-learning-based methods struggle to produce acceptable outlier rates in real problems.

% \vspace{3cm}
 %!TEX root = ../main.tex

\renewcommand{\mpw}{1.3cm}
\begin{figure}[ht!]
	\begin{minipage}{\columnwidth}
	\begin{center}
			\includegraphics[width=1\columnwidth]{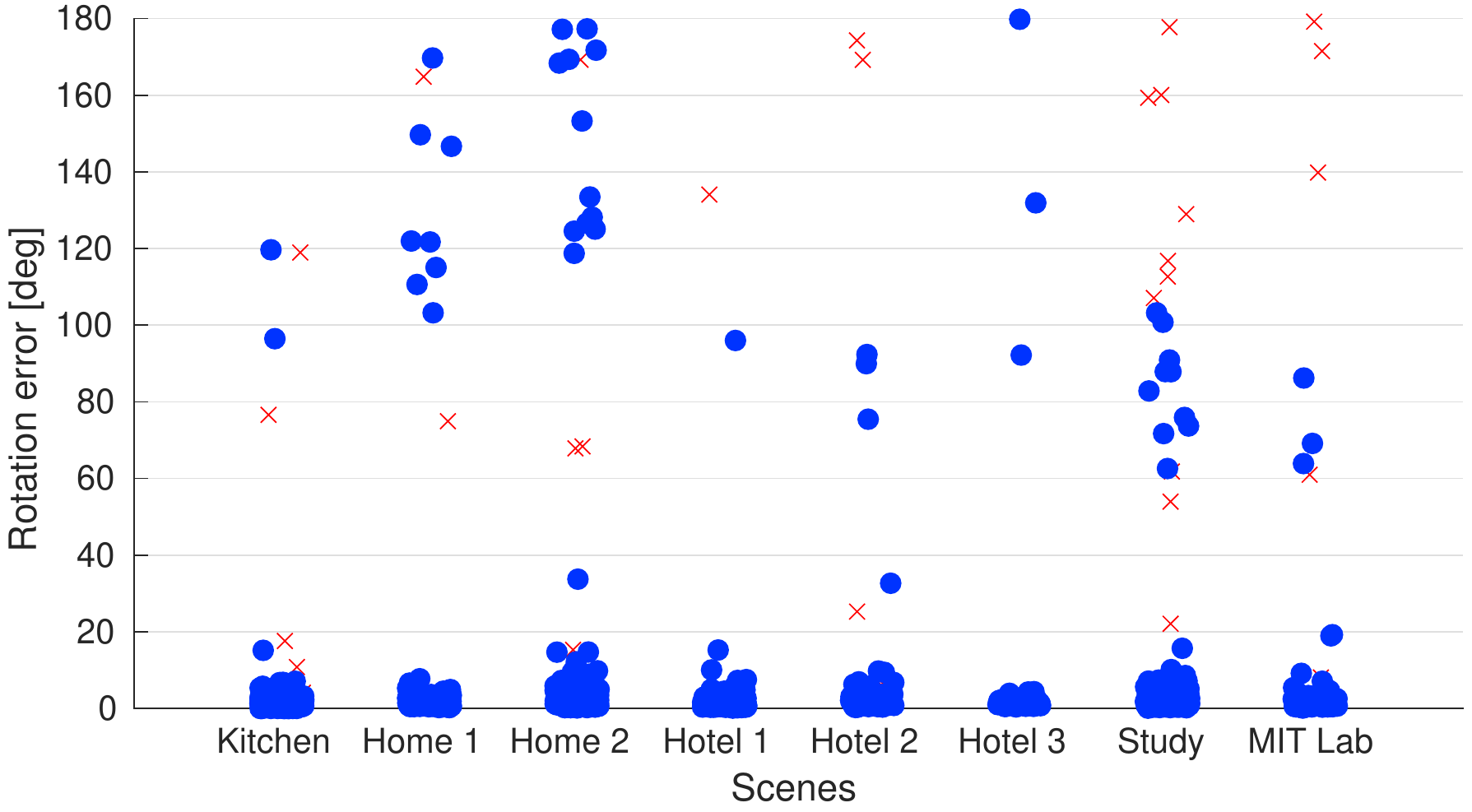}
			\vspace{-8mm}
	\end{center}
	\end{minipage} 
	\caption{Rotation errors for each scene (data points correspond to pairs of scans in the scene), with certified \namepp solutions (blue dots) vs. non-certified (red crosses).
	 \label{fig:scanMatchingScatter}}
	 	\vspace{-4mm} 
\end{figure}

 %!TEX root = ../main.tex

\renewcommand{\mpw}{4.3cm}
\newcommand{\mpww}{8.6cm}

\newbox{\bigpicturebox}

\begin{figure}[t]
	\begin{center}

	\sbox{\bigpicturebox}{%
	\hspace{-12mm}
	\begin{tabular}{c}%
		\scalebox{1}[1.1]{\includegraphics[width=.45\columnwidth]{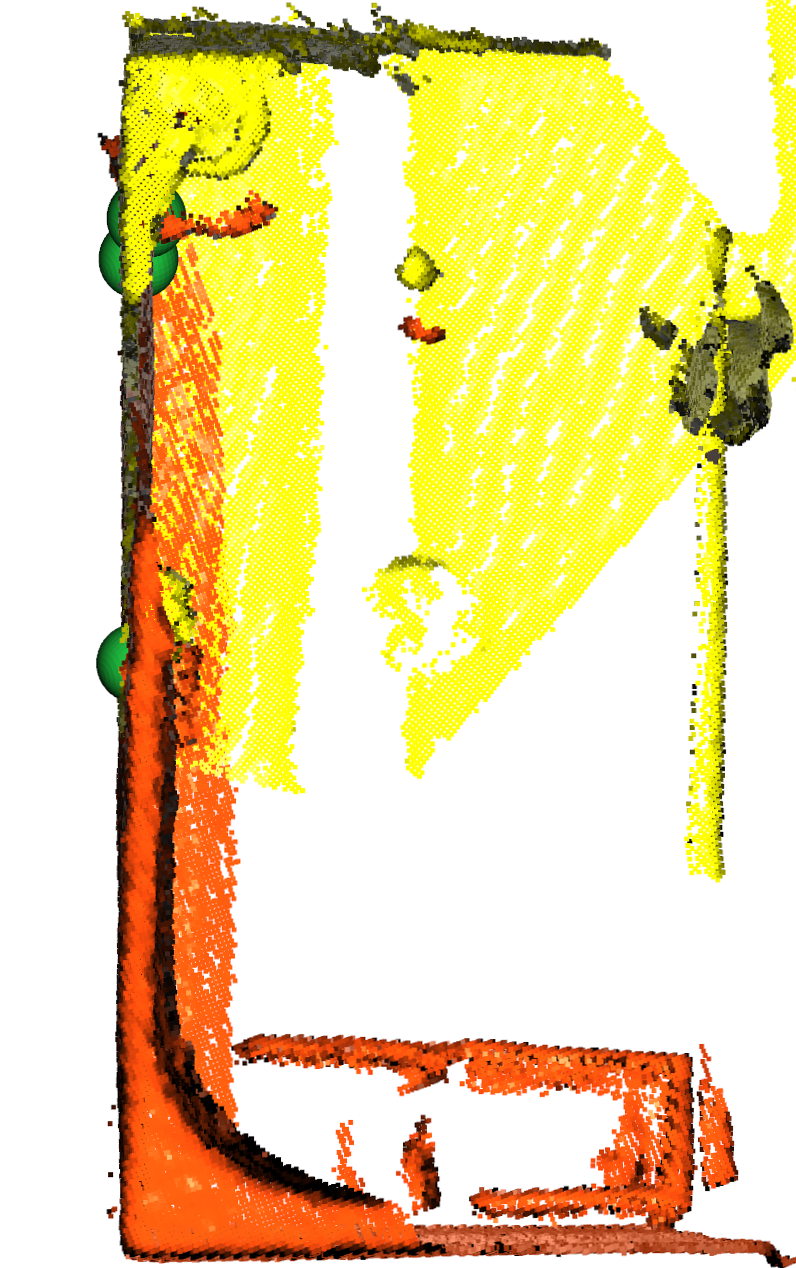}} \\
		(a) Ground truth \\
		(top-down view)
	\end{tabular} %
	}
	\usebox{\bigpicturebox} \hspace{-2mm}
	\begin{minipage}[b][\ht\bigpicturebox]{0.4\columnwidth}
		\begin{tabular}{c}%
			\includegraphics[width=0.92\columnwidth]{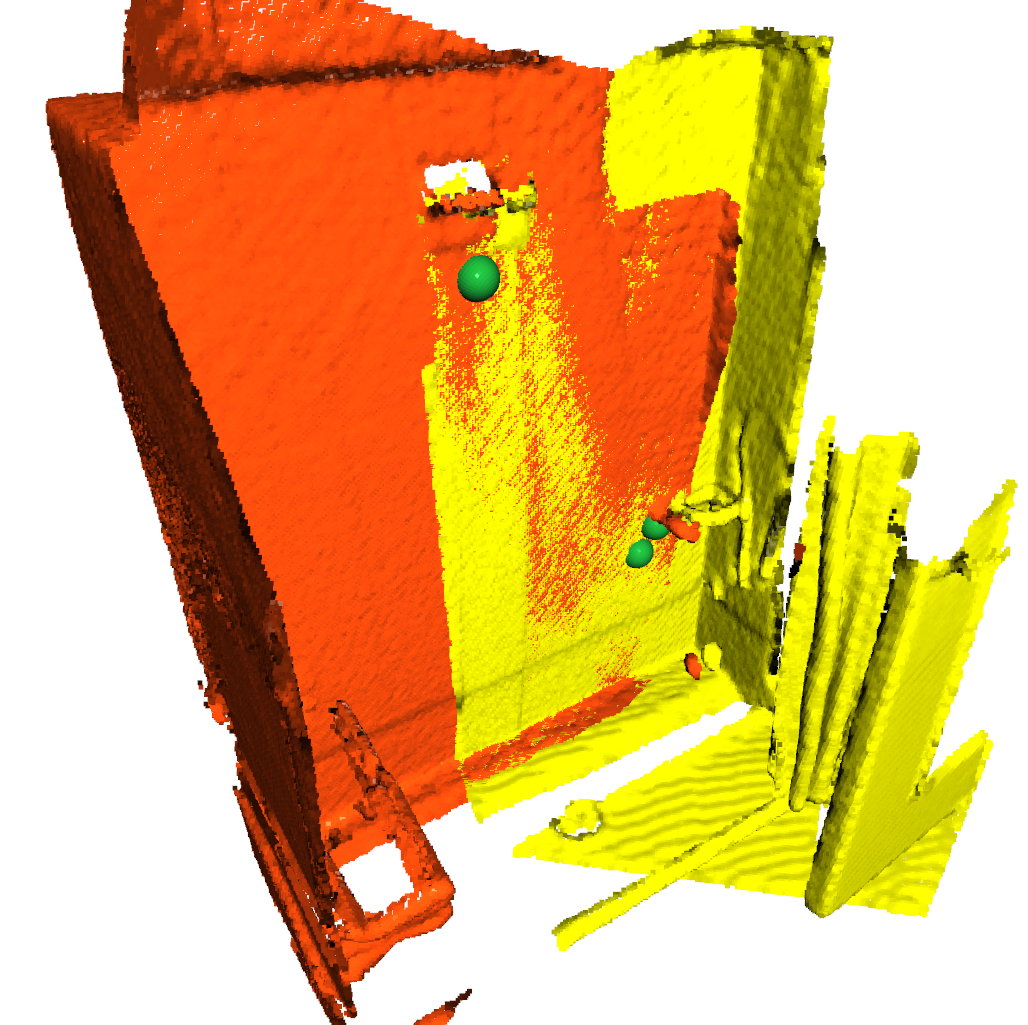} \\
			(b) Ground truth \\
			(side view) \\
			\includegraphics[width=0.8\columnwidth]{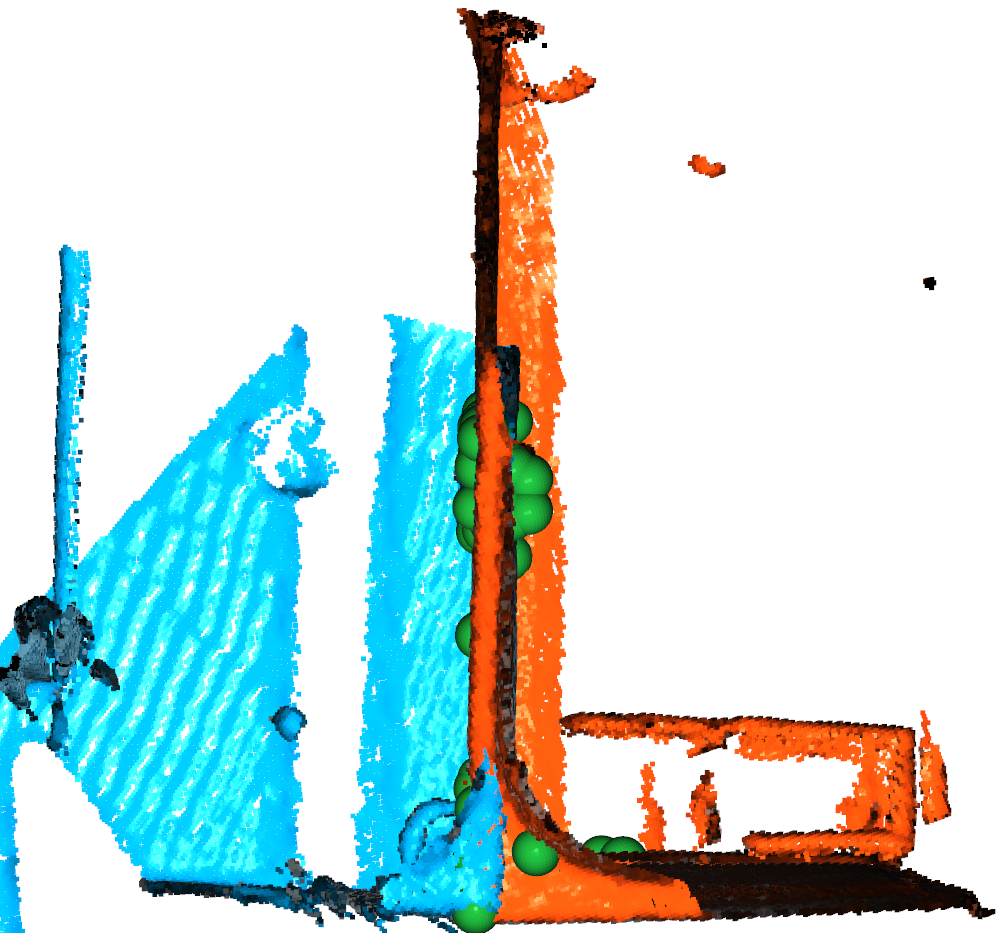} \\
			(c) \namepp's estimate \\
			(top-down view)
		\end{tabular}
	\end{minipage}

	\caption{Example from the Hotel 3 scene of the \matchTD dataset~\cite{Zeng17cvpr-3dmatch}.
	The scene pictures a hotel shower where symmetries in the keypoint distribution cause \namepp to fail. 
	(a)-(b) Ground-truth alignment (top view and side view). The ground-truth alignment admits 3 inlier correspondences (in green). 
	(c) \namepp's estimate of the alignment. \namepp is able to find an estimate with 57 inliers. The symmetries of the scene allow for multiple potential registrations, and the ground truth inliers are not part of the maximum clique of the \TIM graph. In this case, the max clique selected by \namepp contains an incorrect solution that differs from the ground truth by \SI{180}{\degree}.}
	 \label{fig:symmetry}
	\end{center}
\end{figure}

\spacebeforesection
\section{Conclusion} 
\label{sec:conclusion}
We propose the first fast and certifiable algorithm for correspondence-based registration with extreme outlier rates. We leverage insights from estimation theory (\eg unknown-but-bounded noise), 
geometry (\eg invariant measurements), graph theory (\eg maximum clique for inlier selection), and optimization (\eg tight SDP relaxations). 
These insights lead to the design of two certifiable registration algorithms. 
\name is accurate and robust but requires solving a large SDP. 
\namepp has similar performance in practice but circumvents the need to solve an SPD and can run in milliseconds. \revone{\namepp is also certifiable by leveraging Douglas-Rachford Splitting to compute a dual optimality certificate.}  
For both algorithms, we provide theoretical bounds on the estimation errors, which are 
the first of their kind for robust registration problems. 
Moreover, we test their performance on standard benchmarks, object detection datasets, and 
the \emph{\matchTD} scan matching dataset and show that (i) both algorithms dominate the state of the art (\eg~\ransac, \bnb, heuristics) and are robust to more than $99\%$ outliers, (ii) \namepp can run in milliseconds and it is currently the fastest robust registration algorithm at high outlier rates, (iii) \namepp is so robust it can also solve problems without correspondences (\eg hypothesizing all-to-all correspondences) where it \extraEdits{outperforms \ICP and \goICP.}
%largely outperforms \ICP and favorably compares with \goICP. 
We release a fast open-source C++ implementation of \namepp.
% We propose a \emph{Truncated Least Squares} approach to compute the relative transformation (scale, rotation, translation) 
% that aligns two point clouds in the presence of extreme outlier rates. % of correspondences. 
% %
% We present a general graph-theoretic framework to decouple rotation, translation, and scale estimation.
% %, which allows solving in cascade for scale, rotation, and translation. 
% %The decoupling is exact in the noiseless case and 
% % 
% We provide a polynomial-time solution for each subproblem: 
% % Since each subproblem (scale, rotation, and translation) is still nonconvex and combinatorial in nature, 
% % Despite the apparent nonconvexity of the \TLS cost, 
% % out third contribution is to show that 
%  scale and (component-wise) translation estimation can be solved exactly via an \emph{adaptive voting} scheme, 
% while rotation estimation can be relaxed to a semidefinite program. % and that the relaxation is tight in practice, even in presence of many outliers. % (e.g., $\maxOutliersRot$).
% %, (ii) \TLS scale and translation estimation can be solved exactly and in polynomial time via an adaptive voting scheme.
% %
% The resulting \edit{polynomial-time approach}, \edit{named \name} (\emph{\nameLong}),  
% outperforms \ransac and robust local optimization techniques, and  
%  favorably compares with \edit{\bnb} methods.
%  % while being a polynomial-time algorithm. 
%  \name can tolerate 
% up to $\maxOutliers$ outliers and returns highly-accurate solutions. 

While not central to the technical contribution, 
this paper also establishes the foundations of \emph{certifiable perception}, as discussed in 
\isExtended{Appendix~\ref{sec:certifiableAlgorithms}.}{Appendix~\ref{sec:certifiableAlgorithms} in~\cite{Yang20arxiv-teaser}.}
This research area offers many research opportunities and the resulting progress has the potential to boost trustworthiness and reliability in safety-critical application of robotics and computer vision.
Future work includes developing certifiable algorithms for other spatial perception problems 
(ideally leading to certifiable approaches for all the applications discussed in~\cite{Yang20ral-GNC} and more). 
Another avenue for future research is the use of the certifiably robust algorithms presented in this paper for self-supervision of deep learning methods for keypoint detection and matching. 
\vspace{-4mm}
% \section*{Acknowledgments}
% The authors gratefully acknowledge Christian Forster for the useful discussions 
% and for providing the VIN simulator, and 
%  % used for the Monte Carlo analysis. The authors are also grateful to
% \edited{ the anonymous reviewers for providing invaluable comments to improve the quality 
% of this manuscript.}
\bibliographystyle{IEEEtran}
\bibliography{../../references/refs,myRefs}

\newcommand{\mySpaceBetwenBios}{\vspace{-15mm}}
\mySpaceBetwenBios 
\begin{IEEEbiography}[{\includegraphics[width=1in,height=1.25in,clip,keepaspectratio]{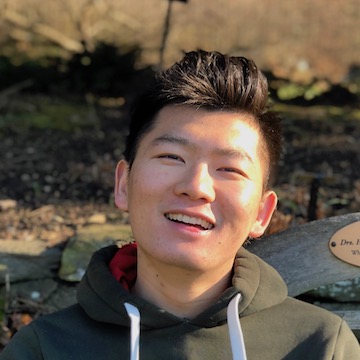}}]{Heng Yang} is a PhD candidate in the Department of Mechanical Engineering and the Laboratory for Information \& Decision Systems (LIDS) at the Massachusetts Institute of Technology, where he is working with Prof. Luca Carlone at the SPARK Lab. He has obtained a B.S. degree in Mechanical Engineering (with honors) from the Tsinghua University, Beijing, China, in 2015; and an S.M. degree in Mechanical Engineering from the Massachusetts Institute of Technology in 2017. His research interests include convex optimization, semidefinite and sums-of-squares relaxation, robust estimation and machine learning, applied to robotic perception and computer vision. His work includes developing robust and certifiable algorithms for geometric understanding. Heng Yang is a recipient of the Best Paper Award in Robot Vision at the 2020 IEEE International Conference on Robotics and Automation (ICRA).
\end{IEEEbiography}
\mySpaceBetwenBios
\begin{IEEEbiography}[{\includegraphics[width=1in,height=1.25in,clip,keepaspectratio]{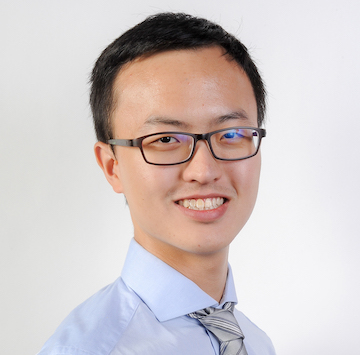}}]{Jingnan Shi} is a Master student 
in the Department of Aeronautics and Astronautics at the Massachusetts Institute of Technology. He is currently a member of the SPARK lab, led by Professor Luca Carlone. 
He has obtained a B.S. degree in Engineering from Harvey Mudd College in 2019.
His research interests include robust perception and estimation, with applications specific to various robotic systems.  
\end{IEEEbiography}
\mySpaceBetwenBios
\begin{IEEEbiography}[{\includegraphics[width=1in,height=1.25in,clip,keepaspectratio]{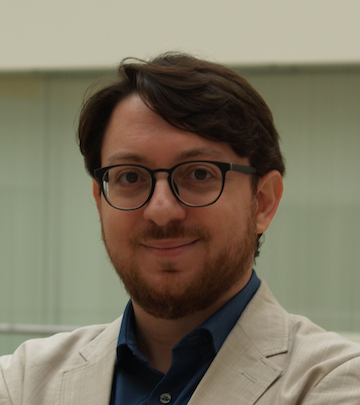}}]{Luca Carlone} is the Leonardo CD Assistant Professor in the Department of Aeronautics
  and Astronautics at the Massachusetts Institute of
  Technology, and a Principal Investigator in the Laboratory for Information \& Decision Systems (LIDS).  He joined LIDS as a postdoctoral associate (2015) and later as a Research
  Scientist (2016), after spending two years as a postdoctoral fellow at the
  Georgia Institute of Technology (2013-2015). 
  He has obtained a B.S.~degree in mechatronics
  from the Polytechnic University of Turin, Italy, in
  2006; an S.M.~degree in mechatronics from the
  Polytechnic University of Turin, Italy, in 2008; an
  S.M.~degree in automation engineering from the
  Polytechnic University of Milan, Italy, in 2008; and
  a Ph.D.~degree in robotics also from the Polytechnic University of Turin in 2012.
  His research interests include
  nonlinear estimation, numerical and distributed optimization, and probabilistic
  inference, applied to sensing, perception, and decision-making in single and
  multi-robot systems. His work includes seminal results on certifiably correct
  algorithms for localization and mapping, as well as approaches for visual-inertial navigation and distributed mapping.
  He is a recipient of the
  2020 RSS  Early Career Award, the 2017 Transactions on Robotics King-Sun Fu Memorial Best Paper Award, 
  the Best Paper Award in Robot Vision at ICRA 2020,
  the Best Paper award at WAFR 2016, and the Best Student Paper award at the 2018
  Symposium on VLSI Circuits. %, and was best paper finalist at RSS 2015.
\end{IEEEbiography}

%%%%%%%%%%%%%%%%%%%%%%%%%%%%%%%%%%%%
% \appendix
\clearpage

 \isTwoCols{}{\onecolumn}
\section*{Appendices}
\addcontentsline{toc}{section}{Appendices}
\renewcommand{\thesubsection}{\Alph{subsection}}
%!TEX root = ../main.tex

\subsection{Manifesto of Certifiable Perception}
\label{sec:certifiableAlgorithms}

This section provides a broader context for the algorithms developed in this paper, and motivates 
 our efforts towards perception  algorithms with formal performance guarantees.

%%%%%%%%%%%%%%%%%%%%%%%%%%%%%%%%%%%%%%%%%%%%%%%%%%%%%%%%%%%%%%%%%%%%%%%%%%%%%%%%%%%%%%%%%%%%%%%%%%%%%%%%%
{\bf What is a \emph{certifiable algorithm}?} %\footnote{The reader can skip this sect}
Outlier-robust estimation is a hard combinatorial problem, where one has to separate inliers from outliers 
while computing an estimate for the variables of interest.
Many algorithms, such as \ransac and \ICP, are heuristic solvers for such a combinatorial problem~\cite{Chin17slcv-maximumConsensusAdvances}. These algorithms are not certifiable (formalized later) in the sense that 
the estimate they return can be arbitrarily far from the optimal solution of the combinatorial 
problem, and there is no easy way to check how suboptimal an estimate is.

Then a natural question is: \emph{can we directly compute an optimal solution for outlier-robust estimation
or develop algorithms that always compute near-optimal solutions?} 
Unfortunately, in general the answer is no. A recent set of papers~\cite{Tzoumas19iros-outliers,Chin18eccv-robustFitting} has shown that a broad family of robust estimation problems (including \maxConLong~\cite{Chin17slcv-maximumConsensusAdvances} and the Truncated Least Squares formulation considered in this paper) are \emph{inapproximable} in polynomial-time: there exist worst-case instances in which no polynomial-time algorithm can compute a near-optimal solution (see~\cite{Chin18eccv-robustFitting} for a more formal treatment and~\cite{Tzoumas19iros-outliers} for even more pessimistic inapproximability results ruling our quasi-polynomial-time algorithms). 

This fundamental intractability calls for a paradigm shift: since it is not possible to solve every 
robust estimation problem in polynomial time, we claim that a useful goal is to design algorithms that 
perform well in typical instances and are able to certify the correctness of the resulting estimates, 
but at the same time can detect worst-case instances and declare ``failure'' on those rather 
than blindly returning an invalid estimate. 

We formalize this notion in the definition below.

\begin{definition}[Certifiable Algorithms]\label{def:certifiableAlgorithm}
Given an optimization problem $\myProb(\myData)$ that depends on input data $\myData$, % and produces an output $\calO$, 
we say that an algorithm $\myAlgo$ is \emph{certifiable} if, after solving $\myProb(\myData)$, 
algorithm $\myAlgo$ either provides a certificate for the quality of its solution (\eg a proof of optimality, a finite bound on its sub-optimality, or a finite bound on the distance of the estimate from the optimal solution), or declares failure otherwise.
\end{definition}

The notion of certifiable algorithms is inspired by (and is indeed a 
particularization of) the notion of \emph{Probably Certifiably Correct (PCC) Algorithm} introduced by 
Bandeira in~\cite{Bandeira16crm}.
We observe that a certifiable algorithm is \emph{sound} (the algorithm does not certify incorrect solutions), 
but is not necessarily \emph{complete} (the algorithm may declare failure in a problem that can be solved in polynomial time by a different algorithm).  
\emph{In this paper, we provide the first certifiable algorithm for registration (and for rotation search, one of the subproblems we encounter) and what we believe are 
the first certifiable algorithms for outlier-robust estimation in robotics and vision.}
We sometimes refer to the algorithms in Definition~\ref{def:certifiableAlgorithm} as \emph{certifiably robust} 
to stress that in outlier-robust estimation the notion of optimality has implications on the robustness to outliers
of the resulting estimate. However, the definition also applies to hard outlier-free problems such as~\cite{Carlone15icra-verification,Rosen18ijrr-sesync}. 

When applied to estimation problems in robotics or vision,
we note that an optimal solution is not necessarily a ``good'' solution (\ie close to the ground truth for the quantity we want to estimate). For instance, if the data $\myData$ we feed to the algorithm is completely random and uncorrelated with our unknown, solving $\myProb(\myData)$ would not bring us any closer to knowing the value of the unknown. In this sense,  
any meaningful performance guarantee 
 will need to take assumptions on the generative model of the data. 
We refer to the corresponding theoretical results as 
 ``estimation contracts'', which, informally, say that 
% One can imaging this as a ``contract'': 
as long as the data is informative about the ground truth, 
a certifiable algorithm can produce an estimate close to the ground truth. 
In other words, while a certifiable algorithm is one that can assess if it completed the assigned task to 
optimality, an estimation contract ensures that optimality is useful towards estimating 
a variable of interest.

% it finds an optimal solution and certifies to have done so. Moreover, it never incorrectly certifies
% a non-optimal solution.
% \LC{
% -	Certifiable algorithms:
% o	Complete: certify whatever is correct
% o	Sound: don’t certify incorrect solutions
% - contract on the input-output performance
% -	TEASER First polynomial time certifiably-robust algorithm
% }

{\bf Why certifiable algorithms?}
Besides being an intellectual pursuit towards the design of better algorithms for robot perception, 
 we believe that the current adoption of algorithms that can fail without notice is a major cause of 
 brittleness in modern robotics applications, ranging from self-driving cars to autonomous drones. 
 This is true across spatial perception applications, including object detection~\cite{RahmanIROS19-falseNegative} and Simultaneous Localization and Mapping~\cite{Cadena16tro-SLAMsurvey}. 
 A certifiable algorithm can detect failures before they cascade to other modules in the autonomy stack.
 Moreover, we hope that the development of estimation contracts can pave the way towards 
  formal verification and monitoring of complex perception systems involving multiple modules and algorithms, establishing connections with parallel efforts on safe autonomy and decision-making, \eg~\cite{Seshia16arxiv-verifiedAutonomy,Desai17icrv-verification}. 
 % when an algorithm certifies its output, it provides a ``contract'' on its performance: in this 
 % sense we how that the development of certifiable algorithms can pave the way towards 
 % the formal verification and monitoring of complex perception system involving multiple modules and algorithms. % checked
%!TEX root = ../main.tex

\subsection{Choice of $\beta_i$ and $\barc$}
\label{sec:choiceBeta}

The parameters $\beta_i$ and $\barc$ are straightforward to set in~\eqref{eq:TLSRegistration}. 
In the following we discuss how to set them depending on the assumptions we make on the \emph{inlier} noise.

\begin{remark}[Probabilistic inlier noise]\label{rmk:chi2}
If we assume the inliers follow the generative model~\eqref{eq:robustGenModel} with  
$\vepsilon_i \sim \calN(\MZero_3, \sigma_i^2\eye_3)$ and $\vo_i=\zero$, 
%; we will not make assumptions on the generative model for the outliers, which is unknown in practice. Since the noise on the inliers is Gaussian, 
it holds:
\bea
\frac{1}{\betaNoise_i^2} \| \vb_i - \MR \va_i \|^2 = \frac{1}{\betaNoise_i^2} \| \vepsilon_i \|^2
\sim \chi^2(3),
\eea
where $\chi^2(3)$ is the Chi-squared distribution with three degrees of freedom. 
Therefore, with desired probability $p$, the weighted error $\frac{1}{\betaNoise_i^2} \| \vepsilon_i \|^2$ for the inliers satisfies:
% Therefore, the squared 2-norm of the noise follows a Chi-squared distribution with three degrees of freedom, \ie, $\frac{\Vert \vepsilon_i \Vert^2}{\sigma_i^2} \sim \chi^2(3)$. Then we can bound the magnitude of the noise $\Vert \vepsilon_i \Vert$ (of inliers) by choosing a fixed parameter $\betaNoise^2$ such that: 
\bea \label{eq:choiceofp}
\mathbb{P} \left( \frac{\Vert \vepsilon_i \Vert^2}{\sigma_i^2} \leq \myQuantile^2 \right) = p,
\eea
where $\myQuantile^2$ is the quantile of the $\chi^2$ distribution with three degrees of freedom and lower tail probability equal to $p$ (\eg $\myQuantile=3$ for $p=0.97$). 
Therefore, one can simply set the noise bound $\beta_i$ in Problem~\eqref{eq:TLSRegistration} to be $\beta_i = \sigma_i$, set the $\barc = \myQuantile$. 
As mentioned in Section~\ref{sec:TLSregistration}, from optimization standpoint, this is equivalent to setting $\beta_i = \myQuantile \sigma_i$ and setting $\barc = 1$.
% ~\ie a $\beta$-multiplication of the standard deviation of the inlier noise, where the multiplication factor $\beta$ is computed from the $\chi^2(3)$ distribution for a desired probability $p$ (\eg $\beta=3$ for $p=0.97$).
%$p$ represents the probability that an inlier satisfies $\frac{1}{\betaNoise_i^2} \| \vb_i - \MR \va_i \|^2 \leq \barcsq$ and 
The parameter $\myQuantile$ monotonically increases with $p$; therefore, setting $p$ close to 1 makes the formulation~\eqref{eq:TLSRegistration} more prone to accept measurements with large residuals, while a small $p$ makes~\eqref{eq:TLSRegistration} more selective. 
% % where $p$ is a probability sufficiently close to $1$ (\eg, 99.99\%). 
% In words, if $(\va_i, \vb_i)$ is a pair of measurements generated by $\Vert \vepsilon_i \Vert^2 > (\betaNoise \sigma_i)^2$, then this pair of measurements deviates too much from the ground truth and should be considered as an outlier, and $p$ being sufficiently close to $1$ ensures all good inliers are kept.
\end{remark}

% \grayout{ \HY{This remark can be removed?}
\begin{remark}[Set membership inlier noise] 
% While probabilistic estimation assumes random noise (Remark~\ref{rmk:chi2}), 
% \emph{set membership estimation}~\cite{Milanese89chapter-ubb} assumes the measurement noise to be unknown but bounded.
If we assume the inliers follow the generative model~\eqref{eq:robustGenModel} with  
unknown-but-bounded noise
$\| \vepsilon_i \| \leq \beta_i$, where $\beta_i$ is a given noise bound,\footnote{This is the typical setup assumed in 
\emph{set membership estimation}~\cite{Milanese89chapter-ubb}.} 
% we just need to set $\beta_i$
% In this case, 
it is easy to see that the inliers satisfy: % the inliers
\bea
\| \vepsilon_i \| \leq \beta_i &\iff& \| \vb_i - \MR \va_i \|^2 \leq \beta_i^2 
\isTwoCols{\nonumber \\ &\iff&}{\iff}
\frac{1}{\beta_i^2}\| \vb_i - \MR \va_i \|^2 \leq 1
\eea
Therefore, we directly plug $\beta_i$ into~\eqref{eq:TLSRegistration} and choose $\barc = 1$. 
% hence one can simply choose $\betaNoise_i^2 \barcsq = \beta_i^2$, 
% % such that the inliers are guaranteed to satisfy $\frac{1}{\betaNoise_i^2} \| \vb_i - \MR \va_i \|^2 \leq \barcsq$.  
% Intuitively, the constant $\betaNoise_i^2 \barcsq$ is the largest (squared) residual error we are willing to tolerate on the $i$-th measurement. 
\end{remark} % checked
% \input{sections/example-TLSMaxConsensus} ALREADY INCLUDED ELSEWHERE
%!TEX root = ../main.tex

\subsection{Truncated Least Squares vs. \maxConLong}
\label{sec:TLSvsMC}

Truncated Least Squares (and Algorithm~\ref{alg:adaptiveVoting}) are related to \emph{\maxConLong} (\maxCon), a popular approach for outlier rejection in vision~\cite{Speciale17cvpr-consensusMaximization,Liu18eccv-registration}. 
\maxConLong looks for an estimate that maximizes the number of inliers (or, equivalently, minimizes the number of outliers):
% Let us recall the maximum consensus formulation:
\beq
\label{eq:MC}
\min_{ \substack{\calO \subseteq \calM \\ \vxx \in \calX} } |\calO|, \subject \| \resF_i(\vxx) \|^2 \leq \barcsq \;\;
\forall \; i\in {\calM \setminus \calO}
\eeq
where $\vxx$ is the variable we want to estimate (possibly belonging to some domain $\calX$),
$\calM$ is the available set of measurements,  
$\resF_i(\cdot)$ is a given residual error function, 
$\barc$ is the maximum admissible error for an inlier, and  
 $|\cdot|$ denotes the cardinality of a set.
Problem~\eqref{eq:MC} looks for the smallest set of outliers ($\calO$) such that all the other measurements (\ie the inliers $\calM \setminus \calO$) have residual error below $\barc$ for some $\vxx$. While \maxConLong is intractable in general, by following the same lines of Theorem~\ref{thm:scalarTLS}, it is easy to show it can be solved in polynomial time in the scalar case.
% as $\argmax_{i,\ldots,2K-1} |\calC_i|$. 

% While we expect the \TLS solution to maximize the set of inliers, 
\maxCon and \TLS do not return the same solution in general, since \TLS may prefer discarding measurements that induce a large bias in the estimate, as shown by the simple example below.

\begin{example}[\TLS $\neq$ \maxCon]
\label{example:TLSvsMC}
Consider a simple scalar estimation problem, where we are given three measurements, 
and compare the two formulations:
\bea
\min_{s} \sum_{k \in \calM} \min\left(  (  s-s_k )^2   \;,\; \barcsq \right) &&\hspace{-3mm} \text{(TLS)} \\
\min_{\substack{\calO \subseteq \calM, s}} |\calO| 
 \subject {}
 (  s-s_k )^2 \leq  \barcsq \;\; \forall k \in \calM \setminus \calO && \hspace{-3mm} \text{(MC)}
\eea
where $\calM = \{1,2,3\}$.
Assume $s_1\!=\!s_2\!=\!0$, %$k=1,\ldots,K-1$ 
 $s_3 = 3$, and $\barc = 2$. 
%$k=1,2,3$ and $\barc=1$. 
Then, it is possible to see that $s_{\maxCon} = 1.5$ attains a maximum consensus set including all measurements $\{1,2,3\}$, {while the \TLS estimate is $\hats = 0$ which attains an optimal cost equal to 2, % $f_s(\hats) = 1$, 
and has consensus set $\conSet(\hats) = \{1,2\}$}. % (the \TLS cost evaluated at .
\end{example}

Note that \maxConLong is not necessarily preferable over \TLS: indeed (i) a human might have arguably preferred to flag $s_3 = 3$ as an outlier in the toy example above, and more importantly (ii) in the experiments in Section~\ref{sec:separateSolver} we observe that often \maxConLong has a lower breakdown point than \TLS (intuitively, \TLS is more sensitive to the \emph{distribution} of the inliers, since it penalizes their residuals). % (intuitively, it is easier for ).

While \maxCon and \TLS do not choose the same set of inliers in general,
in cases where there is a large set of inliers, we expect \TLS and \maxCon to produce similar solutions, as shown below.

%%%%%%%%%%%%%%%%%%%%%%%%%%%%%%%%%%%%%%%%%%%%%%%%%%%%%%%%%%%%%%%%%%%%%%%%%%%%%%%%
\begin{lemma}[Necessary Conditions for \TLS $\equiv$ \maxCon]
\label{thm:TLSandMC}
% For a generic robust estimation problem with maxi
Assume the maximum consensus set returned by \maxConLong (\maxCon) has size $\nrInMC$ and the sum of the squared residual errors for the inliers is $\resInMC$.
If the second largest consensus set has size smaller than $\nrInMC - \resInMC/\barcsq$, then
 \TLS also selects the maximum consensus set as inliers.
\end{lemma}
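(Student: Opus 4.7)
\textbf{Proof proposal for Lemma~\ref{thm:TLSandMC}.}

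The plan is to derive a contradiction by sandwiching the optimal \TLS cost between two quantities: an upper bound obtained by evaluating \TLS at the \maxCon estimator, and a lower bound obtained by counting the contribution of the outliers at the \TLS optimizer. The bounds will coincide only when \TLS selects a maximum consensus set, forcing the desired conclusion.

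First, I would set up notation. Let $\vxx_{\maxCon}$ be a \maxCon estimator whose consensus set $\calI_{\maxCon}$ has size $\nrInMC$, and let $\resInMC = \sum_{k \in \calI_{\maxCon}} \|\resF_k(\vxx_{\maxCon})\|^2$ be the sum of squared inlier residuals. Similarly, let $\vxx_{\TLS}$ denote an optimizer of $f_{\TLS}(\vxx) \doteq \sum_{k \in \calM} \min(\|\resF_k(\vxx)\|^2, \barcsq)$, and let $\calI_{\TLS} = \{ k \in \calM : \|\resF_k(\vxx_{\TLS})\|^2 \leq \barcsq \}$ be the corresponding consensus set. Denote $N = |\calM|$.

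Second, I would derive the upper bound. Evaluating the \TLS cost at $\vxx_{\maxCon}$, each measurement in $\calI_{\maxCon}$ contributes exactly its squared residual (since the residual is $\leq \barcsq$, the truncation is inactive), while each of the $N - \nrInMC$ remaining measurements contributes at most $\barcsq$. Hence
\[
f_{\TLS}(\vxx_{\TLS}) \;\leq\; f_{\TLS}(\vxx_{\maxCon}) \;\leq\; \resInMC + (N - \nrInMC)\,\barcsq.
\]
Third, I would derive the lower bound. At $\vxx_{\TLS}$, every measurement outside $\calI_{\TLS}$ contributes exactly $\barcsq$ and every measurement inside contributes a nonnegative squared residual, so
\[
f_{\TLS}(\vxx_{\TLS}) \;\geq\; (N - |\calI_{\TLS}|)\,\barcsq.
\]

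Fourth, I would combine these. Suppose, for contradiction, that $\calI_{\TLS}$ is \emph{not} a maximum consensus set. Then $|\calI_{\TLS}|$ is upper bounded by the cardinality of the second-largest consensus set, which by hypothesis is strictly less than $\nrInMC - \resInMC/\barcsq$. Substituting this into the lower bound gives
\[
f_{\TLS}(\vxx_{\TLS}) \;>\; \bigl(N - \nrInMC + \resInMC/\barcsq\bigr)\,\barcsq \;=\; (N - \nrInMC)\,\barcsq + \resInMC,
\]
which contradicts the upper bound. Therefore $\calI_{\TLS}$ must be a maximum consensus set, proving the lemma (in particular, when the maximum consensus set is unique, \TLS recovers precisely $\calI_{\maxCon}$).

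The main obstacle I anticipate is purely bookkeeping: being careful that $\resInMC$ refers to the residuals at a specific $\vxx_{\maxCon}$ attaining the maximum consensus set (so the upper bound is sound) and ensuring the strict inequality in step four to close the contradiction. A minor subtlety is that, when multiple maximum consensus sets exist, the lemma as stated only guarantees that \TLS selects \emph{some} maximum consensus set, not necessarily the specific one returned by \maxCon; this should be noted in the statement or a brief remark.
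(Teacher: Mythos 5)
Your proof is correct and follows essentially the same route as the paper's: a contradiction argument that sandwiches the optimal \TLS cost between the upper bound $\resInMC + (N-\nrInMC)\,\barcsq$ obtained by evaluating the \TLS objective at the \maxCon estimate, and the lower bound $(N - |\calI_{\TLS}|)\,\barcsq$ coming from the truncated terms, closed by the hypothesis on the second-largest consensus set. Your closing remark about multiple maximum consensus sets is a fair observation but does not change the argument.
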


\begin{proof}
The lemma establishes a general condition under which the set computed by~\eqref{eq:MC} matches the set of inliers found by the \TLS formulation:
\beq
\label{eq:TLSgeneral}
\min_{ \vxx \in \calX } \sum_{i\in \calM} \min \left( \| \resF_i(\vxx) \|^2 , \barcsq \right) 
\eeq
For the \TLS formulation, the inliers are the measurements that --at the optimal solution-- have residual smaller than $\barc$.
% ($\frac{1}{\beta_i^2} \| \resF_i(\vxx) \|^2 \leq 1 \iff \| \resF_i(\vxx) \| \leq \beta_i$). 
We denote as $f_\TLS(\conSet)$ the value of the \TLS cost in~\eqref{eq:TLSgeneral} for a given choice of the consensus set $\conSet$. 

Denote the size of the measurement set as $N$, \ie $N \doteq |\calM|$.
If $\calO_\maxCon$ is the optimal solution of~\eqref{eq:MC}, we define $\nrOutMC \doteq |\calO_\maxCon|$.
Lemma~\ref{thm:TLSandMC} denotes the number of 
inliers found by \maxConLong as 
$\nrInMC \doteq |\calM \setminus \calO| =  N - \nrOutMC$. 
Moreover, Lemma~\ref{thm:TLSandMC} denotes the sum of the squared residual errors for the inliers as:
\beq
\resInMC \doteq \sum_{i \in {\calM \setminus \calO_\maxCon}}\| \resF_i(\vxx) \|^2
\eeq
Then the lemma claims that if the size of the second largest consensus set is smaller than
$\nrInMC - \resInMC/\barcsq$, \TLS returns the maximum consensus set as inliers.

Assume by contradiction that the maximum consensus set $\conSet_\maxCon$ leads to a suboptimal \TLS solution.
Then, there exists another  \TLS solution with consensus set $\conSet'$ such that:
\beq
\label{eq:cont1}
f_\TLS(\conSet') < \nrOutMC \barcsq + \resInMC  
\eeq
which follows from the assumption that the solution corresponding to $\conSet_\maxCon$  (attaining cost $\nrOutMC \barcsq + \resInMC$) is suboptimal.

Now, if we call $\nrIn' = |\conSet'|$ and define $\nrOut' = N - \nrIn'$, the assumption that any consensus set other than $\conSet_\maxCon$ has size smaller than $\nrInMC - \resInMC/\barcsq$  implies:
\bea
\nrIn' < \nrInMC - \resInMC/\barcsq & \iff \nonumber \\
N - \nrOut' < N - \nrOutMC - \resInMC/\barcsq & \iff \nonumber  \\
\nrOut' > \nrOutMC + \resInMC/\barcsq
\label{eq:appIneq1}
\eea

Using~\eqref{eq:appIneq1} and recalling that the residual error for the inliers is nonnegative (it is a sum of squares):
\beq
\label{eq:cont2}
f_\TLS(\conSet') \geq \nrOut' \barcsq >  (\nrOutMC + \resInMC/\barcsq)\barcsq = \nrOutMC \barcsq + \resInMC
\eeq
We conclude the lemma by observing that~\eqref{eq:cont1} and~\eqref{eq:cont2} cannot be simultaneously satisfied, leading to contradiction. % each other, leading to
\end{proof} % checked
\isTwoCols{%!TEX root = ../main.tex

\begin{table*}[t]
\centering
\begin{tabular}{cccc}
Measurements & Points & \TIMs & \TRIMs \\
\hline 
Symbol & $\va_i$, $\vb_i$ & $\barva_{ij}$, $\barvb_{ij}$ & $s_{ij}$ \\
\hline 
Definition & - & $\begin{cases} \barva_{ij} = \va_j - \va_i \\ \barvb_{ij} = \barvb_j - \barvb_i \end{cases}$ & $s_{ij} = \frac{\| \barvb_{ij} \| }{ \| \barva_{ij} \| }$ \\
\hline 
Generative model & $\vb_i = s\MR\va_i + \vt + \vo_i + \vepsilon_i $ & $\barvb_{ij} = s\MR \barva_{ij} + \vo_{ij} + \vepsilon_{ij}$ & $s_{ij} = s + o_{ij}^s + \epsilon_{ij}^s$ \\
\hline 
Noise bounds & $\|\vepsilon_i\| \leq \beta_i$ & $\|\vepsilon_{ij}\| \leq \TIMNoiseBound_{ij} \doteq \beta_i+\beta_j$ & $|\epsilon_{ij}^s| \leq \alpha_{ij} \doteq \TIMNoiseBound_{ij}/\|\barva_{ij}\|$ \\
\hline 
Dependent transformations & $(s,\MR,\vt)$ & $(s,\MR)$ & s \\
\hline
Number & $N$ & $K \leq \frac{N(N-1)}{2}$ & $K$ \\
\hline
\end{tabular}
\caption{Summary of invariant measurements.}
\label{tab:summaryIMs}
\vspace{-5mm}
\end{table*}}{} % checked
%!TEX root = ../main.tex

\subsection{Proof of~\prettyref{thm:TIM}: Translation Invariant Measurements}
\label{sec:proof:thm:TIM}

 Using the vector notation $\va \in \Real{3\nrPoints}$ and $\vb \in \Real{3\nrPoints}$ already introduced in the statement of the theorem, we can write the generative model~\eqref{eq:robustGenModel} compactly as:
 \bea
 \label{eq:robustGenModelVect}
 \vb = s (\eye_\nrPoints \kron \MR) \va + (\ones_\nrPoints \kron \vt) + \vo + \veps
 \eea
 where $\vo \doteq [\vo_1\tran \; \ldots \; \vo_\nrPoints\tran]\tran$,
  $\veps \doteq [\veps_1\tran \; \ldots \; \veps_\nrPoints\tran]\tran$, and
  $\ones_\nrPoints$ is a column vector of ones of size $\nrPoints$. Denote $K=|\calE|$ as the cadinality of $\calE$, such that $\MA \in \Real{K \times \nrPoints}$ is the incidence matrix of the graph with edges $\calE$.
  Let us now multiply both members by $(\MA \kron \eye_3)$:
  \bea
  \label{eq:proof-tim1}
 \TIMb = 
 %s (\MA \kron \eye_3) (\eye_\nrPoints \kron \MR) \va +  (\MA \kron \eye_3) (\ones_\nrPoints \kron \vt) + (\MA \kron \eye_3)(\vo + \veps)
  (\MA \kron \eye_3) [s (\eye_\nrPoints \kron \MR) \va +  (\ones_\nrPoints \kron \vt) + (\vo + \veps)]
 \eea
 Using the property of the Kronecker product we simplify:
 \beal
 (i) &  (\MA \kron \eye_3)(\eye_\nrPoints \kron \MR) \va = (\MA \kron \MR) \va \\
 &=  (\eye_K \kron \MR) (\MA \kron \eye_3) \va \\
 &=(\eye_K \kron \MR) \TIMa\\
 (ii)& (\MA \kron \eye_3)(\ones_\nrPoints \kron \vt) = (\MA \ones_\nrPoints \kron \vt ) = \zero
 \eeal
where we used the fact that $\ones_\nrPoints$ is in the Null space of the 
 incidence matrix $\MA$~\cite{Chung96book}. Using (i) and (ii), eq.~\eqref{eq:proof-tim1} becomes:
 \bea
  \label{eq:proof-tim2}
 \TIMb = s (\eye_K \kron \MR)  \TIMa + (\MA \kron \eye_3)(\vo + \veps)
 \eea
 which is invariant to the translation $\vt$, concluding the proof.  % checked
%!TEX root = ../main.tex

\subsection{Summary of Invariant Measurements}
\label{sec:summaryIM}
Table~\ref{tab:summaryIMs} provides a summary of the invariant measurements introduced in Section~\ref{sec:decoupling}.

 % checked
\isTwoCols{}{} 
% \input{sections/app-remark-noveltyIMs} ALREADY INCLUDED ELSEWHERE
% \input{sections/summary-invariantMeasurements} SEEMS CLEAR ENOUGH
%!TEX root = ../main.tex

\subsection{Proof of~\prettyref{thm:maxClique}: Max Clique Inlier Selection}
\label{sec:proof:thm:maxClique}

% For any pair of inliers $\{u,v\} \in \calE$, $o_{uv}^s=0$ in Eq.~\eqref{eq:TRIM}, so $\frac{|s_{uv} - \hats|}{\alpha_{uv}^2} \leq \barcsq=1$ and $\{u,v\} \in \calE'$. Therefore, inliers form a clique in $\calE'$. By definition, the clique formed by the inliers has to belong to at least one maximal clique of $\calG'=\{\calV,\calE'\}$.
Consider a graph $\calG'(\calV,\calE')$ whose edges where selected as inliers during scale estimation, by pruning the complete graph $\calG(\calV,\calE)$. 
An edge $(i,j)$ (and the corresponding \TIM) is an inlier if both $i$ and $j$ are correct correspondences (see discussion before~\prettyref{thm:TIM}). Therefore, $\calG'$ contains edges connecting all points for which we have inlier correspondences. 
Therefore, these points are vertices of a clique in the graph $\calG'$ 
and the edges (or equivalently the \TIMs) connecting those points form a clique in  $\calG'$.
% For any pair of inliers $\{i,j\} \in \calE$, $o_{ij}^s=0$ in Eq.~\eqref{eq:TRIM}, so $\frac{|s_{ij} - \hats|}{\alpha_{ij}^2} \leq \barcsq=1$ and $\{i,j\} \in \calE'$. Therefore, inliers form a clique in $\calE'$. 
We conclude the proof by observing 
that the clique formed by the inliers has to belong to at least one maximal clique of $\calG'$. % checked
%!TEX root = ../main.tex

\subsection{Proof of~\prettyref{thm:scalarTLS}: Optimal Scalar \TLS Estimation}
\label{sec:proof:thm:scalarTLS}

Let us first prove that there are at most $2\nrTIM-1$ different non-empty consensus sets. 
We attach a confidence interval $[s_k - \alpha_k\barc, s_k + \alpha_k\barc]$ 
to each measurement $s_k$, $\forall k \in \{1,\ldots,\nrTIM\}$.
For a given scalar $s\in \Real{}$, a measurement $k$ is in the consensus set of $s$ if $s\in [s_k-\alpha_k\barc, s_k+\alpha_k\barc]$ (satisfies $\frac{\|s-s_k\|^2}{\alpha_k^2} \leq \barcsq$), see Fig.~\ref{fig:consensusMax}(a).
Therefore, the only points on the real line where the 
consensus set may change are the boundaries (shown in red in Fig.~\ref{fig:consensusMax}(a)) of the intervals 
$[s_k-\alpha_k\barc, s_k+\alpha_k\barc]$, $k \in \{1,\ldots,\nrTIM\}$. Since there are at most   $2\nrTIM-1$  such intervals, there are at most $2\nrTIM-1$ non-empty consensus sets (Fig.~\ref{fig:consensusMax}(b)), concluding the first part of the proof. 
The second part follows from the fact that the consensus set of $\hats$ is necessarily one of the $2\nrTIM-1$ possible consensus sets, and problem~\eqref{eq:TLSscale} simply computes the least squares estimate of the measurements for every possible consensus set (at most $2\nrTIM-1$) and chooses the estimate that induces the lowest cost as the optimal estimate. 
% checked
% \input{sections/app-proof-cloning} % OLD ONE FOR MATRICES
%!TEX root = ../main.tex

\subsection{Proof of Proposition~\ref{prop:binaryCloning}: Binary Cloning}
\label{sec:proof:prop:binaryCloning}

Here we prove the equivalence between the mixed-integer program~\eqref{eq:TLSadditiveForm} and the optimization in~\eqref{eq:TLSadditiveForm2} involving $N+1$ quaternions. To do so, 
we note that since $\theta_k \in \{+1,-1\}$ and $\frac{1+\theta_k}{2} \in \{0,1\}$, we can safely move $\frac{1+\theta_k}{2}$ inside the squared norm (because $0=0^2,1=1^2$) in each summand of the cost function~\eqref{eq:TLSadditiveForm}:
\bea \label{eq:moveInsideNorm}
& \sumAllPointsk \frac{1+\theta_k}{2}\frac{\Vert \hatvb_k - \vq \qProd \hatva_k \qProd \vq\inv \Vert^2}{\TIMNoiseBound_k^2} + \frac{1-\theta_k}{2} \barcsq 
\isTwoCols{
 \hspace{-5mm}\\
& \hspace{-9mm} =}{=} 
\sumAllPointsk \frac{\Vert \hatvb_k - \vq \qProd \hatva_k \qProd \vq\inv + \theta_k\hatvb_k - \vq \qProd \hatva_k \qProd (\theta_k \vq\inv) \Vert^2}{4\TIMNoiseBound_k^2} + \frac{1-\theta_k}{2}\barcsq  \nonumber\hspace{-5mm}  %\hspace{-5mm}\\
\eea
Now we introduce $N$ new unit quaternions $\vq_k = \theta_k \vq$, $k=1,\dots,N$ by multiplying $\vq$ by the $N$ binary variables $\theta_k \in \{+1,-1\}$, a re-parametrization we called \emph{binary cloning}. One can easily verify that $\vq\tran \vq_k = \theta_k (\vq\tran \vq) = \theta_k$. Hence, by substituting $\theta_k=\vq\tran \vq_k$ into~\eqref{eq:moveInsideNorm}, we can rewrite the mixed-integer program~\eqref{eq:TLSadditiveForm} as:
\bea
& \min\limits_{\substack{\vq \in \usphere^3 \\ \vq_k \in \{\pm \vq\}}} 
\sumAllPointsk \frac{\Vert \hatvb_k - \vq \qProd \hatva_k \qProd \vq\inv + \vq\tran \vq_k \hatvb_k - \vq \qProd \hatva_k \qProd \vq_k\inv \Vert^2 }{4\TIMNoiseBound_k^2} 
\isTwoCols{\nonumber \\
& +}{+} 
\frac{1-\vq\tran\vq_k}{2}\barcsq,
\eea
which is the same as the optimization in~\eqref{eq:TLSadditiveForm2}. % checked
%!TEX root = ../main.tex

\subsection{Proof of Proposition~\ref{prop:qcqp}: Binary Cloning as a \QCQP}
\label{sec:proof:prop:qcqp}

Here we show that the optimization involving $K+1$ quaternions in~\eqref{eq:TLSadditiveForm2} can be reformulated as the Quadratically-Constrained Quadratic Program (\QCQP) in~\eqref{eq:TLSBinaryClone}. Towards this goal, 
we prove that the objective function and the constraints in the \QCQP are a 
re-parametrization of the ones in~\eqref{eq:TLSadditiveForm2}.

{\bf Equivalence of the objective functions.}
We start by developing the squared 2-norm term in~\eqref{eq:TLSadditiveForm2} (we scale $\TIMa_k$ by $\hats\TIMa_k$,~\ie~$\TIMa_k \doteq \hats\TIMa_k$ for notation simplicity):
\bea
& \Vert \hatvb_k - \vq \qProd \hatva_k \qProd \vq\inv + \vq\tran \vq_k \hatvb_k - \vq \qProd \hatva_k \qProd \vq_k\inv \Vert^2 \nonumber \\
& \expl{$\Vert \vq\tran \vq_k \hatvb_k \Vert^2 = \Vert \hatvb_k \Vert^2 = \Vert \TIMb_k \Vert^2$, $\hatvb_k\tran(\vq\tran\vq_k)\hatvb_k = \vq\tran\vq_k\Vert \TIMb_k \Vert^2$}
& \expl{$\Vert \vq \qProd \hatva_k \qProd \vq\inv \Vert^2 = \Vert \MR \TIMa_k\Vert^2 = \Vert \TIMa_k \Vert^2$}
& \expl{$\Vert \vq \qProd \hatva_k \qProd \vq_k\inv \Vert^2 = \Vert \theta_k \MR \TIMa_k\Vert^2 = \Vert \TIMa_k \Vert^2$}
& \expl{${\scriptstyle (\vq \qProd \hatva_k \qProd \vq\inv)\tran (\vq \qProd \hatva_k \qProd \vq_k\inv) = (\MR \TIMa_k)\tran(\theta_k \MR \TIMa_k) = \vq\tran\vq_k\Vert \TIMa_k \Vert^2}$}
& = 2\Vert \TIMb_k \Vert^2 + 2\Vert \TIMa_k \Vert^2 + 2\vq\tran \vq_k \Vert \TIMb_k \Vert^2 + 2\vq\tran \vq_k \Vert \TIMa_k \Vert^2 \nonumber \\
& - 2\hatvb_k\tran (\vq\qProd \hatva_k \qProd \vq\inv) -2\hatvb_k\tran (\vq\qProd\hatva_k\qProd\vq_k\inv) \nonumber \\
& \hspace{-8mm}- 2\vq\tran\vq_k\hatvb_k\tran(\vq\qProd \hatva_k \qProd \vq\inv) -2\vq\tran\vq_k\hatvb_k\tran(\vq\qProd\hatva_k\qProd\vq_k\inv) \\
& \expl{${\scriptstyle \vq\tran\vq_k\hatvb_k\tran(\vq\qProd\hatva_k\qProd\vq_k\inv)=(\theta_k)^2\hatvb_k\tran(\vq\qProd\hatva_k\qProd\vq\inv)= \hatvb_k\tran(\vq\qProd\hatva_k\qProd\vq\inv)}$ }
& \expl{ $\hatvb_k\tran(\vq\qProd\hatva_k\qProd\vq_k\inv) = \vq\tran \vq_k \hatvb_k\tran(\vq\qProd\hatva_k\qProd\vq\inv) $ }
& = 2\Vert \TIMb_k \Vert^2 + 2\Vert \TIMa_k \Vert^2 + 2\vq\tran \vq_k \Vert \TIMb_k \Vert^2 + 2\vq\tran \vq_k \Vert \TIMa_k \Vert^2 \nonumber \\
& - 4\hatvb_k\tran (\vq\qProd \hatva_k \qProd \vq\inv) - 4\vq\tran\vq_k\hatvb_k\tran (\vq\qProd \hatva_k \qProd \vq\inv) \label{eq:afterDevelopSquares}
\eea
where we have used multiple times the binary cloning equalities $\vq_k = \theta_k \vq, \theta_k = \vq\tran \vq_k$, the equivalence between applying rotation to a homogeneous vector $\hatva_k$ using quaternion product and using rotation matrix in eq.~\eqref{eq:q_pointRot} from the main document, as well as the fact that vector 2-norm is invariant to rotation and homogenization (with zero padding). 

Before moving to the next step, we make the following observation by combining eq.~\eqref{eq:Omega_1} and $\vq\inv = [-\vv\tran,s]\tran$:
\bea
\MOmega_1(\vq\inv) = \MOmega_1\tran(\vq),\quad \MOmega_2(\vq\inv) = \MOmega_2\tran(\vq)
\eea
which states the linear operators $\MOmega_1(\cdot)$ and $\MOmega_2(\cdot)$ of $\vq$ and its inverse $\vq\inv$ are related by a simple transpose operation.
In the next step, we use the equivalence between quaternion product and linear operators in $\MOmega_1(\vq)$ and $\MOmega_2(\vq)$ as defined in eq.~\eqref{eq:quatProduct}-\eqref{eq:Omega_1} to simplify $\hatvb_k\tran(\vq\qProd\hatva_k \qProd \vq\inv)$ in eq.~\eqref{eq:afterDevelopSquares}:
\bea
& \hatvb_k\tran(\vq\qProd\hatva_k \qProd \vq\inv) \nonumber \\
& \expl{${\scriptstyle \vq\qProd\hatva_k = \MOmega_1(\vq)\hatva_k \;,\; \MOmega_1(\vq)\hatva_k \qProd \vq\inv = \MOmega_2(\vq\inv)\MOmega_1(\vq)\hatva_k = \MOmega_2\tran(\vq)\MOmega_1(\vq)\hatva_k}$} 
& = \hatvb_k\tran (\MOmega_2\tran(\vq)\MOmega_1(\vq) \hatva_k) \\
& \hspace{-8mm} \expl{$\MOmega_2(\vq)\hatvb_k = \hatvb_k \qProd \vq = \MOmega_1(\hatvb_k) \vq$ \;,\; $\MOmega_1(\vq)\hatva_k = \vq \qProd \hatva_k = \MOmega_2(\hatva_k)\vq$}
& = \vq\tran \MOmega_1\tran(\hatvb_k) \MOmega_2(\hatva_k) \vq. \label{eq:developQuatProduct}
\eea
Now we can insert eq.~\eqref{eq:developQuatProduct} back to eq.~\eqref{eq:afterDevelopSquares} and write:
\bea
& \Vert \hatvb_k - \vq \qProd \hatva_k \qProd \vq\inv + \vq\tran \vq_k \hatvb_k - \vq \qProd \hatva_k \qProd \vq_k\inv \Vert^2 \nonumber \\
& = 2\Vert \TIMb_k \Vert^2 + 2\Vert \TIMa_k \Vert^2 + 2\vq\tran \vq_k \Vert \TIMb_k \Vert^2 + 2\vq\tran \vq_k \Vert \TIMa_k \Vert^2 \nonumber \\
& \hspace{-5mm} - 4\hatvb_k\tran (\vq\qProd \hatva_k \qProd \vq\inv) - 4\vq\tran\vq_k\hatvb_k\tran (\vq\qProd \hatva_k \qProd \vq\inv) \\
& = 2\Vert \TIMb_k \Vert^2 + 2\Vert \TIMa_k \Vert^2 + 2\vq\tran \vq_k \Vert \TIMb_k \Vert^2 + 2\vq\tran \vq_k \Vert \TIMa_k \Vert^2 \nonumber \\
& \hspace{-5mm} - 4\vq\tran\MOmega_1\tran(\hatvb_k)\MOmega_2(\hatva_k)\vq - 4\vq\tran\vq_k \vq\tran\MOmega_1\tran(\hatvb_k)\MOmega_2(\hatva_k)\vq \\
& \expl{${\scriptstyle \vq\tran\vq_k \vq\tran\MOmega_1\tran(\hatvb_k)\MOmega_2(\hatva_k)\vq=\theta_k \vq\tran\MOmega_1\tran(\hatvb_k)\MOmega_2(\hatva_k)\vq = \vq\tran\MOmega_1\tran(\hatvb_k)\MOmega_2(\hatva_k)\vq_k}$}
& = 2\Vert \TIMb_k \Vert^2 + 2\Vert \TIMa_k \Vert^2 + 2\vq\tran \vq_k \Vert \TIMb_k \Vert^2 + 2\vq\tran \vq_k \Vert \TIMa_k \Vert^2 \nonumber \\
& \hspace{-5mm} - 4\vq\tran\MOmega_1\tran(\hatvb_k)\MOmega_2(\hatva_k)\vq - 4 \vq\tran\MOmega_1\tran(\hatvb_k)\MOmega_2(\hatva_k)\vq_k \\
& \expl{$-\MOmega_1\tran(\hatvb_k) = \MOmega_1(\hatvb_k)$} 
& = 2\Vert \TIMb_k \Vert^2 + 2\Vert \TIMa_k \Vert^2 + 2\vq\tran \vq_k \Vert \TIMb_k \Vert^2 + 2\vq\tran \vq_k \Vert \TIMa_k \Vert^2 \nonumber \\
& \hspace{-5mm} + 4\vq\tran\MOmega_1(\hatvb_k)\MOmega_2(\hatva_k)\vq + 4 \vq\tran\MOmega_1(\hatvb_k)\MOmega_2(\hatva_k)\vq_k, \label{eq:finishDevelopSquareNorm}
\eea
which is quadratic in $\vq$ and $\vq_k$. Substituting eq.~\eqref{eq:finishDevelopSquareNorm} back to~\eqref{eq:TLSadditiveForm2}, we can write the cost function as:
\bea \label{eq:costInQuadraticFormBeforeLifting}
& \sum\limits_{k=1}^K \frac{\Vert \hatvb_k - \vq \qProd \hatva_k \qProd \vq\inv + \vq\tran \vq_k \hatvb_k - \vq \qProd \hatva_k \qProd \vq_k\inv \Vert^2 }{4\TIMNoiseBound_k^2} + \frac{1-\vq\tran\vq_k}{2}\barcsq \nonumber \hspace{-5mm}\\
& \hspace{-10mm}= \sum\limits_{k=1}^K \vq_k\tran \left( \underbrace{ \frac{(\Vert \TIMb_k \Vert^2+ \Vert \TIMa_k \Vert^2)\eye_4 + 2\MOmega_1(\hatvb_k)\MOmega_2(\hatva_k)}{2\TIMNoiseBound_k^2} + \frac{\barcsq}{2}\eye_4 }_{:=\MQ_{kk}}\right) \vq_k \nonumber\hspace{-10mm} \\
& \hspace{-15mm} + 2\vq\tran \left( \underbrace{ \frac{(\Vert \TIMb_k \Vert^2+ \Vert \TIMa_k \Vert^2)\eye_4 + 2\MOmega_1(\hatvb_k)\MOmega_2(\hatva_k)}{4\TIMNoiseBound_k^2}  - \frac{\barcsq}{4}\eye_4 }_{:=\MQ_{0k}} \right) \vq_k, \nonumber\hspace{-15mm}\\
\eea
where we have used two facts: (i) $\vq\tran \MA \vq = \theta_k^2 \vq\tran \MA \vq = \vq_k\tran \MA \vq_k$ for any matrix $\MA\in\Real{4 \times 4}$, (kk) $c=c\vq\tran \vq = \vq\tran (c\eye_4) \vq$ for any real constant $c$, which allowed writing the quadratic forms of $\vq$ and constant terms in the cost as quadratic forms of $\vq_k$. Since we have not changed the decision variables $\vq$ and $\{\vq_k\}_{k=1}^K$, the optimization in~\eqref{eq:TLSadditiveForm2} is therefore equivalent to the following optimization:
\bea \label{eq:optQCQPbeforeLifting}
\min_{\substack{\vq \in S^3 \\ \vq_k \in \{\pm \vq\}}} \sumAllPointsk \vq_k\tran \MQ_{kk} \vq_k + 2\vq\tran \MQ_{0k} \vq_k
\eea
where $\MQ_{kk}$ and $\MQ_{0k}$ are the known $4\times 4$ data matrices as defined in eq.~\eqref{eq:costInQuadraticFormBeforeLifting}.

Now it remains to prove that the above optimization~\eqref{eq:optQCQPbeforeLifting} is equivalent to the \QCQP in~\eqref{eq:TLSBinaryClone}. Recall that $\vxx$ is the column vector stacking all the $K+1$ quaternions, \ie, $\vxx=[\vq\tran\ \vq_1\tran\ \dots\ \vq_N\tran]\tran \in \Real{4(K+1)}$. 
Let us introduce 
% It is not difficult to see that, if we define 
symmetric matrices $\MQ_k \in \Real{4(K+1) \times 4(K+1) },k=1,\dots,K$ and let the $4 \times 4$ sub-block of $\MQ_k$ corresponding to sub-vector $\vu$ and $\vv$, be denoted as $[\MQ_k]_{uv}$; each $\MQ_k$ is defined as:
\bea \label{eq:Qi}
[\MQ_k]_{uv} = \begin{cases}
\MQ_{kk} & \text{if } \vu=\vq_k \text{ and }\vv=\vq_k \\
\MQ_{0k} & \substack{ \text{if } \vu=\vq \text{ and }\vv=\vq_k \\ \text{or } \vu=\vq_k \text{ and }\vv=\vq} \\
\zero_{4\times 4} & \text{otherwise}
\end{cases}
\eea
\ie $\MQ_k$ has the diagonal $4\times 4$ sub-block corresponding to $(\vq_k,\vq_k)$ be $\MQ_{kk}$, has the two off-diagonal $4\times 4$ sub-blocks corresponding to $(\vq,\vq_k)$ and $(\vq_k,\vq)$ be $\MQ_{0k}$, and has all the other $4\times 4$ sub-blocks be zero. Then we can write the cost function in eq.~\eqref{eq:optQCQPbeforeLifting} compactly using $\vxx$ and $\MQ_k$:
\bea
& \displaystyle \sumAllPointsk \vq_k\tran \MQ_{kk} \vq_k + 2\vq\tran \MQ_{0k} \vq_k = \sumAllPointsk \vxx\tran \MQ_k \vxx
\eea
Therefore, by denoting $\MQ = \sumAllPointsk \MQ_k$, we proved that the objective functions in~\eqref{eq:TLSadditiveForm2} and the \QCQP~\eqref{eq:TLSBinaryClone} are the same.

{\bf Equivalence of the constraints.}
  We are only left to prove that~\eqref{eq:TLSadditiveForm2} and~\eqref{eq:TLSBinaryClone} have the same feasible set, 
% to complete the proof of the equivalence between~\eqref{eq:optQCQPbeforeLifting} and the \QCQP~\eqref{eq:TLSBinaryClone},  if they have the same feasible set, 
\ie, the following two sets of constraints are equivalent:
\bea \label{eq:equivalentConstraints}
\begin{cases}
\vq \in \usphere^3 \\
\vq_k \in \{\pm \vq\},\\
k=1,\dots,K
\end{cases} \Leftrightarrow
\begin{cases}
\vxx_q\tran \vxx_q = 1 \\
\vxx_{q_k}\vxx_{q_k}\tran = \vxx_q \vxx_q\tran,\\
k=1,\dots,K
\end{cases}
\eea
We first prove the ($\Rightarrow$) direction. Since $\vq \in \usphere^3$, it is obvious that $\vxx_q\tran \vxx_q = \vq\tran \vq = 1$. In addition, since $\vq_k \in \{+\vq,-\vq\}$, it follows that $\vxx_{q_k} \vxx_{q_k}\tran = \vq_k \vq_k\tran = \vq\vq\tran = \vxx_q \vxx_q\tran$. Then we proof the reverse direction ($\Leftarrow$). Since $\vxx_q\tran \vxx_q = \vq\tran \vq$, so $\vxx_q\tran \vxx_q = 1$ implies $\vq\tran \vq =1$ and therefore $\vq \in \usphere^3$. On the other hand, $\vxx_{q_k} \vxx_{q_k}\tran = \vxx_q \vxx_q\tran$ means $\vq_k \vq_k\tran = \vq\vq\tran$. If we write $\vq_k = [q_{k1},q_{k2},q_{k3},q_{k4}]\tran$ and $\vq = [q_1,q_2,q_3,q_4]$, then the following matrix equality holds:
\smaller
\bea
\hspace{-6mm}\bmat{cccc}
q_{k1}^2 & q_{k1}q_{k2} & q_{k1}q_{k3} & q_{k1}q_{k4} \\
\star & q_{k2}^2 & q_{k2}q_{k3} & q_{k2}q_{k4} \\
\star & \star & q_{k3}^2 & q_{k3}q_{k4} \\
\star & \star & \star & q_{k4}^2
\emat = 
\bmat{cccc}
q_1^2 & q_1q_2 & q_1q_3 & q_1q_4 \\
\star & q_2^2 & q_2q_3 & q_2q_4 \\
\star & \star & q_3^2 & q_3q_4 \\
\star & \star & \star & q_4^2
\emat
\nonumber\hspace{-10mm}\\
\eea
\normalsize
First, from the diagonal equalities, we can get $q_{ki}=\theta_i q_i, \theta_i\in\{+1,-1\}, i=1,2,3,4$. Then we look at the off-diagonal equality: $q_{ki}q_{kj}=q_i q_j, i\neq j$, since $q_{ki}=\theta_i q_i$ and $q_{kj} = \theta_j q_j$, we have $q_{ki}q_{kj} = \theta_i \theta_j q_j q_k$, from which we obtain $\theta_i \theta_j =1, \forall i \neq j$. This implies that all the binary values $\{\theta_i\}_{i=1}^4$ have the same sign, and therefore they are equal to each other. As a result, $\vq_k = \theta_k \vq = \{+\vq, -\vq\}$, showing the two sets of constraints in eq.~\eqref{eq:equivalentConstraints} are indeed equivalent. Therefore, the \QCQP in eq.~\eqref{eq:TLSBinaryClone} is equivalent to the optimization in~\eqref{eq:optQCQPbeforeLifting}, and the original optimization in~\eqref{eq:TLSadditiveForm2} that involves $K+1$ quaternions, concluding the proof. % checked
%!TEX root = ../main.tex

\subsection{Proof of Proposition~\ref{prop:qcqpZ}: {Matrix Formulation of Binary Cloning}}
\label{sec:proof:prop:matrixBinaryClone}

Here we show that the non-convex \QCQP written in terms of the vector $\vxx$ in Proposition~\ref{prop:qcqp} (and eq.~\eqref{eq:TLSBinaryClone}) is equivalent to the non-convex problem written using the matrix $\MZ$ in Proposition~\ref{prop:qcqpZ} (and eq.~\eqref{eq:qcqpZ}). We do so by showing that the objective function and the constraints in~\eqref{eq:qcqpZ} are a re-parametrization of the ones in~\eqref{eq:TLSBinaryClone}.

\myParagraph{Equivalence of the objective function} 
Since $\MZ = \vxx \vxx\tran$ and using the cyclic property of the trace, we rewrite the objective in ~\eqref{eq:TLSBinaryClone} as:
% and denoting $\MQ \doteq \sumAllPointsk \MQ_k$, we can rewrite the cost function in~\eqref{eq:qcqpZ} as:
\bea
% & \sumAllPointsk \vxx\tran\MQ_k\vxx = \vxx\tran \left( \sumAllPointsk \MQ_k \right)\vxx = 
\vxx\tran \MQ \vxx   = \trace{\MQ \vxx \vxx\tran} = \trace{\MQ \MZ}
\eea
showing the equivalence of the objectives in~\eqref{eq:TLSBinaryClone} and~\eqref{eq:qcqpZ}.

\myParagraph{Equivalence of the constraints} It is trivial to see that $\vxx_q\tran \vxx_q = \trace{\vxx_q \vxx_q\tran} = 1$ is equivalent to $\trace{[\MZ]_{qq}}=1$ by using the cyclic property of the trace operator and inspecting the structure of $\MZ$. In addition, $\vxx_{q_k}\vxx_{q_k}\tran = \vxx_q \vxx_q\tran$ also directly maps to $[\MZ]_{q_k q_k} = [\MZ]_{qq}$ for all $i=1,\dots,K$. Lastly, requiring $\MZ\succeq 0$ and $\rank{\MZ} = 1$ is equivalent to restricting $\MZ$ to the form $\MZ = \vxx\vxx\tran$ for some vector $\vxx \in \Real{4(K+1)}$. Therefore, the constraint sets of eq.~\eqref{eq:TLSBinaryClone} and~\eqref{eq:qcqpZ} are also equivalent, concluding the proof.  % checked
% \input{sections/old-proof-TLSrotationRelax}% OLD ONE FOR MATRICES
%!TEX root = ../main.tex

\subsection{Proof of Theorem~\ref{thm:quasarOptimalityGuarantee}: SDP Relaxation with Redundant Constraints and Global Optimality Guarantee}
\label{sec:app-proof-globalOptimalityGuarantee}

We first restate the QCQP~\eqref{eq:TLSBinaryClone} and explicitly add the redundant constraints: 

\begin{lemma}[QCQP with Redundant Constraints]\label{lemma:QCQPRedundantConstraints}
The QCQP~\eqref{eq:TLSBinaryClone} is equivalent to the following QCQP with redundant constraints:
\begin{align} \label{eq:QCQPRedundant}
\quad \min_{\vxx \in \Real{4(K+1)}}  & \;\; \vxx\tran \MQ \vxx \tag{P}\\
\subject & \vxx_q\tran \vxx_q = 1 , \nonumber \\
& \vxx_{q_k}\vxx_{q_k}\tran = \vxx_q \vxx_q\tran,\forall k = 1,\dots,K ,  \nonumber \\
& \vxx_{q_i}\vxx_{q_j}\tran = (\vxx_{q_i}\vxx_{q_j}\tran)\tran,\forall 0 \leq i < j \leq K , \nonumber
\end{align}
where $\vq_0 \doteq \vq$ for notation simplicity.
\end{lemma}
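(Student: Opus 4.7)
The plan is to show that the two QCQPs have the same feasible set, since they share the same objective function $\vxx\tran\MQ\vxx$. Clearly, any feasible point of~\eqref{eq:QCQPRedundant} is feasible for~\eqref{eq:TLSBinaryClone}, since~\eqref{eq:QCQPRedundant} is obtained by adding constraints to~\eqref{eq:TLSBinaryClone}. Therefore, the only direction I need to prove is that every feasible point of~\eqref{eq:TLSBinaryClone} automatically satisfies the extra symmetry constraints $\vxx_{q_i}\vxx_{q_j}\tran = (\vxx_{q_i}\vxx_{q_j}\tran)\tran$ for $0 \leq i < j \leq K$.

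To do so, I would reuse the argument already established in the proof of Proposition~\ref{prop:qcqp} (Appendix~\ref{sec:proof:prop:qcqp}), where it was shown that the constraints $\vxx_q\tran\vxx_q = 1$ and $\vxx_{q_k}\vxx_{q_k}\tran = \vxx_q\vxx_q\tran$ force the representation $\vq_k = \theta_k \vq$ with $\theta_k \in \{\pm 1\}$ (and the convention $\vq_0 = \vq$, so that $\theta_0 = 1$). Substituting this representation into the candidate redundant constraint yields
\begin{equation*}
\vxx_{q_i}\vxx_{q_j}\tran = (\theta_i \vq)(\theta_j \vq)\tran = \theta_i\theta_j\, \vq\vq\tran,
\end{equation*}
which is a scalar multiple of the symmetric matrix $\vq\vq\tran$, hence symmetric. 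This establishes $\vxx_{q_i}\vxx_{q_j}\tran = (\vxx_{q_i}\vxx_{q_j}\tran)\tran$ for every pair $0 \leq i < j \leq K$, so the new constraints hold automatically for every feasible point of~\eqref{eq:TLSBinaryClone}.

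Since the objectives coincide and the feasible sets are identical, any optimizer of one problem is an optimizer of the other, proving the equivalence. The main ``obstacle'' is essentially nonexistent here: the entire argument hinges on the binary cloning structure already unpacked in Appendix~\ref{sec:proof:prop:qcqp}, and no new calculation is required. The purpose of Lemma~\ref{lemma:QCQPRedundantConstraints} is preparatory: making these redundant (at the QCQP level) constraints explicit is what later enables a tighter SDP relaxation after the rank-one constraint $\MZ = \vxx\vxx\tran$ is dropped, since the symmetry of the off-diagonal blocks $[\MZ]_{ij}$ is no longer implied once $\MZ$ is allowed to have rank greater than one.
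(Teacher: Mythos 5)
Your proof is correct and follows essentially the same route as the paper: both argue that the added off-diagonal symmetry constraints are automatically satisfied by every feasible point of~\eqref{eq:TLSBinaryClone}, because the original constraints force $\vq_k = \theta_k\vq$ with $\theta_k\in\{\pm 1\}$ (as shown in the proof of Proposition~\ref{prop:qcqp}), so that $\vxx_{q_i}\vxx_{q_j}\tran = \theta_i\theta_j\,\vq\vq\tran$ is symmetric. Your closing remark about why the redundancy matters only after dropping the rank-one constraint also matches the paper's own discussion, so nothing is missing.
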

The redundancy of the last set of constraints can be easily understood from:
\bea
\vxx_{q_i} \vxx_{q_j}\tran = (\theta_i\vq)(\theta_j\vq\tran) = (\theta_j\vq)(\theta_i\vq\tran) = (\vxx_{q_i} \vxx_{q_j}\tran)\tran.
\eea
Since the original QCQP~\eqref{eq:TLSBinaryClone} is equivalent to the rank-constrained SDP~\eqref{eq:qcqpZ}, problem~\eqref{eq:QCQPRedundant} is also equivalent to~\eqref{eq:qcqpZ}.

Now let us prove Theorem~\ref{thm:quasarOptimalityGuarantee}. 
While the SDP relaxation of~\eqref{eq:QCQPRedundant} can be simply obtained by adding the redundant constraints and dropping the rank constraint in~\eqref{eq:qcqpZ}. 
Here we take a longer path, using Lagrangian duality, which is useful towards getting the guarantees stated in the theorem.
% We first introduce several related optimization problems based on duality.

\begin{lemma}[Dual of QCQP with Redundant Constraints]\label{lemma:dualofQCQPwithRedundantConstraints}
The Lagrangian dual problem of the QCQP with redundant constraints~\eqref{eq:QCQPRedundant} is the following convex semidefinite program:
\begin{align}\label{eq:dualQCQPRedundant}
 \quad \max_{\mu, \MLambda, \MW} &  \;\; \mu \tag{D}\\
\subject & (\MQ - \mu \MJ + \MLambda + \MW) \succeq 0, \nonumber
\end{align}
where $\MJ \in \calS^{4(K+1)}$ is an all-zero matrix except the top-left diagonal $4\times4$ block $[\MJ]_{00} = \eye_4$; $\MLambda \in \calS^{4(K+1)}$ is a block-diagonal matrix $\MLambda = \blkdiag(\MLambda_{00},\MLambda_{11},\dots,\MLambda_{KK})$, where each $\MLambda_{kk}$ is a $4\times4$ symmetric matrix and the sum $\sum_{k=0}^{K} \MLambda_{kk} = \MZero$; and $\MW \in \calS^{4(K+1)}$ satisfies $[\MW]_{ij} = \MZero$ for $i=j$ and $[\MW]_{ij} \in \ssym^4$ (\ie $[\MW]_{ij} $ is a skew symmetric matrix) for $i \neq j$.
\end{lemma}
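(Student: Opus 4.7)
\textbf{Proof proposal for Lemma~\ref{lemma:dualofQCQPwithRedundantConstraints}.} The plan is to apply standard Lagrangian duality to the \QCQP~\eqref{eq:QCQPRedundant} and collect the quadratic forms into a single symmetric matrix acting on $\vxx$, whose block structure will match that specified in the statement. First, I would introduce dual variables: a scalar $\mu \in \Real{}$ for the unit-norm constraint $\vxx_q\tran\vxx_q = 1$; a symmetric matrix $\widetilde{\MLambda}_{kk}\in\calS^4$ for each matrix equality $\vxx_{q_k}\vxx_{q_k}\tran - \vxx_q\vxx_q\tran = \MZero$ (the multiplier may be restricted to $\calS^4$ since the constraint matrix is symmetric); and a skew-symmetric matrix $[\MW]_{ij}\in\ssym^4$ for each symmetry constraint $\vxx_{q_i}\vxx_{q_j}\tran - \vxx_{q_j}\vxx_{q_i}\tran = \MZero$ (the multiplier may be restricted to $\ssym^4$ because the constraint matrix is skew-symmetric).

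Next I would write the Lagrangian and simplify each piece using $\trace{\MA\vu\vv\tran}=\vv\tran\MA\vu$:
\begin{align*}
L(\vxx,\mu,\widetilde{\MLambda},\MW)
&= \vxx\tran\MQ\vxx - \mu(\vxx_q\tran\vxx_q - 1) \\
&\quad - \sum_{k=1}^{K} \trace{\widetilde{\MLambda}_{kk}\bigl(\vxx_{q_k}\vxx_{q_k}\tran - \vxx_q\vxx_q\tran\bigr)} \\
&\quad - \sum_{0\le i<j\le K} \trace{[\MW]_{ij}\bigl(\vxx_{q_i}\vxx_{q_j}\tran - \vxx_{q_j}\vxx_{q_i}\tran\bigr)}.
\end{align*}
The scalar constraint contributes $-\mu\,\vxx\tran\MJ\vxx + \mu$ with $\MJ$ as in the statement. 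The $\widetilde{\MLambda}_{kk}$ terms contribute $-\vxx_{q_k}\tran\widetilde{\MLambda}_{kk}\vxx_{q_k}$ in the $k$-th diagonal block and $+\vxx_q\tran(\sum_{k=1}^{K}\widetilde{\MLambda}_{kk})\vxx_q$ in the $(0,0)$ block. By re-parametrizing $\MLambda_{00}\doteq \sum_{k=1}^{K}\widetilde{\MLambda}_{kk}$ and $\MLambda_{kk}\doteq -\widetilde{\MLambda}_{kk}$ for $k\ge 1$, these aggregate into a single block-diagonal matrix $\MLambda = \blkdiag(\MLambda_{00},\ldots,\MLambda_{KK})$, and by construction $\sum_{k=0}^{K}\MLambda_{kk}=\MZero$, as claimed. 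Finally, for each pair $i<j$, skew-symmetry of $[\MW]_{ij}$ together with $\trace{[\MW]_{ij}\vxx_{q_j}\vxx_{q_i}\tran} = -\trace{[\MW]_{ij}\vxx_{q_i}\vxx_{q_j}\tran}$ lets us absorb the skew part into a symmetric $\MW$ whose $4\times4$ off-diagonal blocks are skew-symmetric and whose diagonal blocks vanish, exactly as stated.

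With these reassignments, the Lagrangian collapses to $L = \vxx\tran(\MQ - \mu\MJ + \MLambda + \MW)\vxx + \mu$. Minimizing the unconstrained quadratic over $\vxx\in\Real{4(K+1)}$ yields the dual function
\[
g(\mu,\MLambda,\MW) \;=\;
\begin{cases} \mu & \text{if } \MQ - \mu\MJ + \MLambda + \MW \succeq 0,\\ -\infty & \text{otherwise.}\end{cases}
\]
Maximizing $g$ over $(\mu,\MLambda,\MW)$ gives precisely the SDP~\eqref{eq:dualQCQPRedundant}.

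The main obstacle is purely bookkeeping: correctly justifying that the multipliers can be restricted to the stated symmetry classes (symmetric on the diagonal blocks of $\MLambda$, skew-symmetric on the off-diagonal blocks of $\MW$), and reassembling the scattered block-level contributions into the block-diagonal $\MLambda$ with the zero-sum condition $\sum_{k=0}^{K}\MLambda_{kk}=\MZero$. Once the re-parametrization $\MLambda_{00} = \sum_{k=1}^{K}\widetilde{\MLambda}_{kk}$ is introduced, the rest of the derivation is a routine consolidation of quadratic forms.
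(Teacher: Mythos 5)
Your proposal is correct and follows the same route as the paper: the paper simply invokes standard Lagrangian duality (citing Boyd–Vandenberghe) and notes the key identities $\vxx\tran\MJ\vxx=1$, $\vxx\tran\MLambda\vxx=0$, $\vxx\tran\MW\vxx=0$ for feasible $\vxx$, while you spell out the routine details (choice of multiplier symmetry classes, the reparametrization giving $\sum_{k=0}^{K}\MLambda_{kk}=\MZero$, and the collapse of the Lagrangian to $\vxx\tran(\MQ-\mu\MJ+\MLambda+\MW)\vxx+\mu$). No gaps; the sign and scaling conventions you use are absorbed by the free equality multipliers, so the derivation matches the stated dual exactly.
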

The dual problem~\eqref{eq:dualQCQPRedundant} is derived from the redundant QCQP~\eqref{eq:QCQPRedundant} by following standard Lagrangian duality~\cite{Boyd04book}. For any $\vxx$ that is feasible for~\eqref{eq:QCQPRedundant}, one can verify that: 
\bea \label{eq:primalDualProd}
\vxx\tran \MJ \vxx = 1, \quad \vxx\tran \MLambda \vxx = 0, \quad \vxx\tran \MW \vxx = 0.
\eea

\begin{lemma}[Dual of the Dual]\label{lemma:dualofDual}
The dual SDP of the dual problem~\eqref{eq:dualQCQPRedundant} is the following SDP:
\begin{align} \label{eq:dualofDual}
\quad \min_{\MZ \succeq 0} & \;\; \trace{\MQ \MZ} \tag{DD} \\
\subject %& \MZ \succeq 0 \nonumber \\
& \trace{[\MZ]_{00}} = 1 \nonumber \\
& [\MZ]_{kk} = [\MZ]_{00}, \forall k=1,\dots,K \nonumber \\
& [\MZ]_{ij} = [\MZ]\tran_{ij}, \forall 0\leq i < j \leq K \nonumber,
\end{align}
which is the same as the SDP relaxation with redundant constraints~\eqref{eq:relaxationRedundant}.
\end{lemma}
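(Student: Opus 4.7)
The plan is to derive~\eqref{eq:dualofDual} by directly taking the Lagrangian dual of the SDP~\eqref{eq:dualQCQPRedundant}. Since~\eqref{eq:dualQCQPRedundant} has a single PSD inequality constraint $(\MQ - \mu \MJ + \MLambda + \MW) \succeq 0$, I would introduce a matrix multiplier $\MZ \in \psdCone{4(K+1)}$ and write the Lagrangian
\[
L(\mu, \MLambda, \MW, \MZ) \;=\; \mu \;+\; \trace{\MZ\,(\MQ - \mu \MJ + \MLambda + \MW)}.
\]
The dual of~\eqref{eq:dualQCQPRedundant} is then $\min_{\MZ \succeq 0} \sup_{\mu, \MLambda, \MW} L$, where the inner supremum is taken over $\mu \in \Real{}$ and over the constrained subspaces for $\MLambda$ and $\MW$ described in Lemma~\ref{lemma:dualofQCQPwithRedundantConstraints}. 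To obtain a finite supremum (and hence a useful dual) the coefficients multiplying each free direction of $(\mu, \MLambda, \MW)$ must vanish, and I would collect the resulting linear conditions as the constraints of~\eqref{eq:dualofDual}.

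The heart of the proof is to translate each group of stationarity conditions into one of the three constraint families in~\eqref{eq:dualofDual}. First, collecting the $\mu$-terms gives $\mu\,(1 - \trace{\MJ \MZ})$, so boundedness in $\mu$ requires $\trace{\MJ \MZ} = \trace{[\MZ]_{00}} = 1$. Second, since $\MLambda$ ranges over block-diagonal symmetric matrices with $\sum_{k=0}^{K}\MLambda_{kk}=\MZero$, I would parametrize $\MLambda_{KK} = -\sum_{k=0}^{K-1}\MLambda_{kk}$ so that $\trace{\MZ \MLambda} = \sum_{k=0}^{K-1}\trace{([\MZ]_{kk} - [\MZ]_{KK})\MLambda_{kk}}$; requiring this to vanish for every symmetric $\MLambda_{kk}$ forces $[\MZ]_{kk} = [\MZ]_{00}$ for all $k=1,\dots,K$. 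Third, writing $\trace{\MZ \MW} = \sum_{i<j}\trace{[\MZ]_{ij}\,[\MW]_{ji}} + \trace{[\MZ]_{ji}\,[\MW]_{ij}} = \sum_{i<j}\trace{([\MZ]_{ij} - [\MZ]_{ij}\tran)\,[\MW]_{ji}}$ (using $[\MW]_{ij}=-[\MW]_{ji}\tran \in \ssym^{4}$), the supremum is finite iff $[\MZ]_{ij} = [\MZ]_{ij}\tran$ for all $0 \le i < j \le K$. Once all three conditions hold, the Lagrangian collapses to $\trace{\MQ \MZ}$, delivering exactly~\eqref{eq:dualofDual}.

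The one place that requires care, and what I expect to be the main obstacle, is the bookkeeping for the subspaces $\MLambda$ and $\MW$ live in: one must verify that the bilinear form $\langle \MZ, \cdot\rangle$ is tested against \emph{all} free directions in those subspaces (not just a spanning set), so that the resulting equality constraints on $\MZ$ are necessary as well as sufficient for a finite supremum. Once this is done, equivalence with~\eqref{eq:relaxationRedundant} is immediate by inspection, since the two SDPs share the same objective and the same affine constraints. As a sanity check one can also verify weak duality against~\eqref{eq:dualQCQPRedundant} using the identities~\eqref{eq:primalDualProd}, which gives $\trace{\MQ \MZ} \ge \mu$ for any primal-feasible $\MZ$ and dual-feasible $(\mu,\MLambda,\MW)$, confirming that~\eqref{eq:dualofDual} is indeed the dual of~\eqref{eq:dualQCQPRedundant}.
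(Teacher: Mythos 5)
Your derivation is correct and follows exactly the route the paper intends: Lemma~\ref{lemma:dualofDual} is obtained by the standard Lagrangian (conic) duality computation on~\eqref{eq:dualQCQPRedundant}, with the $\mu$-, $\MLambda$-, and $\MW$-stationarity conditions yielding $\trace{[\MZ]_{00}}=1$, equality of the diagonal blocks, and symmetry of the off-diagonal blocks, respectively; the paper simply states this and defers to standard duality rather than spelling it out. The only implicit step worth noting is that testing against symmetric $\MLambda_{kk}$ (resp.\ skew-symmetric $[\MW]_{ij}$) a priori only kills the symmetric (resp.\ skew) part of the corresponding block combination, but since $\MZ\succeq 0$ forces $[\MZ]_{kk}$ symmetric and $[\MZ]_{ij}-[\MZ]_{ij}\tran$ is already skew, your conclusions follow exactly as written.
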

Now we are ready to prove Theorem~\ref{thm:quasarOptimalityGuarantee}. First of all, we have the following weak duality:
\bea \label{eq:weakDuality}
f^\star_{D} = f^{\star}_{DD} \leq f^\star_P,
\eea
where $f^\star_D = f^\star_{DD}$ (strong duality of the SDP pair) is due to the fact that \eqref{eq:dualofDual} has a strictly feasible solution, by having $[\MZ]_{00} = [\MZ]_{11} = \dots = [\MZ]_{KK} = \frac{1}{4}\eye_4$ and $[\MZ]_{ij} = \MZero$ for $i\neq j$ (in this case $\MZ = \frac{1}{4}\eye_{4(K+1)} \succ 0$), and $f^\star_{DD} \leq f^\star_P$ is due to the fact that \eqref{eq:dualofDual} is a convex relaxation for \eqref{eq:QCQPRedundant}. 

If the optimal solution of \eqref{eq:dualofDual} satisfies $\rank{\MZ^\star} = 1$, then this means that $\MZ^\star$ is also feasible for the rank-constrained SDP~\eqref{eq:qcqpZ}. Therefore, the minimum $f^\star_{DD}$ can indeed be attained by a point in the feasible set of problem~\eqref{eq:qcqpZ}. Because the rank-constrained problem~\eqref{eq:qcqpZ} is equivalent to \eqref{eq:QCQPRedundant}, $f^{\star}_{DD} = f^\star_P$ must hold and the rank-1 decomposition of $\MZ^\star$, $\vxx^\star$ must be a global minimizer of problem \eqref{eq:QCQPRedundant}.

To prove the uniqueness of $\pm\vxx^\star$ as global minimizers of \eqref{eq:QCQPRedundant}, denote $\mu^\star = (\vxx^\star)\tran \MQ (\vxx^\star) = \trace{\MQ\MZ^\star}$ as the global minimum of \eqref{eq:QCQPRedundant},~\eqref{eq:dualQCQPRedundant} and~\eqref{eq:dualofDual}, and $\MLambda^\star$, $\MW^\star$ as the corresponding dual variables. Let $\MM^\star = \MQ - \mu^\star\MJ + \MLambda^\star + \MW^\star$, we have $\rank{\MM^\star} = 4(K+1) - 1$ and its kernel is $\ker(\MM^\star) = \{\vxx: \vxx = a \vxx^\star\}$ from complementary slackness of the pair of SDP duals. Denote the feasible set of \eqref{eq:QCQPRedundant} as $\Omega\eqref{eq:QCQPRedundant}$, then we have $\ker(\MM^\star) \cap \Omega\eqref{eq:QCQPRedundant} = {\pm \vxx^\star}$. Therefore, for any $\vxx \in \Omega\eqref{eq:QCQPRedundant}/\{\pm \vxx^\star\}$, we have:
\bea
\vxx\tran \MM^\star \vxx > 0  \\
\Leftrightarrow \vxx\tran \MQ \vxx - \mu^\star + \vxx\tran\MLambda^\star\vxx + \vxx\tran\MW^\star\vxx > 0  \\
\Leftrightarrow \vxx\tran \MQ \vxx > \mu^\star,  
\eea
where $\vxx\tran\MLambda^\star\vxx = \vxx\tran\MW^\star\vxx = 0$ holds due to~\eqref{eq:primalDualProd}.

 % checked
%!TEX root = ../main.tex

\subsection{Proof of Theorem~\ref{thm:optimalityCertification}: Optimality Certification}
\label{sec:app-optimalityCertification}

% {\bf Correctness of Algorithm~\ref{alg:optCertification}}. To explain Algorithm~\ref{alg:optCertification}, 
We prove Theorem~\ref{thm:optimalityCertification} in three steps. 
First, we show that that proving global optimality of an estimate for the rotation estimation problem~\eqref{eq:TLSquat} relates to the existence of a matrix at the intersection of the positive semidefinite (PSD) cone and an affine subspace (Appendix~\ref{sec:app-certificate1}).
 Second, we show that we
 can compute a sub-optimality gap (this is the bound $\eta$ in the theorem) for an estimate using any matrix in the affine subspace (Appendix~\ref{sec:app-certificate2}). 
 Finally, we show that under the conditions of Theorem~\ref{thm:optimalityCertification} 
 such bound converges to zero when the provided estimate is globally optimal (Appendix~\ref{sec:app-certificate3}).

 % checked
  %!TEX root = ../main.tex

% =============================================================================
\subsubsection{Matrix Certificate for Global Optimality}
\label{sec:app-certificate1}

Here we derive necessary and sufficient conditions for global optimality of the \TLS rotation estimation problem~\eqref{eq:TLSquat}.

\begin{theorem}[Sufficient Condition for Global Optimality]\label{thm:sufficientConditionGlobalOptimality}
Given a feasible solution $(\hatvq,\hattheta_1,\dots,\hattheta_K)$ of the \TLS rotation estimation problem~\eqref{eq:TLSquat}, denote $\hatvxx = [\hatvq\tran,\hattheta_1\hatvq\tran,\dots,\hattheta_K\hatvq\tran]\tran$, and $\hatmu = \hatvxx\tran\MQ\hatvxx$, as the corresponding solution and cost of the QCQP~\eqref{eq:QCQPRedundant}. Then $\hatmu$ (resp. $(\hatvq,\hattheta_1,\dots,\hattheta_K)$) is the global minimum (resp. minimizer) of problem~\eqref{eq:TLSquat} if there exists a matrix $\MM$ that satisfies the following:
\bea
& \MM \in \affineSubNonRot, \label{eq:Msubspace}\\
& \MM \succeq 0, \label{eq:MPSD}
\eea
where $\affineSubNonRot$ is the following affine subspace:
\bea
\label{eq:MsubspaceNonRot}
\affineSubNonRot \doteq \left\{ \MM \in \calS^{4(K+1)} \Big| \substack{ \displaystyle \MM \hatvxx = \MZero, 
\\ \displaystyle \MM - \MQ + \hatmu \MJ \in \calH. } \right\}
\eea
and $\calH$ is defined as: %the following affine subspace:
\bea
\calH \doteq \left\{ \MDelta \in \calS^{4(N+1)}  \Bigg| \substack{ \displaystyle  \sumAllIM [\MDelta]_{kk} = \MZero, \\ \displaystyle [\MDelta]_{ij} \in \ssym^4, \forall 0 \leq i < j \leq K. } \right\},
\eea
The matrix $\MJ \in \calS^{4(K+1)}$ is an all-zero matrix except the top-left $4 \times 4$ block $[\MJ]_{00} = \eye_4$ (the same defined in~Appendix~\ref{sec:app-proof-globalOptimalityGuarantee}).
\end{theorem}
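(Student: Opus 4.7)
The plan is to invoke Lagrangian weak duality between the primal QCQP with redundant constraints~\eqref{eq:QCQPRedundant} and its dual~\eqref{eq:dualQCQPRedundant}, using the matrix $\MM$ supplied by the theorem to construct a dual feasible triple. The target chain of inequalities is
$\hatmu = \mu_{\mathrm{dual}} \leq f^\star_P \leq \hatmu,$
where the middle inequality is weak duality and the right inequality is primal feasibility of $\hatvxx$. Once the chain is established, equality forces $\hatmu$ to be the global minimum of~\eqref{eq:QCQPRedundant}, and by the equivalences recorded in Propositions~\ref{prop:binaryCloning}--\ref{prop:qcqp} and Lemma~\ref{lemma:QCQPRedundantConstraints} this transfers to $(\hatvq,\hattheta_1,\dots,\hattheta_K)$ being a global minimizer of the \TLS rotation estimation problem~\eqref{eq:TLSquat}.

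The main technical step I would carry out is to show that the second condition $\MM - \MQ + \hatmu\MJ \in \calH$ is precisely a repackaging of the dual variable structure in Lemma~\ref{lemma:dualofQCQPwithRedundantConstraints}. Setting $\MDelta \doteq \MM - \MQ + \hatmu\MJ$, I would split $\MDelta$ into its block-diagonal part and its off-block-diagonal part. The block-diagonal part satisfies $\sum_{k} [\MDelta]_{kk} = \MZero$, which matches the constraint on $\MLambda$; the off-diagonal blocks $[\MDelta]_{ij}$ are skew-symmetric, matching the constraint on $\MW$. Combined with $\MM \succeq 0$ (condition~\eqref{eq:MPSD}), this exhibits a dual feasible point $(\mu, \MLambda, \MW) = (\hatmu, \MLambda, \MW)$ attaining dual value $\hatmu$, which closes weak duality.

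The condition $\MM\hatvxx = \MZero$ from~\eqref{eq:MsubspaceNonRot} plays the role of complementary slackness, and I would observe (as part of the proof) that it is actually redundant given~\eqref{eq:MPSD}, the membership in $\calH$, and primal feasibility of $\hatvxx$. Indeed, using~\eqref{eq:primalDualProd} one computes $\hatvxx\tran \MM \hatvxx = \hatvxx\tran \MQ \hatvxx - \hatmu\,\hatvxx\tran\MJ\hatvxx + \hatvxx\tran(\MLambda+\MW)\hatvxx = \hatmu - \hatmu + 0 = 0$, and a PSD matrix with $\hatvxx\tran\MM\hatvxx=0$ must annihilate $\hatvxx$. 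Verifying $\hatvxx\tran(\MLambda+\MW)\hatvxx = 0$ is the only careful piece: on the $\MLambda$-part it uses $\hattheta_k^2 = 1$ to reduce the sum to $\hatvq\tran(\sum_k \MLambda_{kk})\hatvq = 0$, and on the $\MW$-part it uses skew-symmetry of each $[\MW]_{ij}$ to kill every quadratic form $\hattheta_i\hattheta_j\hatvq\tran[\MW]_{ij}\hatvq$.

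The main obstacle I anticipate is bookkeeping rather than conceptual: one must carefully track the block structure of $\MM$, $\MLambda$, and $\MW$ under the quaternion indexing $0,1,\dots,K$ and confirm that the constraints defining $\calH$ precisely exhaust the degrees of freedom of $\MLambda+\MW$ permitted by Lemma~\ref{lemma:dualofQCQPwithRedundantConstraints}, with no hidden constraint on $\MQ - \hatmu\MJ$ itself that could obstruct the decomposition. Once that correspondence is verified, the weak-duality argument is immediate, and the theorem follows without requiring any tightness assumption on the SDP relaxation~\eqref{eq:relaxationRedundant}---which is important, since the certification should be sound even on problem instances where~\eqref{eq:relaxationRedundant} happens to be loose.
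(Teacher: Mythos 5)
Your proposal is correct and follows essentially the same route as the paper's proof: decompose $\MM = \MQ - \hatmu\MJ + \MLambda + \MW$ via the membership $\MM - \MQ + \hatmu\MJ \in \calH$, identify $(\MLambda,\MW)$ with the dual variables of Lemma~\ref{lemma:dualofQCQPwithRedundantConstraints}, and use $\MM \succeq 0$ together with the identities~\eqref{eq:primalDualProd} to conclude $\vxx\tran\MQ\vxx \geq \hatmu$ for every primal-feasible $\vxx$, with no tightness assumption needed. Your side remark that $\MM\hatvxx = \MZero$ is implied by the other conditions mirrors the paper's complementary-slackness observation and is correct.
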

\begin{proof}
Because $\MM - \MQ + \hatmu\MJ \in \calH$, and from the definition of $\calH$, we can write $\MM$ as:
\bea
\label{eq:MMasDualVar}
\MM = \MQ - \hatmu\MJ + \MLambda + \MW,
\eea
where $\MLambda , \MW \in \calS^{4(K+1)}$ are the dual variables defined in Lemma~\ref{lemma:dualofQCQPwithRedundantConstraints}. From $\MM \succeq 0$, we have the following inequality for any $\vxx$ that is primal feasible for~\eqref{eq:QCQPRedundant}:
\bea
\vxx\tran \MM \vxx \geq 0  \Leftrightarrow \vxx\tran (\MQ - \hatmu \MJ + \MLambda + \MW)\vxx \geq 0 \\
\Leftrightarrow \vxx\tran \MQ \vxx \geq \hatmu + \vxx\tran \MLambda \vxx + \vxx\tran \MW \vxx = \hatmu,
\eea
which states that $\hatmu$ (resp. $\hatvxx$) is the global minimum (resp. minimizer) of~\eqref{eq:QCQPRedundant}.

In addition, denote $\hatMZ = \hatvxx\hatvxx\tran$, then $\hatMZ$ and $(\hatmu, \MLambda, \MW)$ are a pair of primal-dual optimal solutions for \eqref{eq:dualofDual} and \eqref{eq:dualQCQPRedundant}, which leads to the following SDP complementary slackness condition:
\bea
(\MQ - \hatmu\MJ + \MLambda + \MW) \hatMZ = \MZero \\
\Leftrightarrow (\MQ - \hatmu\MJ + \MLambda + \MW) \hatvxx = \MZero.
\eea
Therefore, $\MM\hatvxx = \MZero$ must hold in eq.~\eqref{eq:Msubspace}.
\end{proof}

\begin{corollary}[Sufficient and Necessary Condition for Global Optimality]\label{thm:sufficientNecessaryCondition}
If strong duality is achieved, \ie $f^\star_D = f^\star_{DD} = f^\star_P$ is attained, then the existence of a matrix $\MM$ that satisfies eq.~\eqref{eq:Msubspace} and~\eqref{eq:MPSD} is also a necessary condition for $\hatmu$ (resp. $(\hatvq,\hattheta_1,\dots,\hattheta_K)$) to be the global minimum (resp. minimizer) of problem~\eqref{eq:TLSquat}.
\end{corollary}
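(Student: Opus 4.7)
The plan is to leverage strong duality of the SDP pair \eqref{eq:dualQCQPRedundant}--\eqref{eq:dualofDual} together with SDP complementary slackness to explicitly construct the certificate matrix $\MM$. The previous theorem establishes the ``if'' direction; this corollary asks for the ``only if'' under the extra hypothesis $f^\star_D = f^\star_{DD} = f^\star_P$. So, given that $\hatmu$ is the global minimum of \eqref{eq:TLSquat} (equivalently, of the QCQP~\eqref{eq:QCQPRedundant}), I will exhibit a matrix $\MM$ satisfying \eqref{eq:Msubspace} and \eqref{eq:MPSD}.

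First, I will invoke strong duality: since $f^\star_D = f^\star_P = \hatmu$ and the dual \eqref{eq:dualQCQPRedundant} is attained (this is already used implicitly in Appendix~\ref{sec:app-proof-globalOptimalityGuarantee}, where \eqref{eq:dualofDual} is strictly feasible so Slater's condition yields attainment on the dual side), let $(\mu^\star,\MLambda^\star,\MW^\star)$ be an optimal dual solution, with $\mu^\star = \hatmu$. Define
\[
\MM \doteq \MQ - \hatmu\MJ + \MLambda^\star + \MW^\star.
\]
Dual feasibility of \eqref{eq:dualQCQPRedundant} gives immediately $\MM \succeq 0$, establishing~\eqref{eq:MPSD}. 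By the structural constraints that $\MLambda^\star$ is block-diagonal with $\sum_k [\MLambda^\star]_{kk}=\MZero$ and that each off-diagonal block of $\MW^\star$ is skew-symmetric, we have $\MM - \MQ + \hatmu\MJ = \MLambda^\star + \MW^\star \in \calH$ by the very definition of $\calH$.

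It remains to verify $\MM\hatvxx = \MZero$. Here I will use SDP complementary slackness between \eqref{eq:dualofDual} and \eqref{eq:dualQCQPRedundant}. Since $\hatvxx$ is a global minimizer of the primal QCQP~\eqref{eq:QCQPRedundant}, the rank-one lift $\hatMZ = \hatvxx\hatvxx\tran$ is primal feasible for \eqref{eq:dualofDual} and attains cost $\hatmu$, so by strong duality $\hatMZ$ is primal-optimal. Complementary slackness then yields $\MM \hatMZ = \MZero$, which by taking any nonzero column is equivalent to $\MM\hatvxx = \MZero$. Combined with the affine subspace membership just proved, this gives $\MM \in \affineSubNonRot$, completing~\eqref{eq:Msubspace}.

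The only delicate point in this plan is ensuring the dual \eqref{eq:dualQCQPRedundant} is actually attained, rather than merely having the optimal value equal to $f^\star_P$. This is the main obstacle to make rigorous, but it is resolved by the Slater-type argument already observed in Appendix~\ref{sec:app-proof-globalOptimalityGuarantee}: the primal SDP \eqref{eq:dualofDual} admits the strictly feasible point $\MZ = \tfrac{1}{4}\eye_{4(K+1)} \succ 0$, hence by standard conic duality the dual optimum is attained whenever finite. Under the strong duality hypothesis, it is finite and equal to $\hatmu$, so an optimal $(\MLambda^\star,\MW^\star)$ exists and the construction above goes through, concluding the proof.
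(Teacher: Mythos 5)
Your proposal is correct and follows essentially the same route as the paper: the paper's proof likewise asserts that strong duality yields dual variables $\MLambda,\MW$ with $\MQ-\hatmu\MJ+\MLambda+\MW\succeq 0$ and then reuses the complementary-slackness step ($\MM\hatvxx=\MZero$) from the proof of Theorem~\ref{thm:sufficientConditionGlobalOptimality}. You merely make explicit the dual-attainment argument via strict feasibility of~\eqref{eq:dualofDual} and the membership $\MLambda^\star+\MW^\star\in\calH$, which the paper leaves implicit.
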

\begin{proof}
If strong duality is achieved, then there must exist dual variables $\MLambda$ and $\MW$ that satisfy $\MQ - \hatmu\MJ + \MLambda + \MW \succeq 0$. The rest follows from the proof of Theorem~\ref{thm:sufficientConditionGlobalOptimality}.
\end{proof}

 % checked
  %!TEX root = ../main.tex

% =============================================================================
\subsubsection{Sub-optimality Bound}
\label{sec:app-certificate2}

% Theorems~\ref{thm:sufficientConditionGlobalOptimality}-\ref{thm:sufficientNecessaryCondition} relate global optimality to the existence of a matrix $\MM \in \psdSub \cap \affineSubNonRot$.
% Here we show that any matrix $\MM \in \affineSubNonRot$ (not necessarily PSD) allows computing the sub-optimality gap of a feasible estimate\footnote{
% Note that the expression of $\affineSubNonRot$ depends on the given estimate $\hatvxx$ as
%  per~\eqref{eq:MsubspaceNonRot}.}  for problem~\eqref{eq:TLSquat}.

Theorem~\ref{thm:sufficientConditionGlobalOptimality} suggests that finding a matrix $\MM$ that lies at the intersection of the PSD cone $\psdSub$ and the affine subspace $\affineSubNonRot$ gives a \emph{certificate} for $\hatvxx$ to be globally optimal for \eqref{eq:QCQPRedundant}. Corollary~\ref{corollary:relativeSubopt} below states that even if $\hatvxx$ is not a globally optimal solution, finding any $\MM \in \affineSubNonRot$ (not necessarily PSD) still gives a valid sub-optimality bound using the minimum eigenvalue of $\MM$.\footnote{
Note that the expression of $\affineSubNonRot$ depends on the estimate $\hatvxx$ as
 per~\eqref{eq:MsubspaceNonRot}.} 

\begin{corollary}[Relative Sub-optimality Bound]\label{corollary:relativeSubopt}
Given a feasible solution $(\hatvq,\hattheta_1,\dots,\hattheta_K)$ of the \TLS rotation estimation problem~\eqref{eq:TLSquat}, denote $\hatvxx = [\hatvq\tran,\hattheta_1\hatvq\tran,\dots,\hattheta_K\hatvq\tran]\tran$, and $\hatmu = \hatvxx\tran\MQ\hatvxx$, as the corresponding solution and cost of the QCQP~\eqref{eq:QCQPRedundant}. Define the affine subspace  $\affineSubNonRot$ as in~\eqref{eq:MsubspaceNonRot}. 
For any $\MM \in \affineSubNonRot$, let $\lambda_1(\MM)$ be its minimum eigenvalue. Then the relative sub-optimality gap of $\hatvxx$ is bounded by:
\bea
\frac{\hatmu - \mu^\star}{\hatmu} \leq \eta \doteq \frac{|\lambda_1(\MM)|(K+1)}{\hatmu},
\eea
where $\mu^\star$ is the true global minimum of~\eqref{eq:QCQPRedundant}.
\end{corollary}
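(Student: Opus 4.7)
The plan is to extend the argument from Theorem~\ref{thm:sufficientConditionGlobalOptimality}: there we used membership in $\affineSubNonRot$ together with positive semidefiniteness of $\MM$ to certify exact global optimality. Here the matrix $\MM$ is only assumed to lie in $\affineSubNonRot$, so it may fail to be PSD; the key idea is that the failure of PSDness, measured by the magnitude of $\lambda_1(\MM)$, directly quantifies how far $\hatmu$ can be from $\mu^\star$. Concretely, for any matrix $\MM \in \affineSubNonRot$, the definition in~\eqref{eq:MsubspaceNonRot} together with the form~\eqref{eq:MMasDualVar} derived in the proof of Theorem~\ref{thm:sufficientConditionGlobalOptimality} lets me write $\MM = \MQ - \hatmu \MJ + \MLambda + \MW$ for suitable dual variables $\MLambda, \MW$ having the block structure of Lemma~\ref{lemma:dualofQCQPwithRedundantConstraints}.

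Next I will exploit the identities~\eqref{eq:primalDualProd}: for any $\vxx$ feasible for the QCQP~\eqref{eq:QCQPRedundant} we have $\vxx\tran \MJ \vxx = 1$ and $\vxx\tran \MLambda \vxx = \vxx\tran \MW \vxx = 0$. Substituting yields the pointwise identity $\vxx\tran \MM \vxx = \vxx\tran \MQ \vxx - \hatmu$ on the feasible set. Applying the Rayleigh quotient lower bound $\vxx\tran \MM \vxx \geq \lambda_1(\MM)\,\|\vxx\|^2$ and noting that feasibility forces $\vxx = [\vq\tran,\theta_1\vq\tran,\dots,\theta_K\vq\tran]\tran$ with $\|\vq\|=1$ so that $\|\vxx\|^2 = K+1$, I obtain
\begin{equation*}
\vxx\tran \MQ \vxx \geq \hatmu + \lambda_1(\MM)(K+1)\quad\text{for every feasible } \vxx.
\end{equation*}
Specializing $\vxx$ to a global minimizer $\vxx^\star$ gives $\mu^\star \geq \hatmu + \lambda_1(\MM)(K+1)$, hence $\hatmu - \mu^\star \leq -\lambda_1(\MM)(K+1)$. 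Dividing by $\hatmu > 0$ and observing that, whenever $\lambda_1(\MM) \geq 0$ the bound is vacuous (and in fact $\hatvxx$ is already optimal by Theorem~\ref{thm:sufficientConditionGlobalOptimality}), while otherwise $-\lambda_1(\MM) = |\lambda_1(\MM)|$, produces the claimed inequality.

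I expect no serious obstacles in this argument: it is essentially a quantitative refinement of Theorem~\ref{thm:sufficientConditionGlobalOptimality} in which positive semidefiniteness is replaced by a spectral lower bound. The only subtle point is to justify the dichotomy on the sign of $\lambda_1(\MM)$ so that the absolute value in the statement is harmless, and to ensure $\hatmu > 0$ so that dividing by it is legitimate (this follows because $\MQ$ is constructed to give a nonnegative TLS cost, with strict positivity in any nontrivial instance). The tightness of the bound as $\MM$ varies over $\affineSubNonRot$ is exactly what drives Algorithm~\ref{alg:optCertification} to search, via Douglas--Rachford Splitting, for a matrix whose minimum eigenvalue is close to zero.
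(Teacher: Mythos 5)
Your proposal is correct and follows essentially the same route as the paper's proof: write $\MM = \MQ - \hatmu\MJ + \MLambda + \MW$, use the feasibility identities $\vxx\tran\MJ\vxx=1$, $\vxx\tran\MLambda\vxx=\vxx\tran\MW\vxx=0$, apply the Rayleigh bound with $\|\vxx\|^2 = K+1$, and specialize to the global minimizer. The only cosmetic difference is the sign handling: the paper observes that $\lambda_1(\MM)\leq 0$ automatically, since $\MM\hatvxx=\MZero$ forces a zero eigenvalue, which makes your dichotomy unnecessary (and in your "$\lambda_1\geq 0$" branch the bound is not vacuous but exactly the zero-gap certificate).
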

\begin{proof}
For any $\vxx$ that is primal feasible for \eqref{eq:QCQPRedundant}, we have $\| \vxx \|^2 = \sum_{k=1}^K \| \vq_k \|^2 = K+1$. Therefore, for any $\vxx$ in the feasible set of \eqref{eq:QCQPRedundant} and $\MM \in \affineSubNonRot$, the following holds:
\bea
\vxx\tran \MM \vxx \geq \lambda_1(\MM) \| \vxx \|^2 = \lambda_1(\MM) (K+1) \\
\overset{(i)}{\Leftrightarrow} \vxx\tran (\MQ - \hatmu\MJ + \MLambda + \MW)\vxx \geq \lambda_1(\MM) (K+1) \\
\overset{(ii)}{\Leftrightarrow} \vxx\tran \MQ \vxx \geq \hatmu + \lambda_1(\MM) (K+1). \label{eq:anyxbound}
\eea
where (i) follows from the fact that any $\MM \in \affineSubNonRot$ can be written as $\MQ - \hatmu\MJ + \MLambda + \MW$ (see eq.~\eqref{eq:MMasDualVar}), and (ii) follows from~\eqref{eq:primalDualProd}. 
Since $\mu^\star$ is also achieved in the feasible set, eq.~\eqref{eq:anyxbound} also holds for $\mu^\star$:
\bea
\mu^\star \geq \hatmu + \lambda_1(\MM) (K+1) \\
\Leftrightarrow \frac{\hatmu - \mu^\star}{\hatmu} \leq \frac{|\lambda_1(\MM)|(K+1)}{\hatmu}, \label{eq:relSuboptBoundApp}
\eea
{where in eq.~\eqref{eq:relSuboptBoundApp} we have used $\lambda_1(\MM) \leq 0$ since $\MM$ has a zero eigenvalue with eigenvector $\hatvxx$ ($\MM \hatvxx = \MZero$ in eq.~\eqref{eq:Msubspace}).}
\end{proof}

 % checked
  %!TEX root = ../main.tex

% =============================================================================
\subsubsection{\revone{Douglas-Rachford Splitting} and Algorithm~\ref{alg:optCertification}}
\label{sec:app-certificate3}

Theorems~\ref{thm:sufficientConditionGlobalOptimality}-\ref{thm:sufficientNecessaryCondition} relate global optimality to the existence of a matrix $\MM \in \psdSub \cap \affineSubNonRot$, while Theorem~\ref{corollary:relativeSubopt} obtains a suboptimality bound from any matrix $\MM \in \affineSubNonRot$. 
The next Theorem states that finding a matrix $\MM \in \psdSub \cap \affineSubNonRot$ is equivalent to finding a matrix $\barMM \in \psdSub \cap \affineSub$, where $\affineSub$ is a ``rotated'' version of $\affineSubNonRot$ (this will enable faster projections to this subspace).

\begin{theorem}[Rotated Affine Subspace]\label{thm:rotateAffineSubspace}
Given a feasible solution $(\hatvq,\hattheta_1,\dots,\hattheta_K)$ of the \TLS rotation estimation problem~\eqref{eq:TLSquat},
define the block-diagonal matrix $\hatOmega_q = \eye_{K+1} \kron \MOmega_1(\hatvq)$.
 Then a matrix $\MM \in \affineSubNonRot \cap \psdSub$ if and only if $\barMM \doteq \hatOmega_q\tran \MM \hatOmega_q \in \affineSub \cap \psdSub$ with $\affineSub$ defined as:
\bea \label{eq:defbarcalM}
\affineSub \doteq \left\{ \barMM \in \calS^{4(K+1)} \Big| \substack{ \displaystyle \barMM \barvxx = \MZero, \\ \displaystyle \barMM - \barMQ + \hatmu \MJ \in \calH. } \right\},
\eea
where $\barvxx = [1,\hattheta_1,\dots,\hattheta_K]\tran \kron \ve$ (with $\ve = [0,0,0,1]\tran$) and $\barMQ = \hatOmega_q\tran \MQ \hatOmega_q$.  Moreover, $\barMM$ and $\MM$ produce the same relative sub-optimality bound in Theorem~\ref{corollary:relativeSubopt}.
\end{theorem}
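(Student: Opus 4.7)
The plan is to leverage the fact that $\MOmega_1(\hatvq)$ is an orthogonal matrix whenever $\hatvq$ is a unit quaternion. This follows directly from inspecting the definition in~\eqref{eq:Omega_1} (columns are pairwise orthogonal and unit-norm precisely when $\|\hatvq\|=1$), or equivalently by noting that left quaternion multiplication by a unit quaternion is a norm-preserving linear map on $\Real{4}$. Consequently $\hatOmega_q = \eye_{K+1} \kron \MOmega_1(\hatvq)$ is block-orthogonal, so the similarity $\MM \mapsto \hatOmega_q\tran \MM \hatOmega_q$ preserves the full spectrum, and in particular both positive semidefiniteness (so $\MM \in \psdSub \iff \barMM \in \psdSub$) and the minimum eigenvalue that appears in Corollary~\ref{corollary:relativeSubopt}. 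The proof then reduces to checking that the two defining conditions of $\affineSubNonRot$ are transported \emph{exactly} onto those of $\affineSub$.

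First I would handle the kernel condition $\MM\hatvxx = \MZero$. Writing $\MM = \hatOmega_q \barMM \hatOmega_q\tran$, this is equivalent to $\barMM(\hatOmega_q\tran \hatvxx) = \MZero$, so what needs to be shown is $\hatOmega_q\tran \hatvxx = \barvxx$. The key identity is $\MOmega_1(\hatvq)\ve = \hatvq$ with $\ve = [0\;0\;0\;1]\tran$, which follows from $\hatvq \qProd \ve = \hatvq$ together with the product formula~\eqref{eq:quatProduct}; by orthogonality this gives $\MOmega_1(\hatvq)\tran \hatvq = \ve$. Applying this block-wise to $\hatvxx = [1,\hattheta_1,\dots,\hattheta_K]\tran \kron \hatvq$ yields $\hatOmega_q\tran \hatvxx = [1,\hattheta_1,\dots,\hattheta_K]\tran \kron \ve = \barvxx$, as required.

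Second I would handle the affine-subspace condition $\MM - \MQ + \hatmu\MJ \in \calH$. Since $\barMM - \barMQ = \hatOmega_q\tran(\MM - \MQ)\hatOmega_q$ by definition of $\barMQ$, two checks suffice: (a) $\hatOmega_q\tran \MJ \hatOmega_q = \MJ$, which holds because $\MJ$ is zero outside of the top-left block $[\MJ]_{00} = \eye_4$, and the block-wise action conjugates $\eye_4$ by the orthogonal matrix $\MOmega_1(\hatvq)$ leaving it invariant; and (b) the set $\calH$ is invariant under block-wise orthogonal conjugation. For (b), the sum-of-diagonals condition transforms as $\sum_k \MOmega_1(\hatvq)\tran [\MDelta]_{kk} \MOmega_1(\hatvq) = \MOmega_1(\hatvq)\tran\!\bigl(\sum_k [\MDelta]_{kk}\bigr)\MOmega_1(\hatvq)$, vanishing iff the original sum vanishes; and the skew-symmetry of off-diagonal blocks is preserved because $\MO\tran \MA \MO$ is skew-symmetric iff $\MA$ is, for any orthogonal $\MO$. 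For the final claim on the suboptimality bound, both $\lambda_1(\MM) = \lambda_1(\barMM)$ (spectrum preserved) and $\hatmu = \hatvxx\tran\MQ\hatvxx = \barvxx\tran\barMQ\barvxx$ (invariance of the quadratic form along $\hatvxx$) are unchanged, so the bound in Corollary~\ref{corollary:relativeSubopt} is identical whether computed from $\MM$ or from $\barMM$.

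The main obstacle I anticipate is not conceptual but bookkeeping: one must carefully verify the orthogonality of $\MOmega_1(\hatvq)$ from its explicit entries in~\eqref{eq:Omega_1}, confirm the quaternion identity $\MOmega_1(\hatvq)\tran \hatvq = \ve$, and methodically track that every block-structured constraint defining $\calH$ and $\MJ$ survives the block-wise conjugation. Once those pieces are in place the equivalence is a direct consequence of orthogonal invariance.
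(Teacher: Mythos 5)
Your proposal is correct and follows essentially the same route as the paper's proof: orthogonality of $\hatOmega_q$, the identity $\hatOmega_q\tran\hatvxx = \barvxx$ (via $\MOmega_1(\hatvq)\tran\hatvq = \ve$), invariance of $\MJ$ and of $\calH$ under block-wise conjugation, and preservation of the spectrum under the similarity transform, which gives both the PSD equivalence and the equality of the sub-optimality bounds. The only difference is that you spell out the invariance of $\calH$ and the quaternion identity in more detail than the paper, which simply asserts them as easy verifications.
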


\begin{proof}
Using the expression of $\MOmega_1(\vq)$ in~\eqref{eq:Omega_1}, one can verify that $\MOmega_1(\vq)\tran \vq = \ve$, $\hatOmega_q\tran\hatvxx = \barvxx$ and $\hatOmega_q\tran \hatOmega_q = \hatOmega_q \hatOmega_q\tran = \eye_{4(K+1)}$. Therefore, we have the following equivalence:
\bea
\hspace{-5mm}\barMM \barvxx = \MZero \Leftrightarrow \hatOmega_q\tran\MM\hatOmega_q \hatOmega_q\tran\hatvxx = \MZero \nonumber \\
\Leftrightarrow \hatOmega_q \MM \hatvxx = \MZero \Leftrightarrow \MM \hatvxx = \MZero.
\eea 
In addition, it is easy to verify that $\MDelta \in \calH$ if and only if $\barMDelta \doteq \hatOmega_q\tran \MDelta \hatOmega_q \in \calH$, and $\hatOmega_q\tran\MJ\hatOmega_q = \MJ$. Therefore,
\bea
\MM - \MQ - \hatmu\MJ \in \calH \Leftrightarrow \barMM - \barMQ - \hatmu\MJ \in \calH,
\eea
and we conclude that $\MM \in \affineSubNonRot \Leftrightarrow \barMM \in \affineSub$. Finally, $\barMM = \hatOmega_q\tran \MM \hatOmega_q = \hatOmega_q\inv \MM \hatOmega_q$ is a similarity transform on $\MM$ and shares the same eigenvalues with $\MM$, which trivially leads to $\MM \in \psdSub \Leftrightarrow \barMM \in \psdSub$. 
In addition, since $\MM$ and $\barMM$ share the same eigenvalues, we have $\lambda_1(\barMM) = \lambda_1(\MM)$, which means that $\barMM$ and $\MM$ produces the same relative sub-optimality bound in Corollary~\ref{corollary:relativeSubopt}.
\end{proof}

Therefore, to certify optimality of an estimate, we have to search for a matrix $\barMM  \in \affineSub \cap \psdSub$, \ie at the intersection between two convex sets. \revone{Although finding a point in the intersection of two convex sets can be expensive (\eg~finding $\barMM  \in \affineSub \cap \psdSub$ is equivalent to solving a feasibility SDP), the following lemma states that one can find a point in the intersection of two convex sets by cleverly leveraging the individual projections onto the two convex sets, a method that is commonly known as \emph{Douglas-Rachford Splitting} (\DRS)~\cite{Combettes11book-proximalSplitting,Yang20arXiv-certifiablePerception,Jegelka11cvpr}.}
\begin{lemma}[Douglas-Rachford Splitting]\label{lemma:douglasrachfordsplit}
\revone{Let $\calX_1$, $\calX_2$ be two closed convex sets in the Hilbert space $\calX$, and $\Pi_{\calX_1}$, $\Pi_{\calX_2}$ be their corresponding projection maps, then for any $\vxx_0 \in \calX$, the following iterates:}
\bea \label{eq:douglasrachfordsplitting}
& (i) \ \vxx_\tau^{\calX_1} = \Pi_{\calX_1} (\vxx_\tau), \\
& (ii) \ \vxx_\tau^{\calX_2} = \Pi_{\calX_2} (2\vxx_\tau^{\calX_1} - \vxx_\tau), \\
& (iii) \ \vxx_{\tau+1} = \vxx_{\tau} + \gamma (\vxx_\tau^{\calX_2} - \vxx_\tau^{\calX_1}),
\eea
\revone{generates a sequence $\{ \vxx_\tau \}_{\tau \geq 0}$ that converges to the intersection of $\calX_1$ and $\calX_2$ when $0<\gamma<2$, provided that the intersection is non-empty.}
% provided that $\calX_1 \cap \calX_2$ is non-empty.
\end{lemma}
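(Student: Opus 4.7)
\textbf{Proof plan for Lemma~\ref{lemma:douglasrachfordsplit}.} The plan is to recast the three-step iteration~\eqref{eq:douglasrachfordsplitting} as a single fixed-point iteration driven by an \emph{averaged operator}, and then appeal to the Krasnosel'ski\u{\i}--Mann convergence theorem. First I would introduce the reflection operators $R_1 = 2\Pi_{\calX_1} - I$ and $R_2 = 2\Pi_{\calX_2} - I$ and show, by a short algebraic manipulation of steps (i)--(iii), that the iteration is equivalent to
\[
\vxx_{\tau+1} \;=\; T_\gamma(\vxx_\tau), \qquad T_\gamma \;\doteq\; \Bigl(1-\tfrac{\gamma}{2}\Bigr)I \;+\; \tfrac{\gamma}{2}\,R_2 R_1.
\]
The key identity to verify is $\Pi_{\calX_2}(R_1 \vxx_\tau) - \Pi_{\calX_1}(\vxx_\tau) = \tfrac{1}{2}(R_2 R_1 \vxx_\tau - \vxx_\tau)$, which follows from $\Pi_{\calX_i} = \tfrac{1}{2}(I + R_i)$.

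Next I would invoke the classical facts that (a) projections onto closed convex sets in a Hilbert space are \emph{firmly nonexpansive}, hence (b) the reflections $R_1$ and $R_2$ are nonexpansive, and therefore (c) their composition $R_2 R_1$ is nonexpansive. Consequently, $T_\gamma$ is an averaged operator with averaging constant $\gamma/2$, which lies strictly in $(0,1)$ precisely when $0 < \gamma < 2$. At this point, applying the Krasnosel'ski\u{\i}--Mann theorem (or equivalently Opial's theorem) gives that, whenever $\mathrm{Fix}(T_\gamma) \neq \emptyset$, the sequence $\{\vxx_\tau\}$ converges to some $\vxx^\star \in \mathrm{Fix}(T_\gamma)$.

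The remaining step is to connect fixed points of $T_\gamma$ with the intersection $\calX_1 \cap \calX_2$. From $T_\gamma \vxx^\star = \vxx^\star$ one obtains $R_2 R_1 \vxx^\star = \vxx^\star$; unpacking the reflections and using the characterization of projections via variational inequalities yields $\Pi_{\calX_1}(\vxx^\star) \in \calX_1 \cap \calX_2$ and $\Pi_{\calX_1}(\vxx^\star) = \Pi_{\calX_2}(R_1 \vxx^\star)$. Combining this with the continuity of $\Pi_{\calX_1}$ shows that the \emph{shadow sequence} $\vxx_\tau^{\calX_1} = \Pi_{\calX_1}(\vxx_\tau)$ converges to a point in $\calX_1 \cap \calX_2$, which is the content of the lemma. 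Conversely, any $\vxx \in \calX_1 \cap \calX_2$ yields a fixed point ($T_\gamma \vxx = \vxx$), so the non-emptiness of the intersection is exactly what is needed to invoke Krasnosel'ski\u{\i}--Mann.

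\textbf{Anticipated obstacle.} The main subtlety is not the nonexpansiveness chain (which is textbook) but rather the last step identifying the limit with a point in $\calX_1 \cap \calX_2$: one must be careful that it is the \emph{shadow} $\Pi_{\calX_1}(\vxx_\tau)$, not $\vxx_\tau$ itself, that lands in the intersection, since fixed points of $T_\gamma$ generally lie outside $\calX_1 \cap \calX_2$. The statement of the lemma elides this distinction, so the cleanest presentation would explicitly state convergence of both $\vxx_\tau$ (to a fixed point) and of $\vxx_\tau^{\calX_1}$ (to a feasible point), then observe that for the purposes of Algorithm~\ref{alg:optCertification} the relevant iterate is $\barMM_{\affineSub}^{(t)}$, whose role mirrors that of the shadow sequence.
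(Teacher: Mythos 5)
Your proposal is correct, but note that the paper itself does not prove this lemma at all: it is invoked as a classical result with citations to the splitting/proximal-methods literature, so your Krasnosel'ski\u{\i}--Mann argument is supplying a proof the paper omits rather than paralleling one. Your route is the standard one and the key identity checks out: with $\Pi_{\calX_i} = \tfrac{1}{2}(I+R_i)$ one indeed gets $\Pi_{\calX_2}(R_1\vxx_\tau)-\Pi_{\calX_1}(\vxx_\tau)=\tfrac{1}{2}(R_2R_1\vxx_\tau-\vxx_\tau)$, so the update is $\vxx_{\tau+1}=(1-\tfrac{\gamma}{2})\vxx_\tau+\tfrac{\gamma}{2}R_2R_1\vxx_\tau$, a $\gamma/2$-averaged map of the nonexpansive composition $R_2R_1$, and $\calX_1\cap\calX_2\neq\emptyset$ gives $\mathrm{Fix}(R_2R_1)\neq\emptyset$ since common points are fixed by both reflections. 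Your ``anticipated obstacle'' is in fact the most valuable part of the write-up: as literally stated, the lemma's claim that $\vxx_\tau$ ``converges to the intersection'' is imprecise, because the governing sequence converges to a fixed point $\vxx^\star$ of $R_2R_1$ that generally lies outside $\calX_1\cap\calX_2$, and it is the shadow $\Pi_{\calX_1}(\vxx_\tau)$ (equivalently, the projected iterates such as $\barMM^{(t)}_{\psdSub}$ or $\barMM^{(t)}_{\affineSub}$ in Algorithm~\ref{alg:optCertification}) that converges to the feasible point $\Pi_{\calX_1}(\vxx^\star)\in\calX_1\cap\calX_2$; your verification that $R_2R_1\vxx^\star=\vxx^\star$ forces $\Pi_{\calX_1}(\vxx^\star)=\Pi_{\calX_2}(2\Pi_{\calX_1}(\vxx^\star)-\vxx^\star)$ is exactly the right way to see this. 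The only caveat worth adding is that in a general Hilbert space Krasnosel'ski\u{\i}--Mann yields only \emph{weak} convergence of $\vxx_\tau$, and passing from weak convergence of $\vxx_\tau$ to convergence of the shadow sequence needs either an extra argument or a citation to the DRS-for-feasibility literature; in the paper's finite-dimensional setting ($\calS^{4(K+1)}$) convergence is strong, projections are continuous, and your argument closes without further work.
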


% \begin{lemma}[Alternating Projections to Convex Sets]\label{lemma:alternativeProjection}
% Let $\calX_1$, $\calX_2$ be two closed convex sets in the Hilbert space $\calX$, and $\Pi_{\calX_1}$, $\Pi_{\calX_2}$ be their corresponding projection maps, then for any $\vxx \in \calX$,
% \bea \label{eq:alternateProjectionConverge}
% \lim_{n \rightarrow \infty} (\Pi_{\calX_1}\Pi_{\calX_2})^n\vxx \in \calX_1 \cap \calX_2,
% \eea
% provided that $\calX_1 \cap \calX_2$ is non-empty.
% \end{lemma}
% The following Lemma, due to Cheney and Goldstein~\cite{Cheney59AMS-proximityMapConvexSet}, states that a simple \emph{alternating projections} method converges to a point at the intersection of two closed convex sets.
% \begin{lemma}[Alternating Projections to Convex Sets]\label{lemma:alternativeProjection}
% Let $\calX_1$, $\calX_2$ be two closed convex sets in the Hilbert space $\calX$, and $\Pi_{\calX_1}$, $\Pi_{\calX_2}$ be their corresponding projection maps, then for any $\vxx \in \calX$,
% \bea \label{eq:alternateProjectionConverge}
% \lim_{n \rightarrow \infty} (\Pi_{\calX_1}\Pi_{\calX_2})^n\vxx \in \calX_1 \cap \calX_2,
% \eea
% provided that $\calX_1 \cap \calX_2$ is non-empty.
% \end{lemma}
Therefore, the \DRS (line~\ref{line:projectK}-line~\ref{line:gamma}) in Algorithm~\ref{alg:optCertification} guarantee to converge to a matrix $\barMM \in \affineSub \cap \psdSub$ when $\affineSub \cap \psdSub \neq \varnothing$, and the sub-optimality bound $\eta$ converges to zero. Moreover, Algorithm~\ref{alg:optCertification} produces a sub-optimality bound even if it fails to certify global optimality, thanks to Corollary~\ref{corollary:relativeSubopt}.

 % checked
%!TEX root = ../main.tex

% =============================================================================
\subsection{Closed-form Projections}
\label{sec:app-closedFormProjections}

Although Lemma~\ref{lemma:douglasrachfordsplit} is theoretically sound, in practice computing the projection onto an arbitrary convex set could be computationally expensive\footnote{Even for an affine subspace, $\calA = \{\vxx \in \Real{n}: \MA \vxx = \vb\}$, the naive projection of any $\vxx$ onto $\calA$ is: $\Pi_{\calA}(\vxx) = \vxx - \MA\tran(\MA\MA\tran)\inv(\MA\vxx - \vb)$, which could be expensive for large $n$ due to the computation of the inverse $(\MA\MA\tran)\inv$.}. However, we will show that both projections, $\Pi_{\affineSub}$ and $\Pi_{\psdSub}$ in Algorithm~\ref{alg:optCertification} can be computed efficiently in closed form.

The projection onto the PSD cone is presented in the following Lemma due to Higham~\cite{Higham88LA-nearestSPD}.
\begin{lemma}[Projection onto $\calS^n_+$]\label{lemma:projectionPSDcone}
Given any matrix $\MM \in \calS^n$, let $\MM = \MU\diag{\lambda_1,\dots,\lambda_n}\MU\tran$ be its spectral decomposition, then the projection of $\MM$ onto the PSD cone $\calS^n_+$ is:
\bea
\Pi_{\calS^n_+}(\MM) = \MU \diag{\max(0,\lambda_1),\dots,\max(0,\lambda_n)}\MU\tran.
\eea
\end{lemma}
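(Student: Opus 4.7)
The plan is to characterize $\Pi_{\calS^n_+}(\MM)$ as the solution of the convex optimization $\min_{\MX \succeq 0} \| \MX - \MM \|_\frob^2$ and then reduce it, via the spectral decomposition of $\MM$, to a separable scalar problem whose solution is manifestly $\max(0,\lambda_i)$ entry-wise. First I would invoke the unitary invariance of the Frobenius norm: writing $\MM = \MU\MLambda\MU\tran$ with $\MLambda = \diag(\lambda_1,\ldots,\lambda_n)$ and making the change of variables $\MY = \MU\tran \MX \MU$, the objective becomes $\| \MY - \MLambda \|_\frob^2$, and the constraint $\MX \succeq 0$ is equivalent to $\MY \succeq 0$ because conjugation by an orthogonal matrix preserves the PSD cone. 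Hence it suffices to prove that the unique minimizer is $\MY^\star = \diag(\max(0,\lambda_1),\ldots,\max(0,\lambda_n))$, from which the claim follows by mapping back $\MX^\star = \MU \MY^\star \MU\tran$.

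Next I would split the Frobenius cost into its diagonal and off-diagonal contributions,
\begin{equation*}
\| \MY - \MLambda \|_\frob^2 \;=\; \sum_{i=1}^n (Y_{ii} - \lambda_i)^2 \;+\; \sum_{i \neq j} Y_{ij}^2 \;\geq\; \sum_{i=1}^n (Y_{ii} - \lambda_i)^2 ,
\end{equation*}
with equality only when $\MY$ is diagonal. The key step is to lower-bound each diagonal term using the PSD constraint: any $\MY \succeq 0$ must satisfy $Y_{ii} \geq 0$, so $(Y_{ii} - \lambda_i)^2 \geq \min_{y \geq 0}(y - \lambda_i)^2 = (\max(0,\lambda_i) - \lambda_i)^2$, with the minimum attained at $y = \max(0,\lambda_i)$. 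Summing across $i$ gives a lower bound on $\| \MY - \MLambda \|_\frob^2$ that depends only on $\MM$.

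Finally, I would verify that the candidate $\MY^\star = \diag(\max(0,\lambda_1),\ldots,\max(0,\lambda_n))$ attains this lower bound and is feasible: it is diagonal (so the off-diagonal sum vanishes), its diagonal entries are nonnegative (so $\MY^\star \succeq 0$), and each diagonal term achieves its scalar minimum. Uniqueness follows because the off-diagonal inequality is strict unless $\MY$ is diagonal, and for diagonal PSD $\MY$ the scalar minimum $y = \max(0,\lambda_i)$ is unique. Transforming back yields $\Pi_{\calS^n_+}(\MM) = \MU\diag(\max(0,\lambda_i))\MU\tran$, as claimed. There is no serious obstacle here; the only point requiring a little care is the reduction to the diagonal scalar problem, which is handled cleanly by the off-diagonal inequality combined with the entry-wise PSD bound $Y_{ii} \geq 0$.
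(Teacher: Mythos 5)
Your proof is correct. Note that the paper does not actually prove this lemma: it is stated as a known result and attributed to Higham~\cite{Higham88LA-nearestSPD}, so there is no in-paper argument to compare against; what you have written is the standard self-contained derivation. The chain of steps is sound: the projection is the minimizer of $\|\MX-\MM\|_\frob^2$ over $\MX\succeq 0$, orthogonal conjugation $\MY=\MU\tran\MX\MU$ preserves both the Frobenius norm and the PSD cone, the split $\|\MY-\MLambda\|_\frob^2=\sum_i(Y_{ii}-\lambda_i)^2+\sum_{i\neq j}Y_{ij}^2$ together with the fact that diagonal entries of a PSD matrix are nonnegative yields the lower bound $\sum_i\min(0,\lambda_i)^2$, and the diagonal candidate attains it. Your uniqueness argument also closes correctly: any minimizer must have zero off-diagonal part (otherwise the bound is exceeded), and for a diagonal PSD matrix the scalar minimizer $\max(0,\lambda_i)$ is unique in each coordinate; alternatively, uniqueness is immediate from the fact that projection onto a closed convex set in the Frobenius inner-product space is unique. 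The only cosmetic caveat is that when $\MM$ has repeated eigenvalues the factor $\MU$ is not unique, but the resulting product $\MU\diag{\max(0,\lambda_1),\dots,\max(0,\lambda_n)}\MU\tran$ is the same for every choice, which your uniqueness step implicitly guarantees.
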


Now let us focus on the affine projection onto $\affineSub$.
The expression of the projection $\Pi_{\affineSub}$ is given in Proposition~\ref{prop:proj2barcalM} below. Before stating the result, let us introduce some notation. For two $4\times4$ matrices $\MX \in \calS^{4}$ and $\MY \in \ssym^4$, we partition $\MX$ and $\MY$ as:
\bea
\scriptsize \hspace{-2mm}
\MX = \bmat{cc}
\MX^m \in \calS^{3} & \MX^v \in \Real{3} \\
(\MX^v)\tran & \MX^s \in \Real{}
\emat,
\MY = \bmat{cc}
\MY^m \in \ssym^3 & \MY^v \in \Real{3} \\
-(\MY^v)\tran & \MY^s = 0
\emat,
\eea
where the superscripts $m,v,s$ denotes the top-left $3\times 3$ matrix part, the top-right $3 \times 1$ vector part and the bottom-right scalar part, respectively. For a matrix $\MM \in \calS^{4(K+1)}$, we partition it into $(K+1)^2$ blocks of size $4\times 4$, and we create a set of $L \doteq \frac{K(K+1)}{2}$ ordered indices (left to right, top to bottom) that enumerate the upper-triangular off-diagonal blocks:
\bea \label{eq:setUupperidx}
\hspace{-3mm}\myBlocks = \{(0,1),\dots,(0,K),(1,2),\dots,(1,K),\dots,(K-1,K)\},
\eea
where each $[\MM]_{\myBlocks(l)},l=1,\dots,L,$ takes a $4\times 4$ off-diagonal block from $\MM$. Then, we define a linear map $\calC: \calS^{4(K+1)} \rightarrow \Real{L \times 3}$ that assembles the top-right $3 \times 1$ vector part of each upper-triangular off-diagonal block into a matrix:
\bea \label{eq:defLinearMapC}
\calC(\MM) = \left[ [\MM]^v_{\myBlocks(1)},\dots,[\MM]^v_{\myBlocks(L)} \right]\tran.
\eea
Given a solution $\hatvxx = [\hatvq\tran,\hattheta_1\hatvq\tran,\dots,\hattheta_K\hatvq\tran]\tran$ to problem~\eqref{eq:QCQPRedundant}, denote $\hatMR$ to be the unique rotation matrix corresponding to $\hatvq$, and define the following residual vectors:
\bea
\hatxi_k = \hatMR\tran (\barvb_k - \hatMR \barva_k), k = 1,\dots,K,
\eea
for each pair of \TIMs $(\TIMa_k,\TIMb_k)$ (scale $\TIMa_k$ by: $\TIMa_k \leftarrow \hats\TIMa_k$).
\begin{proposition}[Projection onto $\affineSub$]\label{prop:proj2barcalM}
Given any matrix $\MM \in \calS^{4(K+1)}$, its projection onto $\affineSub$, denoted $\barMM \doteq \Pi_{\affineSub}(\MM)$, is: % equivalent to:
\bea
\barMM = \Pi_{\barcalH}(\MH) + \barMQ - \hatmu\MJ,
\eea
where $\MH \doteq \MM - \barMQ + \hatmu\MJ$ and $\barMH \doteq \Pi_{\barcalH}(\MH)$ can be computed as follows:
\begin{enumerate}
	\item Project the matrix part of each diagonal block:
	\bea \label{eq:proj2barcalH_diag_matrix}
	[\barMH]_{kk}^m = [\MH]_{kk}^m -  \frac{\sum_{k=0}^K [\MH]_{kk}^m }{K+1}, \forall k = 0,\dots,K.
	\eea

	\item Project the scalar part of each diagonal block:
	\bea \label{eq:proj2barcalH_diag_scalar}
	\hspace{-6mm}[\barMH]_{kk}^s = \begin{cases}
    -(\quarter \hattheta_k + \half) \| \hatxi_k \|^2 + (\quarter \hattheta_k - \half)\barcsq & k \neq 0 \\
    \sum_{k=1}^K (\quarter \hattheta_k + \half ) \| \hatxi_k \|^2 - (\quarter\hattheta_k - \half)\barcsq & k=0.
	\end{cases}
	\eea

	\item Project the matrix part of each off-diagonal block:
	\bea \label{eq:proj2barcalH_offdiag_matrix}
	[\barMH]_{\myBlocks(l)}^m = \frac{[\MH]_{\myBlocks(l)}^m - ([\MH]_{\myBlocks(l)}^m)\tran}{2}, \forall l=1,\dots,L.
	\eea

	\item Project the scalar part of each off-diagonal block:
	\bea \label{eq:proj2barcalH_offdiagscalar}
	[\barMH]_{\myBlocks(l)}^{s} = 0, \forall l=1,\dots,L.
	\eea

	\item \label{projstep:proj2barcalH_offdiagvector}Project the vector part of each off-diagonal block:
	\bea \label{eq:proj2barcalH_offdiagvector}
	\calC(\barMH) = \MP \cdot \calF(\MH),
	\eea
	where $\MP \in \calS^L$ and $\calF(\MH) \in \Real{L\times 3}$ are defined in {Appendix~\ref{sec:app-proof-prop-projectionAffineSpace}}.

	\item Project the vector part of each diagonal block using the results of step~\ref{projstep:proj2barcalH_offdiagvector}:
	\bea \label{eq:proj2barcalH_diagvector}
	& [\barMH]_{kk}^v = -\sum_{i=0,i\neq k}^K \hattheta_k\hattheta_i [\barMH]_{ki}^v + \vphi_k, \\
	& \hspace{-6mm} \vphi_k = \begin{cases}
	-\left( \half \hattheta_k + 1\right)\vSkew{\hatxi_k}\TIMa_k & k=1,\dots,K\\
	\sum_{k=1}^K \left( \half \hattheta_k + 1\right)\vSkew{\hatxi_k}\TIMa_k  & k = 0
	\end{cases}.
	\eea
\end{enumerate}
\end{proposition}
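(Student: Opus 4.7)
The plan is to first reduce the projection onto $\affineSub$ to a projection onto the shifted affine space $\barcalH$ via a change of variables, and then exploit the block structure of $\barcalH$ to decompose the problem into independent per-component projections that can each be solved in closed form.

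\textbf{Step 1 (reduction to $\barcalH$).} I would write $\barMH' = \barMM' - \barMQ + \hatmu\MJ$ for any candidate $\barMM'$; since the Frobenius norm is invariant under additive translation, $\|\barMM' - \MM\|_\frob = \|\barMH' - \MH\|_\frob$ with $\MH = \MM - \barMQ + \hatmu\MJ$. The definition~\eqref{eq:defbarcalM} of $\affineSub$ then translates one-to-one into the affine constraints on $\barMH'$: (i) $\barMH' \in \calH$ (so every diagonal block is symmetric, every off-diagonal block is skew, and $\sum_{k} [\barMH']_{kk} = \MZero$), and (ii) $\barMH'\barvxx = -(\barMQ-\hatmu\MJ)\barvxx$. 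Denote the intersection of these constraints by $\barcalH$; then the Euclidean projection satisfies $\Pi_{\affineSub}(\MM) = \Pi_{\barcalH}(\MH) + \barMQ - \hatmu\MJ$, giving the outer formula in the proposition.

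\textbf{Step 2 (block decomposition).} Using the partition of every $4\times 4$ block into $(\cdot)^m\in\calS^3$ (or $\ssym^3$), $(\cdot)^v\in\Real{3}$, and $(\cdot)^s\in\Real{}$, the Frobenius inner product decouples across these three parts, so constraints that touch disjoint parts can be projected independently. Next I would classify each constraint by which components it couples:
\begin{itemize}
\item The skew-symmetry of each off-diagonal block forces $[\barMH]^{m}_{\myBlocks(l)}\in\ssym^3$ and $[\barMH]^s_{\myBlocks(l)}=0$, independently of every other block; projecting a symmetric input onto this subspace yields the antisymmetric-part formula~\eqref{eq:proj2barcalH_offdiag_matrix} and the zero in~\eqref{eq:proj2barcalH_offdiagscalar}.
\item Since $\ve=[0\ 0\ 0\ 1]\tran$, the last row/column of each block is what interacts with $\barvxx$. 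The matrix parts $[\barMH]_{kk}^m$ of diagonal blocks participate only in $\sum_k [\barMH]_{kk}^m=\MZero$, giving the mean-subtraction formula~\eqref{eq:proj2barcalH_diag_matrix} by standard orthogonal projection onto a zero-sum subspace.
\item The scalar parts $[\barMH]_{kk}^s$ appear in both $\sum_k [\barMH]_{kk}^s=0$ and the last entry of the $k$-th block of $\barMH\barvxx$; since off-diagonal $[\barMH]_{ki}^s=0$, this last entry equals $\hattheta_k[\barMH]_{kk}^s$, so each $[\barMH]_{kk}^s$ is fully determined up to the sum-zero constraint. Computing $-[(\barMQ-\hatmu\MJ)\barvxx]_k^s$ from the explicit form of $\barMQ$ and $\MJ$ (using $\hatOmega_q\tran\hatvq=\ve$ and the block expressions $\MQ_{kk},\MQ_{0k}$ from Appendix~\ref{sec:proof:prop:qcqp}) produces the $\|\hatxi_k\|^2$ and $\barcsq$ terms in~\eqref{eq:proj2barcalH_diag_scalar}; enforcing $\sum_k [\barMH]_{kk}^s=0$ gives the special case $k=0$.
\end{itemize}

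\textbf{Step 3 (vector parts and the matrix $\MP$).} The only remaining, and the genuinely coupled, degrees of freedom are the vector parts $[\barMH]_{ki}^v$. They are tied together by the top-3 rows of the constraints $\sum_i \hattheta_i [\barMH]_{ki}^v = \text{RHS}_k$ (from $\barMH\barvxx$) and $\sum_k [\barMH]_{kk}^v = \MZero$ (from $\calH$), where RHS$_k$ is computed from $(\barMQ-\hatmu\MJ)\barvxx$ and, after simplification, yields the $\vSkew{\hatxi_k}\TIMa_k$ terms in the definition of $\vphi_k$. I would first eliminate the diagonal vector parts in favor of the off-diagonal ones via the null-vector equation, reducing the system to a least-squares projection purely on $\calC(\barMH)\in\Real{L\times 3}$; the KKT/normal equations of this reduced system are linear and decouple across the three spatial coordinates, hence have the form $\calC(\barMH)=\MP\cdot \calF(\MH)$ for a symmetric operator $\MP$ acting on the edge-indexed vector parts. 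The matrix $\MP$ and the right-hand-side map $\calF(\MH)$ (specified in Appendix~\ref{sec:app-proof-prop-projectionAffineSpace}) are exactly the inverse and data of these normal equations. Once $\calC(\barMH)$ is computed, back-substitution into the null-vector equation recovers $[\barMH]_{kk}^v$ with the correction term $\vphi_k$, yielding~\eqref{eq:proj2barcalH_diagvector}.

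\textbf{Main obstacle.} The routine work is the per-component KKT calculations; the genuinely delicate step is Step~3, namely deriving the explicit form of $\MP$ and $\calF(\MH)$ and verifying they capture the joint action of the skew-symmetry, sum-zero, and null-vector constraints on the vector parts simultaneously. This requires a careful bookkeeping of how each off-diagonal edge $(i,k)$ in the set $\myBlocks$ couples to its two incident diagonal blocks through $\hattheta_k$ factors, and checking that the resulting Gram-like system is invertible. The remaining components (matrix-part diagonal, matrix-part off-diagonal, scalar parts) are orthogonal to each other in the Frobenius sense and are handled by elementary projection formulas.
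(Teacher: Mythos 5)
Your plan follows essentially the same route as the paper's proof: translate by $\barMQ - \hatmu\MJ$ to reduce to a projection onto $\barcalH$, decouple the Frobenius objective across the matrix/vector/scalar parts of the diagonal and off-diagonal blocks (mean subtraction, fixed scalars, skew projection), and handle the only coupled unknowns --- the vector parts --- by eliminating the diagonal ones through the null-vector equations and solving the resulting normal equations $\MA\,\calC(\barMH) = \calF(\MH)$ with the closed-form inverse $\MP = \MA\inv$. The one bookkeeping item you leave implicit, and which the paper verifies explicitly, is that the $0$-th block-row of $\barMM\barvxx = \MZero$ is redundant (using the stationarity identity $\sum_{\hattheta_k = +1}\vSkew{\hatxi_k}\TIMa_k = \MZero$ and the definition of $\hatmu$), so that after elimination no extra linear constraint survives on $\calC(\barMH)$ and the reduced vector-part problem is genuinely unconstrained.
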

A complete proof of Proposition~\ref{prop:proj2barcalM} is algebraically involved and is given in {Appendix~\ref{sec:app-proof-prop-projectionAffineSpace}}.

 % checked
%!TEX root = ../main.tex

\subsection{Proof of Theorem~\ref{thm:certifiableRegNoiseless1}: Estimation Contract with Noiseless Inliers and Random Outliers}
\label{sec:proof:certifiableRegNoiseless1}

We prove Theorem~\ref{thm:certifiableRegNoiseless1} in two steps. 
First we show that under assumptions (iii)-(iv) of the theorem, the inliers found by \name match the maximum consensus solution. Then, we prove that under assumptions (i)-(ii) the maximum consensus solution is unique and recovers the ground truth, 
from which the claims of the theorem follow.

\subsubsection{\TLS $\equiv$ \maxCon}
Let the number of inliers in the maximum consensus be $\nrIn^{\maxCon}$, since the inliers are noise-free, the sum of the squared residual errors in the maximum consensus set, $r^{\maxCon}_{\mathrm{in}}$, must be zero. By Lemma~\ref{thm:TLSandMC}, if \TLS selects a different consensus set than the maximum consensus set, then the consensus set that \TLS selects must have size $\nrIn^{\TLS}$ greater than $\nrIn^{\maxCon} - r^{\maxCon}_{\mathrm{in}}/\barcsq$,~\ie~$\nrIn^{\TLS}> \nrIn^{\maxCon} - r^{\maxCon}_{\mathrm{in}}/\barcsq = \nrIn^{\maxCon}$, contradicting the fact that $\nrIn^{\maxCon}$ is the size of the maximum consensus set. Therefore, \TLS $\equiv$ \maxCon for each subproblem.

\subsubsection{Exact recovery of maximum consensus} We now prove that maximum consensus recovers the ground truth under the assumptions of the theorem. Let us start with the scale subproblem.  
The inliers are noiseless and distinct (assumption (i) in the theorem), so each pair of inliers will produce a \TRIM $s_k = \sgt$. 
Since we have at least 3 inliers, there are at least 3 such \TRIMs. Since the outliers are in \emph{generic position} 
(assumption (ii) in the theorem), 
the event that more than 3 \TRIMs formed by the \emph{outliers} will have the same scale estimate $\bar{s}$ that is different from $\sgt$ happens with probability zero. 
% Similarly, the event that a \TRIM will match the ground truth scale $\sgt$ also happens with probability zero.
Therefore, the maximum consensus solution coincides with the \gtadj scale. 
The same logic can be repeated for the rotation and translation subproblems, leading to identical conclusions 
 under the assumption that the rotation subproblem finds a certifiably optimal \TLS estimate (condition (iv)).  
% Recall the inliers belong to the largest set  
%  $\{i \in \{1,\ldots,N\} : \vb_i = s \MR \va_i + \vt, \forall (s, \MR, \vt)\}$ 
%  (assumption (ii) of the theorem), since  

\subsubsection{Exact recovery of \name} Since under the assumptions of the theorem, \name matches the maximum consensus solution in each subproblem, and maximum consensus recovers the ground truth in each subproblem with probability 1, \name almost surely recovers the \gtadj transformation, \ie $\hats=\sgt, \hatMR=\MRgt, \hatvt=\vtgt$, proving the theorem.
%!TEX root = ../main.tex

\subsection{Proof of Theorem~\ref{thm:certifiableRegNoiseless2}: Estimation Contract with Noiseless Inliers and Adversarial Outliers}
\label{sec:proof:certifiableRegNoiseless2}

As for Theorem~\ref{thm:certifiableRegNoiseless1}, 
we  prove Theorem~\ref{thm:certifiableRegNoiseless2} in two steps. 
% First we show that under assumptions (ii)-(iii) of the theorem, the inliers found by \name match the maximum consensus solution. Then, we prove that under assumptions (i) the maximum consensus solution recovers the ground truth.
% from which the claims of the theorem follow.

\subsubsection{\TLS $\equiv$ \maxCon}
The proof of this part is identical to the corresponding proof in Theorem~\ref{thm:certifiableRegNoiseless1}.
 
%\subsubsection{Exact recovery of maximum consensus} 
\subsubsection{Exact recovery of maximum consensus}
\label{subsec:adversaryMaxCon}
Different than the case with random outliers, we need condition (i), a stronger assumption on the number of inliers to guarantee recovery of the ground truth. Let the ground-truth consensus set be $\calC\gt$ whose size is at least $\nrIn$ (the outliers can accidentally become inliers) in condition (i). Suppose an adversary wants to trick \maxCon to return a solution $(\bar{s},\bar{\MR},\bar{\vt})$ that is different from the ground truth, s/he must have a consensus set, $\bar{\calC}$, of size $\bar{N}_{\mathrm{in}}$ that is no smaller than $\nrIn$, in which the measurements are consistent \wrt $(\bar{s},\bar{\MR},\bar{\vt})$. Now because the total number of measurement is $\nrIn + \nrOut$, and both $\calC\gt$ and $\bar{\calC}$ have size at least $\nrIn$, this means $\calC\gt$ and $\bar{\calC}$ must share at least $2\nrIn - (\nrIn + \nrOut) = \nrIn - \nrOut$ common correspondences. From condition (i), we know that $\nrIn - \nrOut \geq 3$. However, if $\calC\gt$ and $\bar{\calC}$ share at least 3 common noise-free correspondences, then $(\bar{s},\bar{\MR},\bar{\vt}) \equiv (\sgt,\MRgt,\vtgt)$ must be true. Therefore, we conclude that there is no chance for the adversary to trick \maxCon, and \maxCon must recover the \gtadj transformation when condition (i) holds.

\subsubsection{Exact recovery of \name}
Condition (iv) guarantees that  
\name returned a consensus set that satisfies condition (i).
However, according to Section~\ref{subsec:adversaryMaxCon} only the ground truth can produce an estimate that satisfies condition (i), hence \name's solution must match the ground truth, proving the theorem. 
Note that recovering the ground truth does not guarantee that all outliers are rejected, since in the adversarial regime outliers can 
be indistinguishable from inliers (\ie an outlier $i$ can satisfy $\vb_i = \sgt \MRgt \va_i + \vtgt$), a fact that, however, does not have negative repercussions on the accuracy of \name.
% We now prove that maximum consensus recovers the ground truth under the assumptions of the theorem. Let us start with the scale subproblem.  
% Calling $\Ngt$ the number of inliers among the given correspondences, we have $\Ngt > \lceil 0.5 N \rceil$ (assumption (i) in the theorem), 
%  which implies that $\Kgt > \lceil 0.5 K \rceil$ where $\Kgt$ is the number of invariant measurements which are built from 2 inliers (recall $K = N^2$ and $\Kgt = (\Ngt)^2$). 
%  Since $\Kgt > \lceil 0.5 K \rceil$, the inliers will have a consensus set of at least $\Kgt$, 
%  while any choice of outliers cannot create a consensus set larger than $\lceil 0.5 K \rceil$. 
% Therefore, the maximum consensus solution coincides with the ground truth scale. 
% The same logic can be repeated for the rotation and translation subproblems, leading to identical conclusions 
%  under the assumption that the rotation subproblem finds a certifiably optimal \TLS estimate (condition (iii) in the theorem).  
% \subsubsection{Exact recovery of \name} Since, under the assumptions of the theorem, \name matches the maximum consensus solution in each subproblem, and maximum consensus recovers the ground truth in each subproblem, \name   recovers the ground truth transformation, proving the theorem.

%!TEX root = ../main.tex

\subsection{Proof of Theorem~\ref{thm:certifiableRegNoisy}: 
Estimation Contract with Noisy Inliers and
Adversarial Outliers}
\label{sec:proof:certifiableRegNoisy}

We prove Theorem~\ref{thm:certifiableRegNoisy} in two steps. 
First we show that under assumptions (ii) and (iii) of the theorem, the inliers produced by \name match the maximum consensus set and contain the set of true inliers. Then, we prove that under assumption (i) and (iv) the maximum consensus solution is also close to the  ground truth, in the sense of inequalities~\eqref{eq:bounds_scale}-\eqref{eq:bounds_tran}. 

\subsubsection{\TLS $\equiv$ \maxCon and inliers are preserved}
We prove that under assumptions (ii) and (iii), the inliers produced by \name match the maximum consensus set and contain the set of true inliers. 
In each subproblem, Lemma~\ref{thm:TLSandMC} and assumption (iii) guarantee that \name computes a maximum consensus set. Assumption (iii) implies the maximum consensus set contains the inliers (and hence \name preserves the inliers).
% Therefore, each subproblem in \name preserves the inliers.

\subsubsection{Noisy recovery of maximum consensus} We now prove that maximum consensus produces a solution ``close'' to the ground truth under the assumptions of the theorem. Let us start with the scale subproblem. 
Assumption (ii) in the theorem ensures that the maximum consensus set contains all inliers in each subproblem.
Let us now derive the bounds on the resulting estimate, starting from the scale estimation.

%% ============================================================================================
\myparagraph{Scale error bound}
Since the scale estimation selects all the inliers, plus potentially some outliers, it holds for some of the 
selected measurements $i,j$:
\bea
|\sgt - \hats | = |\sgt - \hats + \eps^s_{k} - \eps^s_{k} | \\
 \overset{(a)}{\leq}  |\sgt + \eps^s_{k} - \hats| + |\eps^s_{k} | \\
 \overset{(b)}{\leq}  | s_{k} - \hats |  + \alpha_{ij}   \overset{(c)}{\leq} 2 \alpha_{ij} 
\eea
where in (a) we used the triangle inequality, in (b) we noticed that for inliers $i,j$, it holds 
$|\eps^s_{k}| \leq \alpha_{ij}$ and $\sgt + \eps^s_{k} = s_k$, and in (c) we noticed that for $s_k$ to be 
considered an inlier by \TLS, $| s_{k} - \hats | \leq \alpha_{ij}$.
Since the inequality has to hold for all the true inliers, but
we do not know which ones of the measurements selected by \TLS are true inliers, we substitute the bound with:
\bea
|\sgt - \hats |  \leq 2 \max_{ij} \alpha_{ij}
\eea
which proves the first bound in~\eqref{eq:bounds_scale}. %, with $\eta_s \doteq 2 \max_{ij} \alpha_{ij}$.

%% ============================================================================================
\myparagraph{Rotation error bound}
Let us now move to rotation estimation. 
Since the rotation subproblem also selects all the inliers, plus potentially some outliers, it holds for some of the selected measurements $i,j$:
\bea
\| \sgt \MRgt \TIMa_{ij} - \hats \hatMR \TIMa_{ij} \|  = 
\| \sgt \MRgt \TIMa_{ij} - \hats \hatMR \TIMa_{ij} + \veps_{ij} -\veps_{ij}  \| \\
 \overset{(a)}{=}
\| \TIMb_{ij}- \hats \hatMR \TIMa_{ij}  -\veps_{ij}  \| \\
 \overset{(b)}{\leq}
\| \TIMb_{ij} - \hats \hatMR \TIMa_{ij} \|  + \| \veps_{ij}  \|
 \overset{(c)}{\leq} 2 \TIMNoiseBound_{ij}
 \label{eq:appR0}
\eea
where in (a) we noticed that for inliers $i,j$, $\TIMb_{ij} = \sgt \MRgt \TIMa_{ij} + \veps_{ij}$, 
in (b) we used the triangle inequality, and in (c) we noticed 
that it holds 
$\|\veps_{ij}\| \leq \TIMNoiseBound_{ij}$ and that for $(i,j)$ to be 
considered an inlier by \TLS, $\| \TIMb_{ij}- \hats \hatMR \TIMa_{ij} \| \leq \TIMNoiseBound_{ij}$. 

Squaring the inequality and defining $\vu_{ij} \doteq \frac{\TIMa_{ij}}{\|\TIMa_{ij}\|}$: 
% the unit-norm vector corresponding to $\TIMa_{ij}$ (\ie $\TIMa_{ij} = \|\TIMa_{ij}\| \cdot \vu_{ij}$):
\bea
\| \sgt \MRgt \TIMa_{ij} - \hats \hatMR \TIMa_{ij} \|^2  \leq 4 \TIMNoiseBound_{ij}^2 \iff \\
\| \sgt \MRgt \vu_{ij} - \hats \hatMR \vu_{ij} \|^2  \leq 4 \frac{ \TIMNoiseBound_{ij}^2 } { \|\TIMa_{ij}\|^2 } = 4 \alpha_{ij}^2
\label{eq:appR1}
\eea

\noindent
Since we have at least 4 inliers $i,j,h,k$, whose invariant measurements include $\TIMa_{ij}, \TIMa_{ih}, \TIMa_{ik}$, 
and  each invariant measurement satisfies~\eqref{eq:appR1}, 
we can sum (member-wise) the 3 corresponding inequalities and obtain:
 \bea
 \| \sgt \MRgt \vu_{ij} - \hats \hatMR \vu_{ij} \|^2 + 
 \| \sgt \MRgt \vu_{ih} - \hats \hatMR \vu_{ih} \|^2 +  \\
  \| \sgt \MRgt \vu_{ik} - \hats \hatMR \vu_{ik} \|^2 \leq 12 \alpha_{ij}^2 \\
  \iff  
\| (\sgt \MRgt  - \hats \hatMR) \MU_{ijhk}\|_\frob^2  \leq 12 \alpha_{ij}^2
\label{eq:appR2}
\eea
where $\MU_{ijhk} = [\vu_{ij} \; \vu_{ih} \; \vu_{ik}]$ and matches the definition in the statement of the theorem.
Now we note that for any two matrices $\MA$ and $\MB$ it holds $\| \MA \MB \|_\frob^2 \; \geq \sigma_\min(\MB)^2 \|\MA\|_\frob^2$, which applied to~\eqref{eq:appR2} becomes:
\bea
% \| \sgt \MRgt \MU_{ijhk} - \hats \hatMR \MU_{ijhk}\|_\frob^2 = 
\| (\sgt \MRgt - \hats \hatMR) \MU_{ijhk}\|_\frob^2 \geq \\
\sigma_\min(\MU_{ijhk})^2 \| \sgt \MRgt - \hats \hatMR \|_\frob^2
\label{eq:appR3}
\eea
Chaining the inequality~\eqref{eq:appR3} back into~\eqref{eq:appR2}:
\bea
\sigma_\min(\MU_{ijhk})^2 \| \sgt \MRgt - \hats \hatMR \|_\frob^2  \leq 12 \alpha_{ij}^2 \iff \\
\| \sgt \MRgt - \hats \hatMR \|_\frob  \leq 2\sqrt{3} \frac{ \alpha_{ij} }{ \sigma_\min(\MU_{ijhk}) }
\label{eq:appR4}
\eea 
Since we do not know what are the true inliers, we have to take the worst case over all possible $i,j,h,k$ in~\eqref{eq:appR4}, yielding: 
\bea
\label{eq:rotBound}
\| \sgt \MRgt - \hats \hatMR \|_\frob \leq 2\sqrt{3}
 \frac{ \max_{ij} \alpha_{ij} }{ \min_{ijhk} \sigma_\min(\MU_{ijhk}) }
\eea
We finally observe that $\sigma_\min(\MU_{ijhk})$ is different from zero as long as the vectors $\vu_{ij}, \vu_{ih}, \vu_{ik}$ are linearly independent (\ie do not lie in the same plane), which is the case when the four inliers are not coplanar.
This proves the second bound in~\eqref{eq:bounds_rot}.

%% ============================================================================================
\myparagraph{Translation error bound}
Since the translation subproblem also selects all the inliers, plus potentially some outliers, it holds for each inlier point $i$:
\beal
\vb_i = \sgt \MRgt \va_i + \vtgt + \veps_i  &  \text{and} \\
\vb_i = \hats \hatMR \va_i + \hatvt + \vphi_i &  \text{with} \;\; \|\vphi_i\| \leq \sqrt{3}\beta_i 
\label{eq:appT1}
\eeal
where the first follows from the definition of inlier, and the second from the maximum error of a measurement considered inlier by \TLS (the factor $\sqrt{3}$ comes from the fact that translation is estimated component-wise, which means $\|\vphi_i\|^2 = [\vphi_i]_1^2 + [\vphi_i]_2^2 + [\vphi_i]_3^2 \leq 3\beta_i^2$ ).
Combining the equalities~\eqref{eq:appT1}:
 \beal
\hats \hatMR \va_i + \hatvt + \vphi_i = \sgt \MRgt \va_i + \vtgt + \veps_i   \\
\iff \vtgt - \hatvt =  (\hats \hatMR - \sgt \MRgt)\va_i + (\vphi_i-\veps_i)
\label{eq:appT2}
\eeal
Since we have at least 4 inliers $i,j,h,k$, each satisfying the equality~\eqref{eq:appT2}, we 
subtract (member-wise) the first equality (including inlier $i$) from the sum of the last 3 equalities (including points $j,h,k$):
 \bea
2(\vtgt - \hatvt) &=&  \sum_{l\in\{j,h,k\}}(\hats \hatMR - \sgt \MRgt)(\va_l - \va_i)  \\
&+& \sum_{l\in\{j,h,k\}} (\vphi_l-\veps_l - \vphi_i+\veps_i)
\label{eq:appT3}
\eea
Taking the norm of both members and using the triangle inequality:
\bea
2\|\vtgt - \hatvt\| &\leq&  \sum_{l\in\{j,h,k\}} \| (\hats \hatMR - \sgt \MRgt)(\va_l - \va_i) \|  \\
&+& \sum_{l\in\{j,h,k\}} (\|\vphi_l\|+\|\veps_l\| + \|\vphi_i\|+\|\veps_i\|)
\label{eq:appT4}
\eea
Noting that $\| (\hats \hatMR - \sgt \MRgt)(\va_l - \va_i) \| = \| (\hats \hatMR - \sgt \MRgt) \TIMa_{il} \|$
must satisfy the inequality~\eqref{eq:appR0}, and the last 4 terms in~\eqref{eq:appT4} have norm bounded by $\beta$ and $\sqrt{3}\beta$:
\bea
\hspace{-4mm} 2\|\vtgt - \hatvt\| &\leq&  \sum_{l\in\{j,h,k\}} 2 \TIMNoiseBound_{ij} + \sum_{l\in\{j,h,k\}} (2 + 2\sqrt{3}) \beta
\label{eq:appT5}
\eea
Since we assume all the points to have the same noise bound $\beta$, it holds $\TIMNoiseBound_{ij} = 2\beta$ and 
the right-hand-side of~\eqref{eq:appT5} becomes $(18+6\sqrt{3}) \beta$, from which it follows:
 \bea
\hspace{-2mm} 2\|\vtgt - \hatvt\| \leq  (18+6\sqrt{3}) \beta
\iff
\|\vtgt - \hatvt\| \leq  (9+3\sqrt{3}) \beta
\eea
which proves the last bound in~\eqref{eq:bounds_tran}.

\subsection{\revone{\TLS Translation Estimation: Component-wise vs. Full}}
\label{sec:app-compareTranslation}
Component-wise translation estimation is a simple approximation for the full translation estimate. The full \TLS translation estimate can be obtained using techniques described in~\cite{Liu19JCGS-minSumTruncatedConvexFunctions,Yang20arXiv-certifiablePerception}. In this section, we provide numerical experiments to show that the component-wise translation estimation is sufficiently close to the full \TLS translation estimate.

The \TLS estimator of the full translation vector $\vt \in \Real{3}$, given noisy and outlying observations $\vt_i,i=1,\dots,N$, is the globally optimal solution of the following non-convex optimization:
\bea \label{eq:fullTLStranslation}
\min_{\vt \in \Real{3}} \sum_{i=1}^N \min \left(\frac{\| \vt - \vt_i \|^2}{\beta_i^2}, \barcsq \right),
\eea
where $\vt_i =\vb_i - \hats\hatMR\va_i$ in the context of \name (\cf eq.~\eqref{eq:TLStranslation}). As presented in Section~\ref{sec:tranEstimation}, instead of estimating the full translation by solving problem~\eqref{eq:fullTLStranslation}, \name seeks to estimate the translation vector component-wise by leveraging the adaptive voting algorithm for solving the scalar \TLS problem. In this section, we show that component-wise translation estimation is sufficiently close to the full translation estimation.

In order to show the accuracy of component-wise translation estimation, we design two methods to solve problem~\eqref{eq:fullTLStranslation} globally. The two methods bear great similarity to the semidefinite relaxation~\cite{Yang19iccv-QUASAR} and graduated non-convexity (\gnc) algorithms~\cite{Yang20ral-GNC} presented for solving the rotation subproblem.
(i) (\tlssdp) We use Lasserre's hierarchy of moment relaxations~\cite{lasserre10book-momentsOpt} to relax the nonconvex problem into a convex SDP. By solving the SDP, we can obtain an estimate of the full translation, together with a global optimality certificate. This method has similar flavor as to the SDP relaxation we presented for estimating rotation, but is slightly more involved. The interested reader can refer to~\cite{Yang20arXiv-certifiablePerception} for more technical details. (i) (\tlsgnc) We use \GNC to solve the \tls translation estimation problem, where the non-minimal solver is simply the weighted average of the noisy measurements $\vt_i$'s. \gnc is very efficient but it offers no optimality guarantees. Comparison of these two methods (for solving the full translation) with adaptive voting (for solving the component-wise translation) are shown in Fig.~\ref{fig:robustTranslation} (using the same experimental setup as in Section~\ref{sec:separateSolver}). We can see that: (i) \tlssdp~performs the best. It is robust to $90\%$ outliers and returns the most accurate translation estimates. The estimates of \tlssdp~are indeed the \emph{globally optimal} solutions, because the relaxation is empirically always tight (duality gap is zero). (ii) As expected, solving the translation component-wise is an approximation, and it is slightly less accurate than solving the whole translation. However, the estimates are sufficiently close to the globally optimal solutions (when the outlier rate is below $90\%$). (iii) The \gnc method obtains the globally optimal solution when the outlier rate is below $90\%$. This is similar to the performance of the \gnc method for solving the rotation subproblem presented in \teaser.

%!TEX root = ../main.tex

\begin{figure}[t]
\begin{center}
	\begin{minipage}{\columnwidth}
	\begin{center}
			\includegraphics[width=0.8\columnwidth]{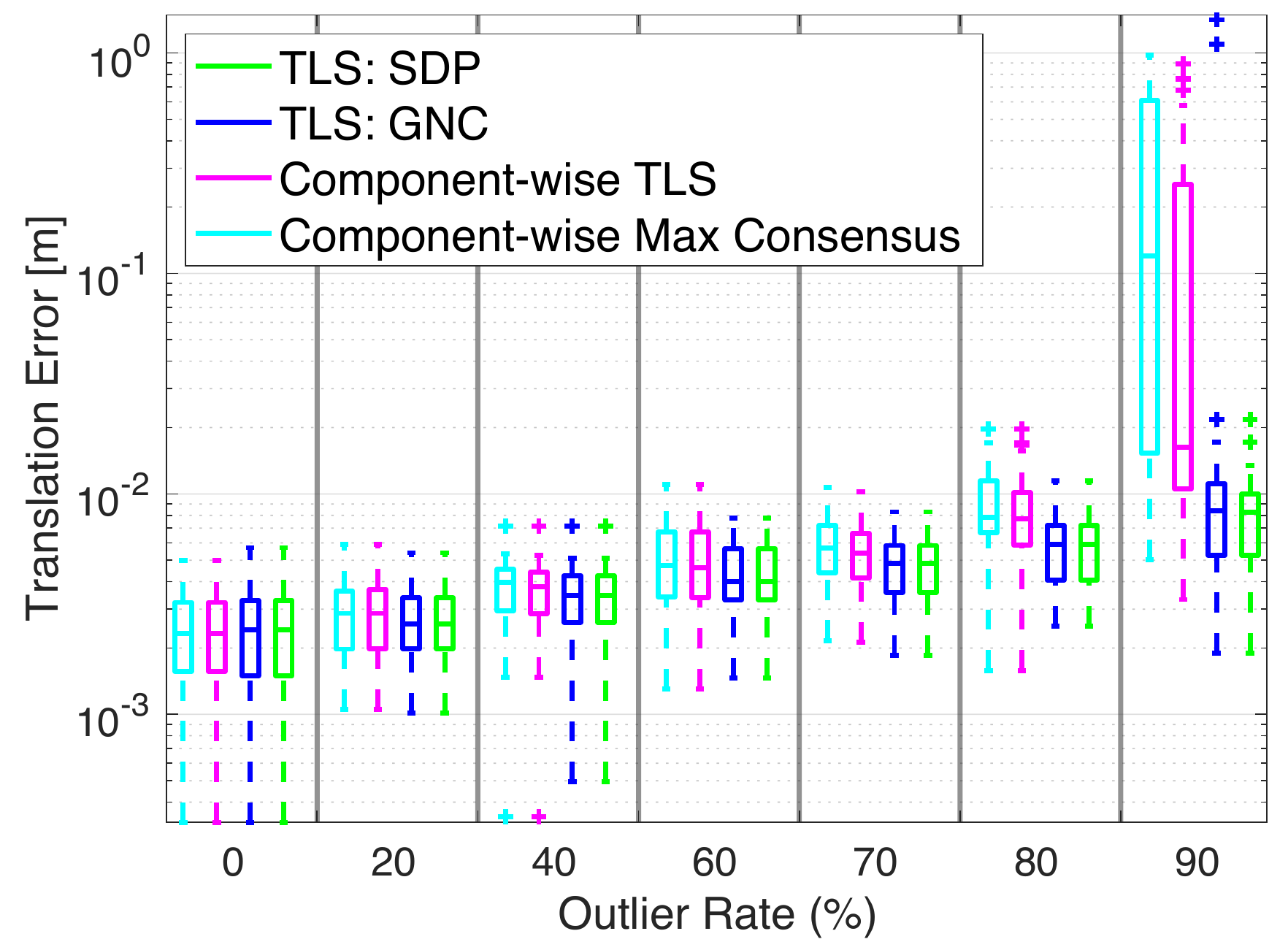}
			\vspace{-3mm}
	\end{center}
	\end{minipage} 
	\caption{Comparison of four robust translation estimation methods. 
	 \label{fig:robustTranslation}}
	 	% \vspace{-8mm} 
	\end{center}
\end{figure}
%!TEX root = ../main.tex

\subsection{SDP Relaxations: Quaternion vs. Rotation Matrix}
\label{sec:app-compare_quaternion_rotationMatrix}

%!TEX root = ../main.tex

\begin{figure}[t]
	\begin{center}
	\begin{minipage}{\columnwidth}
	\begin{tabular}{c}%
			\myhspace 
			\begin{minipage}{\columnwidth}%
			\centering%
			\includegraphics[width=1\columnwidth]{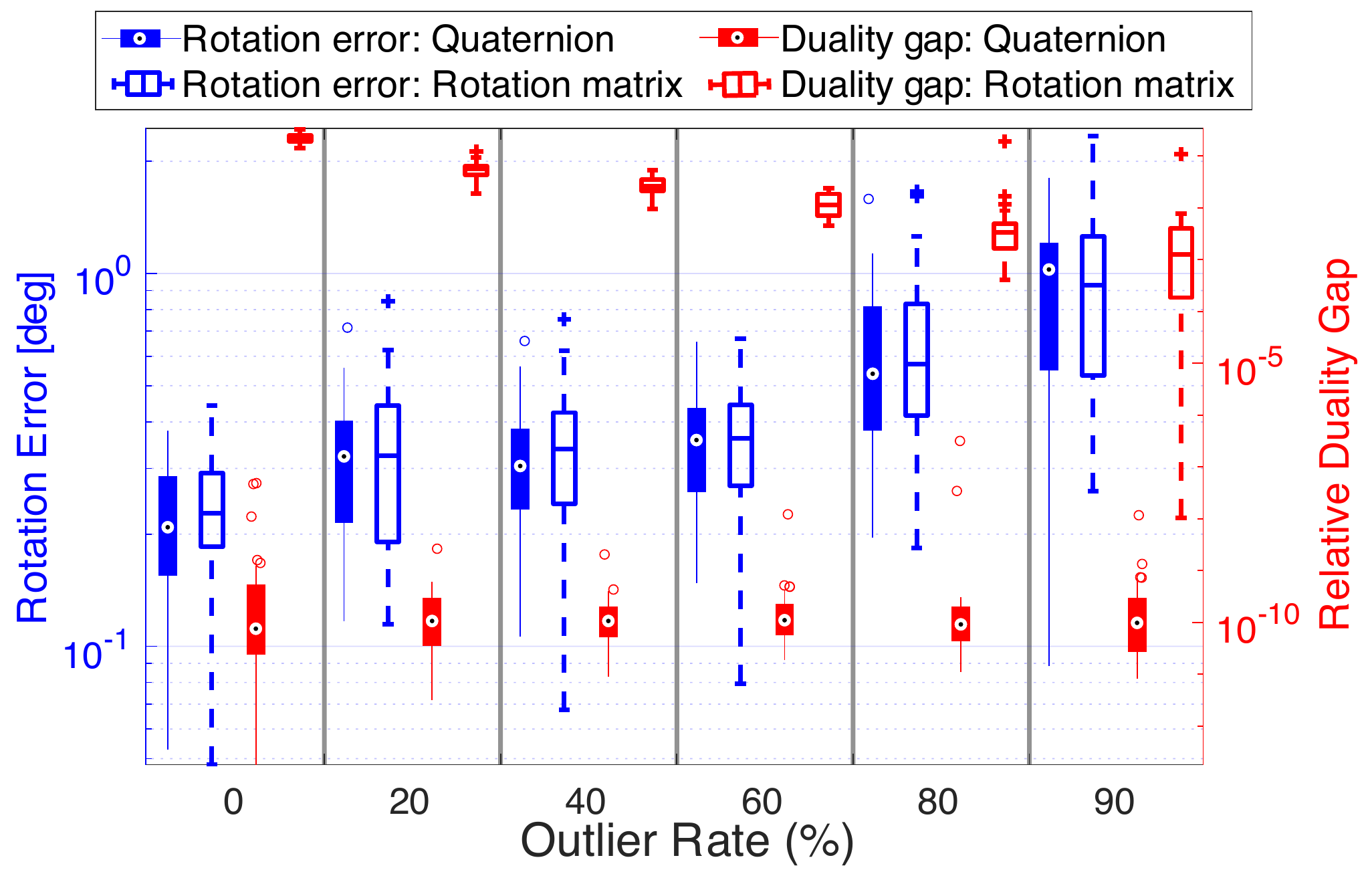} \\
			\vspace{-1mm}
			(a) Results on the \bunny dataset.
			\end{minipage}
              \\
            \myhspace
			\begin{minipage}{\columnwidth}%
			\centering%
			\includegraphics[width=1\columnwidth]{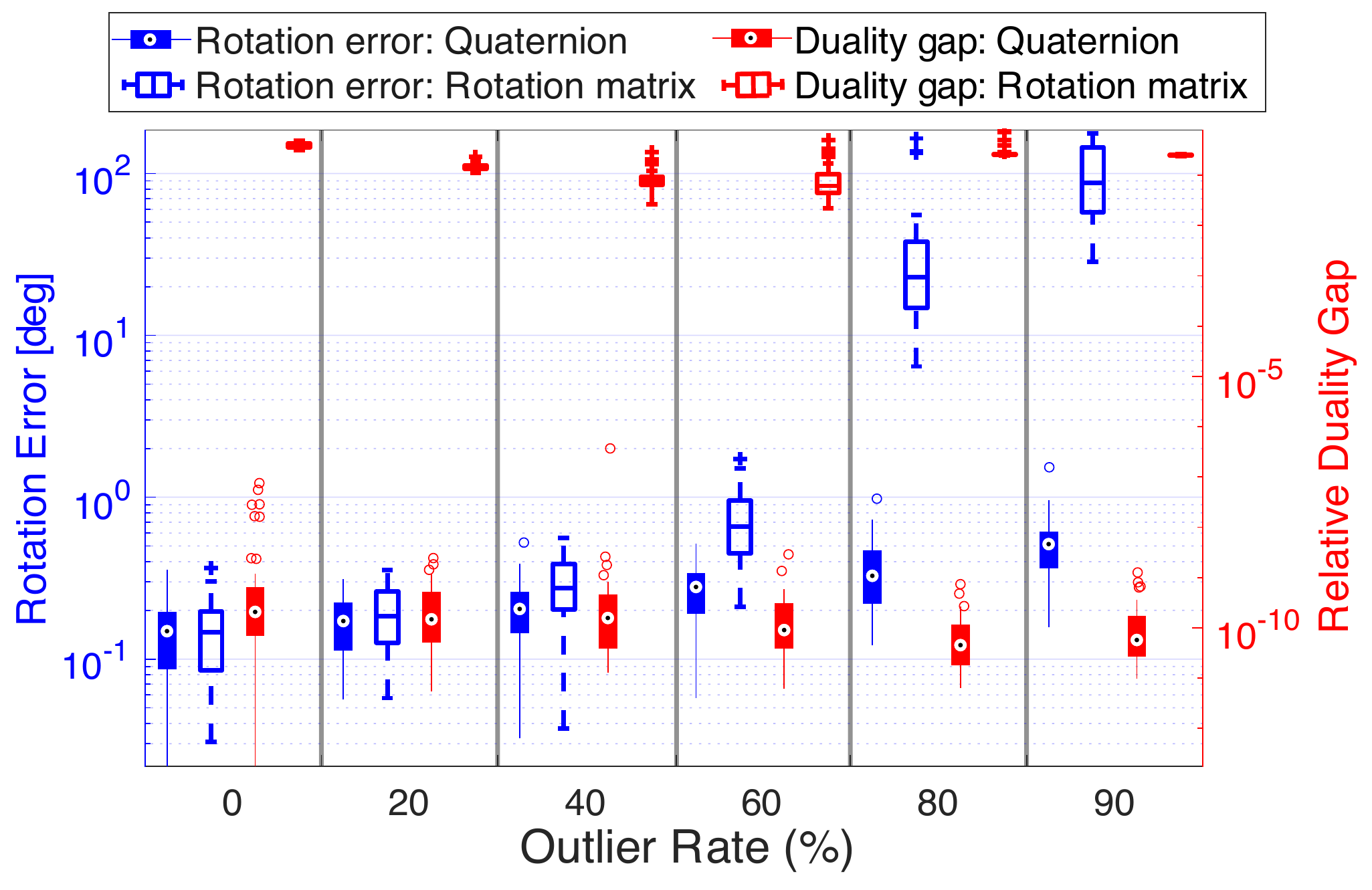} \\
			\vspace{-1mm}
			(b) Results on random simulated unit vectors.
			\end{minipage}
		\end{tabular}
	\end{minipage}
	\vspace{-2mm}
	\caption{Comparison of rotation errors (left-axis, blue) and relative duality gap (right-axis, red) between the quaternion-based SDP relaxation (filled boxplots) and the rotation-matrix-based SDP relaxation (empty boxplots) on (a) the \bunny dataset and (b) random simulated unit vectors.
	\label{fig:app-compare_quaternion_rotm}}
	 	\vspace{-2mm} 
	\end{center}
\end{figure}

In Section~\ref{sec:experiments}, we compare the rotation estimation error between the quaternion-based SDP relaxation (eq.~\eqref{eq:SDPrelax},~\cite{Yang19iccv-QUASAR}) and the rotation-matrix-based SDP relaxation (\cite{Yang19rss-teaser}) on the \bunny dataset. In this section, we also report the \emph{relative duality gap} for the two relaxations on the \bunny dataset. 
Fig.~\ref{fig:app-compare_quaternion_rotm}(a) shows the relative duality gap (boxes in red, scale on the right-hand y-axis) and the rotation error (boxes in blue, scale on the left-hand y-axis).
The figure shows that the the quaternion-based relaxation is always \emph{tighter} than the rotation-matrix-based relaxation, although both relaxations achieve similar performance in terms of rotation estimation errors.
 % We suspect the reason is that separating inliers and outliers is easier in the \bunny dataset due to the structure of the \bunny.

To further stress-test the two techniques, 
 we compare their performance in a more randomized test setting~\cite{Yang19iccv-QUASAR}: at each Monte Carlo run, we first randomly sample $K=40$ unit vectors from the 3D unit sphere, then apply a random rotation and isotropic Gaussian noise (standard deviation $\sigma=0.01$) to the $K$ unit vectors. Fig.~\ref{fig:app-compare_quaternion_rotm}(b) shows the results under this setting. We can see that the quaternion-based relaxation remains tight, while the rotation-matrix-based relaxation has a much larger relative duality gap. In addition, a tighter relaxation translates to more accurate rotation estimation: the quaternion-based relaxation dominates the rotation-matrix relaxation and obtains much more accurate rotation estimates, especially at $>80\%$ outlier rates.
%!TEX root = ../main.tex

\subsection{Stanford Dataset: Extra Results}
\label{sec:experiments_supp}

In all the following experiments, $\bar{c}^2=1$ for scale, rotation and translation estimation.

\subsubsection{Benchmark on Synthetic Datasets}
\label{sec:benchmark_supp}
As stated in \prettyref{sec:benchmark} in the main document, we have benchmarked \name against four state-of-the-art methods in point cloud registration (\FGR~\cite{Zhou16eccv-fastGlobalRegistration}, \GORE~\cite{Bustos18pami-GORE}, \ransaconek, \ransac) at increasing level of outliers, using four datasets from the Stanford 3D Scanning Repository~\cite{Curless96siggraph}: \bunny, \armadillo, \dragon and \buddha. Results for the \bunny are showed in the main document and here we show the results for the other three datasets in Fig.~\ref{fig:benchmarkSupp}. The figure confirms that \name dominates the other techniques is insensitive to extreme outlier rates.

%%%%%%%%%%%%%%%%%%%%%%%
%!TEX root = ../main.tex

\renewcommand{\mpwthree}{6cm}
\renewcommand{\myhspace}{\hspace{-3mm}}

\begin{figure*}[t]
	\begin{center}
	\begin{minipage}{\textwidth}
	\hspace{-0.2cm}
	\begin{tabular}{ccc}%
			\begin{minipage}{\mpwthree}%
			\centering%
			\includegraphics[width=\columnwidth]{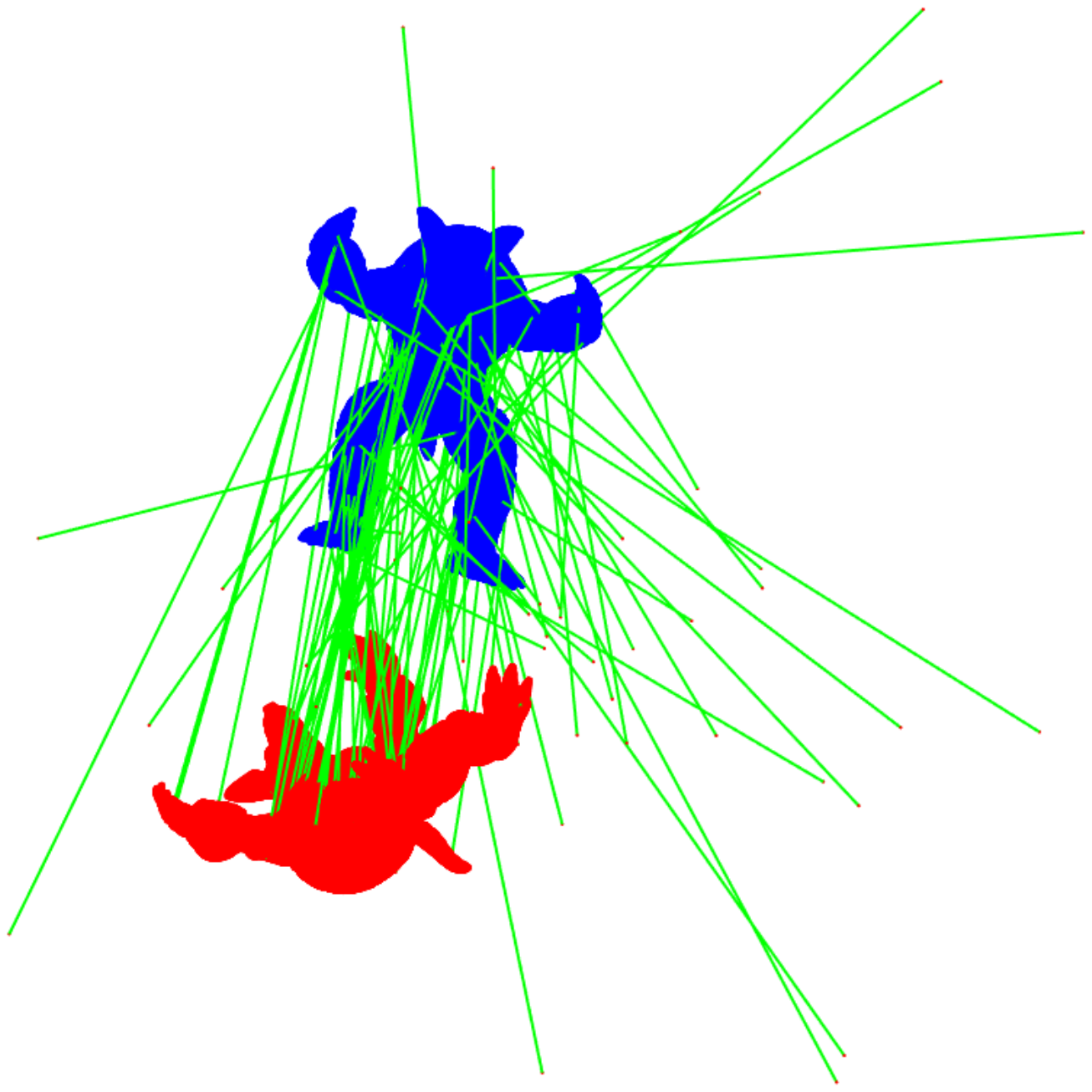} \\
			\end{minipage}
		& \myhspace
			\begin{minipage}{\mpwthree}%
			\centering%
			\includegraphics[width=\columnwidth]{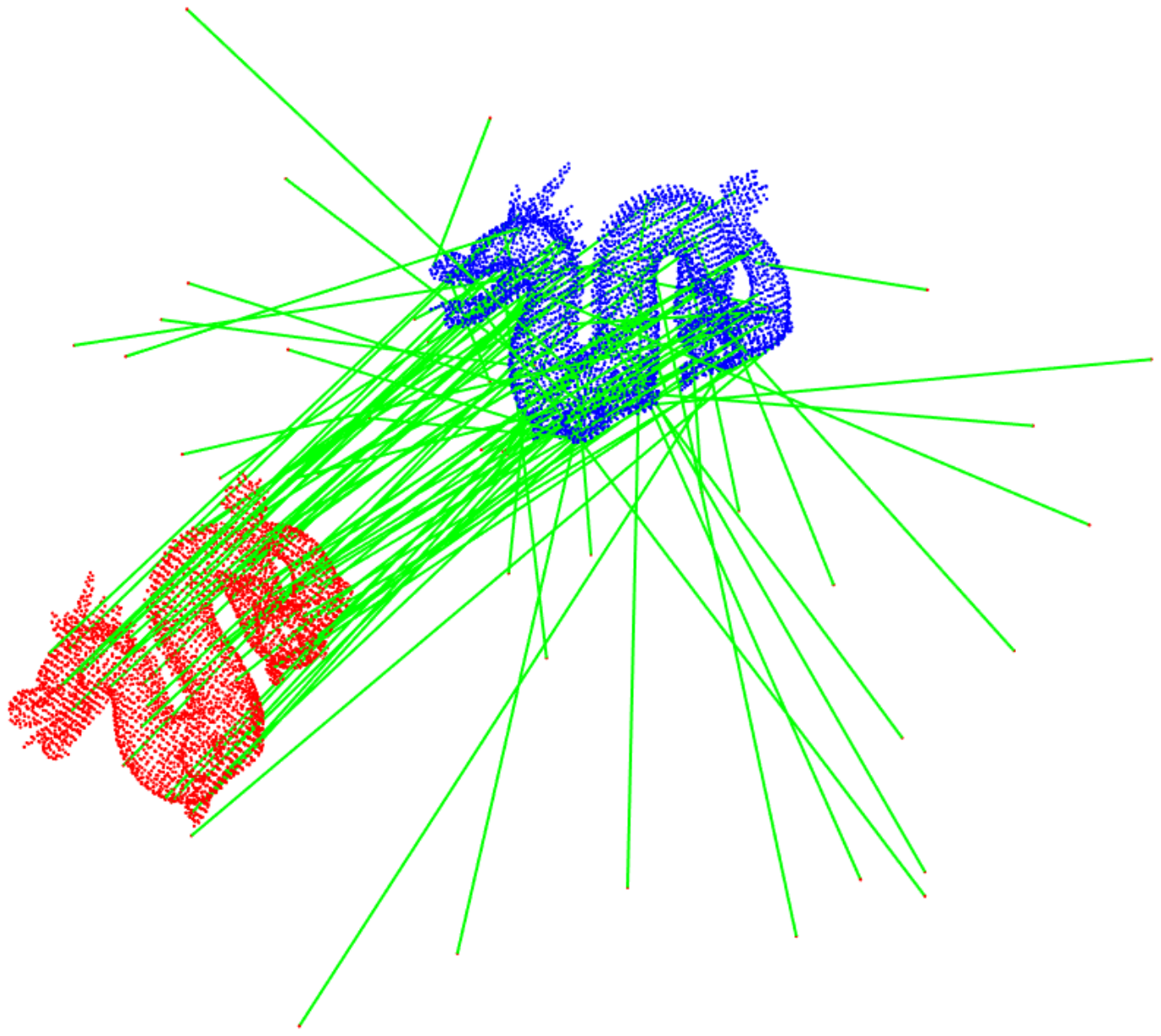} \\
			\end{minipage}
		& \myhspace
			\begin{minipage}{\mpwthree}%
			\centering%
			\includegraphics[width=\columnwidth]{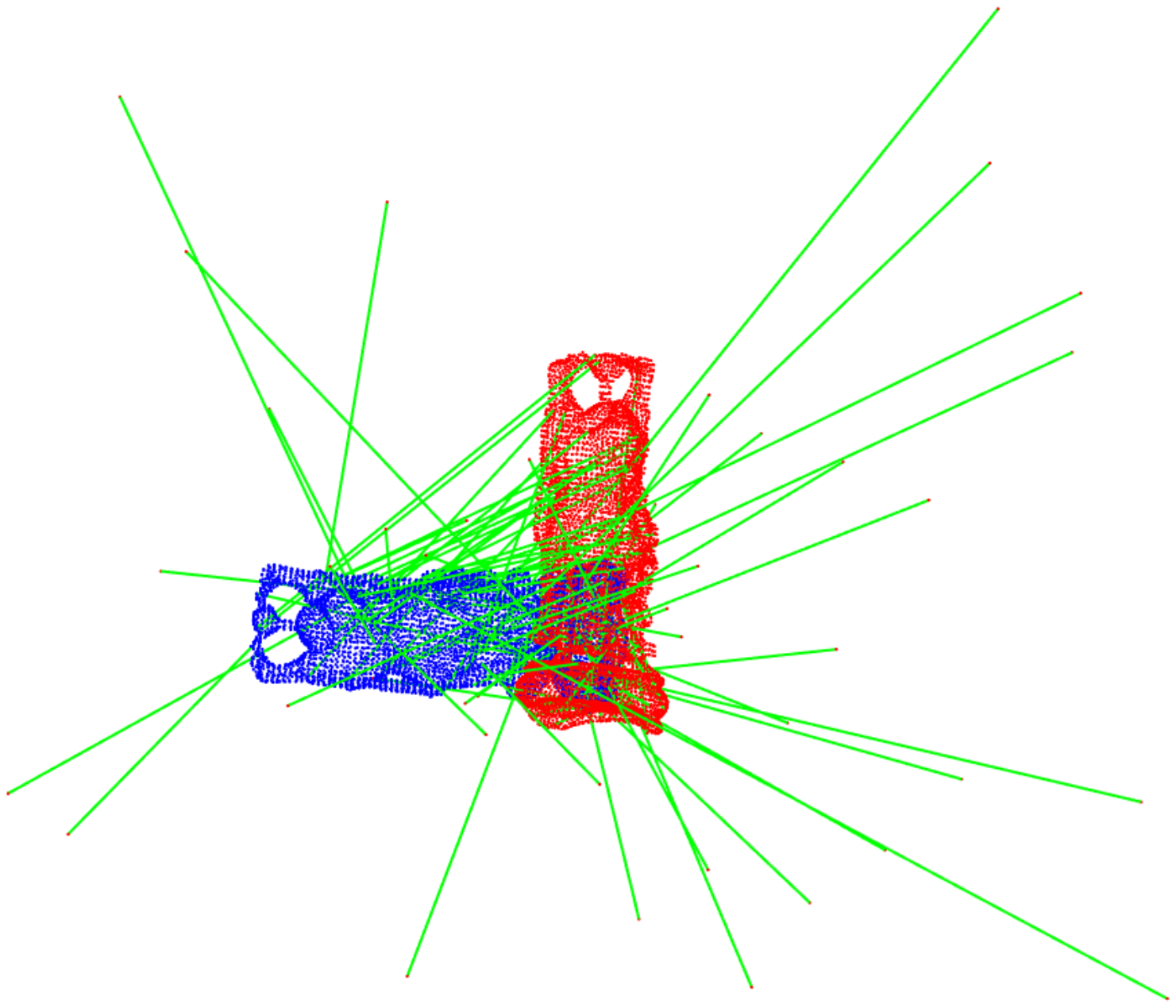} \\
			\end{minipage}  \\
			
		\begin{minipage}{\mpwthree}%
			\centering%
			\includegraphics[width=\columnwidth]{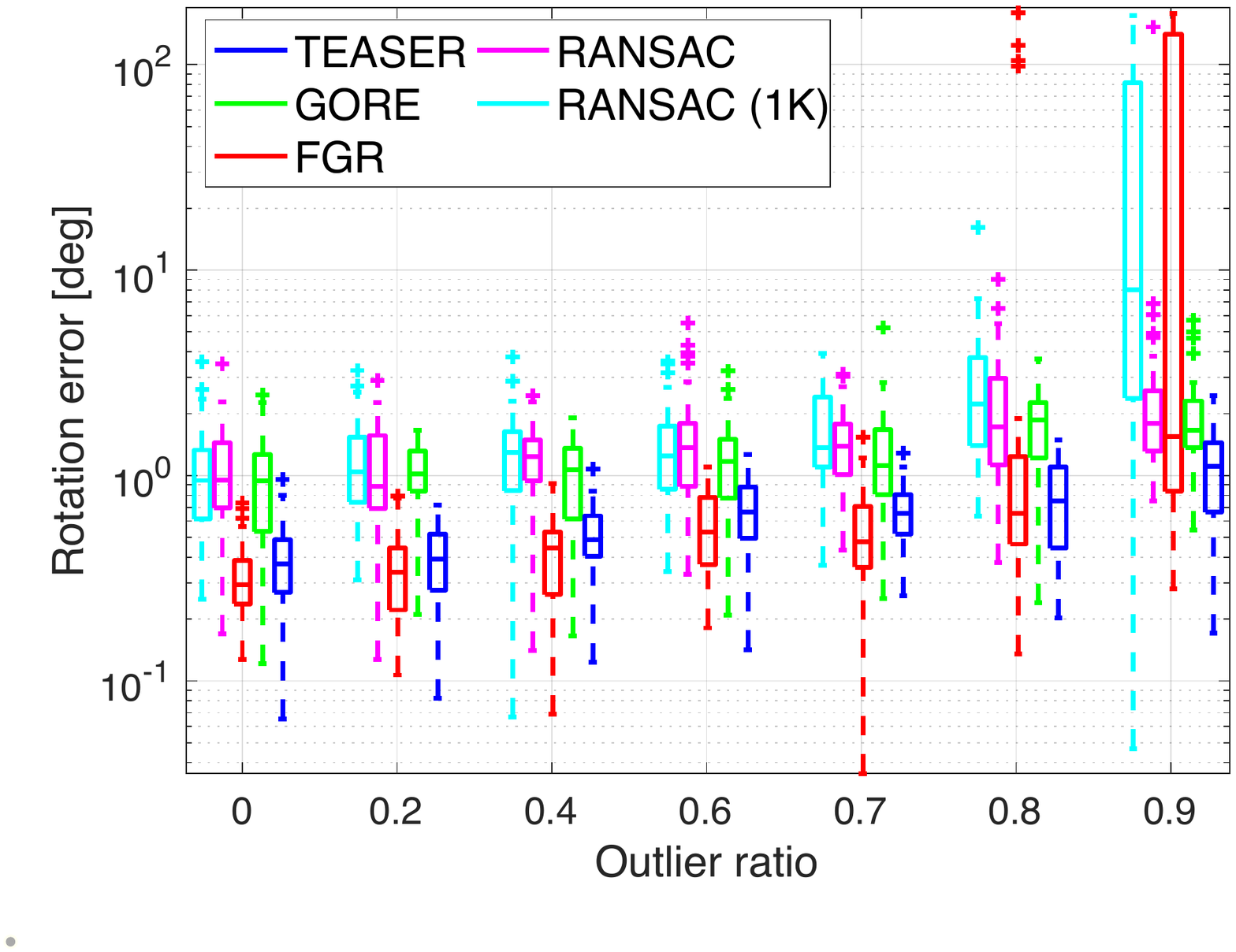} \\
			\end{minipage}
		& \myhspace
			\begin{minipage}{\mpwthree}%
			\centering%
			\includegraphics[width=\columnwidth]{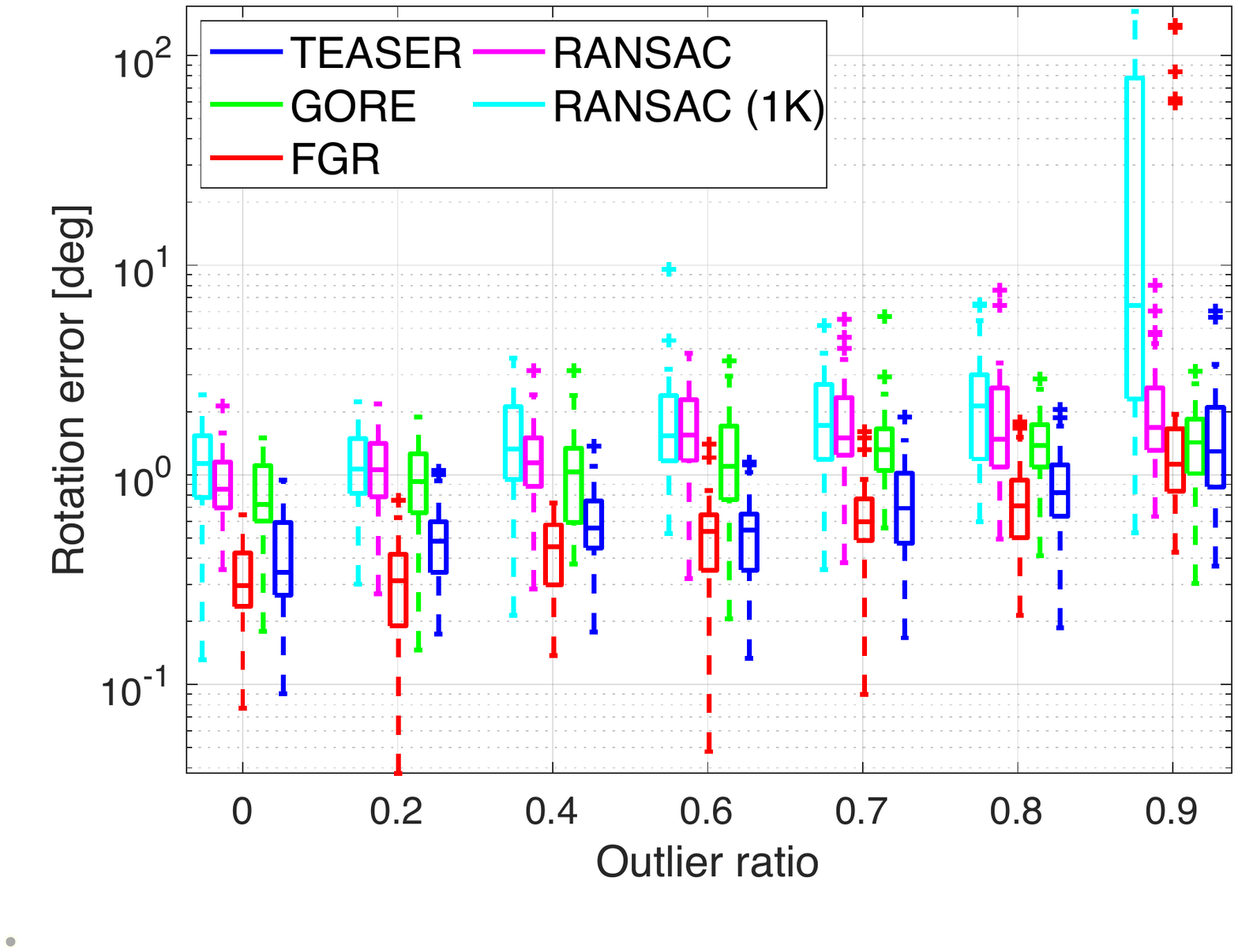} \\
			\end{minipage}
		& \myhspace
			\begin{minipage}{\mpwthree}%
			\centering%
			\includegraphics[width=\columnwidth]{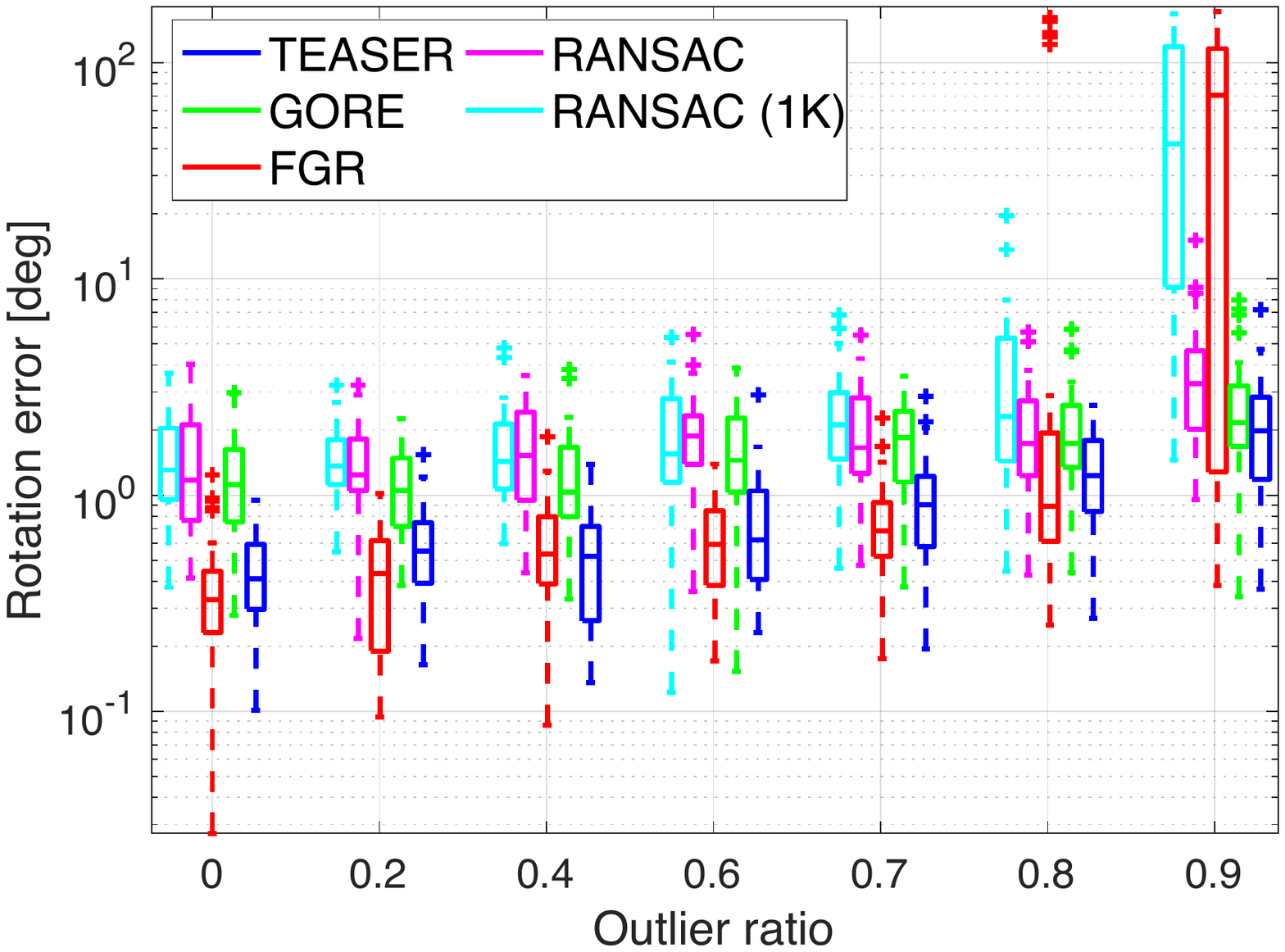} \\
			\end{minipage}\\
			
		\begin{minipage}{\mpwthree}%
			\centering%
			\includegraphics[width=\columnwidth]{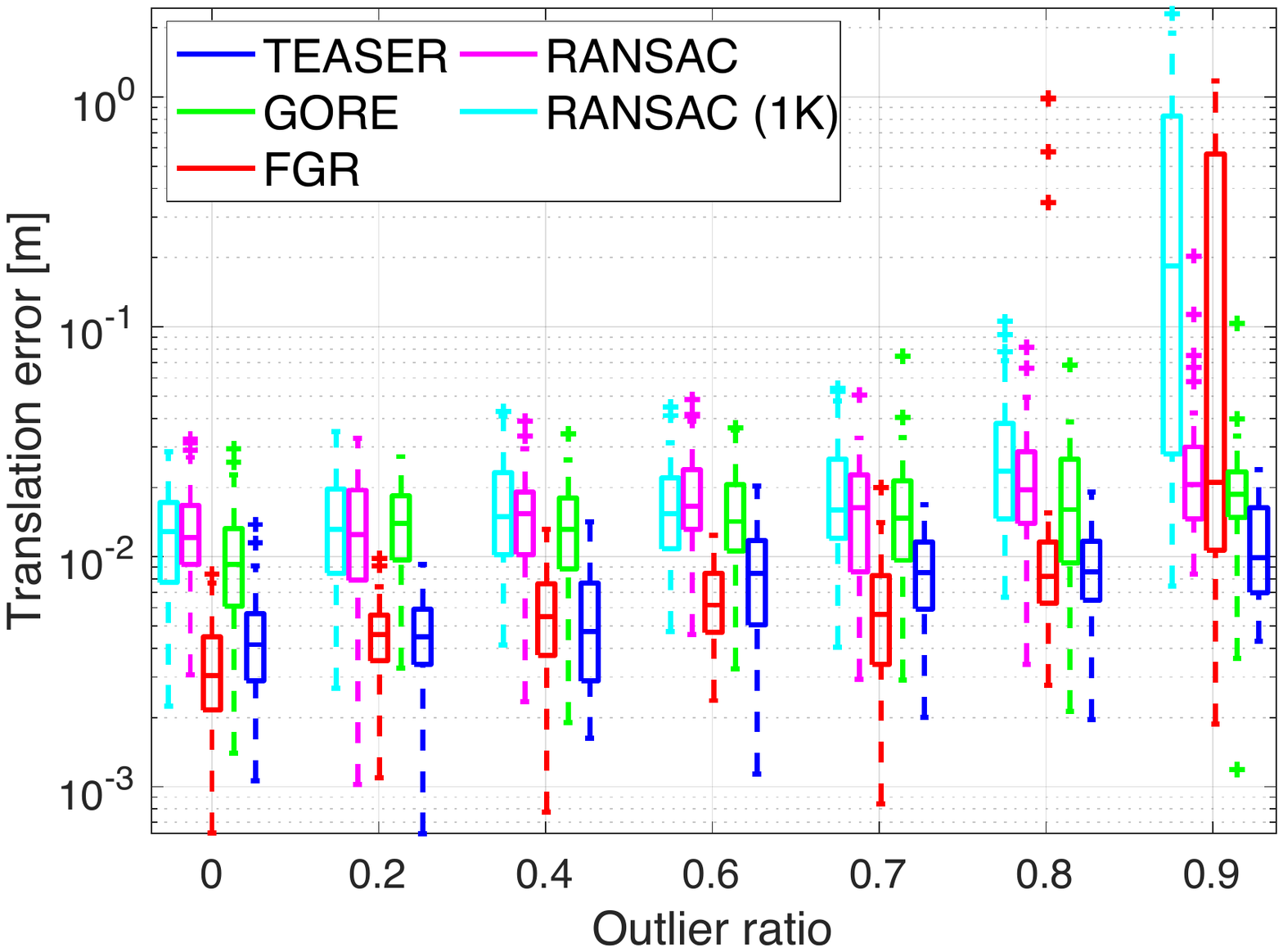} \\
			(a) \armadillo
			\end{minipage}
		& \myhspace
			\begin{minipage}{\mpwthree}%
			\centering%
			\includegraphics[width=\columnwidth]{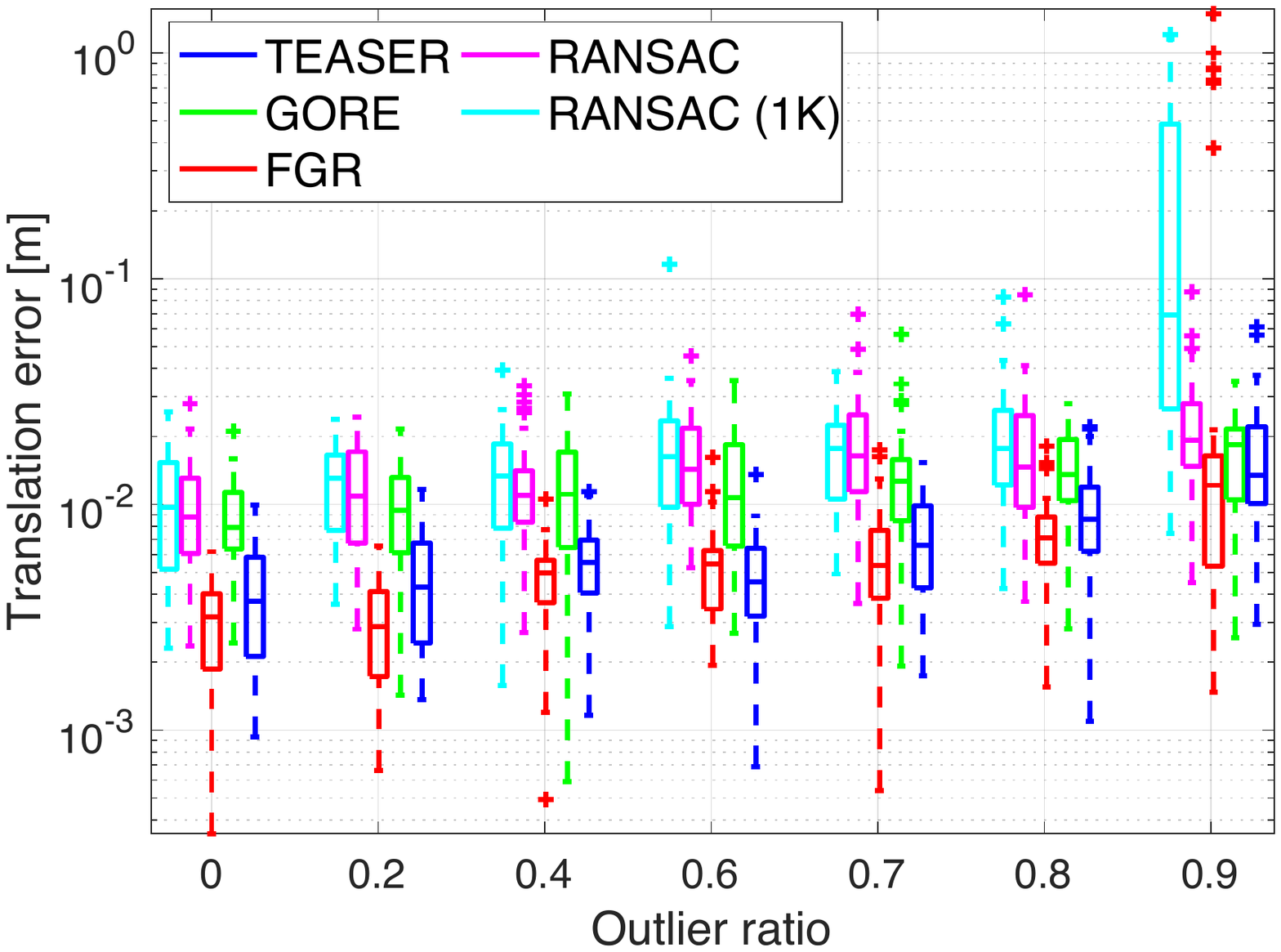} \\
			(b) \dragon
			\end{minipage}
		& \myhspace
			\begin{minipage}{\mpwthree}%
			\centering%
			\includegraphics[width=\columnwidth]{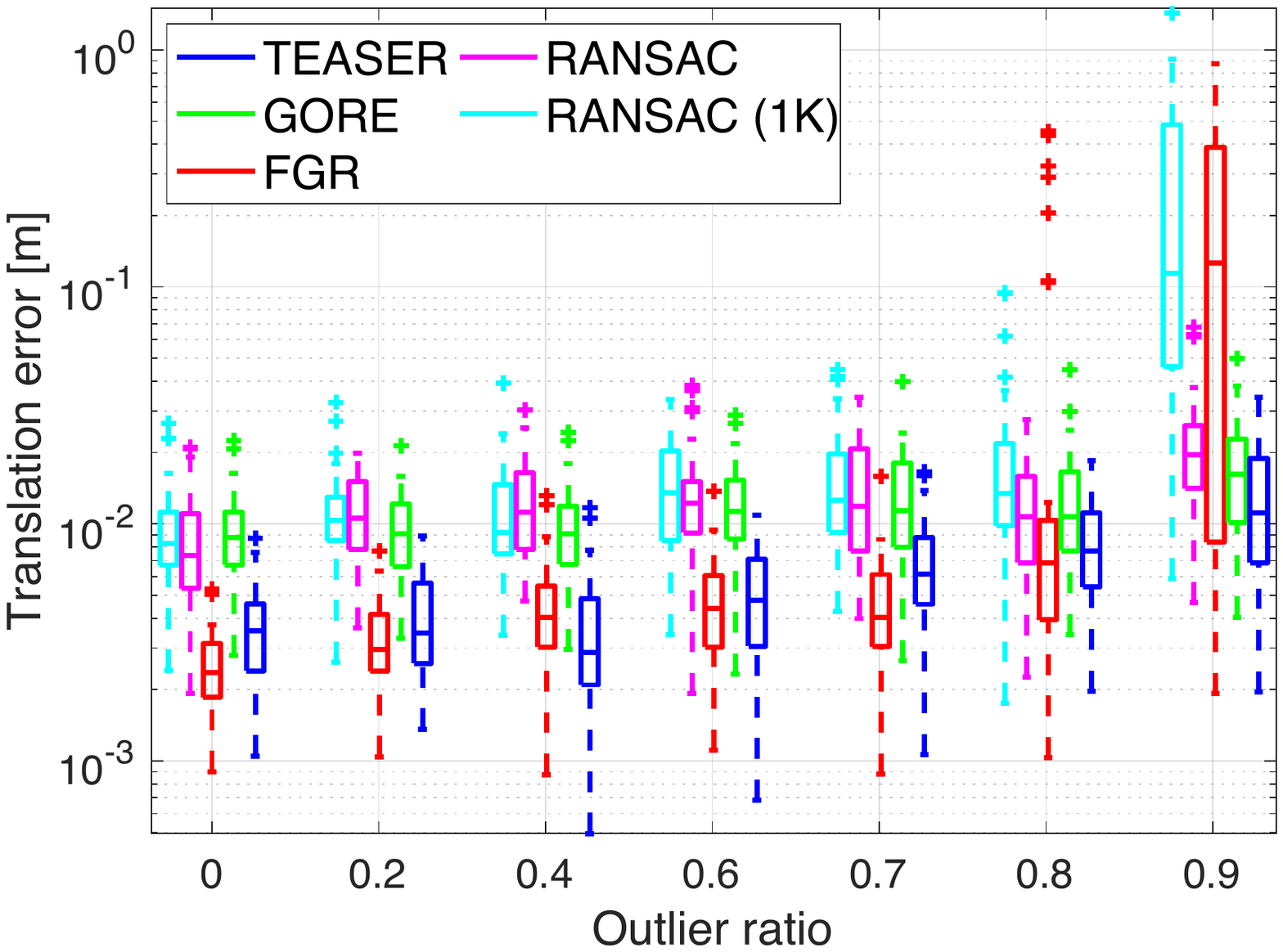} \\
			(c) \buddha
			\end{minipage}
		\end{tabular}
	\end{minipage}
	\vspace{-3mm} 
	\caption{Results for the \armadillo, \dragon and \buddha datasets at increasing levels of outliers. First row: example of putative correspondences with 50\% outliers. Blue points are the model point cloud, red points are the transformed scene model. Second row: rotation error produced by five methods including \name. Third row: translation error produced by the five methods. }
	 \label{fig:benchmarkSupp}
	% \vspace{-8mm} 
	\end{center}
\end{figure*}
%%%%%%%%%%%%%%%%%%%%%%%

\subsubsection{\name on High Noise}
In the main document, the standard deviation $\sigma$ of the isotropic Gaussian noise is set to be $0.01$, with the point cloud scaled inside a unit cube $[0,1]^3$. In this section, we give a visual illustration of how corrupted the point cloud is after adding noise of such magnitude. In addition, we increase the noise standard deviation $\sigma$ to be $0.1$, visualize the corresponding corrupted point cloud, and show that \name can still recover the ground-truth transformation with reasonable accuracy. Fig.~\ref{fig:bunny_with_noise} illustrates a \emph{clean} (noise-free) \bunny model (Fig.~\ref{fig:bunny_with_noise}(a)) scaled inside the unit cube and its variants by adding different levels of isotropic Gaussian noise and outliers. From Fig.~\ref{fig:bunny_with_noise}(b)(c), we can see $\sigma=0.01$ 
(noise used in the main document) is a reasonable noise standard deviation for real-world applications, while $\sigma=0.1$ destroys the geometric structure of the \bunny and it is beyond the noise typically encountered in robotics and computer vision applications. 
% Moreover, including $50\%$ outliers on top of high noise makes the \bunny invisible in the cluttered scene.
Fig.~\ref{fig:bunny_with_noise}(d) shows the \bunny with high noise ($\sigma = 0.1$) and 50\% outliers. Under this setup, we run \name to register the two point clouds and recover the relative transformation. \name can still return rotation and translation estimates that are  close to the ground-truth transformation. However, due to the severe noise corruption, even a successful registration %in terms of estimation errors compared with ground-truth, 
fails to yield a visually convincing registration result to human perception. For example, Fig.~\ref{fig:bunny_teaser_high_noise} shows a representative \name registration result with $\sigma=0.1$ noise corruption and $50\%$ outlier rate, where accurate transformation estimates are obtained (rotation error is $3.42^{\circ}$ and translation error is $0.098$m), but the \bunny in the cluttered scene is hardly visible, even when super-imposed to the clean model.

%!TEX root = ../main.tex

\renewcommand{\myhspace}{\hspace{-6mm}}
\renewcommand{\mpw}{4.5cm}

\begin{figure}[h]%\vspace{-8mm}
	\begin{center}
	%\vspace{-8mm}
	\begin{minipage}{\columnwidth}
	\vspace{-12mm}
	\centering
	\begin{tabular}{cc}%
	\myhspace
			\begin{minipage}{\mpw}%
			\centering%
			\includegraphics[trim=0mm 0mm 0mm 10mm, clip, width=\columnwidth]{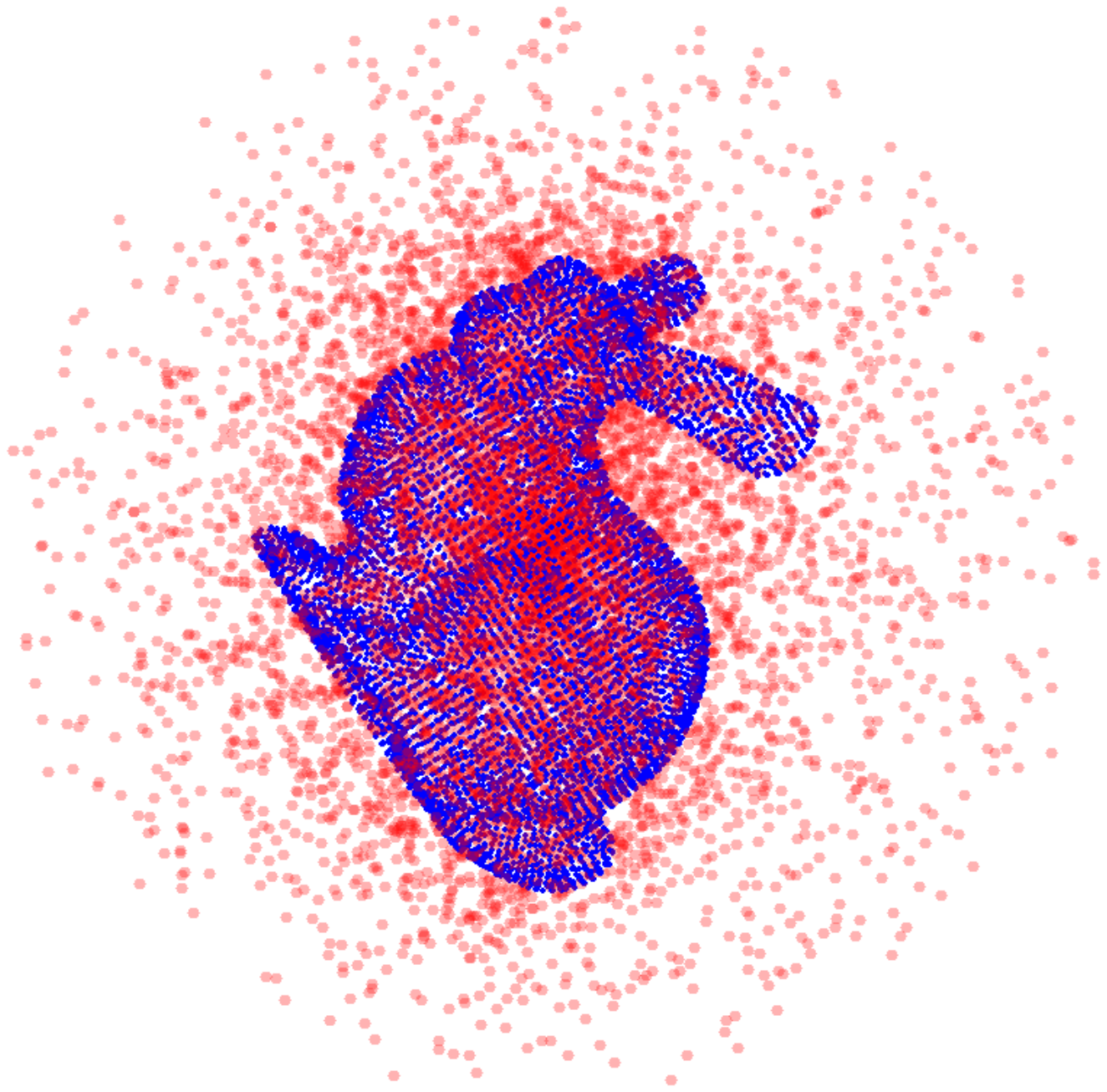} \\
			\end{minipage}
		& \myhspace 
			\begin{minipage}{\mpw}%
			\centering%
			\includegraphics[trim=0mm 0mm 0mm 10mm, clip,width=0.97\columnwidth]{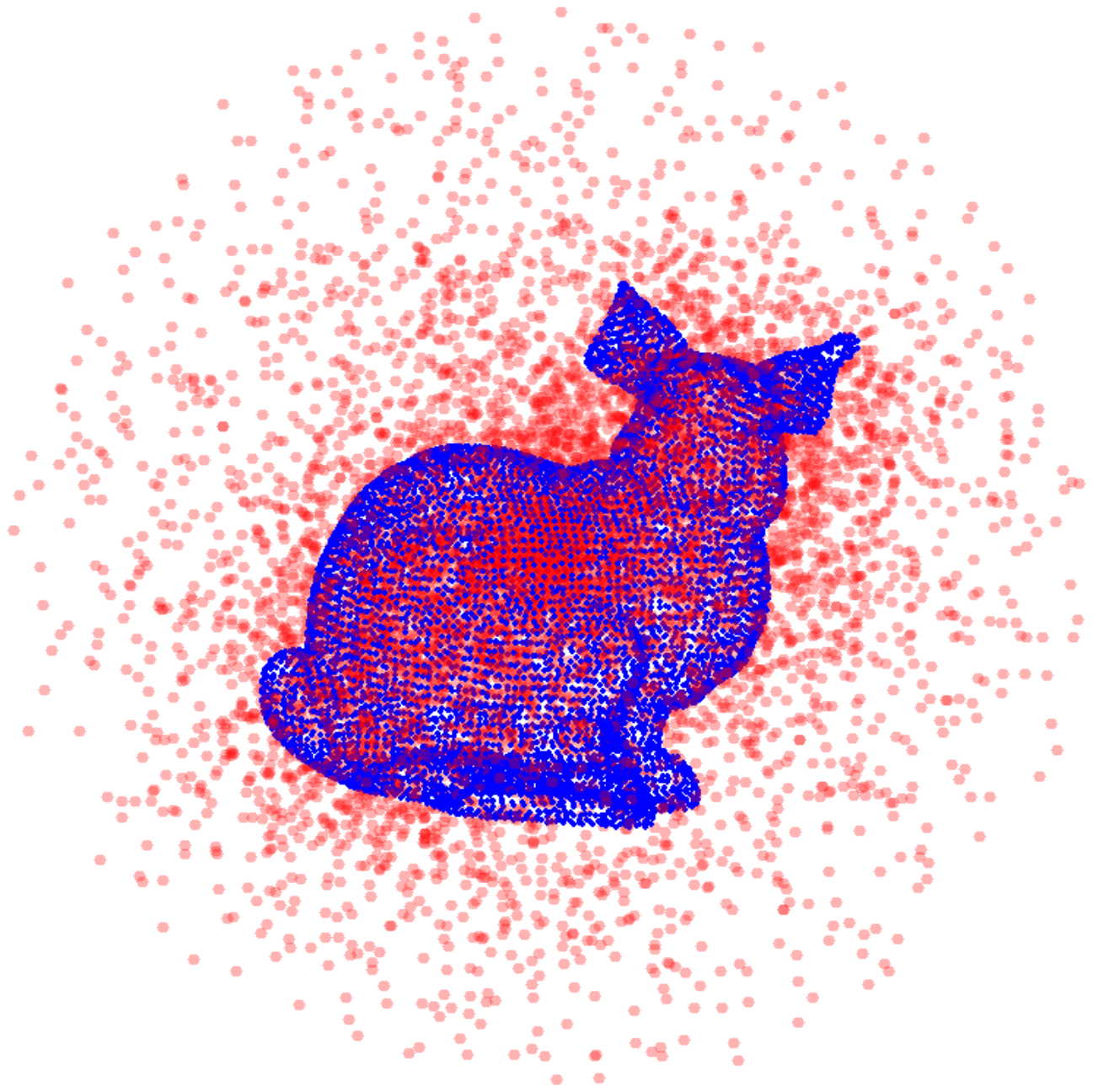} \\
			\end{minipage} 
		\end{tabular}
	\end{minipage}
	\vspace{-3mm} 
	\caption{ A single representative \name registration result with $\sigma=0.1$ noise corruption and $50\%$ outliers, viewed from two distinctive perspectives. Clean \bunny model showed in blue and cluttered scene showed in red. Although the rotation error compared to ground-truth is $3.42^{\circ}$ and the translation error compared to ground-truth is $0.098$m, it is challenging for a human 
	to confirm the correctness of the registration result. % the quality of the registration. % to visually tell whether the registration is correct. 
	 \label{fig:bunny_teaser_high_noise}}
	%\vspace{-7mm} 
	\end{center}
\end{figure}

\subsubsection{\namepp on Problems with Increasing Number of Correspondences}

%!TEX root = ../main.tex

\renewcommand{\myhspace}{\hspace{-6mm}}
\renewcommand{\mpw}{4.5cm}

\begin{figure*}[]%
	\vspace{-8mm}
	\begin{center}
	\begin{minipage}{\textwidth}
		\begin{center}
			\includegraphics[width=0.7\columnwidth]{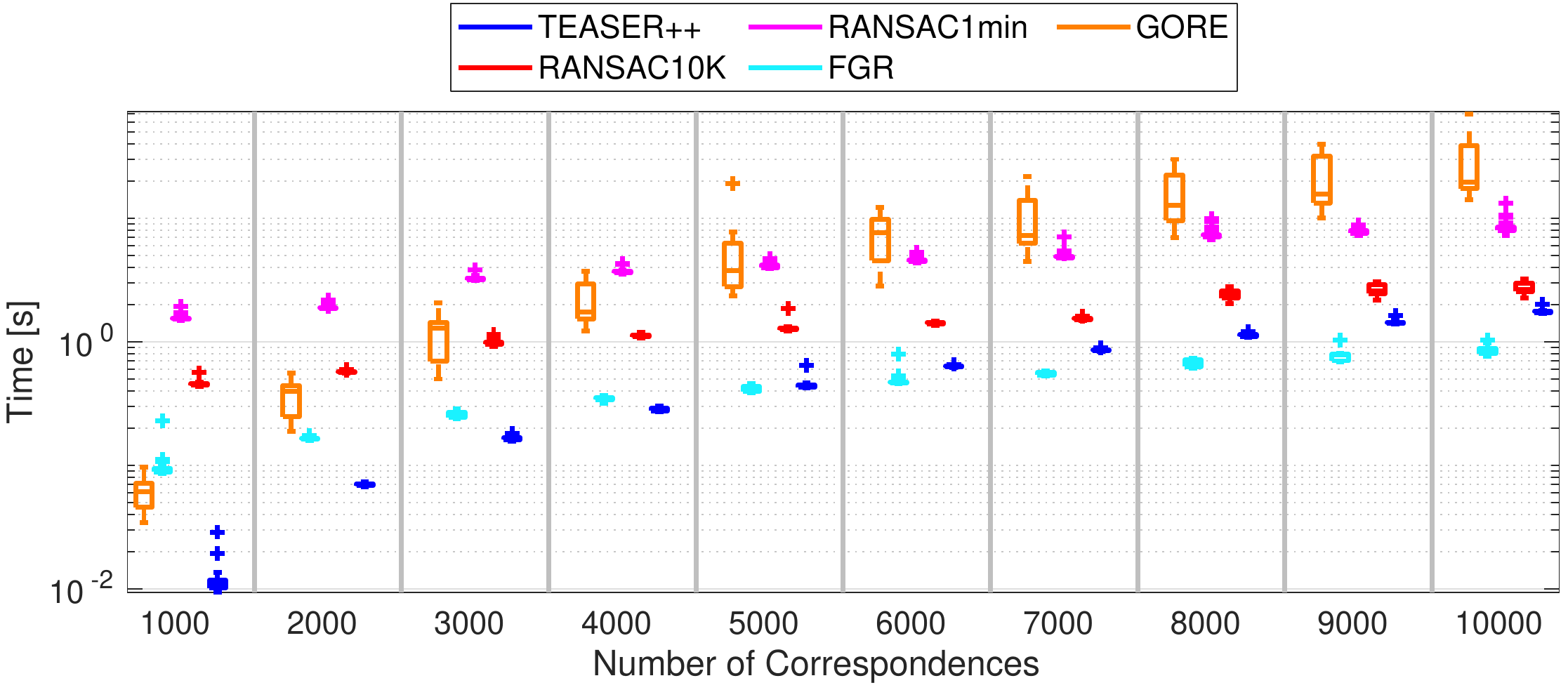}\\
	(a) Runtimes vs. number of correspondences 
		\end{center}
		\vspace{1mm}
	\end{minipage} \\
	\begin{minipage}{\textwidth}
		\begin{center}
			\includegraphics[width=0.7\columnwidth]{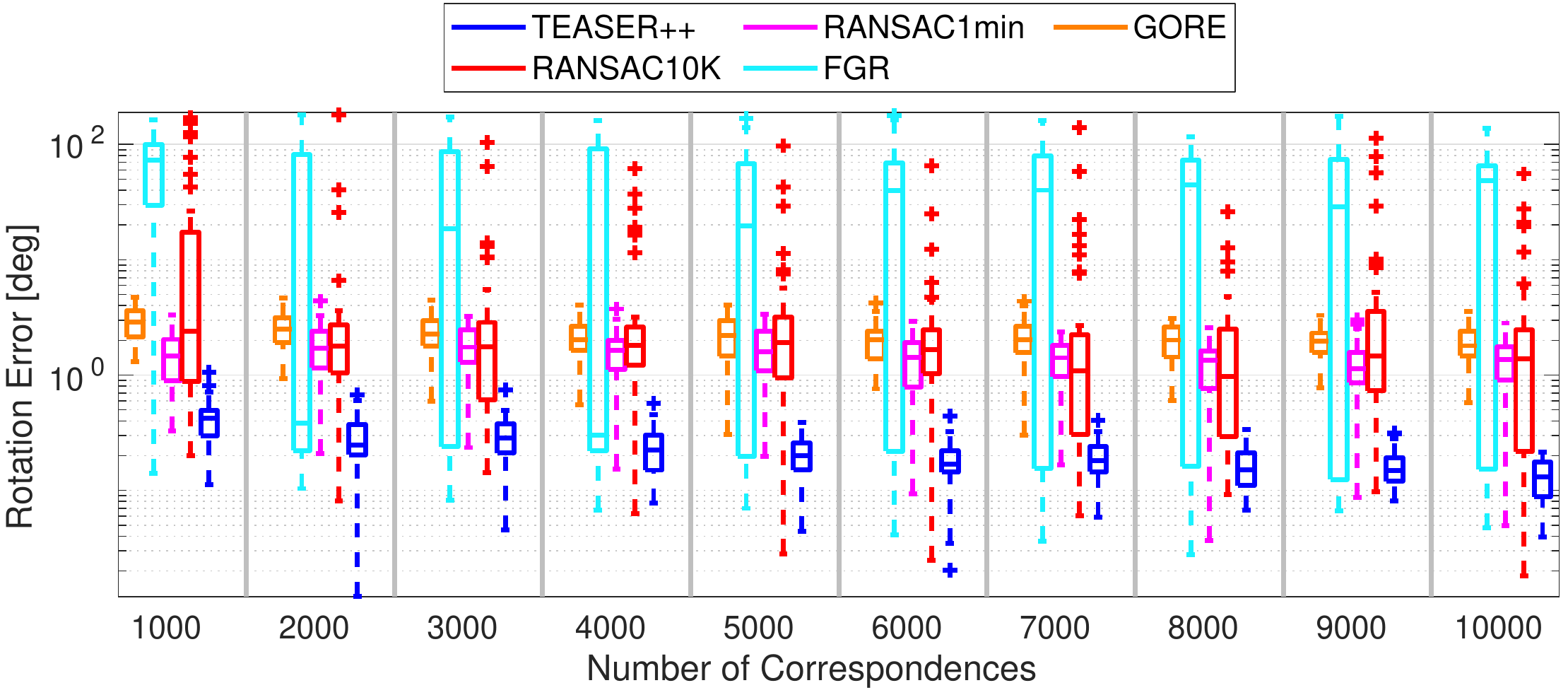}\\
	(b) Rotation errors vs. number of correspondences 
		\end{center}
		\vspace{1mm}
	\end{minipage} \\
	\begin{minipage}{\textwidth}
		\begin{center}
			\includegraphics[width=0.7\columnwidth]{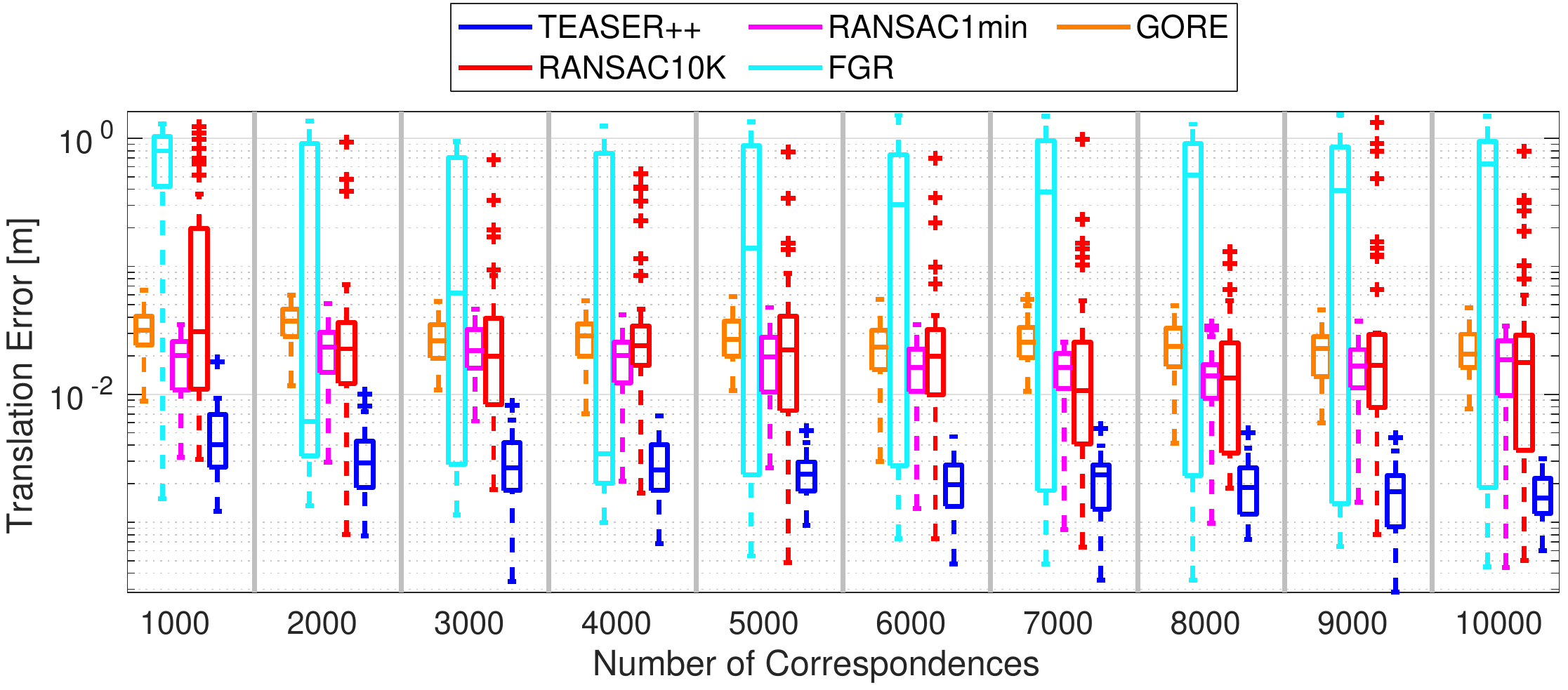}\\
	(c) Translation errors vs. number of correspondences 
		\end{center}
	\end{minipage}
	\caption{Comparison of different robust translation estimation methods with different number of correspondences. 
	 \label{fig:bunny_teaser_high_nrpoints}}
	\end{center}
\end{figure*}

\revone{Here we provide more results for \namepp on problems with higher number of correspondences. We follow the same experimental setup as detailed in Section \ref{sec:benchmark}, with the exception that we fixed our outlier rate at 95\%, and test the algorithms with increasing numbers of correspondences.
Fig.~\ref{fig:bunny_teaser_high_nrpoints} shows the results of the experiments.

Overall, in terms of rotation errors and translation errors, \namepp dominates other methods by a large margin. In terms of speed, \namepp outperforms other robust estimation algorithms such as \GORE. While \namepp's runtime scales with the number of correspondences, it still outperforms methods such as \GORE on all cases. At 10000 correspondences, \namepp only takes around 2 seconds to solve the registration problem.  
This demonstrates that \namepp can be used to solve registration problems with high number of correspondences within reasonable amount of time.
}

\clearpage

\clearpage
% \LC{add more appendices: commented out! \vspace{10cm}
% I think we decided to remove these
% \input{sections/proof-prop-naiveRelax}
% \input{sections/proof-theorem-strongDuality}
% \input{sections/result-benchmarkBnB}
% \input{sections/result-imageStitching}
% }
 
% long proof, only in arXiv version
\isTwoCols{\onecolumn}{}
%%%%%%%%%%%%%%%%%%%%%%%
%!TEX root = ../main.tex

\renewcommand{\myhspace}{\hspace{-6mm}}
\renewcommand{\mpw}{4.5cm}
\newcommand{\myRate}{0.7}

\begin{figure}[h]
	\begin{center}
	\begin{minipage}{\textwidth}
	\begin{tabular}{cccc}%
	\myhspace \hspace{3mm}
			\begin{minipage}{\mpw}%
			\centering%
			\includegraphics[width=\myRate\columnwidth]{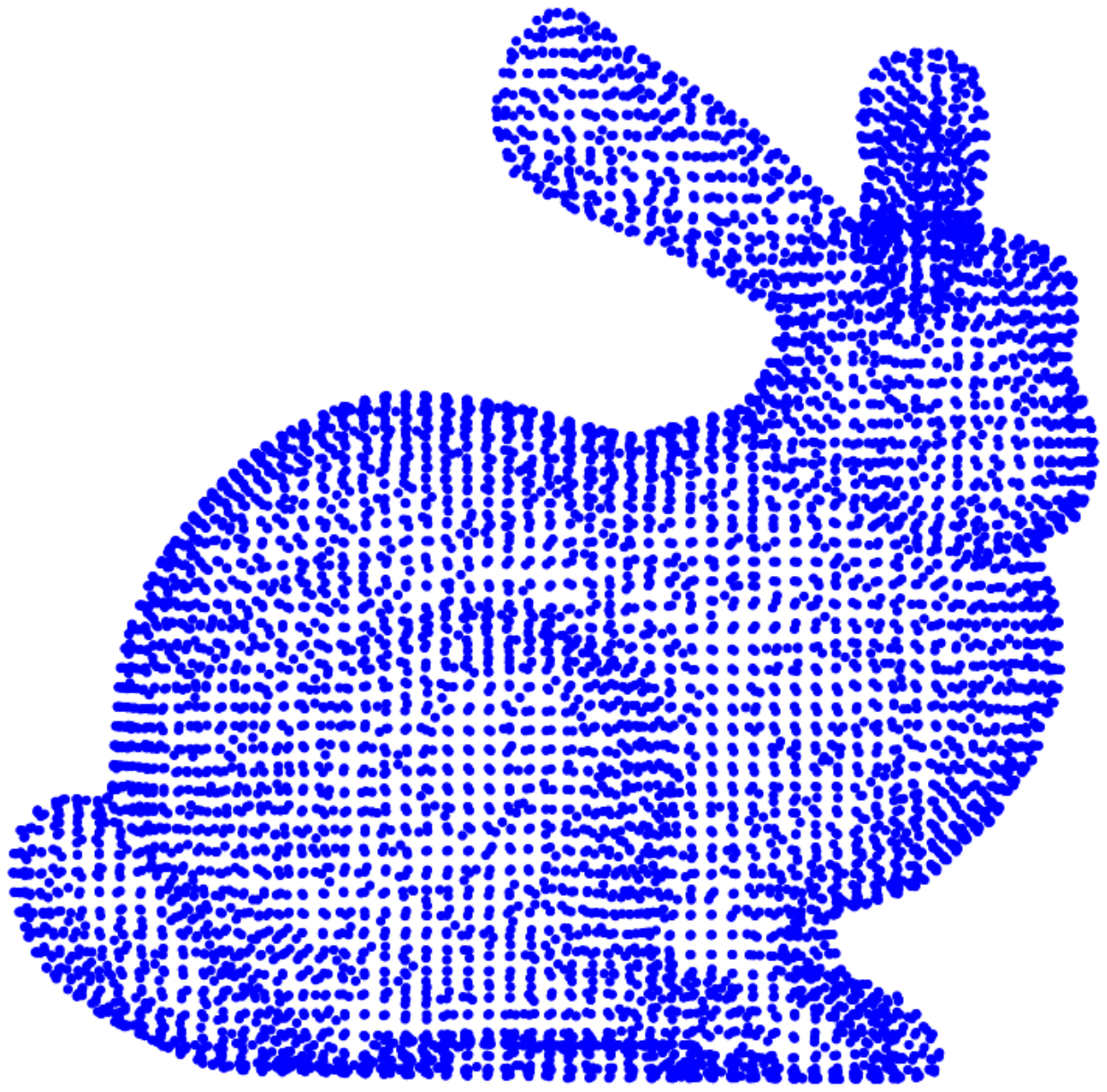} \\
			(a) \bunny model
			\end{minipage}
		& \myhspace 
			\begin{minipage}{\mpw}%
			\centering%
			\includegraphics[width=\myRate\columnwidth]{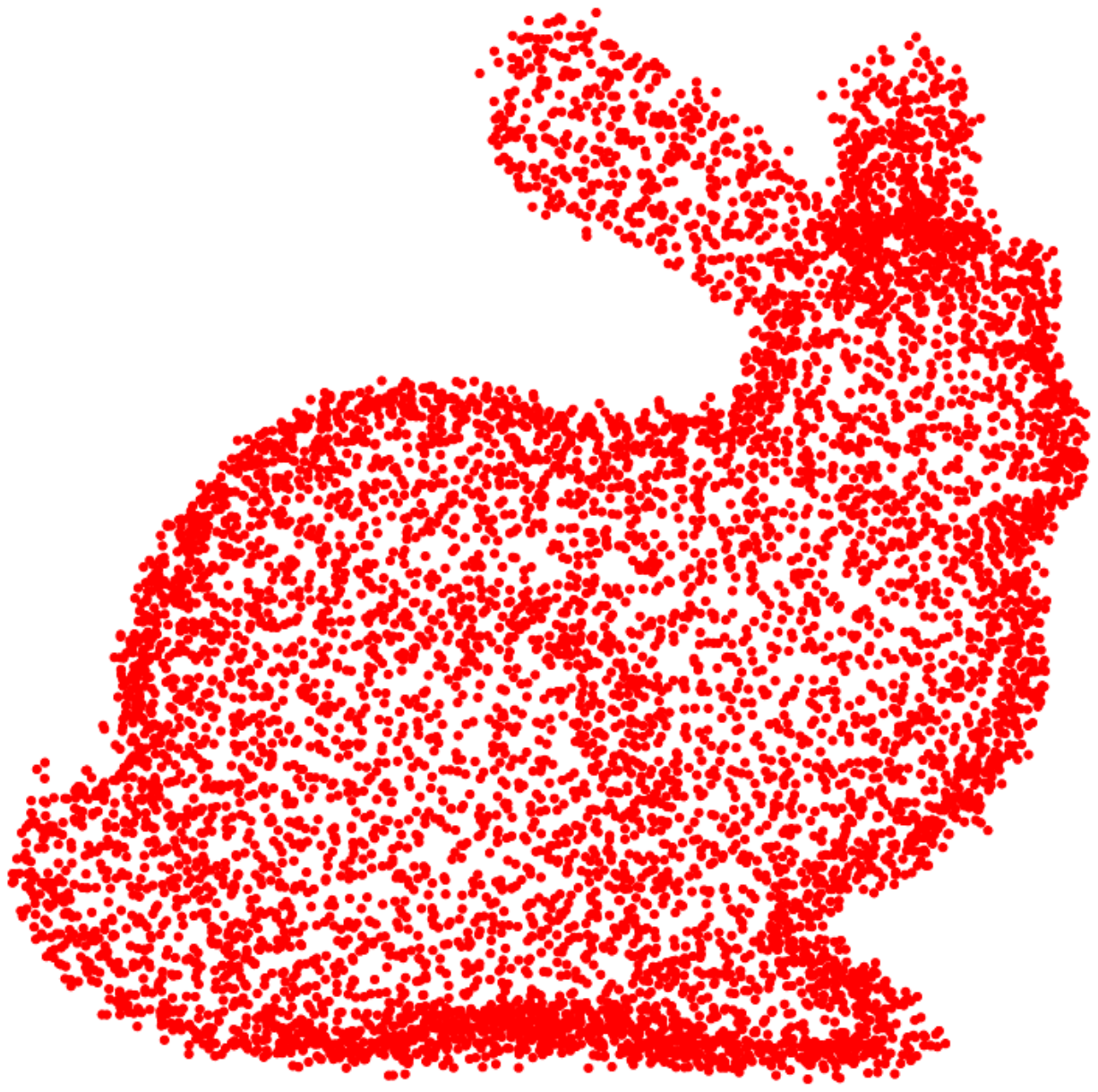} \\
			(b) \bunny scene, $\sigma = 0.01$
			\end{minipage}
		& \myhspace 
			\begin{minipage}{\mpw}%
			\centering%
			\includegraphics[width=\myRate\columnwidth]{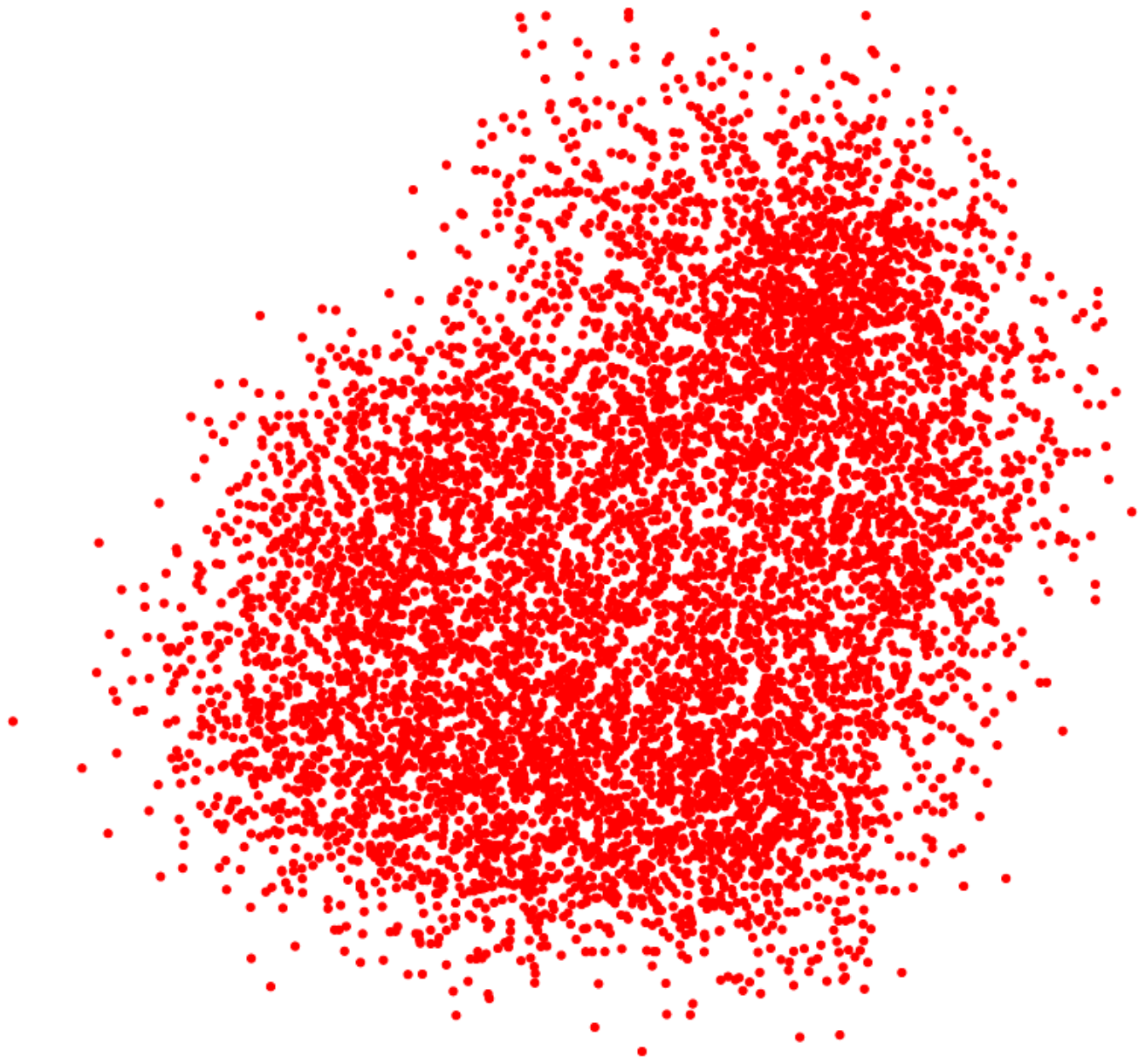} \\
			(c) \bunny scene, $\sigma = 0.1$
			\end{minipage}
		& \myhspace 
			\begin{minipage}{\mpw}%
			\centering%
			\includegraphics[width=\myRate\columnwidth]{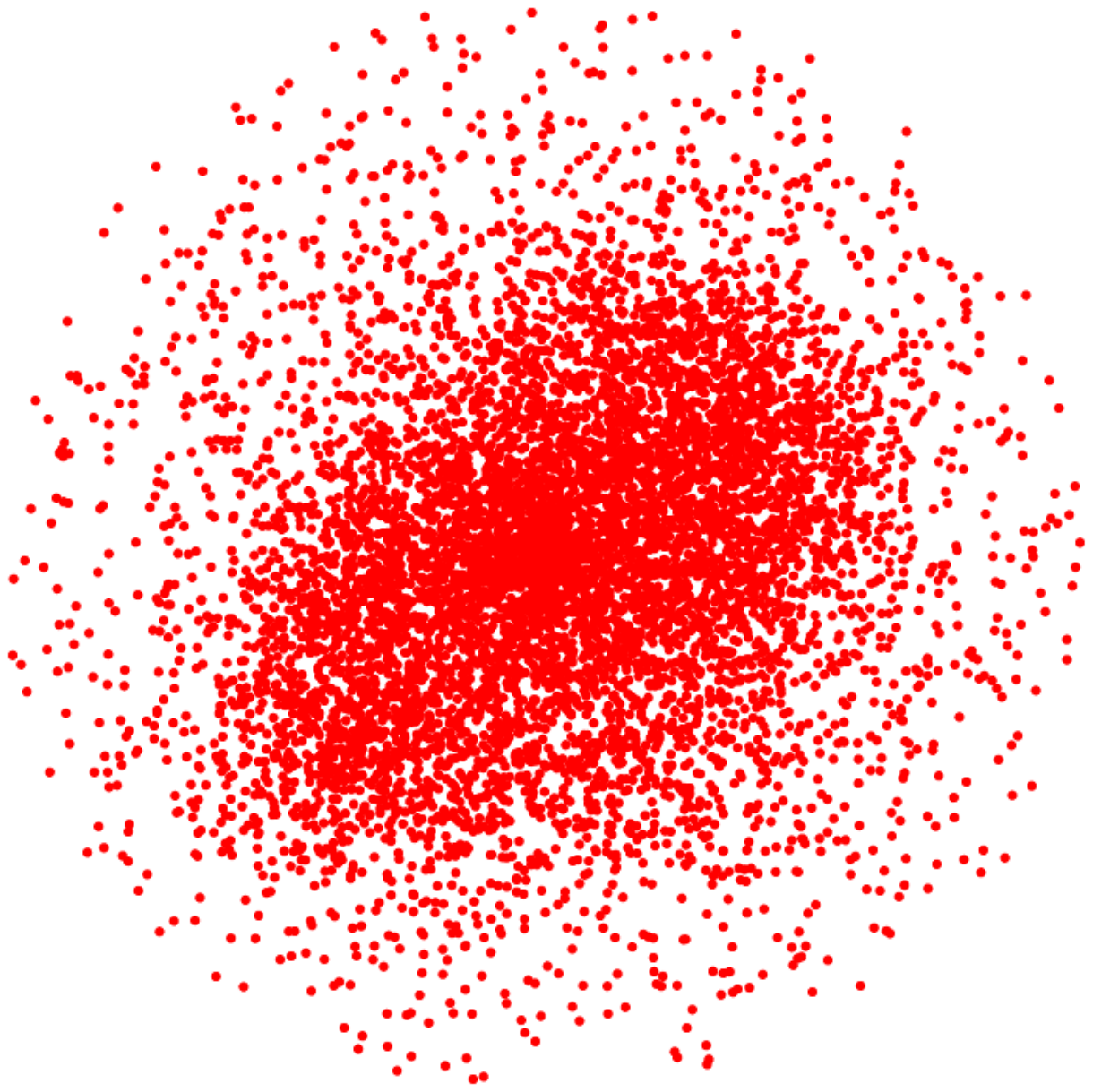} \\
			(d) \bunny scene, $\sigma = 0.1$, \\ 50\% outliers
			\end{minipage}
		\end{tabular}
	\end{minipage}
	\begin{minipage}{\textwidth}
	\end{minipage}
	\vspace{-3mm} 
	\caption{\bunny point cloud scaled inside unit cube $[0,1]^3$ and corrupted by different levels of noise and outliers, all viewed from the same perspective angle. (a) Clean \bunny model point cloud, scaled inside unit cube $[0,1]^3$. (b) \bunny scene, generated from (a) by adding isotropic Gaussian noise with standard deviation $\sigma=0.01$. (c) \bunny scene, generated from (a) by adding isotropic Gaussian noise with $\sigma=0.1$. (d) \bunny scene, generated from (a) by adding isotropic Gaussian noise with $\sigma = 0.1$ and $50\%$ random outliers.
	 \label{fig:bunny_with_noise}}
	\vspace{-6mm} 
	\end{center}
\end{figure}
%%%%%%%%%%%%%%%%%%%%%%%
%!TEX root = ../main.tex

\subsection{Object Pose Estimation: Extra Results}
\label{sec:objectPoseEstimationSupp}

% We use the large-scale point cloud datasets from the University of Washington~\cite{Lai11icra-largeRGBD} to test \name in \emph{object pose estimation} applications. Seven scenes containing a \emph{cereal box} and one scene containing a \emph{cap} are selected from the dataset. We first use the ground-truth object labels to extract the \emph{cereal box/cap} out of the scene and treat it as the object, then apply a random rotation and translation to the scene, to get an object-scene pair. To register the object-scene pair of point clouds, we first use FPFH feature descriptors~\cite{Rusu09icra-fast3Dkeypoints} as implemented in the Point Cloud Library (PCL) to establish putative feature correspondences. %, which contain high ratio of outliers. 
% Given correspondences from FPFH, \name is used to find the relative pose between the object and scene. We downsample the object and scene  using the same ratio (about 0.1) to make the object have 2,000 points, so that FPFH and \name can run in reasonable time. 
Fig.~\ref{fig:objectPoseEstimation} shows the registration results for the 8 selected scenes 
from the  University of Washington RGB-D datasets~\cite{Lai11icra-largeRGBD}. The inlier correspondence ratios for \emph{cereal box} are all below 10\% and typically below 5\%. \name is able to compute a highly-accurate estimate of the relative pose using a handful of inliers. 
% the correct relative pose using the inliers. 
Due to the distinctive shape of \emph{cap}, FPFH produces a higher number of inliers and \name is able to register the object-scene pair without problems. %as well.
% Table~\ref{tab:objectPoseEstimation} shows the mean and standard deviation (STD) of the rotation and translation errors, the number of FPFH correspondences, and the FPFH inlier ratio estimated by \name on the eight scenes.

%%%%%%%%%%%%%%%%%%%%%%%%%%%
%!TEX root = ../main.tex

\renewcommand{\mpw}{6cm}
\renewcommand{\myRate}{0.8}

\begin{figure*}[h]
	\begin{center}
	\begin{minipage}{\textwidth}
	\hspace{-0.2cm}
	\begin{tabular}{ccc}%
			\begin{minipage}{\mpw}%
			\centering%
			\includegraphics[width=\myRate\columnwidth]{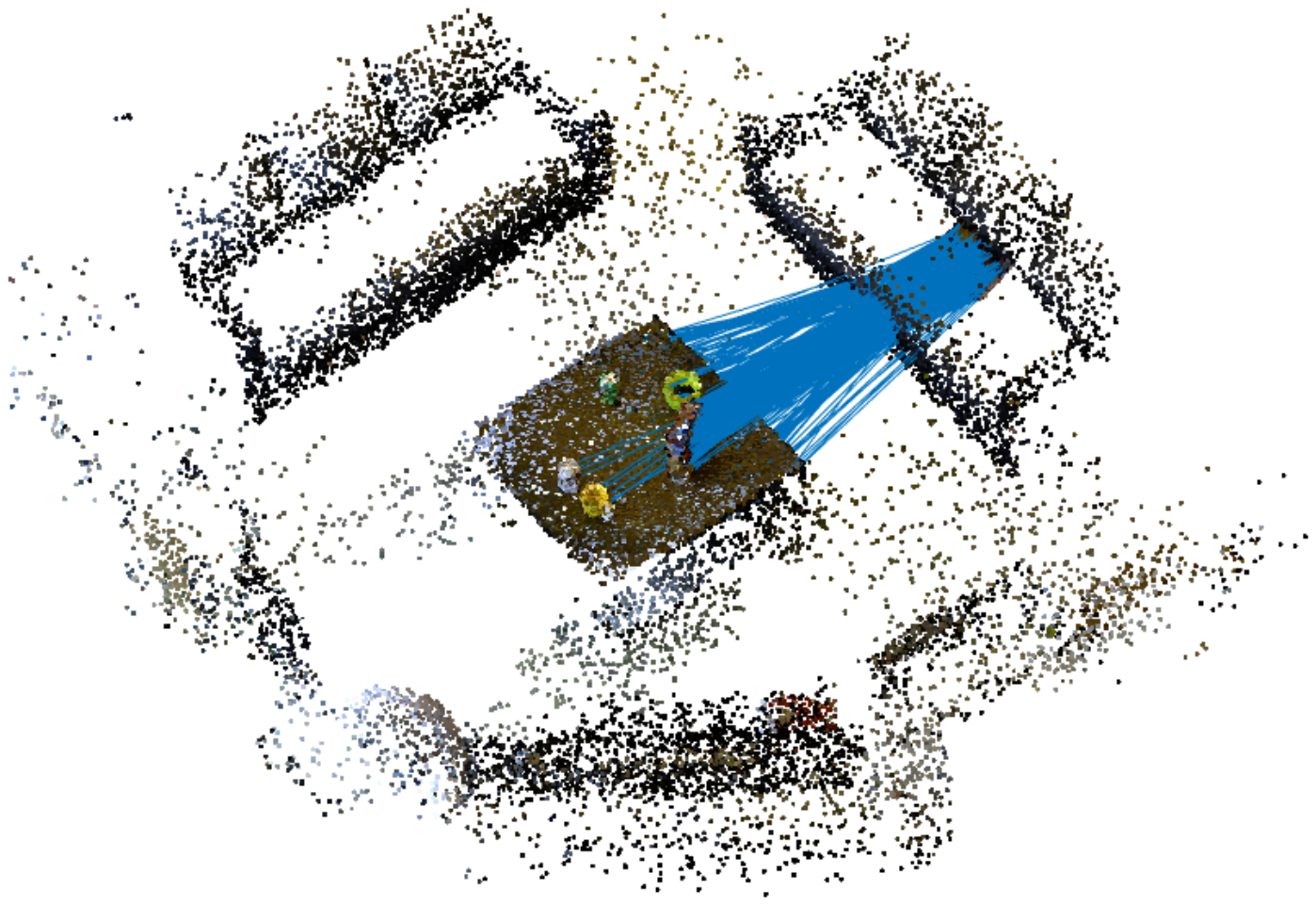} \\
			\end{minipage}
		& \myhspace
			\begin{minipage}{\mpw}%
			\centering%
			\includegraphics[width=\myRate\columnwidth]{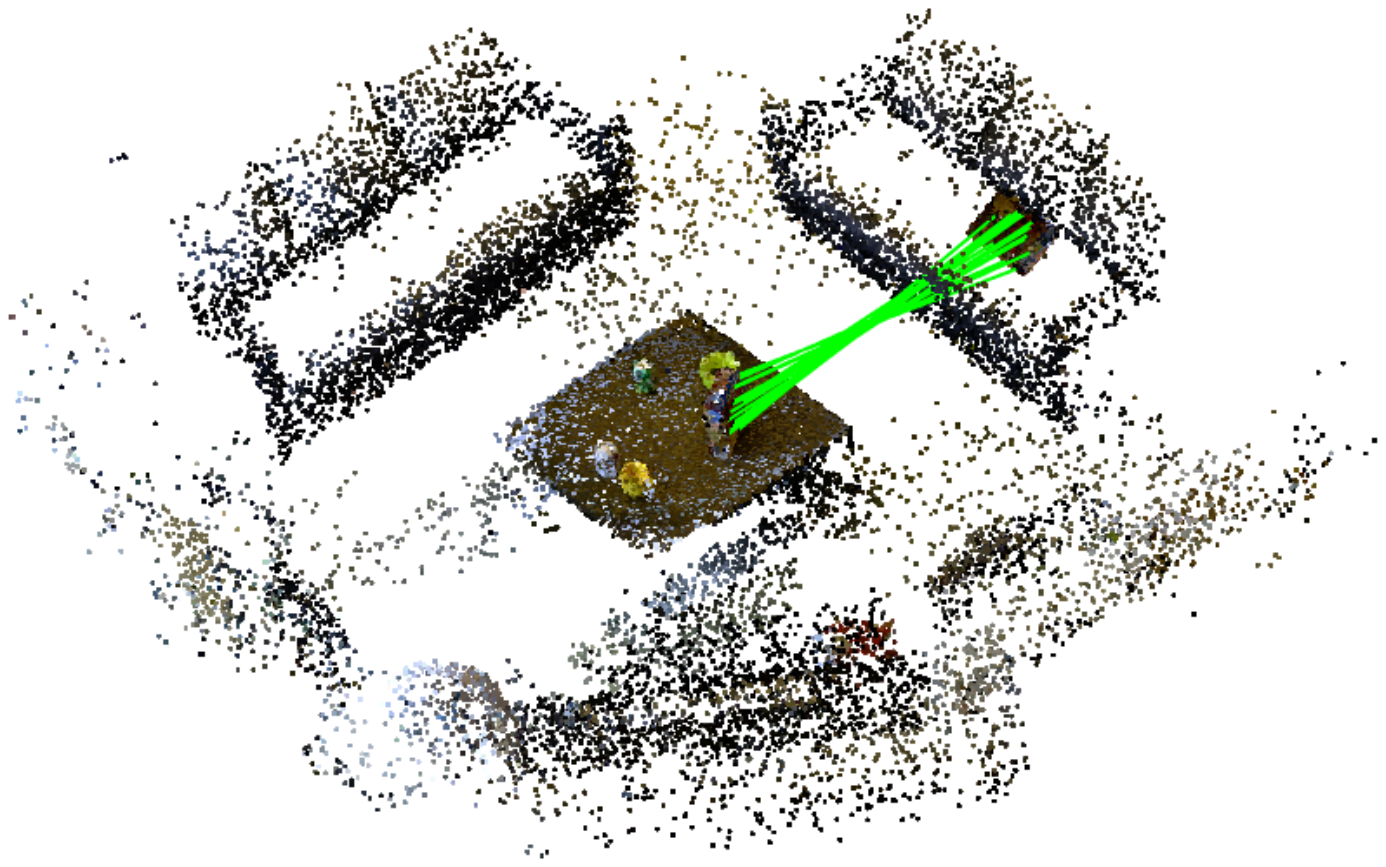} \\
			\end{minipage}
		& \myhspace
			\begin{minipage}{\mpw}%
			\centering%
			\includegraphics[width=\myRate\columnwidth]{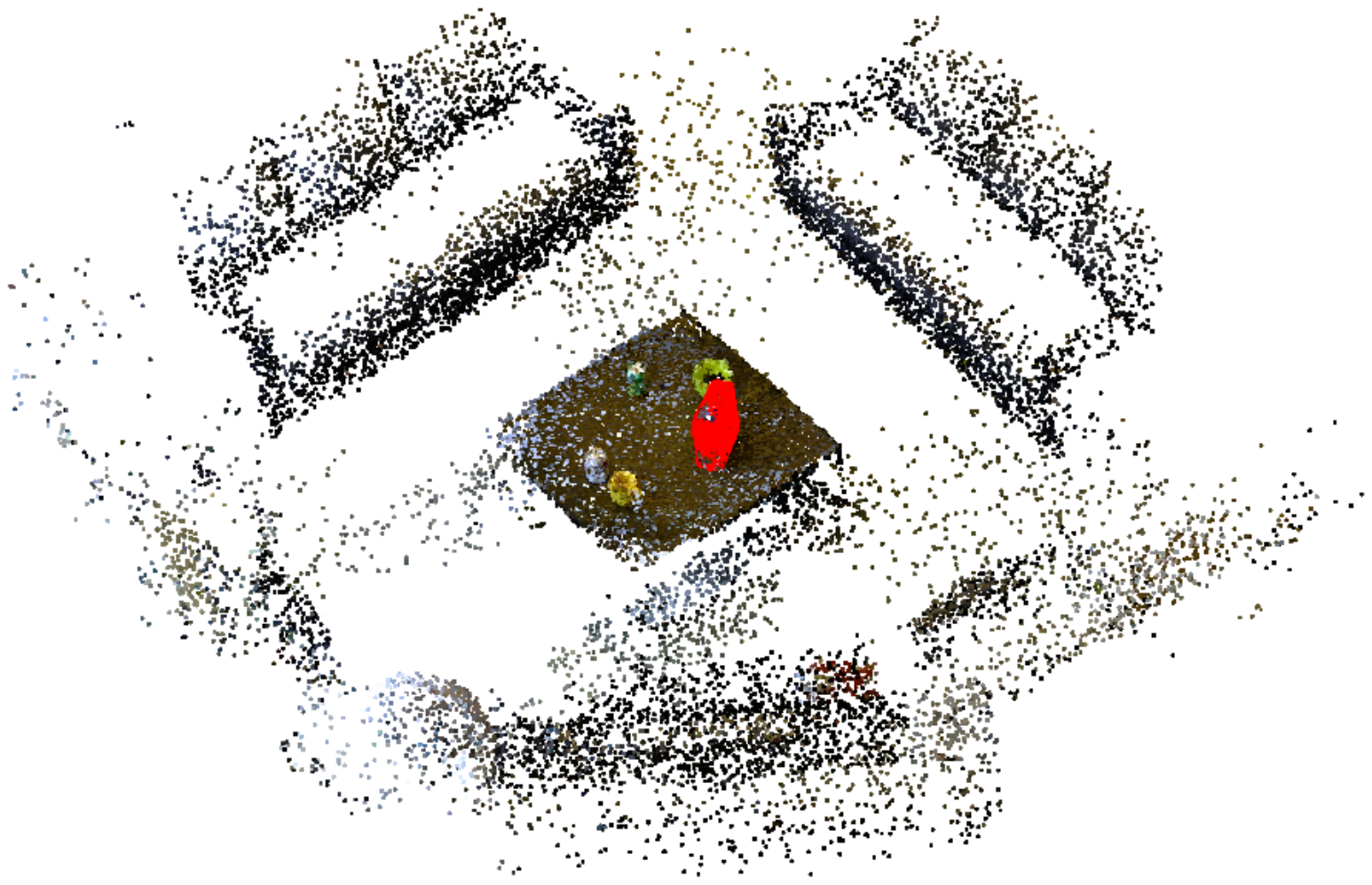} \\
			\end{minipage} \\
		\multicolumn{3}{c}{\emph{scene-2}, \# of FPFH correspondences: 550, Inlier ratio: 4.55\%, Rotation error: 0.120, Translation error: 0.052.}\\
		
		\begin{minipage}{\mpw}%
			\centering%
			\includegraphics[width=\myRate\columnwidth]{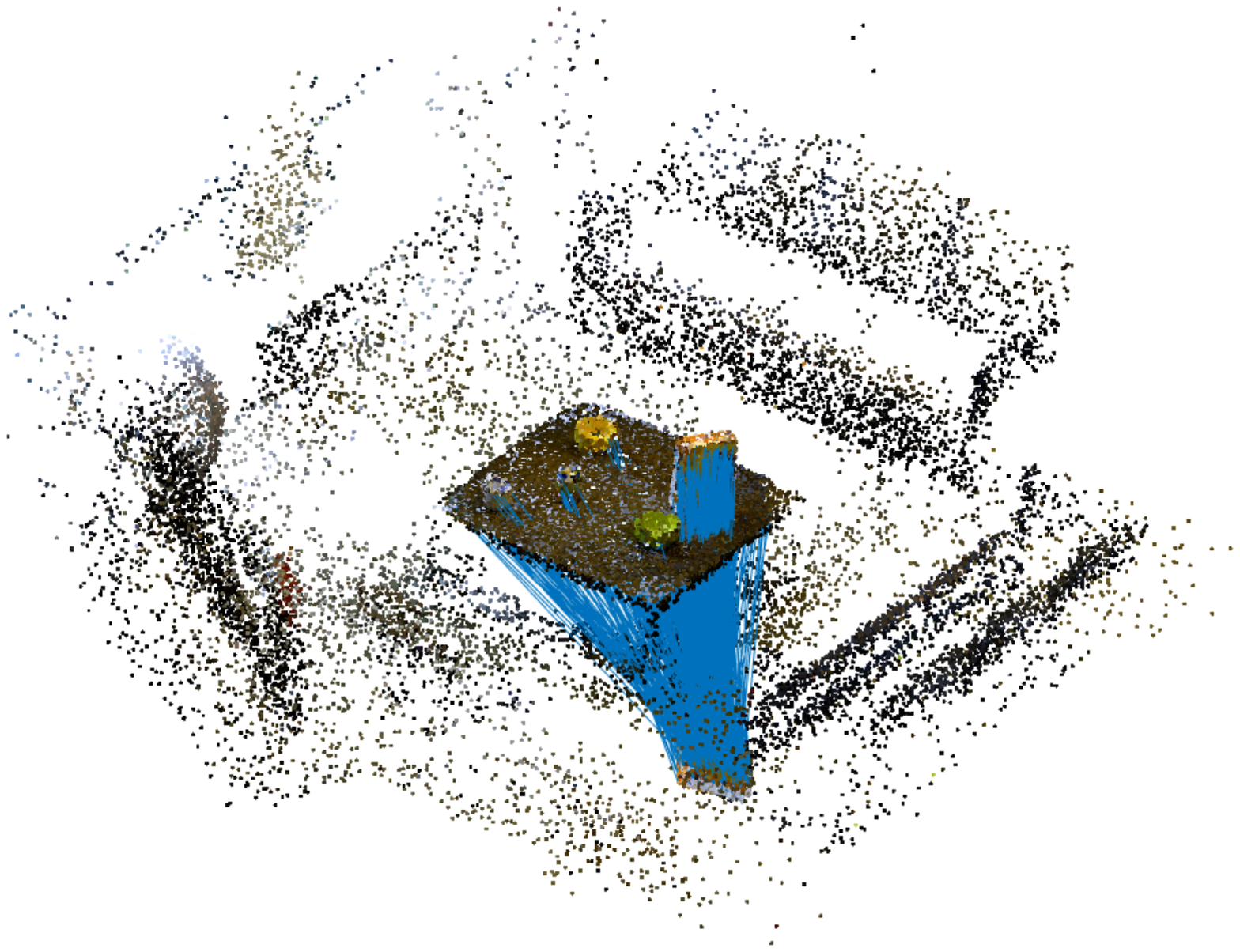} \\
			\end{minipage}
		& \myhspace
			\begin{minipage}{\mpw}%
			\centering%
			\includegraphics[width=\myRate\columnwidth]{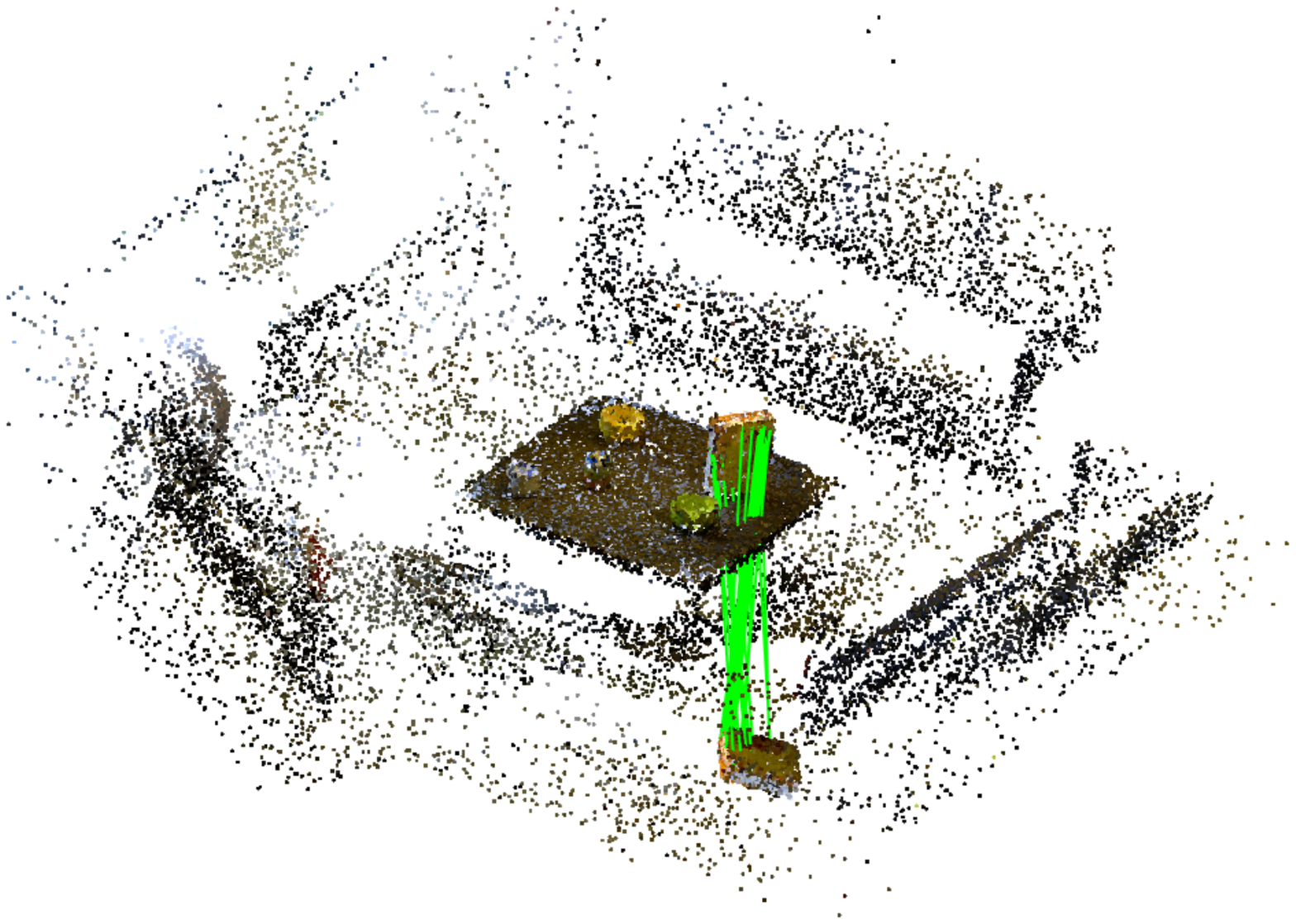} \\
			\end{minipage}
		& \myhspace
			\begin{minipage}{\mpw}%
			\centering%
			\includegraphics[width=\myRate\columnwidth]{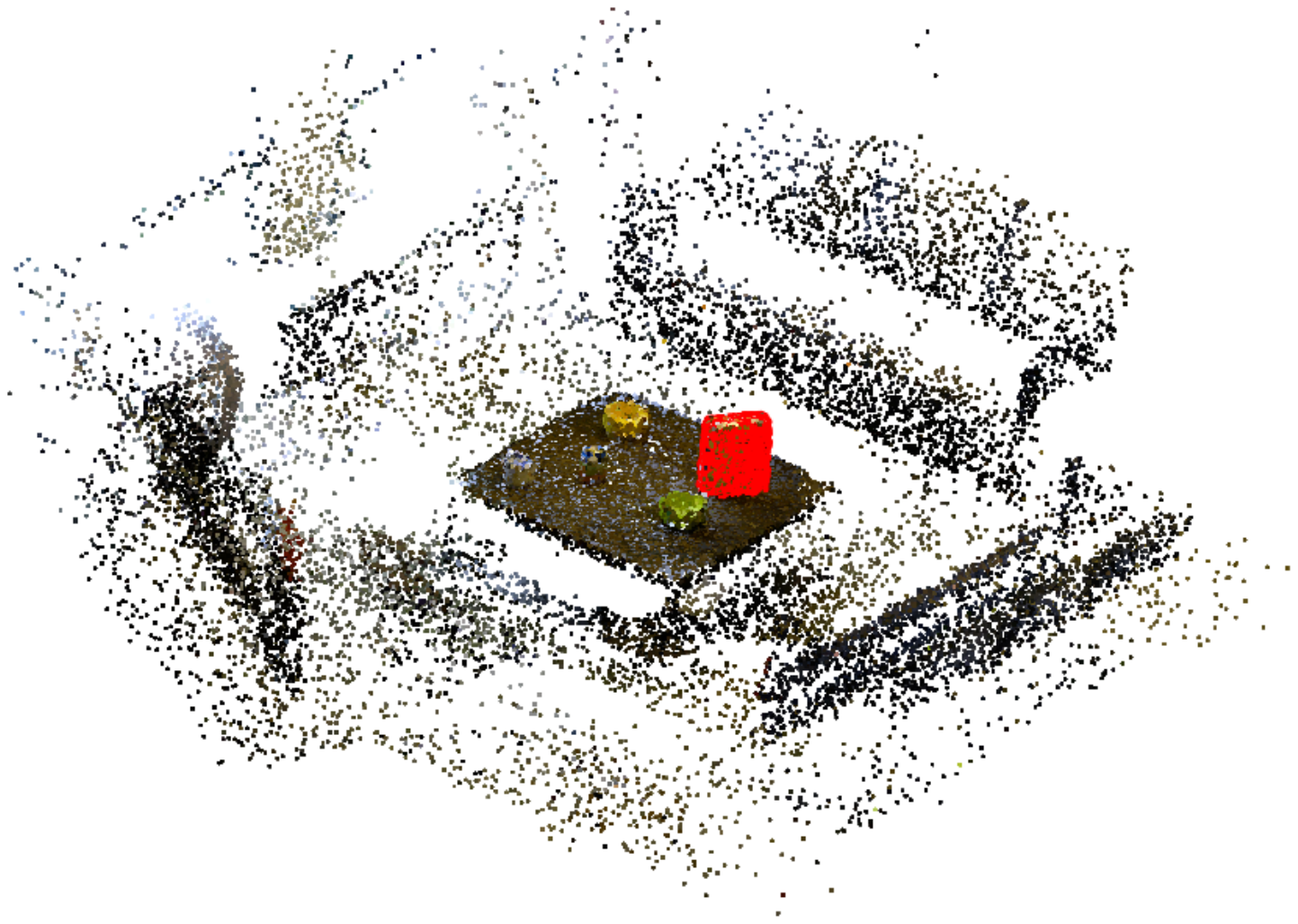} \\
			\end{minipage} \\
\multicolumn{3}{c}{\emph{scene-4}, \# of FPFH correspondences: 636, Inlier ratio: 4.56\%, Rotation error: 0.042, Translation error: 0.051.}\\

		\begin{minipage}{\mpw}%
			\centering%
			\includegraphics[width=\myRate\columnwidth]{scene_3_a_5.pdf} \\
			\end{minipage}
		& \myhspace
			\begin{minipage}{\mpw}%
			\centering%
			\includegraphics[width=\myRate\columnwidth]{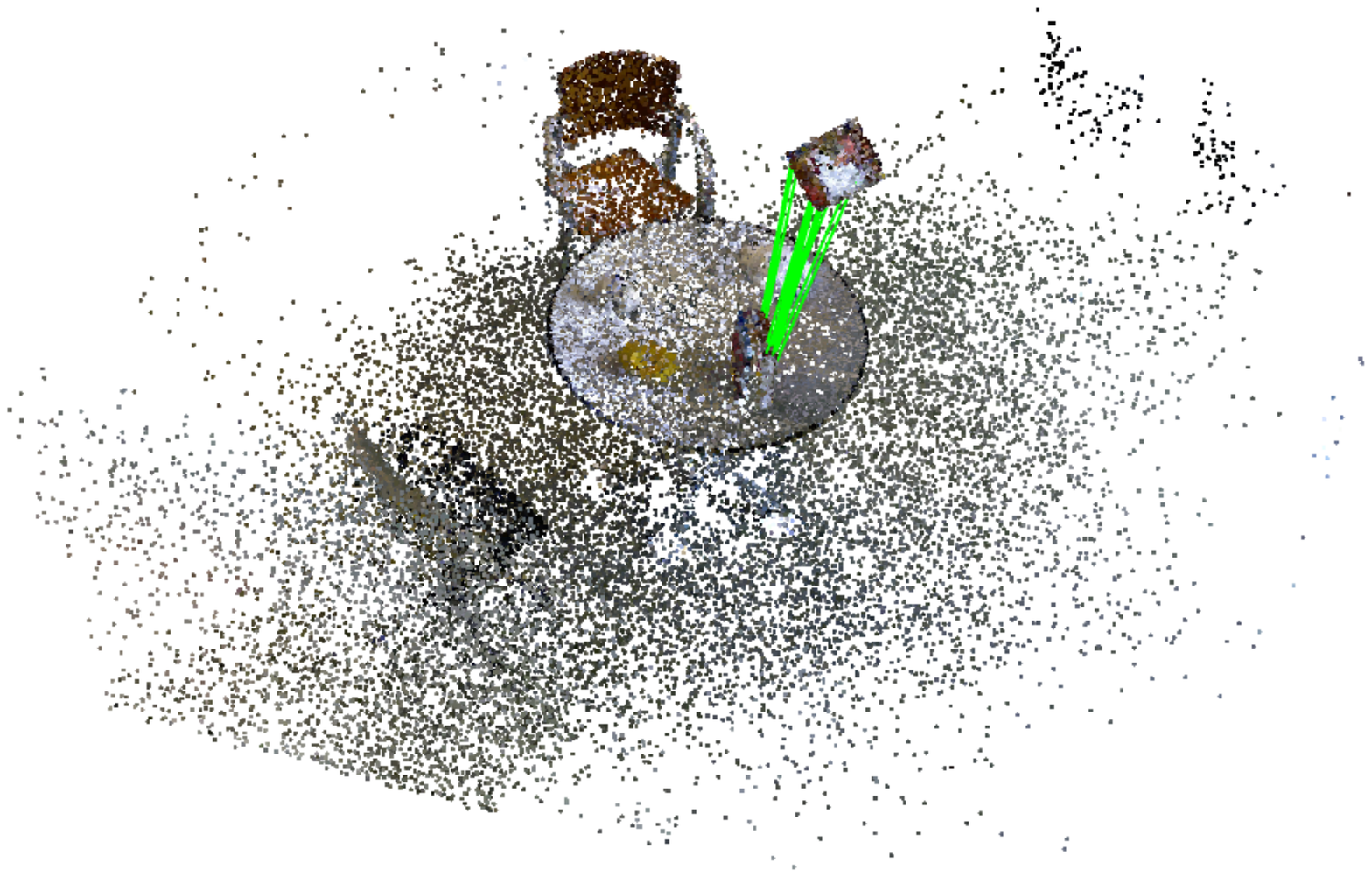} \\
			\end{minipage}
		& \myhspace
			\begin{minipage}{\mpw}%
			\centering%
			\includegraphics[width=\myRate\columnwidth]{scene_3_c_5.pdf} \\
			\end{minipage} \\
		\multicolumn{3}{c}{\emph{scene-5}, \# of FPFH correspondences: 685, Inlier ratio: 2.63\%, Rotation error: 0.146, Translation error: 0.176.}\\
		
		\begin{minipage}{\mpw}%
			\centering%
			\includegraphics[width=\myRate\columnwidth]{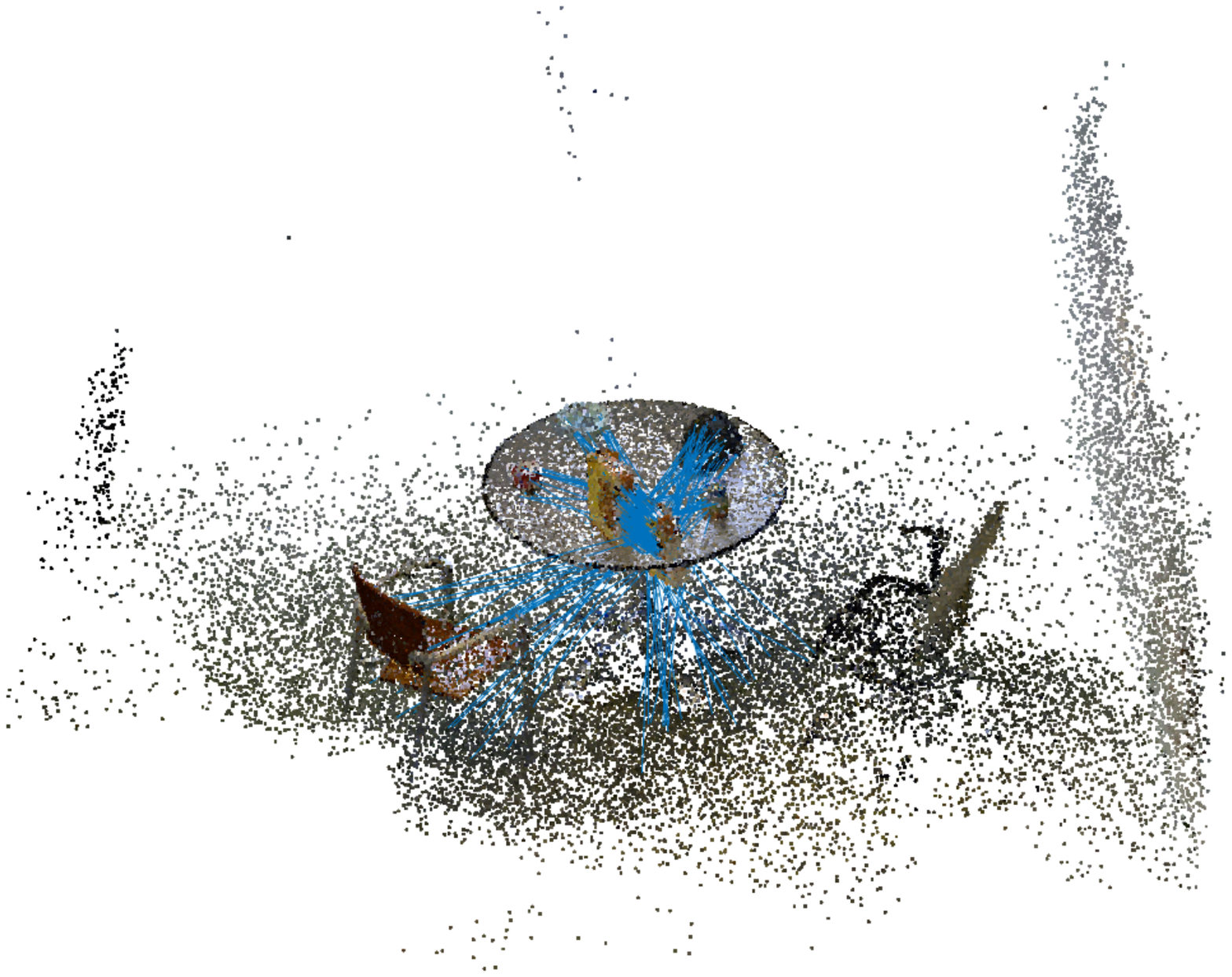} \\
			\end{minipage}
		& \myhspace
			\begin{minipage}{\mpw}%
			\centering%
			\includegraphics[width=\myRate\columnwidth]{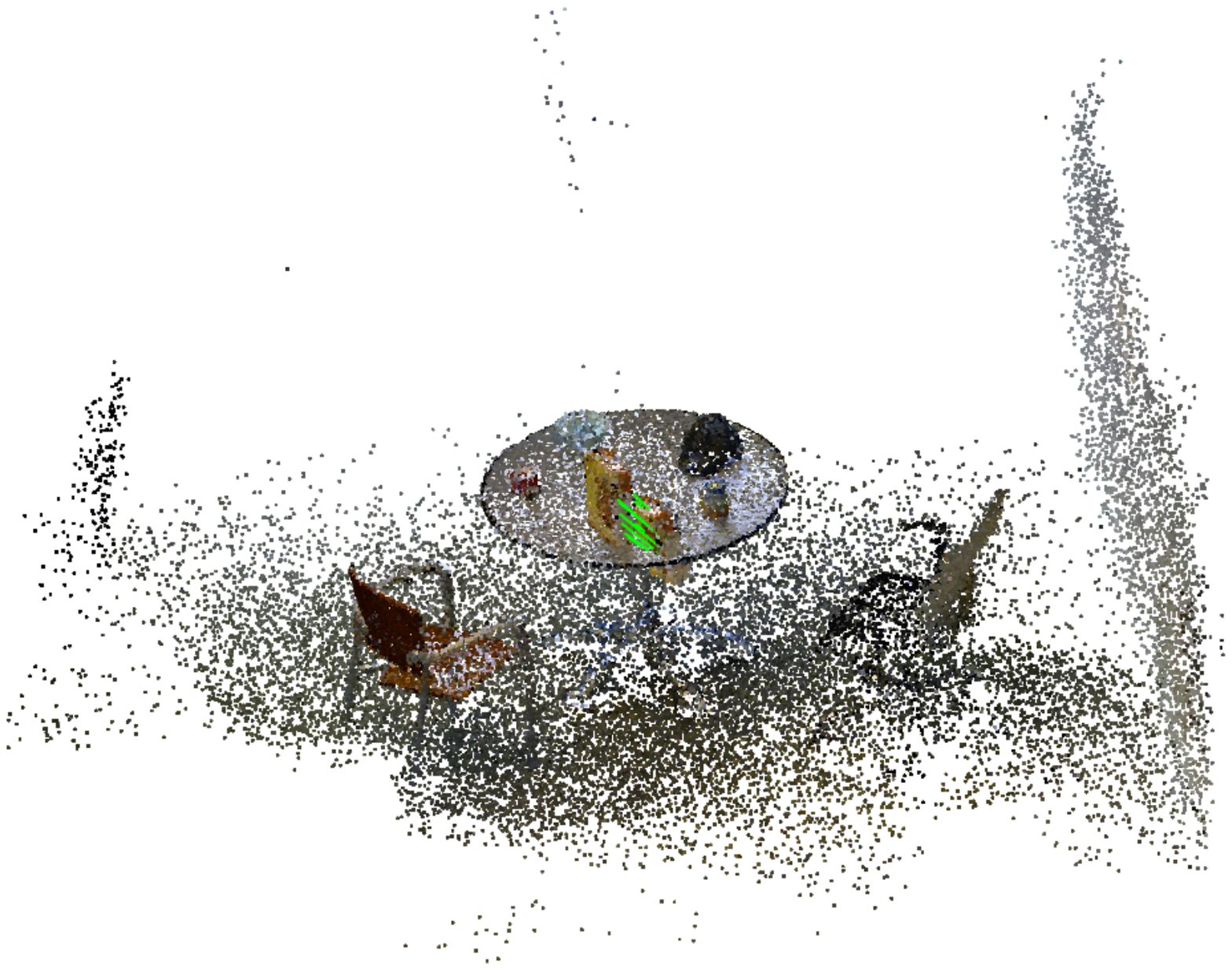} \\
			\end{minipage}
		& \myhspace
			\begin{minipage}{\mpw}%
			\centering%
			\includegraphics[width=\myRate\columnwidth]{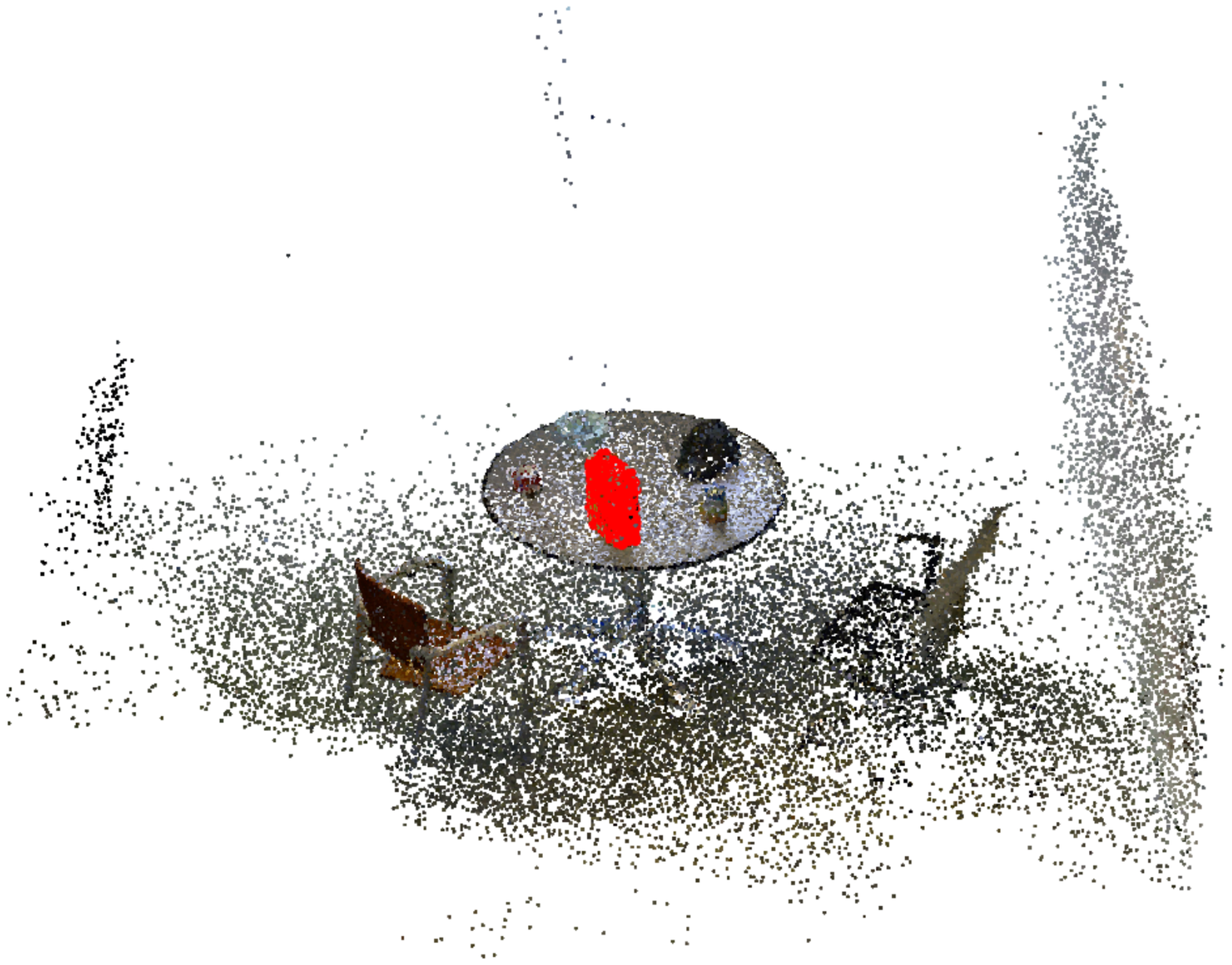} \\
			\end{minipage} \\
		\multicolumn{3}{c}{\emph{scene-7}, \# of FPFH correspondences: 416, Inlier ratio: 3.13\%, Rotation error: 0.058, Translation error: 0.097.} 
		
		\end{tabular}
	\end{minipage}
	\vspace{-3mm} 
	\caption{Object pose estimation on the large-scale RGB-D dataset~\cite{Lai11icra-largeRGBD}. First column: FPFH correspondences, second column: inlier correspondences after \name, third column: registration result with the registered object highlighted in red. Scene number and related registration information are listed below each scene.}
	\vspace{-8mm} 
	\end{center}
\end{figure*}

\clearpage

\begin{figure*}[h]\ContinuedFloat
	\begin{center}
	\begin{minipage}{\textwidth}
	\hspace{-0.2cm}
	\begin{tabular}{ccc}%
		\begin{minipage}{\mpw}%
			\centering%
			\includegraphics[width=1.0\columnwidth]{scene_5_a_9.pdf} \\
			\end{minipage}
		& \myhspace
			\begin{minipage}{\mpw}%
			\centering%
			\includegraphics[width=1.0\columnwidth]{scene_5_b_9.pdf} \\
			\end{minipage}
		& \myhspace
			\begin{minipage}{\mpw}%
			\centering%
			\includegraphics[width=1.0\columnwidth]{scene_5_c_9.pdf} \\
			\end{minipage} \\
	\multicolumn{3}{c}{\emph{scene-9}, \# of FPFH correspondences: 651, Inlier ratio: 8.29\%, Rotation error: 0.036, Translation error: 0.011.} \\
	
		\begin{minipage}{\mpw}%
			\centering%
			\includegraphics[width=1.0\columnwidth]{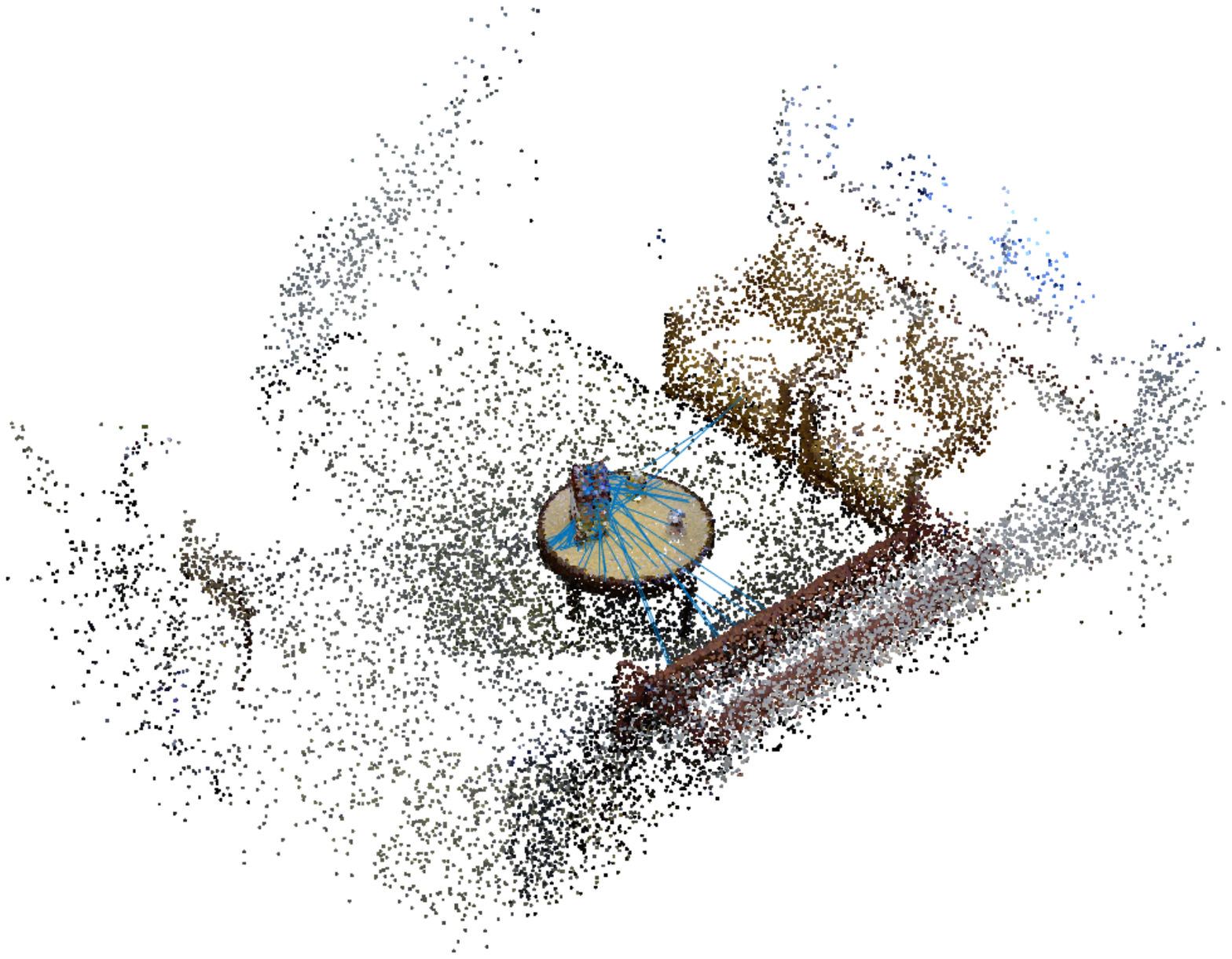} \\
			\end{minipage}
		& \myhspace
			\begin{minipage}{\mpw}%
			\centering%
			\includegraphics[width=1.0\columnwidth]{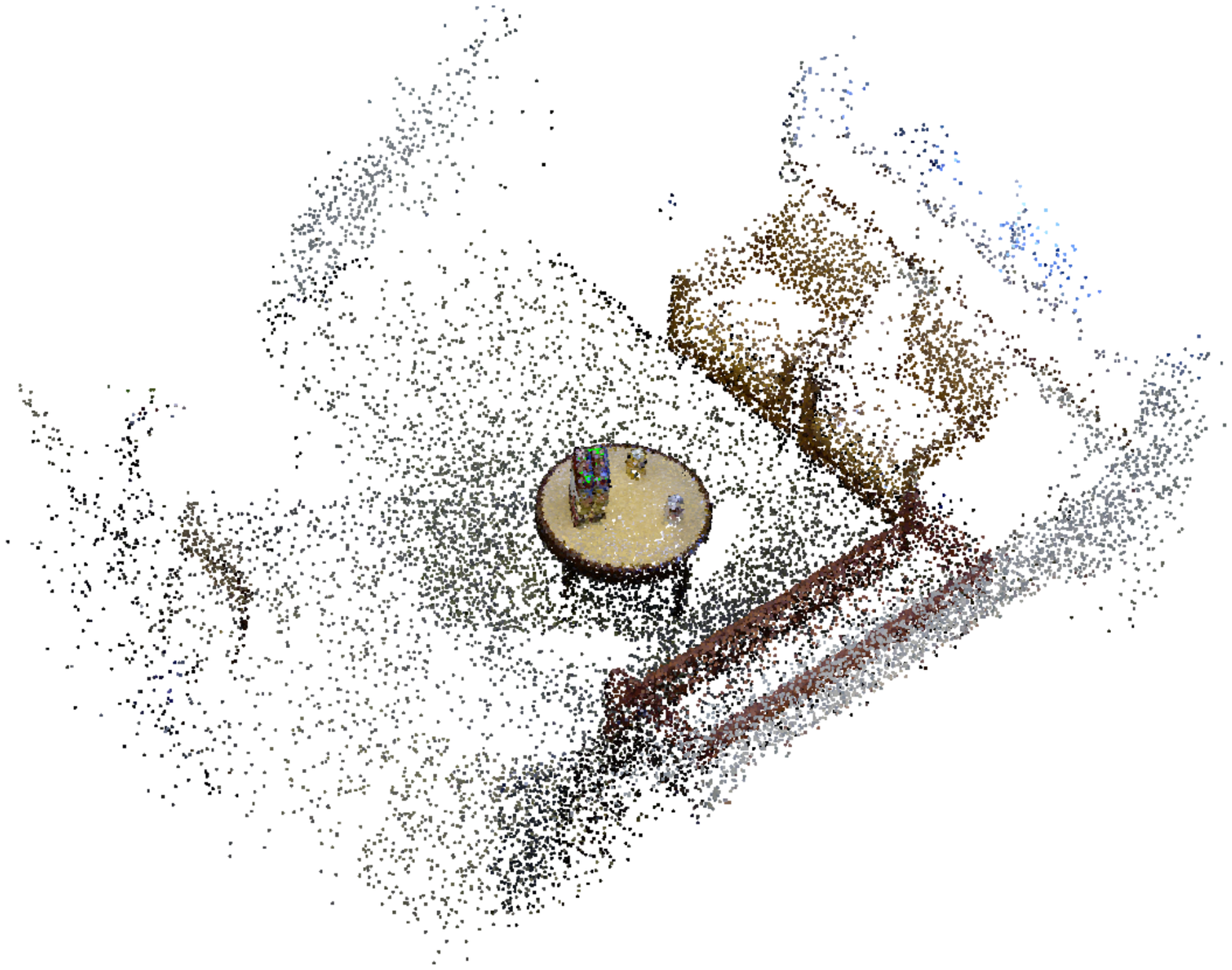} \\
			\end{minipage}
		& \myhspace
			\begin{minipage}{\mpw}%
			\centering%
			\includegraphics[width=1.0\columnwidth]{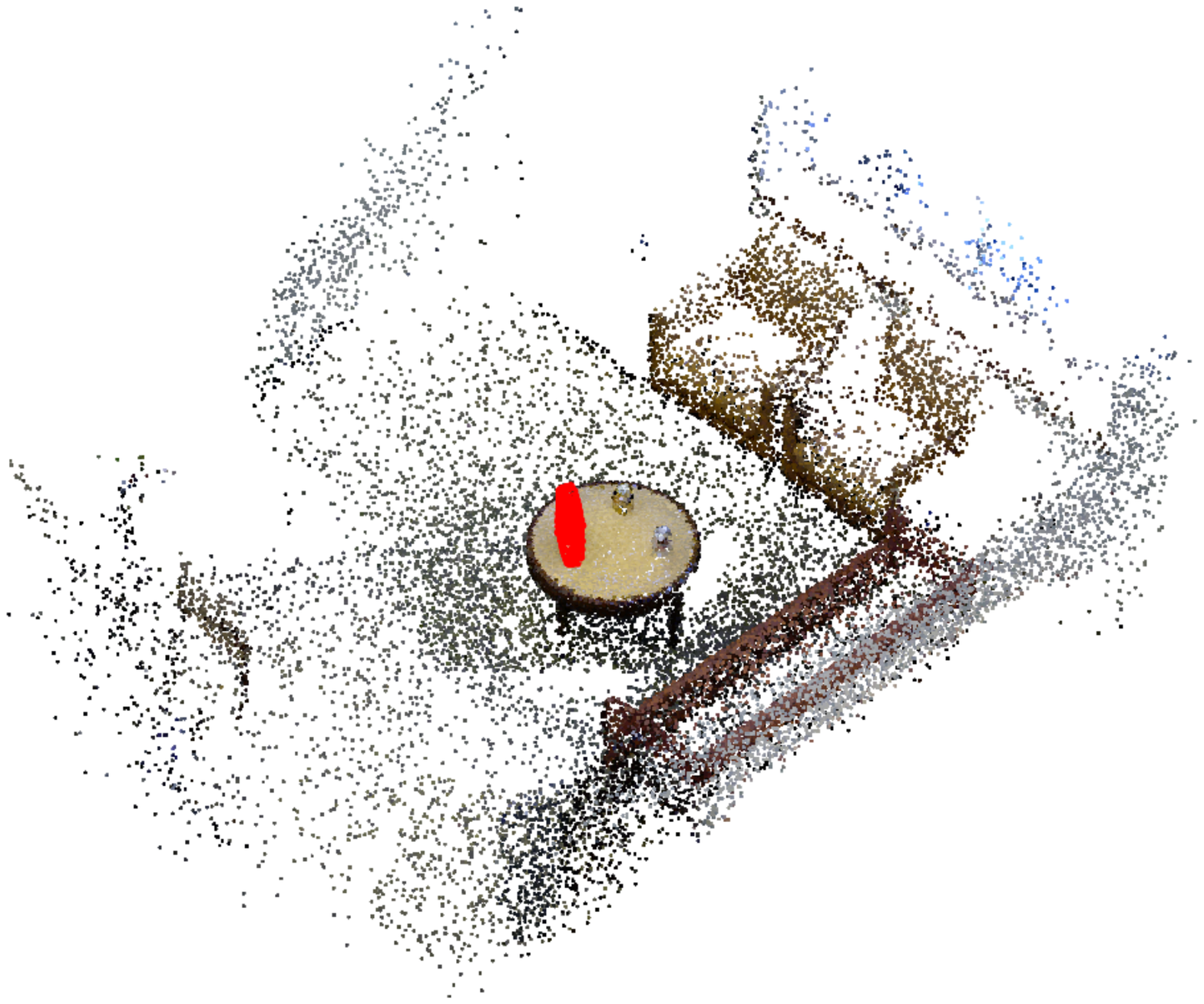} \\
			\end{minipage} \\
		\multicolumn{3}{c}{\emph{scene-11}, \# of FPFH correspondences: 445, Inlier ratio: 6.97\%, Rotation error: 0.028, Translation error: 0.016.} \\
		
		\begin{minipage}{\mpw}%
			\centering%
			\includegraphics[width=1.0\columnwidth]{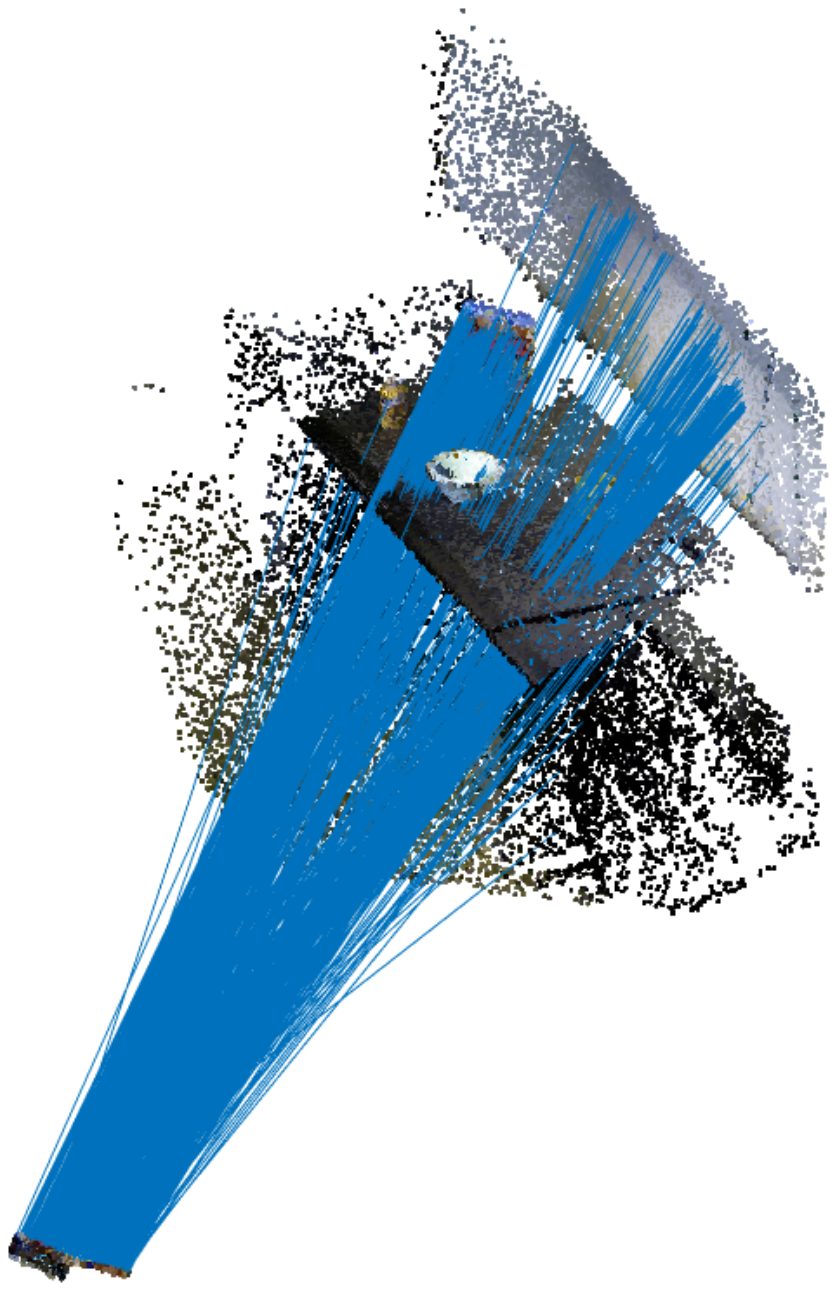} \\
			\end{minipage}
		& \myhspace
			\begin{minipage}{\mpw}%
			\centering%
			\includegraphics[width=1.0\columnwidth]{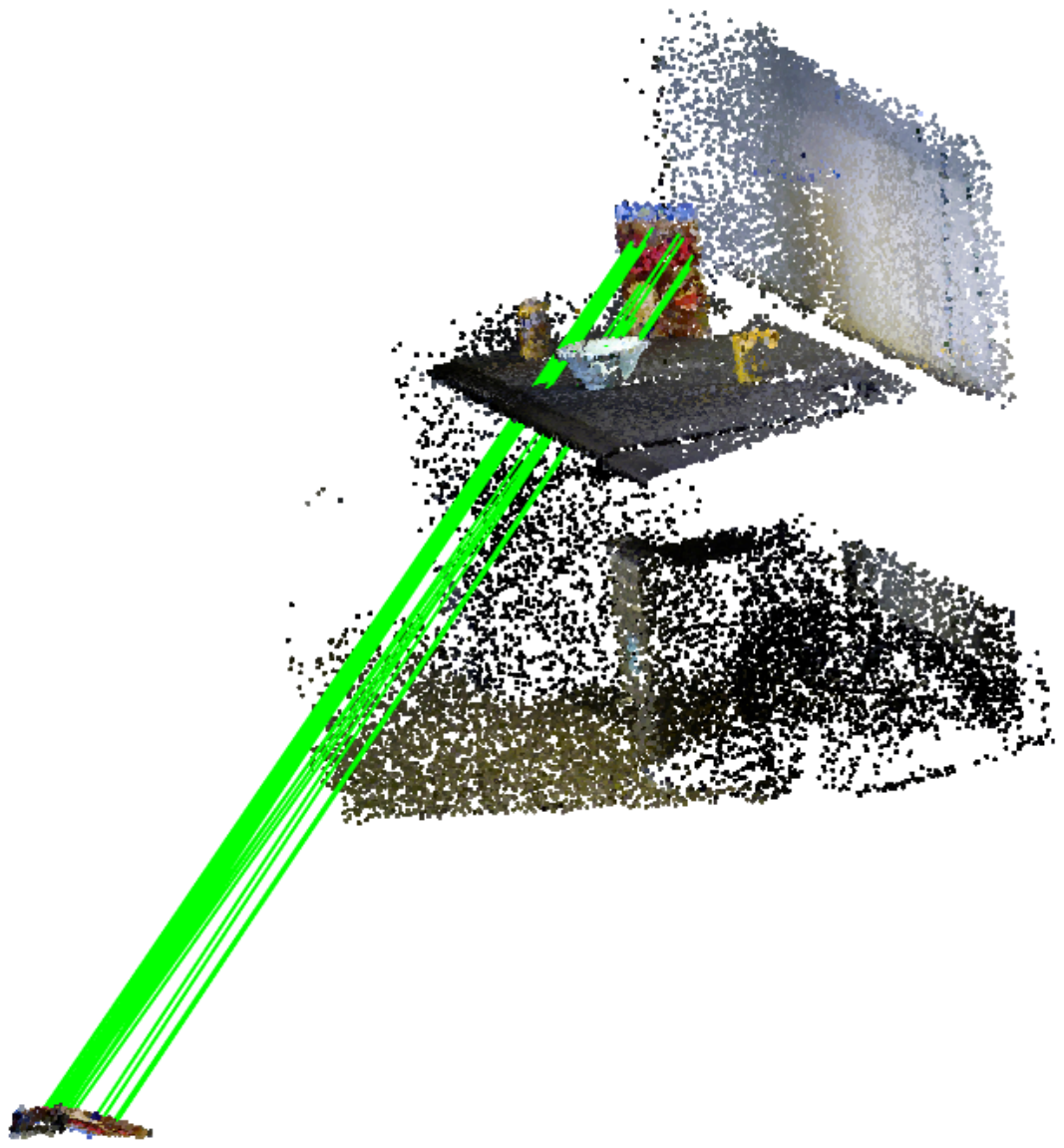} \\
			\end{minipage}
		& \myhspace
			\begin{minipage}{\mpw}%
			\centering%
			\includegraphics[width=1.0\columnwidth]{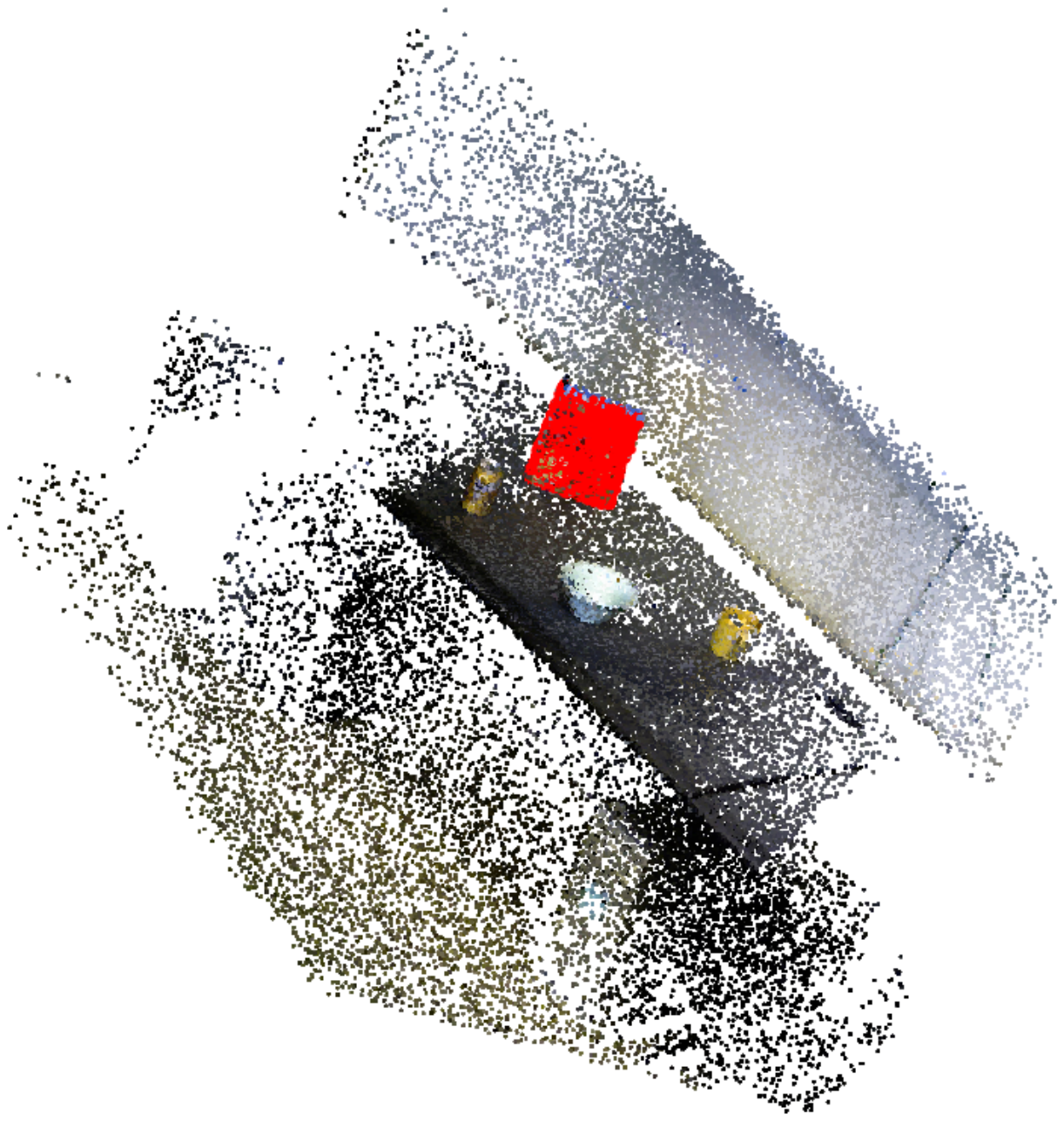} \\
			\end{minipage}\\
		\multicolumn{3}{c}{\emph{scene-13}, \# of FPFH correspondences: 612, Inlier ratio: 5.23\%, Rotation error: 0.036, Translation error: 0.064.} \\
		
		\begin{minipage}{\mpw}%
			\centering%
			\includegraphics[width=1.0\columnwidth]{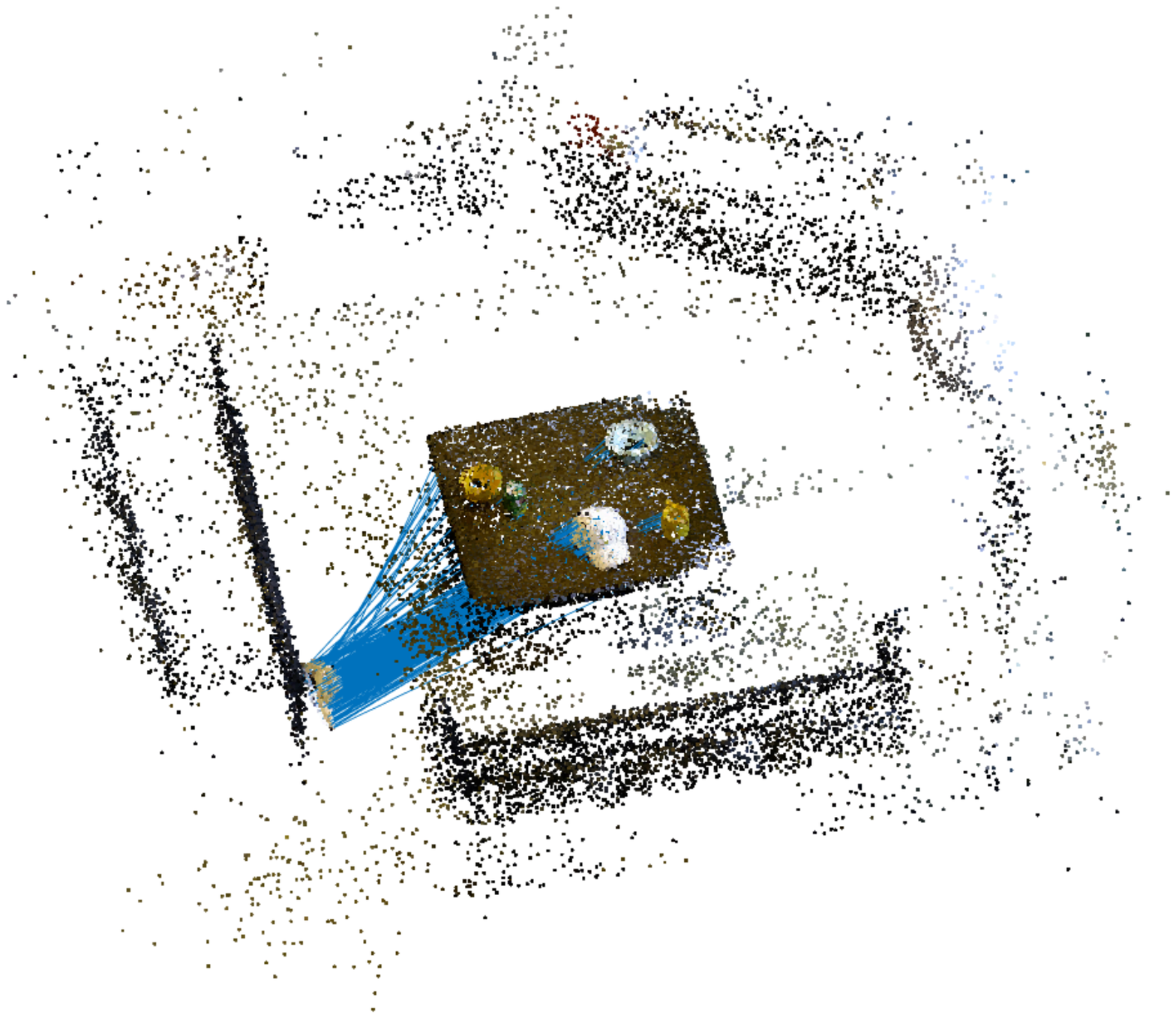} \\
			\end{minipage}
		& \myhspace
			\begin{minipage}{\mpw}%
			\centering%
			\includegraphics[width=1.0\columnwidth]{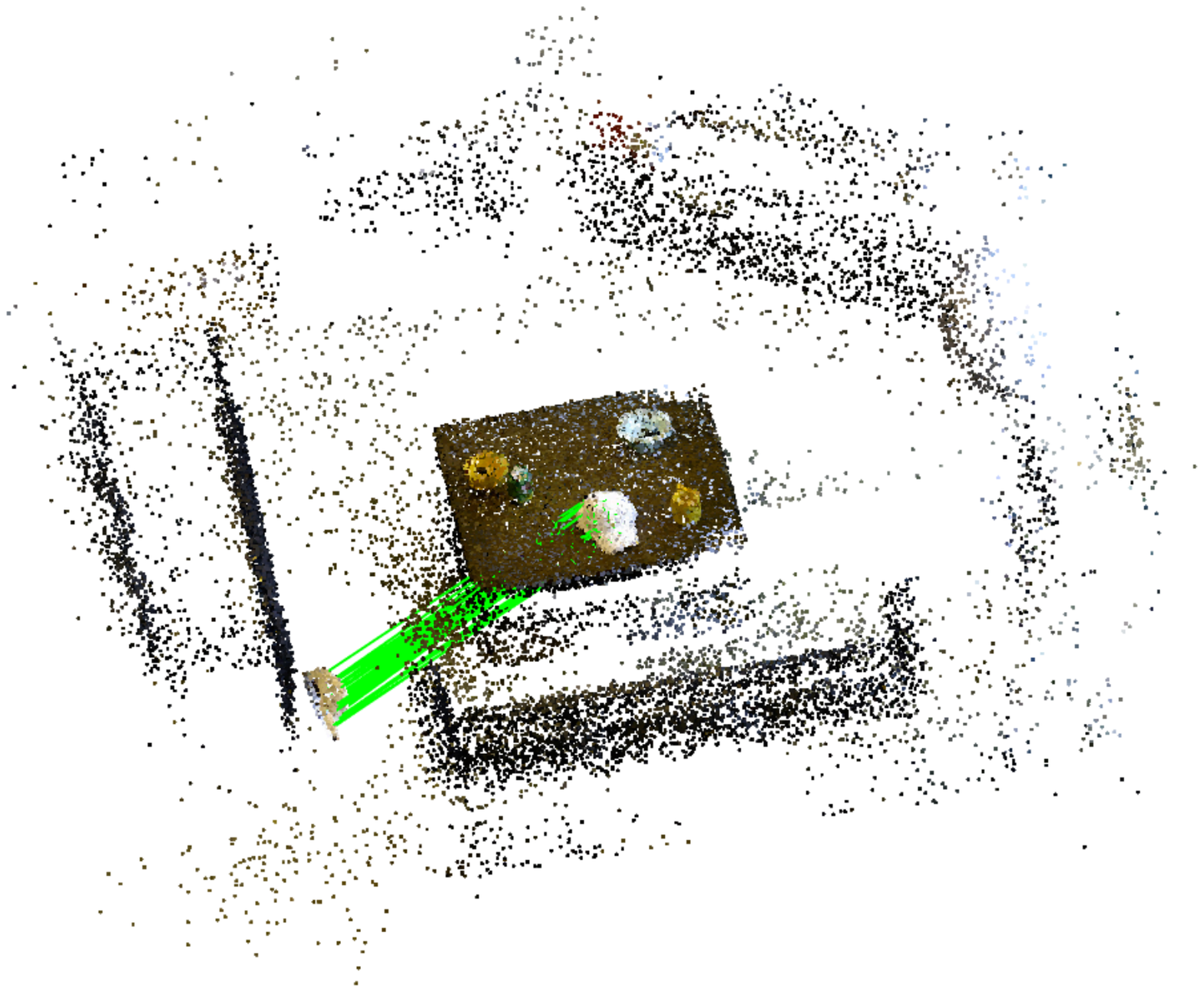} \\
			\end{minipage}
		& \myhspace
			\begin{minipage}{\mpw}%
			\centering%
			\includegraphics[width=1.0\columnwidth]{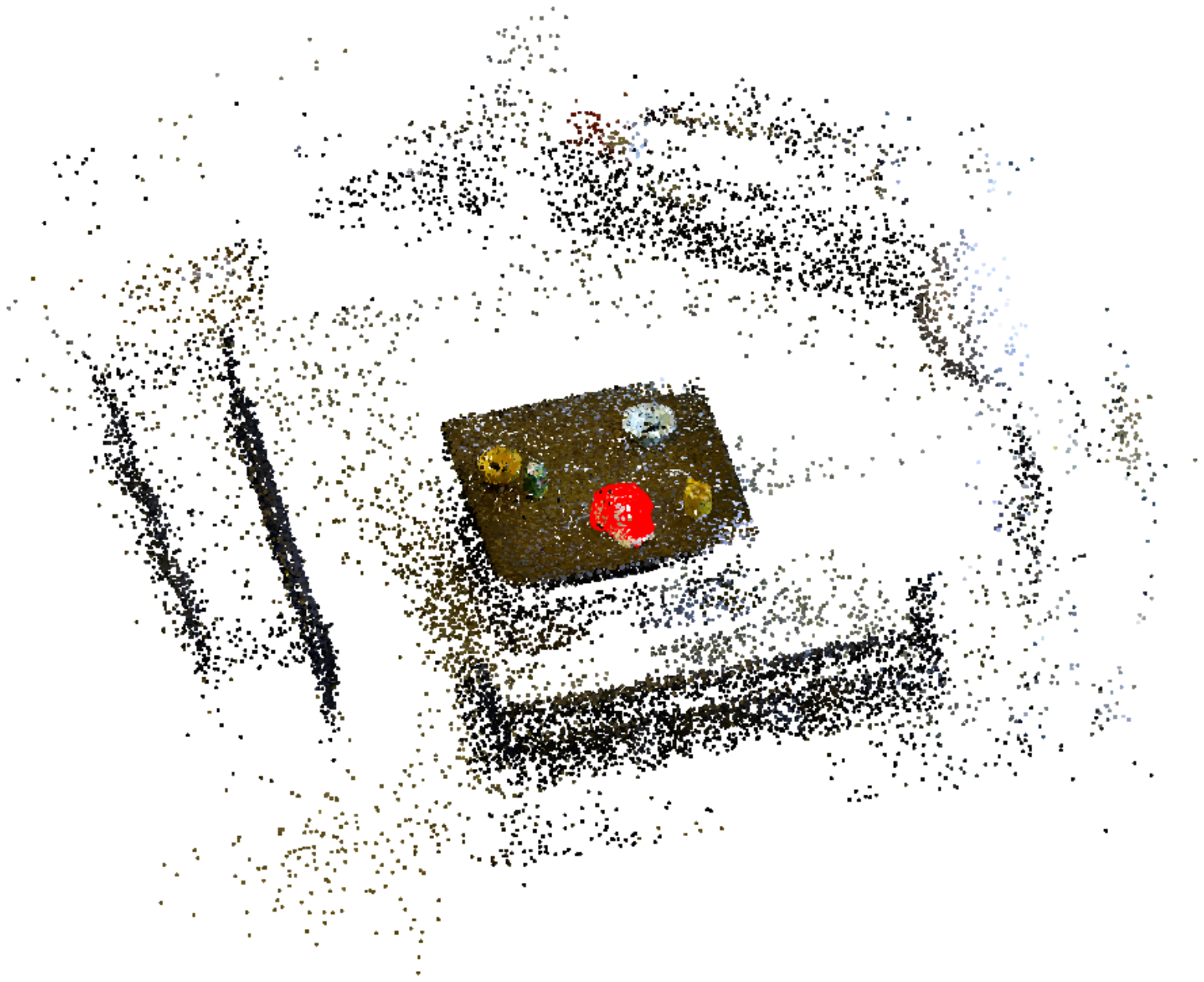} \\
			\end{minipage}\\
		\multicolumn{3}{c}{\emph{scene-1}, \# of FPFH correspondences: 207, Inlier ratio: 16.91\%, Rotation error: 0.066, Translation error: 0.090.} 
		
		\end{tabular}
	\end{minipage}
	\vspace{-3mm} 
	\caption{Object pose estimation on the large-scale RGB-D dataset~\cite{Lai11icra-largeRGBD} (cont.).
	 \label{fig:objectPoseEstimation}}
	\vspace{-8mm} 
	\end{center}
\end{figure*}

\clearpage
%%%%%%%%%%%%%%%%%%%%%%%%%%%%
%!TEX root = ../main.tex

\subsection{Proof of Proposition~\ref{prop:proj2barcalM}: Projection onto $\affineSub$}
\label{sec:app-proof-prop-projectionAffineSpace}
\begin{proof}
We first analyze the constraints defined by the affine subspace $\affineSub$~\eqref{eq:defbarcalM}. 
For any matrix $\barMM \in \affineSub$, $\barMM$ can be written as $\barMM = \barMQ - \hatmu\MJ + \MDelta$, with $\MDelta \in \calH$ and $\barMM$ satisfies $\barMM\barvxx = \MZero$, we write this in matrix form:
\bea \label{eq:barcalMExpansion}
\bmat{cccccc}
-\hatmu\eye_4 - \sum_{k=1}^K [\MDelta]_{kk} & [\barMQ]_{01} + [\MDelta]_{01} & \cdots & [\barMQ]_{0k}+[\MDelta]_{0k} & \cdots & [\barMQ]_{0K} + [\MDelta]_{0K}  \\

[\barMQ]_{01} - [\MDelta]_{01} & [\barMQ]_{11} + [\MDelta]_{11} & \cdots & [\MDelta]_{1k} & \cdots & [\MDelta]_{1K} \\
\vdots & \vdots & \ddots & \vdots & \ddots & \vdots \\

[\barMQ]_{0k} - [\MDelta]_{0k} & -[\MDelta]_{1k} & \cdots & [\barMQ]_{kk} + [\MDelta]_{kk} & \cdots & [\MDelta]_{kK} \\
\vdots & \vdots & \ddots & \vdots & \ddots & \vdots \\

[\barMQ]_{0K} - [\MDelta]_{0K} & -[\MDelta]_{1K} & \cdots & -[\MDelta]_{kK} & \cdots & [\barMQ]_{KK} + [\MDelta]_{KK} 
\emat
\bmat{c}
\ve \\ \hattheta_1\ve \\ \vdots \\ \hattheta_k\ve \\ \vdots \\ \hattheta_K \ve 
\emat  =
\bmat{c}
\MZero \\ \MZero \\ \vdots \\ \MZero \\ \vdots \\ \MZero 
\emat,
\eea
where the expressions for $[\barMQ]_{0k}$ and $[\barMQ]_{kk}$ are given by the following:
\bea
\begin{cases}
[\barMQ]_{0k} = \bmat{cc} -\vSkew{\TIMa_k}^2 + \quarter (\| \hatxi_k \|^2 - \barcsq)\eye_3 + \half \TIMa_k\tran \hatxi_k \eye_3 -\half \vSkew{\hatxi_k}\vSkew{\TIMa_k} - \half \hatxi_k \TIMa_k\tran & \half \vSkew{\hatxi_k}\TIMa_k \\ \half (\vSkew{\hatxi_k}\TIMa_k)\tran & \quarter (\| \hatxi_k \|^2 - \barcsq) \emat \\
[\barMQ]_{kk} = \bmat{cc} -2\vSkew{\TIMa_k}^2 + \half (\| \hatxi_k \|^2 + \barcsq)\eye_3 + \TIMa_k\tran \hatxi_k \eye_3 - \vSkew{\hatxi_k}\vSkew{\TIMa_k} - \hatxi_k\TIMa_k\tran & \vSkew{\hatxi_k}\TIMa_k \\ (\vSkew{\hatxi_k}\TIMa_k)\tran & \half(\| \hatxi_k \|^2 + \barcsq)\emat
\end{cases}, \forall k=1,\dots,K.
\eea
Denote $\hattheta_0 = +1$, we write the $k$-th ($k=1,\dots,K$) block-row equality of~\eqref{eq:barcalMExpansion}:
\bea
\left( \hattheta_0 [\barMQ]_{0k} + \hattheta_k [\barMQ]_{kk} + \hattheta_k [\MDelta]_{kk} + \sum_{i=0,i\neq k}^K \hattheta_i [\MDelta]_{ki} \right) \ve = \MZero \\
\Leftrightarrow 
\bmat{cc}
\star  & \hattheta_0[\barMQ]_{0k}^v + \hattheta_k [\barMQ]_{kk}^v + \hattheta_k [\MDelta]_{kk}^v + \sum_{i=0,i\neq k}^K \hattheta_i [\MDelta]_{ki}^v \\
\star & \hattheta_0[\barMQ]_{0k}^s + \hattheta_k[\barMQ]_{kk}^s + \hattheta_k [\MDelta]_{kk}^s 
\emat \ve = \MZero \\
\Leftrightarrow 
\begin{cases}
[\MDelta]_{kk}^v = -\hattheta_0\hattheta_k[\barMQ]_{0k}^v - [\barMQ]_{kk}^v - \sum_{i=0,i\neq k}^K \hattheta_i\hattheta_k[\MDelta]_{ki}^v = - (\half\hattheta_0\hattheta_k + 1)\vSkew{\hatxi_k}\TIMa_k - \sum_{i=0,i\neq k}^K \hattheta_i\hattheta_k[\MDelta]_{ki}^v \\
[\MDelta]_{kk}^s = -\hattheta_0\hattheta_k[\barMQ]_{0k}^s - [\barMQ]_{kk}^s = -(\quarter\hattheta_0\hattheta_k + \half)\| \hatxi_k \|^2 + (\quarter\hattheta_0\hattheta_k - \half)\barcsq
\end{cases},\label{eq:blockroweqkfrom1toK}
\eea
and from eq.~\eqref{eq:blockroweqkfrom1toK}, we can get the a sum of all $[\MDelta]_{kk}^v$ to be:
\bea
\sum_{k=1}^K [\MDelta]_{kk}^v & = \displaystyle \sum_{k=1}^K - \left(\half\hattheta_0\hattheta_k + 1\right)\vSkew{\hatxi_k}\TIMa_k - \sum_{k=1}^K\sum_{i=0,i\neq k}\hattheta_i\hattheta_k [\MDelta]_{ki}^v \nonumber \\
& = \displaystyle \sum_{k=1}^K - \left(\half\hattheta_0\hattheta_k + 1\right)\vSkew{\hatxi_k}\TIMa_k + \sum_{k=1}^K \hattheta_0\hattheta_k [\MDelta]_{0k}^v, \label{eq:sumDeltakkv} 
\eea
and the sum of all $[\MDelta]_{kk}^s$ to be:
\bea \label{eq:sumDeltakks}
\sum_{k=1}^K [\MDelta]_{kk}^s = \sum_{k=1}^K - \left(\quarter\hattheta_0\hattheta_k + \half \right)\| \hatxi_k \|^2 + \left(\quarter\hattheta_0\hattheta_k - \half \right) \barcsq.
\eea
Now we write the $0$-th block row equality of~\eqref{eq:barcalMExpansion}:
\bea
\left(-\hattheta_0\hatmu\eye_4 - \sum_{k=1}^K \hattheta_0[\MDelta]_{kk} + \sum_{k=1}^K \hattheta_k[\barMQ]_{0k} + \sum_{k=1}^K \hattheta_k [\MDelta]_{0k} \right) \ve = \MZero \\
\Leftrightarrow
\bmat{cc}
\star & - \sum_{k=1}^K \hattheta_0[\MDelta]_{kk}^v + \sum_{k=1}^K \hattheta_k[\barMQ]_{0k}^v + \sum_{k=1}^K \hattheta_k[\MDelta]_{0k}^v \\
\star & -\hattheta_0\hatmu - \sum_{k=1}^K \hattheta_0[\MDelta]_{kk}^s + \sum_{k=1}^K \hattheta_k [\barMQ]_{0k}^s 
\emat \ve = \MZero \\
\Leftrightarrow 
\begin{cases}
\sum_{k=1}^K [\MDelta]_{kk}^v = \sum_{k=1}^K \hattheta_0\hattheta_k [\barMQ]_{0k}^v + \sum_{k=1}^K \hattheta_0\hattheta_k [\MDelta]_{0k}^v \\
\sum_{k=1}^K [\MDelta]_{kk}^s = -\hatmu + \sum_{k=1}^K \hattheta_0\hattheta_k [\barMQ]_{0k}^s 
\end{cases}\\
\Leftrightarrow 
\begin{cases}
\sum_{k=1}^K [\MDelta]_{kk}^v = \sum_{k=1}^K \half \hattheta_0\hattheta_k \vSkew{\hatxi_k}\TIMa_k + \sum_{k=1}^K \hattheta_0\hattheta_k [\MDelta]_{0k}^v \\
\sum_{k=1}^K [\MDelta]_{kk}^s = -\hatmu + \sum_{k=1}^K \quarter \hattheta_0\hattheta_k (\| \hatxi_k \|^2 - \barcsq)
\end{cases}, \label{eq:blockroweqk0}
\eea
and compare eq.~\eqref{eq:blockroweqk0} with eq.~\eqref{eq:sumDeltakkv}:
\bea
\sum_{k=1}^K -\left( \half\hattheta_0\hattheta_k + 1 \right) \vSkew{\hatxi_k} \TIMa_k + \sum_{k=1}^K \hattheta_0\hattheta_k [\MDelta]_{0k}^v = \sum_{k=1}^K \half\hattheta_0\hattheta_k \vSkew{\hatxi_k}\TIMa_k + \sum_{k=1}^K \hattheta_0\hattheta_k[\MDelta]_{0k}^v \\
\Leftrightarrow 
\sum_{k=1}^K -\left( \half\hattheta_0\hattheta_k + 1 \right) \vSkew{\hatxi_k} \TIMa_k = \sum_{k=1}^K \half\hattheta_0\hattheta_k \vSkew{\hatxi_k}\TIMa_k \\
\Leftrightarrow
\sum_{\hattheta_k = +1} - \frac{3}{2} \vSkew{\hatxi_k}\TIMa_k + \sum_{\hattheta_k = -1} -\half \vSkew{\hatxi_k}\TIMa_k = \sum_{\hattheta_k = +1}\half \vSkew{\hatxi_k}\TIMa_k + \sum_{\hattheta_k = -1} -\half \vSkew{\hatxi_k}\TIMa_k \\
\Leftrightarrow
\sum_{\hattheta_k = +1}\vSkew{\hatxi_k} \TIMa_k = \MZero, \label{eq:inlierresidualakzero}
\eea
which states that for inliers $\hattheta_k = +1$, the sum $\sum_{\hattheta_k = +1} \vSkew{\hatxi_k}\TIMa_k = \MZero$ must hold\footnote{This condition is always true because this is the first-order stationary condition for the \TLS optimization considering only the least squares part applied on inliers.}. We then compare eq.~\eqref{eq:blockroweqk0} with eq.~\eqref{eq:sumDeltakks}:
\bea
\sum_{k=1}^K -\left( \quarter \hattheta_0\hattheta_k + \half \right) \| \hatxi_k \|^2 + \left( \quarter \hattheta_0\hattheta_k - \half \right)\barcsq = -\hatmu + \sum_{k=1}^K \quarter\hattheta_0\hattheta_k \left( \| \hatxi_k \|^2 - \barcsq \right) \\
\Leftrightarrow 
\sum_{k=1}^K - \left( \quarter \hattheta_k + \half\right) \| \hatxi_k \|^2 + \left(\quarter\hattheta_k - \half \right)\barcsq = -\left( \half (\hattheta_k + 1)\| \hatxi_k \|^2 + \half( 1 - \hattheta_k) \barcsq \right) + \sum_{k=1}^K \quarter \hattheta_k \left( \| \hatxi_k \|^2 - \barcsq \right) \\
\Leftrightarrow
\sum_{k=1}^K - \left( \quarter \hattheta_k + \half\right) \| \hatxi_k \|^2 + \left(\quarter\hattheta_k - \half \right)\barcsq = \sum_{k=1}^K - \left( \quarter \hattheta_k + \half\right) \| \hatxi_k \|^2 + \left(\quarter\hattheta_k - \half \right)\barcsq,
\eea
which holds true without any condition. 
Therefore, we conclude that the equality constraint of the $0$-th block row adds no extra constraints on the matrix $\MDelta$. 
This leads to the following Theorem.
\begin{theorem}[Equivalent Projection to $\affineSub$]The projection of a matrix $\MM$ onto $\affineSub$ is equivalent to:
\bea
\Pi_{\affineSub}(\MM) = \Pi_{\barcalH}(\MM - \barMQ + \hatmu\MJ) + \barMQ - \hatmu\MJ,
\eea
where $\barcalH$ is defined by:
\bea \label{eq:barcalH}
\barcalH \doteq \{\MDelta: \MDelta \in \calH, \MDelta \text{ satisfies eq.}~\eqref{eq:blockroweqkfrom1toK} \}
\eea
\end{theorem}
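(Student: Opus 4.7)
The plan is to recognize that the preceding algebraic derivation has, in effect, already characterized the affine subspace $\affineSub$ as a rigid translate of $\barcalH$ by the fixed matrix $\barMQ - \hatmu\MJ$, and then to invoke the translation-invariance of Euclidean projection onto closed convex sets.

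First, I would establish the set identity $\affineSub = (\barMQ - \hatmu\MJ) + \barcalH$. For the forward inclusion, let $\barMM \in \affineSub$ and set $\MDelta \doteq \barMM - \barMQ + \hatmu\MJ$. By the definition of $\affineSub$ we already have $\MDelta \in \calH$, and the constraint $\barMM\barvxx = \MZero$ — when expanded block-row by block-row as in eq.~\eqref{eq:barcalMExpansion} — forces the conditions in eq.~\eqref{eq:blockroweqkfrom1toK}, so $\MDelta \in \barcalH$. For the reverse inclusion, given $\MDelta \in \barcalH$, we put $\barMM \doteq \barMQ - \hatmu\MJ + \MDelta$; I would verify $\barMM \barvxx = \MZero$ by checking each block-row separately. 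Block-rows $k = 1, \ldots, K$ vanish directly by eq.~\eqref{eq:blockroweqkfrom1toK}. The $0$-th block row requires combining eq.~\eqref{eq:blockroweqkfrom1toK} with the summation identity for $\sum_{k=1}^K [\MDelta]_{kk}^v$, and here the only nontrivial consistency condition is the vector part, which reduces (after cancelling $\hattheta_0\hattheta_k[\MDelta]_{0k}^v$ on both sides) to the identity $\sum_{\hattheta_k = +1} \vSkew{\hatxi_k}\TIMa_k = \MZero$ in eq.~\eqref{eq:inlierresidualakzero}; the scalar part is an unconditional tautology, as shown just before the theorem. The former is precisely the first-order stationarity condition of the \TLS least-squares subproblem restricted to the inliers that \GNC converged to, so it holds automatically for the feasible solution $(\hatvq,\hattheta_1,\ldots,\hattheta_K)$ with which Algorithm~\ref{alg:optCertification} is seeded.

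With the identity $\affineSub = (\barMQ - \hatmu\MJ) + \barcalH$ in hand, the rest is a one-line application of a standard fact: for any closed convex set $\calS \subseteq \calS^n$ and any fixed $\MC \in \calS^n$, the Euclidean (Frobenius) projection satisfies $\Pi_{\MC + \calS}(\MX) = \MC + \Pi_{\calS}(\MX - \MC)$, because translation by $\MC$ is a Frobenius isometry and therefore preserves the argmin of $\|\MX - \cdot\|_\frob^2$. Setting $\MC = \barMQ - \hatmu\MJ$ and $\calS = \barcalH$ immediately yields
\[
\Pi_{\affineSub}(\MM) = \Pi_{\barcalH}\!\big(\MM - \barMQ + \hatmu\MJ\big) + \barMQ - \hatmu\MJ,
\]
which is the claimed equivalence.

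\textbf{Main obstacle.} The only nontrivial step is the set identity $\affineSub = (\barMQ - \hatmu\MJ) + \barcalH$, and within that, the subtle point is verifying that the $0$-th block-row equation of $\barMM\barvxx = \MZero$ imposes \emph{no} new constraint beyond eq.~\eqref{eq:blockroweqkfrom1toK} (modulo the inlier stationarity identity). The rest is essentially bookkeeping: once the preceding derivation's block-row calculations are accepted, the translation-invariance argument is immediate and the theorem follows without any further work.
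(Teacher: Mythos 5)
Your proof is correct and takes essentially the same route as the paper: you establish the set identity $\affineSub = (\barMQ - \hatmu\MJ) + \barcalH$ by the same block-row analysis of $\barMM\barvxx = \MZero$ (rows $1,\dots,K$ yielding eq.~\eqref{eq:blockroweqkfrom1toK}, the $0$-th row reducing to the inlier stationarity identity of eq.~\eqref{eq:inlierresidualakzero} plus a scalar tautology), and then conclude by translation invariance of the Frobenius projection, a step the paper leaves implicit. Your reliance on the seeded solution being stationary for the selected inliers matches the paper's own footnoted assumption, so there is no gap relative to the paper's argument.
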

Next we prove that the projection steps in Proposition~\ref{prop:proj2barcalM} define the correct projection $\Pi_{\barcalH}$. To this end, we use $\barMH$ to denote the projection of a matrix $\MH \in \calS^{4(K+1)}$ onto the set $\barcalH$. By the definition of projection, $\barMH$ is the minimizer of the following optimization:
\bea \label{eq:optProj2barcalH}
\barMH = \argmin_{\MDelta \in \barcalH} \| \MH - \MDelta \|_F^2.
\eea
Denoting the objective function of eq.~\eqref{eq:optProj2barcalH} as $f = \| \MH - \MDelta \|_F^2$, we separate the Frobenius norm of $\MH - \MDelta$ into a sum of block-wise Frobenius norms:
\bea
f = \| \MH - \MDelta \|_F^2 \\
= \underbrace{ \sum_{k=0}^{K}\| [\MDelta]_{kk}^m - [\MH]_{kk}^m \|_F^2 }_{f_{kk}^m: \text{ diagonal block matrix part}} + \underbrace{ \sum_{k=0}^K \| [\MDelta]_{kk}^s - [\MH]_{kk}^s \|_F^2 }_{f_{kk}^s: \text{ diagonal block scalar part}} + \underbrace{ 2\sum_{l=1}^L \| [\MDelta]_{\myBlocks(l)}^m - [\MH]_{\myBlocks(l)}^m \|_F^2 }_{f_{\myBlocks}^m: \text{ off-diagonal block matrix part}} + \underbrace{ 2\sum_{l=1}^L \| [\MDelta]_{\myBlocks(l)}^s - [\MH]_{\myBlocks(l)}^s \|_F^2 }_{f_{\myBlocks}^s: \text{ off-diagonal block scalar part}} \nonumber \\
+ 2\underbrace{ \left( \sum_{k=0}^K \| [\MDelta]_{kk}^v - [\MH]_{kk}^v \|_F^2 + \sum_{l=1}^L \| [\MDelta]_{\calW(l)}^v - [\MH]_{\calW(l)}^v \|_F^2 + \sum_{l=1}^L \| [\MDelta]_{\myBlocks(l)}^v - [\MH]_{\myBlocks(l)}^v \|_F^2 \right)}_{f^v: \text{ diagonal and off-diagonal vector part}}, \label{eq:separateFrobeniusNorm}
\eea
where $\calW$ is the following set of $L$ ordered indices that enumerate the lower-triangular blocks of any matrix $\MDelta \in \calS^{4(K+1)}$:
\bea \label{eq:setWloweridx}
\calW = \{(1,0),\dots,(K,0),(2,1),\dots,(K,1),\dots,(K,K-1)\},
\eea
and $\calW(l)$ equals to the flipped copy of $\myBlocks(l)$ (eq.~\eqref{eq:setUupperidx}). Using the separated Frobenius norm~\eqref{eq:separateFrobeniusNorm}, we can get the projection $\barMH$ block by block in the following steps.
\begin{enumerate}
	\item {\bf Diagonal block matrix part}. From the expression of $f^m_{kk}$ and the constraint that $\sum_{k=0}^K [\MDelta]_{kk}^m = \MZero$, we can get the diagonal block matrix part of $\barMH$ to be:
	\bea
	[\barMH]_{kk}^m = [\MH]_{kk}^m -  \frac{\displaystyle \sum_{k=0}^K [\MH]_{kk}^m }{K+1}, \quad \forall k = 0,\dots,K,
	\eea
	which first computes the mean of all diagonal block matrix parts, $\sum_{k=0}^K [\MH]_{kk}^m /(K+1)$, and then subtracts the mean from each individual diagonal block matrix part $[\MH]_{kk}^m$.

	\item {\bf Diagonal block scalar part}. From the expression of $f^s_{kk}$ and the constraint from eq.~\eqref{eq:blockroweqkfrom1toK} (which fixes the scalar parts to be known constants), we can get the diagonal block scalar part of $\barMH$ to be:
	\bea
	[\barMH]_{kk}^s = \begin{cases}
    -(\quarter \hattheta_k + \half) \| \hatxi_k \|^2 + (\quarter \hattheta_k - \half)\barcsq & k=1,\dots,K, \\
    \sum_{k=1}^K (\quarter \hattheta_k + \half ) \| \hatxi_k \|^2 - (\quarter\hattheta_k - \half)\barcsq & k=0.
	\end{cases},
	\eea
	which first computes the diagonal scalar parts for $k=1,\dots,K$ and then assigns the negative sum of all diagonal scalar parts to the scalar part of the top-left diagonal block ($[\barMH]_{00}^s$).

	\item {\bf Off-diagonal block matrix part}. From the expression of $f^m_{\myBlocks}$ and the constraint that $[\MDelta]_{\myBlocks(l)}$ is skew-symmetric, we can obtain the off-diagonal block matrix part of $\barMH$ to be:
	\bea
	[\barMH]_{\myBlocks(l)}^m = \frac{[\MH]_{\myBlocks(l)}^m - ([\MH]_{\myBlocks(l)}^m)\tran}{2},\quad \forall l=1,\dots,L,
	\eea
	which simply projects $[\MH]_{\myBlocks(l)}^m$ to its nearest skew-symmetric matrix. Note that it suffices to only project the upper-triangular off-diagonal blocks because both $\MH$ and $\barMH$ are symmetric matrices.

	\item {\bf Off-diagonal block scalar part}. The off-diagonal blocks are skew-symmetric matrices and the scalar parts must be zeros. Therefore, the off-diagonal block scalar parts of $\barMH$ are:
	\bea
	[\barMH]_{\myBlocks(l)}^s = 0, \quad \forall l=1,\dots,L.
	\eea

	\item {\bf Off-diagonal block vector part}. The projection of the vector parts of each block is non-trivial due to the coupling between the diagonal block vector parts $[\MDelta]_{kk}^v$ and the off-diagonal block vector parts $[\MDelta]_{\myBlocks(l)}^v, [\MDelta]_{\calW(l)}^v$. We first write the diagonal block vector parts as functions of the off-diagonal block vector parts by recalling eq.~\eqref{eq:blockroweqkfrom1toK} and~\eqref{eq:sumDeltakkv}:
	\bea
    \begin{cases}
    [\MDelta]_{kk}^v = \underbrace{- (\half\hattheta_0\hattheta_k + 1)\vSkew{\hatxi_k}\TIMa_k}_{:=\vphi_k} - \sum_{i=0,i\neq k}\hattheta_i\hattheta_k [\MDelta]_{ki}^v = - \sum_{i=0,i\neq k}\hattheta_i\hattheta_k [\MDelta]_{ki}^v + \vphi_k \\
    [\MDelta]_{00}^v = -\sum_{k=1}^K [\MDelta]_{kk}^v = \underbrace{\sum_{k=1}^K (\half\hattheta_0\hattheta_k + 1)\vSkew{\hatxi_k}\TIMa_k}_{:=\vphi_0} - \sum_{k=1}^K \hattheta_0\hattheta_k [\MDelta]_{0k}^v = -\sum_{i=0,i\neq 0} \hattheta_i\hattheta_k [\MDelta]_{0i} + \vphi_0
    \end{cases},
	\eea
	which shows that we can write $[\MDelta]_{kk}^v,k=0,\dots,K$ in a unified notation:
	\bea \label{eq:unifiedDeltakkv}
 	[\MDelta]_{kk}^v = -\sum_{i=0,i\neq k}^K \hattheta_k\hattheta_i [\MDelta]_{ki}^v + \vphi_k, \quad \forall k=0,\dots,K.
	\eea
	Inserting eq.~\eqref{eq:unifiedDeltakkv} into the expression of $f^v$ in eq.~\eqref{eq:separateFrobeniusNorm}, and using the fact that $[\MDelta]_{\calW(l)}^v = - [\MDelta]_{\myBlocks(l)}^v$, we obtain the following unconstrained optimization problem:
	\bea \label{eq:unconOptProjbarcalHvector}
    \min_{[\MDelta]_{\myBlocks(l)}^v} \sum_{l=1}^L \left\| [\MDelta]_{\myBlocks(l)}^v - [\MH]_{\myBlocks(l)}^v \right\|^2 + \left\| [\MDelta]_{\myBlocks(l)}^v + [\MH]_{\calW(l)}^v \right\|^2 + \sum_{k=0}^K \left\| \sum_{i=0,i\neq k}^K \hattheta_k\hattheta_i [\MDelta]_{ki}^v - \vphi_k + [\MH]_{kk}^v \right\|^2.
	\eea
	To solve problem~\eqref{eq:unconOptProjbarcalHvector}, we first derive the partial derivatives \wrt each variable $[\MDelta]_{\myBlocks(l)}^v$. For notation simplicity, we write $\myBlocks(l) = (r_l,c_l)$, where $r_l < c_l$ represent the block-row and block-column indices. In addition, we use $u(r,c)$ to denote the pointer that returns the index of $(r,c)$ in the set $\myBlocks$. Using this notation, the partial derivative of $f^v$ \wrt $[\MDelta]_{\myBlocks(l)}^v$ is:
	\bea
\frac{\partial f^v}{\partial [\MDelta]_{\myBlocks(l)}^v} = \frac{\partial \left( \substack{ \| [\MDelta]_{\myBlocks(l)}^v - [\MH]_{\myBlocks(l)}^v \|^2 + \| [\MDelta]_{\myBlocks(l)}^v + [\MH]_{\calW(l)}^v \|^2 + \\ \| \sum_{i=0,i\neq r_l}^K \hattheta_{r_l}\hattheta_i [\MDelta]_{r_l,i}^v - \vphi_{r_l} + [\MH]_{r_l,r_l}^v \|^2  +  \| \sum_{i=0,i\neq c_l}^K \hattheta_{c_l}\hattheta_i [\MDelta]_{c_l,i}^v - \vphi_{c_l} + [\MH]_{c_l,c_l}^v \|^2 } \right) }{\partial [\MDelta]_{\myBlocks(l)}^v} \\
= 2\left( \substack{ \displaystyle 2[\MDelta]_{\myBlocks(l)}^v - [\MH]_{\myBlocks(l)}^v + [\MH]_{\calW(l)}^v + \\ \displaystyle  
\hattheta_{r_l}\hattheta_{c_l} \left(\sum_{\substack{i=0\\i\neq r_l} }^K \hattheta_{r_l} \hattheta_i [\MDelta]_{r_l,i}^v \!-\! \vphi_{r_l} + [\MH]_{r_l,r_l}^v \right) - \hattheta_{c_l}\hattheta_{r_l} \left( \sum_{\substack{ i=0\\i\neq c_l }}^K \hattheta_{c_l}\hattheta_i [\MDelta]_{c_l,i}^v - \vphi_{c_l} + [\MH]_{c_l,c_l}^v \right) } \right) \\
\hspace{-12mm}= 2 \left( 4[\MDelta]_{\myBlocks(l)}^v \!-\! [\MH]_{\myBlocks(l)}^v + [\MH]_{\calW(l)}^u + \!\!\!\sum_{i\neq r_l,c_l}^K \hattheta_{c_l}\hattheta_i [\MDelta]_{r_l,i}^v \!-\!\!\!\sum_{i\neq r_l,c_l}^K \hattheta_{r_l}\hattheta_i [\MDelta]_{c_l,i}^v  \!+\! \hattheta_{r_l}\hattheta_{c_l}\left (\vphi_{c_l} - \vphi_{r_l} + [\MH]_{r_l,r_l}^v - [\MH]_{c_l,c_l}^v\right)  \right). \label{eq:halfwayderivative}
	\eea
	Notice that not all of the $[\MDelta]_{r_l,i}^v$ and $[\MDelta]_{c_l,i}^v$ in eq.~\eqref{eq:halfwayderivative} belong to the upper-triangular part of $\MDelta$, therefore, we use the fact that $[\MDelta]_{ij}^v = -[\MDelta]_{ji}^v$ for any $i\neq j$ to convert all lower-triangular vectors to upper-triangular. In addition, we set the partial derivative to zero to obtain sufficient and necessary conditions to solve the convex unconstrained optimization~\eqref{eq:unconOptProjbarcalHvector}:
\bea
\hspace{-6mm}4[\MDelta]_{\myBlocks(l)}^v + \sum_{i\neq r_l,c_l}^K \hattheta_{c_l}\hattheta_i [\MDelta]_{r_l,i}^v - \sum_{i\neq r_l,c_l}^K \hattheta_{r_l}\hattheta_i [\MDelta]_{c_l,i}^v = \underbrace{ [\MH]_{\myBlocks(l)}^v - [\MH]_{\calW(l)}^v + \hattheta_{r_l}\hattheta_{c_l}\left( \vphi_{r_l} - \vphi_{c_l} - [\MH]_{r_l,r_l}^v + [\MH]_{c_l,c_l}^v \right) }_{:= \calF_l(\MH)} \label{eq:defcalFl}\\
\Leftrightarrow 
4[\MDelta]_{\myBlocks(l)}^v + \sum_{i < r_l} \hattheta_{c_l}\hattheta_i [\MDelta]_{r_l,i}^v + \sum_{i > r_l, i\neq c_l} \hattheta_{c_l}\hattheta_i [\MDelta]_{r_l,i}^v - \sum_{i<c_l, i\neq r_l} \hattheta_{r_l}\hattheta_i [\MDelta]_{c_l,i}^v - \sum_{i>c_l} \hattheta_{r_l}\hattheta_i [\MDelta]_{c_l,i}^v = \calF_l(\MH) \\
\hspace{-6mm}\Leftrightarrow 
4[\MDelta]_{\myBlocks(l)}^v - \sum_{i < r_l} \hattheta_{c_l}\hattheta_i [\MDelta]^v_{u(i,r_l)} + \sum_{i>r_l, i \neq c_l} \hattheta_{c_l}\hattheta_i [\MDelta]^v_{u(r_l,i)} + \sum_{i < c_l,i\neq r_l} \hattheta_{r_l}\hattheta_i [\MDelta]^v_{u(i,c_l)} - \sum_{i > c_l}\hattheta_{r_l}\hattheta_i[\MDelta]^v_{u(c_l,i)} = \calF_l(\MH) \\
\Leftrightarrow \MA_l\tran \calC(\MDelta) = \calF_l\tran(\MH), \label{eq:singleLinearEquationPartialDerivative}
\eea
where the linear map $\calC: \calS^{4(K+1)} \rightarrow \Real{L \times 3}$ is as defined in eq.~\eqref{eq:defLinearMapC}, and $\calF_l(\MH) \in \Real{3}$ is defined as in eq.~\eqref{eq:defcalFl}, and $\MA_l \in \Real{L}$ is the following vector:
\bea \label{eq:eachRowmatrixA}
\MA_l(u) = \begin{cases}
4 & \text{if } u=l\\
-\hattheta_{c_l}\hattheta_i & \text{if } u = u(i,r_l),\forall 0 \leq i < r_l \\
\hattheta_{c_l}\hattheta_i & \text{if } u = u(r_l,i),\forall r_l <i \leq K, i\neq c_l \\
\hattheta_{r_l}\hattheta_i & \text{if } u = u(i,c_l),\forall 0 \leq i<c_l,i\neq r_l\\
-\hattheta_{r_l}\hattheta_i & \text{if } u = u(c_l,i),\forall c_l < i \leq K \\
0 & \text{otherwise}
\end{cases},
\eea
where $\MA_l(u)$ denotes the $u$-th entry of vector $\MA_l$.
Because the partial derivative \wrt each variable $[\MDelta]_{\myBlocks(l)}^v, l=1,\dots,L$, must be zero at the global minimizer, $\barMH$, of problem~\eqref{eq:unconOptProjbarcalHvector}, we have $L$ linear equations of the form eq.~\eqref{eq:singleLinearEquationPartialDerivative}, which allows us to write the following matrix form:
\bea
\underbrace{[\MA_1,\dots,\MA_l,\dots,\MA_L]\tran}_{\MA \in \calS^L} \underbrace{\calC(\barMH)}_{\in \Real{L \times 3}} = \underbrace{[\calF_1(\MH),\dots,\calF_l(\MH),\dots,\calF_L(\MH)]\tran}_{\calF(\MH) \in \Real{L \times 3}}
\eea
and we can compute $\calC(\barMH)$ by:
\bea \label{eq:proj2barcalHoffdiagvector}
\calC(\barMH) = \MA\inv \calF(\MH).
\eea
Surprisingly, due to the structure of $\MA$, its inverse can be computed in closed form.
\begin{theorem}[Closed-form Matrix Inverse]\label{thm:closedformmatrixInverse}
The inverse of matrix $\MA$~\eqref{eq:eachRowmatrixA}, denoted $\MP$, can be computed in closed form, with each column of $\MP$ being:
\bea \label{eq:eachRowmatrixP}
\MP_l(u) = \begin{cases}
p_1 & \text{if } u=l\\
\hattheta_{c_l}\hattheta_i p_2 & \text{if } u = u(i,r_l),\forall 0 \leq i < r_l \\
- \hattheta_{c_l}\hattheta_i p_2 & \text{if } u = u(r_l,i),\forall r_l <i \leq K, i\neq c_l \\
- \hattheta_{r_l}\hattheta_i p_2 & \text{if } u = u(i,c_l),\forall 0 \leq i<c_l,i\neq r_l\\
 \hattheta_{r_l}\hattheta_i p_2 & \text{if } u = u(c_l,i),\forall c_l < i \leq K \\
0 & \text{otherwise}
\end{cases},
\eea
where $p_1$ and $p_2$ are the following constants:
\bea \label{eq:inverseApconstants}
p_1 = \frac{K+1}{2K + 6},\quad p_2 = \frac{1}{2K + 6}.
\eea
\end{theorem}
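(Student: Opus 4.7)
The plan is to reduce the verification $\MA \MP = \eye_L$ to a single quadratic polynomial identity for $\MA$. A direct entry-by-entry comparison of~\eqref{eq:eachRowmatrixA} and~\eqref{eq:eachRowmatrixP} shows that $\MP_l(u) = -p_2\, \MA_l(u)$ for every $u \neq l$, while on the diagonal $\MP_l(l) = p_1 = (p_1 + 4p_2) - p_2\, \MA_l(l)$. Hence $\MP = (p_1 + 4p_2)\,\eye_L - p_2\, \MA$, and consequently
\begin{equation*}
\MA \MP \;=\; (p_1 + 4p_2)\, \MA \;-\; p_2\, \MA^2.
\end{equation*}
Plugging in $p_1 = (K+1)/(2K+6)$ and $p_2 = 1/(2K+6)$ one gets $(p_1 + 4p_2) - p_2(K+5) = 0$ and $p_2(2K+6) = 1$, so it suffices to prove the quadratic identity
\begin{equation*}
\MA^2 \;=\; (K+5)\, \MA \;-\; (2K+6)\, \eye_L. \tag{$\star$}
\end{equation*}

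To prove $(\star)$, the plan is to reinterpret each index $l \in \{1,\dots,L\}$ as an edge $(r_l, c_l)$ of the complete graph on vertices $\{0,\dots,K\}$. The sparsity pattern of $\MA$ is then edge adjacency, and consolidating the four cases in~\eqref{eq:eachRowmatrixA} yields the uniform expression $\MA_l(u) = \sigma_l(s)\, \sigma_u(s)\, \hattheta_{w_l}\, \hattheta_t$ whenever $l$ and $u$ share a vertex $s$, where $w_l$ and $t$ denote the other endpoints of $l$ and $u$, and $\sigma_e(v) = +1$ if $v$ is the smaller endpoint of edge $e$ and $-1$ otherwise. This formula makes $\MA$ manifestly symmetric and sets up a clean sign bookkeeping for the case analysis below.

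I would then verify $(\star)$ by computing $(\MA^2)_{lm} = \sum_u \MA_l(u)\MA_m(u)$ in three cases. \emph{Case $l = m$:} row $l$ has $r_l + (K-r_l-1) + (c_l-1) + (K-c_l) = 2K-2$ off-diagonal $\pm 1$ entries plus diagonal entry $4$, so $(\MA^2)_{ll} = 16 + (2K-2) = 2K+14 = 4(K+5) - (2K+6)$. \emph{Case: edges $l, m$ disjoint.} Then $\MA_{lm} = 0$, and the only $u$ contributing are the four ``bridging'' edges $u = (v_l, v_m)$ with $v_l \in \{r_l, c_l\}$, $v_m \in \{r_m, c_m\}$; the uniform formula gives $\MA_l(u)\MA_m(u) = -\sigma_l(v_l)\sigma_m(v_m)\, \hattheta_{r_l}\hattheta_{c_l}\hattheta_{r_m}\hattheta_{c_m}$ (since $\sigma_u(v_l)\sigma_u(v_m) = -1$ for the two endpoints of $u$), and summing factors as $-\hattheta_{r_l}\hattheta_{c_l}\hattheta_{r_m}\hattheta_{c_m}\bigl(\sigma_l(r_l) + \sigma_l(c_l)\bigr)\bigl(\sigma_m(r_m) + \sigma_m(c_m)\bigr) = 0$, because $\sigma_e(r_e) + \sigma_e(c_e) = 0$ for every edge $e$. \emph{Case: $l, m$ share exactly one vertex $s$.} The contributing $u$ split into three groups: (i) $u \in \{l, m\}$, contributing a total of $8\MA_{lm}$; (ii) the $K-2$ other edges through $s$, each contributing exactly $\MA_{lm}$ because $\hattheta_j^2 = 1$ kills the dependence on the non-shared endpoint $j$; (iii) the single edge joining the two non-shared endpoints of $l$ and $m$, contributing $-\MA_{lm}$ via the orientation signs. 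The total is $(8 + K - 2 - 1)\MA_{lm} = (K+5)\MA_{lm}$, as required.

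The main obstacle will be the careful orientation bookkeeping in the shared-vertex and disjoint cases: one must track how the signs $\sigma_e(v)$ combine when a single vertex plays different roles (smaller vs.\ larger endpoint) across different edges, and verify that the $\hattheta$-factors associated with non-shared vertices square away cleanly. The uniform formula $\MA_l(u) = \sigma_l(s) \sigma_u(s) \hattheta_{w_l} \hattheta_t$ is precisely what forces the $\hattheta$-dependence to collapse to either zero (disjoint case) or a scalar multiple of $\MA_{lm}$ (shared-vertex case), and this collapse is the reason $\MA$ satisfies a quadratic identity at all. Once $(\star)$ is established, the opening reduction closes the proof, and as a byproduct the constants in~\eqref{eq:inverseApconstants} are the unique scalars for which $(p_1 + 4p_2)\,\eye_L - p_2\, \MA$ inverts $\MA$.
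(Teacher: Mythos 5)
Your proposal is correct, but it follows a genuinely different route from the paper. The paper proves $\MA\MP = \eye_L$ by brute force: it first checks $\MA_l\tran\MP_l = 4p_1 - 2(K-1)p_2 = 1$, and then shows $\MA_l\tran\MP_m = 0$ for $l \neq m$ through a 12-way case analysis on the relative ordering of $(r_l,c_l)$ and $(r_m,c_m)$ (reduced to six explicit tables by symmetry), tracking the shared indices and their $\hattheta$-signs in each configuration. You instead notice that $\MP = (p_1+4p_2)\eye_L - p_2\MA$, so that inverting $\MA$ is equivalent to the single quadratic identity $\MA^2 = (K+5)\MA - (2K+6)\eye_L$, which you verify through the edge-adjacency interpretation of the index set $\myBlocks$ and the uniform sign formula $\MA_l(u) = \sigma_l(s)\sigma_u(s)\hattheta_{w_l}\hattheta_t$ (which I checked reproduces all four cases of~\eqref{eq:eachRowmatrixA}, and makes the symmetry $\MA\in\calS^L$ manifest). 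Your three cases — diagonal giving $16 + (2K-2) = 2K+14 = 4(K+5)-(2K+6)$, disjoint edges giving zero because $\sigma_e(r_e)+\sigma_e(c_e)=0$ factors out, and one shared vertex giving $(8 + (K-2) - 1)\MA_{lm} = (K+5)\MA_{lm}$ — all check out, including the $-\MA_{lm}$ sign from the bridging edge $(w_l,w_m)$ via $\sigma_l(w_l) = -\sigma_l(s)$. What your route buys is a shorter and more structural argument: the ordering-dependent case explosion disappears into the orientation signs $\sigma$, and the constants $p_1 = (K+1)/(2K+6)$, $p_2 = 1/(2K+6)$ emerge as the unique coefficients for which an affine polynomial in $\MA$ can be its inverse, rather than being verified after the fact. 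The paper's version, on the other hand, is entirely self-contained at the level of the explicit entry formulas~\eqref{eq:eachRowmatrixA}--\eqref{eq:eachRowmatrixP} and requires no auxiliary reformulation; the two proofs are complementary, with yours additionally revealing that $\MA$ satisfies a degree-two polynomial identity, which is the real reason a closed-form inverse exists.
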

\begin{proof}
To show $\MP$ is the inverse of $\MA$, we need to show $\MA \MP = \eye_L$. To do so, we first show that $\MA_l\tran \MP_l = 1$, which means the diagonal entries of $\MA\MP$ are all equal to 1. Note that from the definition of $\MA_l$ in~\eqref{eq:eachRowmatrixA} and $\MP_l$ in~\eqref{eq:eachRowmatrixP}, one can see that there are $2(K-1) + 1$ non-zero entries in $\MA_l$ and $\MP_l$ and $\MP_l(u) = -p_2 \MA_l(u)$ when $u\neq l$, and $\MP_l(u) = p_1,\MA_l(u) = 4$ when $u=l$, therefore, we have:
\bea
\MA_l\tran\MP_l = 4p_1 - 2(K-1)p_2 = \frac{4K+4}{2K+6} - \frac{2K-2}{2K+6} = \frac{2K+6}{2K+6} = 1.
\eea
We then show that $\MA_l\tran\MP_m = 0$ for any $l \neq m$. The non-zero indices of $\MA_l$, $\MP_m$ and their corresponding entries are:
\bea
\begin{array}{c|c}
u & \MA_l(u) \\
\hline
u(r_l,c_l) & 4 \\
u(i,r_l),\forall 0 \leq i < r_l & -\hattheta_{c_l}\hattheta_i \\
u(r_l,i),\forall r_l < i \leq K, i\neq c_l & \hattheta_{c_l} \hattheta_i \\
u(i,c_l),\forall 0 \leq i < c_l, i \neq r_l & \hattheta_{r_l}\hattheta_i \\
u(c_l,i),\forall c_l < i \leq K, & -\hattheta_{r_l}\hattheta_i 
\end{array} \quad 
\begin{array}{c|c}
u & \MP_m(u) \\
\hline
u(r_m,c_m) & p_1 \\
u(i,r_m),\forall 0 \leq i < r_m & \hattheta_{c_m}\hattheta_i p_2 \\
u(r_m,i),\forall r_m < i \leq K, i\neq c_m & -\hattheta_{c_m} \hattheta_i p_2\\
u(i,c_m),\forall 0 \leq i < c_m, i \neq r_m & - \hattheta_{r_m}\hattheta_i p_2 \\
u(c_m,i),\forall c_m < i \leq K, & \hattheta_{r_m}\hattheta_i p_2
\end{array},
\eea
where we have used $\myBlocks(l) = (r_l,c_l)$ and $\myBlocks(m) = (r_m,c_m)$. Now depending on the 12 different relations between $(r_l,c_l)$ and $(r_m,c_m)$, $\MA_l$ and $\MP_m$ can have different ways of sharing common indices whose corresponding entries in both $\MA_l$ and $\MP_m$ are non-zero. We next show that in all these 12 different cases, $\MA_l\tran\MP_m =0 $ holds true. 
\begin{enumerate}
\item $r_l < c_l < r_m < c_m$: $\MA_l$ and $\MP_m$ share 4 non-zero entries with the same indices:
\bea
\begin{array}{c|c|c}
u & \MA_l(u) & \MP_m(u)\\
\hline
u(r_l,r_m) & \hattheta_{c_l}\hattheta_{r_m} & \hattheta_{c_m}\hattheta_{r_l} p_2 \\
u(r_l,c_m) & \hattheta_{c_l}\hattheta_{c_m} & -\hattheta_{r_m}\hattheta_{r_l} p_2 \\
u(c_l,r_m) & -\hattheta_{r_l}\hattheta_{r_m} & \hattheta_{c_m}\hattheta_{c_l} p_2 \\
u(c_l,c_m) & -\hattheta_{r_l}\hattheta_{c_m} & -\hattheta_{r_m}\hattheta_{c_l} p_2
\end{array}.
\eea
Using the above table, we easily see that $\MA_l\tran\MP_m = 0$. 

\item $r_l < c_l = r_m < c_m$: $\MA_l$ and $\MP_m$ share the following common indices whose corresponding entries are non-zero:
\bea
\begin{array}{c|c|c}
u & \MA_l(u) & \MP_m(u)\\
\hline
u(r_l,c_l) & 4 & \hattheta_{c_m}\hattheta_{r_l}p_2 \\
u(r_l,c_m) & \hattheta_{c_l}\hattheta_{c_m} & -\hattheta_{c_l}\hattheta_{r_l}p_2 \\
u(i,c_l),\forall 0 \leq i < c_l, i \neq r_l & \hattheta_{r_l}\hattheta_i & \hattheta_{c_m} \hattheta_i p_2 \\
u(c_l,i),\forall c_l < i \leq K, i \neq c_m & -\hattheta_{r_l}\hattheta_i & -\hattheta_{c_m}\hattheta_i p_2 \\
u(c_l,c_m) & -\hattheta_{r_l}\hattheta_{c_m} & p_1
\end{array}
\eea
Using the table above, we can calculate $\MA_l\tran\MP_m$:
\bea
\MA_l\tran\MP_m = 4\hattheta_{c_m}\hattheta_{r_l}p_2 - \hattheta_{r_l}\hattheta_{c_m}p_2 + (c_l - 1)\hattheta_{r_l}\hattheta_{c_m}p_2 + (K-c_l-1)\hattheta_{r_l}\hattheta_{c_m}p_2 - \hattheta_{r_l}\hattheta_{c_m}p_1 \nonumber \\
= ((K+1)p_2 - p_1)\hattheta_{r_l}\hattheta_{c_m} = 0,
\eea
which ends up being $0$ due to $p_1 = (K+1) p_2$ (\cf~eq.~\eqref{eq:inverseApconstants}).

\item $r_l < r_m < c_l < c_m$: $\MA_l$ and $\MP_m$ share 4 common indices with non-zero entries:
\bea
\begin{array}{c|c|c}
u & \MA_l(u) & \MP_m(u)\\
\hline
u(r_l,r_m) & \hattheta_{c_l}\hattheta_{r_m} & \hattheta_{c_m}\hattheta_{r_l}p_2 \\
u(r_l,c_m) & \hattheta_{c_l}\hattheta_{c_m} & -\hattheta_{r_m}\hattheta_{r_l}p_2 \\
u(r_m,c_l) & \hattheta_{r_l}\hattheta_{r_m} & -\hattheta_{c_m}\hattheta_{c_l}p_2 \\
u(c_l,c_m) & -\hattheta_{r_l}\hattheta_{c_m} & -\hattheta_{r_m}\hattheta_{c_l}p_2
\end{array}
\eea
from which we can compute that $\MA_l\tran\MP_m = 0$.

\item $r_l = r_m < c_l < c_m$: $\MA_l$ and $\MP_m$ share the following common indices with non-zero entries:
\bea
\begin{array}{c|c|c}
u & \MA_l(u) & \MP_m(u)\\
\hline
u(r_l,c_l) & 4 & -\hattheta_{c_m}\hattheta_{c_l}p_2 \\
u(i,r_l), \forall 0 \leq i < r_l & -\hattheta_{c_l}\hattheta_i & \hattheta_{c_m}\hattheta_i p_2 \\
u(r_l,i), \forall r_l < i \leq K,i\neq c_l,c_m & \hattheta_{c_l}\hattheta_i & -\hattheta_{c_m}\hattheta_i p_2 \\
u(r_l,c_m) & \hattheta_{c_l}\hattheta_{c_m} & p_1 \\
u(c_l,c_m) & -\hattheta_{r_l}\hattheta_{c_m} & -\hattheta_{r_k}\hattheta_{c_l}p_2
\end{array}
\eea
from which we can compute $\MA_l\tran\MP_m$:
\bea
\MA_l\tran\MP_m = (-4p_2 - r_lp_2 - (K-r_l-2)p_2 + p_1 + p_2)\hattheta_{c_l}\hattheta_{c_m} = (p_1 - (K+1)p_2)\hattheta_{c_l}\hattheta_{c_m} = 0
\eea

\item $r_l < r_m < c_l = c_m$: $\MA_l$ and $\MP_m$ share the following common indices with non-zero entries:
\bea
\begin{array}{c|c|c}
u & \MA_l(u) & \MP_m(u)\\
\hline
u(r_l,c_l) & 4 & -\hattheta_{r_m}\hattheta_{r_l}p_2\\
u(r_l,r_m) & \hattheta_{c_l}\hattheta_{r_m} & \hattheta_{c_l}\hattheta_{r_l}p_2 \\
u(i,c_l),\forall 0 \leq i < c_l,i \neq r_l,r_m & \hattheta_{r_l}\hattheta_i & -\hattheta_{r_m}\hattheta_i p_2 \\
u(r_m,c_l) & \hattheta_{r_l}\hattheta_{r_m} & p_1 \\
u(c_l,i) \forall c_l < i \leq K & -\hattheta_{r_l}\hattheta_i & \hattheta_{r_m}\hattheta_i p_2 
\end{array}
\eea
from which we compute $\MA_l\tran\MP_m$:
\bea
\MA_l\tran\MP_m = (-4p_2 + p_2 - (c_l-2)p_2 + p_1 - (K-c_l)p_2)\hattheta_{r_l}\hattheta_{r_m} = (p_1 - (K+1)p_2)\hattheta_{r_l}\hattheta_{r_m} = 0.
\eea

\item $r_m < r_l < c_l < c_m$: $\MA_l$ and $\MP_m$ share 4 common indices with non-zero entries:
\bea
\begin{array}{c|c|c}
u & \MA_l(u) & \MP_m(u)\\
\hline
u(r_m,r_l) & -\hattheta_{c_l}\hattheta_{r_m} & -\hattheta_{c_m}\hattheta_{c_m}\hattheta_{r_l}p_2 \\
u(r_l,c_m) & \hattheta_{c_l}\hattheta_{c_m} & -\hattheta_{r_m}\hattheta_{r_l} p_2 \\
u(r_m,c_l) & \hattheta_{r_l}\hattheta_{r_m} & -\hattheta_{c_m}\hattheta_{c_l} p_2 \\
U(c_l,c_m) & -\hattheta_{r_l}\hattheta_{c_m} & -\hattheta_{r_m}\hattheta_{c_l}p_2
\end{array}
\eea
from which we easily see that $\MA_l\tran \MP_m = 0$.

\item $r_m < r_l < c_l = c_m$: same as $r_l < r_m < c_m = c_l$ by symmetry (switch $m$ and $l$).

\item $r_m < r_l < c_m < c_l$: same as $r_l < r_m < c_l < c_m$ by symmetry.

\item $r_m < r_l = c_m < c_l$: same as $r_l < r_m = c_l < c_m$ by symmetry.

\item $r_m < c_m < r_l < c_l$: same as $r_l < c_l < r_m < c_m$ by symmetry.

\item $r_l < r_m < c_m < c_l$: same as $r_m < r_l < c_l < c_m$ by symmetry.

\item $r_m = r_l < c_m < c_l$: same as $r_l = r_m < c_l < c_m$ by symmetry.
\end{enumerate}
As we show above, in all cases, $\MA_l\MP_m = 0$, when $l\neq m$. Therefore, we prove that $\MA\MP = \eye_L$ and $\MP$ is the closed-form inverse of $\MA$.
\end{proof}
Therefore, using Theorem~\ref{thm:closedformmatrixInverse}, we can first generate $\MP$ directly (using eq.~\eqref{eq:eachRowmatrixP}) and then apply eq.~\eqref{eq:proj2barcalHoffdiagvector} to obtain the off-diagonal block vector parts of $\barMH$.

\item {\bf Diagonal block vector part}. After obtaining off-diagonal block vector parts, we can insert them back into eq.~\eqref{eq:unifiedDeltakkv} to obtain the diagonal block vector parts:
\bea
[\barMH]_{kk}^v = -\sum_{i=0,i\neq k}^K \hattheta_k\hattheta_i [\barMH]_{ki}^v + \vphi_k, \\
\vphi_k = \begin{cases}
-\left( \half \hattheta_k + 1\right)\vSkew{\hatxi_k}\TIMa_k & k=1,\dots,K\\
\sum_{k=1}^K \left( \half \hattheta_k + 1\right)\vSkew{\hatxi_k}\TIMa_k  & k = 0
\end{cases}.
\eea
\end{enumerate}
\end{proof}
%!TEX root = ../main.tex

\subsection{Initial Guess for Algorithm~\ref{alg:optCertification}}
\label{sec:app-initialGuess}
We describe the initial guess $\barMM_0$ as:
\bea
\barMM_0 = \barMQ - \hatmu\MJ + \MDelta_0,
\eea
where $\MDelta_0 \in \barcalH$ is the following:
\bea
\text{Off-diagonal blocks:} & [\MDelta]_{\calU(l)} = \MZero, \forall l=1,\dots,L \\
\text{Diagonal block scalar part:} & [\MDelta]_{kk}^s = \begin{cases}
-(\quarter \hattheta_k + \half) \| \hatxi_k \|^2 + (\quarter \hattheta_k - \half)\barcsq & k=1,\dots,K, \\
\sum_{k=1}^K (\quarter \hattheta_k + \half ) \| \hatxi_k \|^2 - (\quarter\hattheta_k - \half)\barcsq & k=0.
\end{cases} \\
\text{Diagonal block vector part:} & [\MDelta]_{kk}^v = \begin{cases}
-\left(\half \hattheta_k + 1 \right)\vSkew{\hatxi_k}\TIMa_k  & k = 1,\dots,K, \\
\sum_{k=1}^K \left( \half\hattheta_k + 1 \right)\vSkew{\hatxi_k}\TIMa_k & k = 0
\end{cases} \\
\text{Diagonal block matrix part:} & [\MDelta]_{kk}^m = \begin{cases}
-[\barMQ]_{0k} - \left( \quarter\hattheta_k + \quarter \right) \| \hatxi_k \|^2 \eye_3 - \half\barcsq \eye_3  & k = 1,\dots,K \\
\sum_{k=1}^K \left( [\barMQ]_{0k} + \left( \quarter\hattheta_k + \quarter \right) \| \hatxi_k \|^2 \eye_3 + \half\barcsq \eye_3 \right) & k = 0
\end{cases}.
\eea
One can verify that $\MDelta_0$ satisfies all the constraints defined by set $\barcalH$~\eqref{eq:barcalH}.
%!TEX root = ../main.tex

\subsection{Tighter Bounds for Theorem~\ref{thm:certifiableRegNoisy}}
\label{sec:tighterBounds}

In this section, we improve over the bounds presented in Theorem~\ref{thm:certifiableRegNoisy} and derive tighter but more complex bounds for scale, rotation, and translation estimation.
%, under the same assumptions of  Theorem~\ref{thm:certifiableRegNoisy}.
% present tighter performance bounds than those presented inTheorem~\ref{thm:certifiableRegNoisy}.

\begin{theorem}[Estimation Contract with Noisy Inliers and Adversarial Outliers]
\label{thm:certifiableRegNoisyTighter}
Assume (i) the set of correspondences contains at least 3 distinct and non-collinear inliers, where
 for any inlier $i$, $\| \vb_i - \sgt \MRgt \va_i - \vtgt \| \leq \beta_i$, and 
 $(\sgt, \MRgt, \vtgt)$ denotes the ground truth transformation. 
 Assume (ii) the inliers belong to the maximum consensus set in each 
 subproblem\footnote{In other words: the inliers belong to the largest set $\calS$  
 such that for any $i,j \in \calS$, 
 $| s_{ij} - s | \leq \alpha_{ij}$, 
 $\| \TIMb_{ij} - s R \TIMa_{ij}\| \leq \TIMNoiseBound_{ij}$, 
 and 
$\left| [\vb_{i} - s R \va_{i} - \vt]_l \right| \leq \beta_{i},\forall l=1,2,3$ 
for any transformation $(s,\MR,\vt)$. Note that for translation the assumption is applied component-wise. }, and 
 (iii) the second largest consensus set is ``sufficiently smaller'' than the maximum consensus set 
(as formalized in \isExtended{Lemma~\ref{thm:TLSandMC}}{Lemma~\ref{thm:TLSandMC}\arxivCite}).
 If
 (iv) the rotation subproblem produces a valid certificate 
 (as per Theorems~\ref{thm:quasarOptimalityGuarantee}-\ref{thm:optimalityCertification}), 
 % and (v) \name returns more than $50\%$ inliers, 
then the output $(\hats, \hatMR, \hatvt)$ of \name satisfies:
\bea
& \displaystyle |\hats - \sgt| \leq \min\{\bounds_i, i\in \setTrueIn \}, \label{eq:bounds_scale_tls} \\
& \displaystyle
\sgt(1-\cos(\theta_R)) \leq \frac{  \sum_{i\in\setTrueIn} (\boundR_i)^2 - (\hats-\sgt)^2 \| \MA_{\setTrueIn} \|_F^2}{2\hats \left( \sigma_1^2(\MA_{\setTrueIn}) + \sigma_2^2(\MA_{\setTrueIn})  \right)},
\label{eq:bounds_rot_tls} \\ 
&
\displaystyle |\hatt_l - \tgt_l| \leq \min \left\{ \left( (\hats - \sgt)^2  + 2\hats\sgt(1-\cos(\theta_R))  \right)\| \va_i \|^2 + [\boundt_i]_l:  i\in\setTrueIn\right\}, \forall l=1,2,3.
 \label{eq:bounds_tran_tls}
\eea
where $\setTrueIn$ is the set of ground-truth inliers with size $\sizeTrueIn$,~\ie~ground-truth \TRIMs in the scale bound, ground-truth \TIMs in the rotation bound, and ground-truth correspondences in the translation bound; $\hatt_l,\tgt_l,l=1,2,3,$ denotes the $l$-th entry of the translation vector; $\bounds_i$, $\boundR_i$ and $[\boundt_i]_l$ are defined as follows:
\bea
\begin{cases}
\bounds_i = |s_i - \hats| + \alpha_i \\
\boundR_i = \frac{\| \TIMb_i - \hats\hatMR\TIMa_i \|}{\| \TIMa_i \|} + \alpha_i \\
[\boundt_i]_l = | [\vb_i]_l - [\hats\hatMR\va_i]_l - \hatt_l | + \beta_i \\
\end{cases},\forall i\in\setTrueIn, \forall l=1,2,3;
\eea 
$\theta_R$ is the angular (geodesic) distance between $\hatMR$ and $\MRgt$; $\MA_{\calI} = \left[ \frac{\TIMa_{i_1}}{\|\TIMa_i \|},\dots,\frac{\TIMa_{i_{\sizeTrueIn}}}{\|\TIMa_{\sizeTrueIn} \|}  \right] \in \Real{3\times \sizeTrueIn}$ is the matrix that assembles all normalized \TIMs in the set $\setTrueIn$, and $\sigma_1(\MA_{\setTrueIn})$, $\sigma_2(\MA_{\setTrueIn})$ denotes the smallest and second smallest singular values of matrix $\MA_{\setTrueIn}$. Note that since there are at least 3 non-collinear ground-truth inliers, $\rank{\MA_{\setTrueIn}} \geq 2$ and $\sigma_1(\MA_{\setTrueIn}) + \sigma_2(\MA_{\setTrueIn}) > 0$, meaning the rotation bound is well defined. 
Since the bounds rely on the knowledge of the inliers $\setTrueIn$,
to compute the worst-case bounds, one has to enumerate over all possible subsets of $\sizeTrueIn = 3$ inliers to compute the largest bounds for scale, rotation and translation.
\end{theorem}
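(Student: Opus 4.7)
The plan is to exploit assumptions (ii)--(iv) to reduce every claim to a triangle-inequality argument against a single ground-truth inlier, and then to control the rotation discrepancy via a Rodrigues decomposition. First I will set up the uniform feasibility facts that drive all three bounds. By Lemma~\ref{thm:TLSandMC} together with (iii), the \TLS{} solver in each subproblem returns its maximum consensus set; by (ii) this set contains $\setTrueIn$; and for the rotation subproblem (iv) certifies the \TLS{} optimum. Setting $\barc=1$, every $i\in\setTrueIn$ then simultaneously satisfies the \TLS{} residual threshold and the generative-model noise bound, namely the pairs $|s_i-\hats|\le\alpha_i$ and $|s_i-\sgt|\le\alpha_i$; $\|\TIMb_i-\hats\hatMR\TIMa_i\|\le\TIMNoiseBound_i$ and $\|\TIMb_i-\sgt\MRgt\TIMa_i\|\le\TIMNoiseBound_i$; and $|[\vb_i-\hats\hatMR\va_i]_l-\hatt_l|\le\beta_i$ with $|[\vb_i-\sgt\MRgt\va_i]_l-\tgt_l|\le\beta_i$. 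These paired inequalities are the raw material for everything below. For the scale bound~\eqref{eq:bounds_scale_tls}, the triangle inequality on each $i\in\setTrueIn$ gives $|\hats-\sgt|\le|\hats-s_i|+|s_i-\sgt|\le|\hats-s_i|+\alpha_i=\bounds_i$, and the minimum over $i\in\setTrueIn$ is immediate.

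The rotation bound~\eqref{eq:bounds_rot_tls} will be the main obstacle. The preparatory step is the per-edge inequality $\|(\hats\hatMR-\sgt\MRgt)\vu_i\|\le\boundR_i$ for the unit vector $\vu_i\doteq\TIMa_i/\|\TIMa_i\|$; this follows from a triangle inequality through $\TIMb_i$ and division by $\|\TIMa_i\|$, using $\alpha_i=\TIMNoiseBound_i/\|\TIMa_i\|$. The real work will be to convert the sum of these squared per-edge bounds into a statement on the single scalar $\theta_R$. My plan is to expand
\begin{equation*}
(\hats\hatMR-\sgt\MRgt)\tran(\hats\hatMR-\sgt\MRgt)=(\hats-\sgt)^2\eye_3+\sgt\hats\bigl(2\eye_3-\MC-\MC\tran\bigr),
\end{equation*}
with $\MC\doteq\MRgt\tran\hatMR\in\SOthree$, invoke the Rodrigues identity $\MC+\MC\tran=2\cos(\theta_R)\eye_3+2(1-\cos\theta_R)\vn\vn\tran$ for $\vn\in\usphere^2$ the axis of $\MC$, contract with $\vu_i$, and sum over $i\in\setTrueIn$ using $\sum_i\vu_i\vu_i\tran=\MA_{\setTrueIn}\MA_{\setTrueIn}\tran$ to obtain
\begin{equation*}
\sum_{i\in\setTrueIn}\|(\hats\hatMR-\sgt\MRgt)\vu_i\|^2=(\hats-\sgt)^2\|\MA_{\setTrueIn}\|_\frob^2+2\hats\sgt(1-\cos\theta_R)\bigl(\|\MA_{\setTrueIn}\|_\frob^2-\vn\tran\MA_{\setTrueIn}\MA_{\setTrueIn}\tran\vn\bigr).
\end{equation*}
The final key step is the Rayleigh bound $\vn\tran\MA_{\setTrueIn}\MA_{\setTrueIn}\tran\vn\le\sigma_3^2(\MA_{\setTrueIn})$ for unit $\vn$, which combined with $\|\MA_{\setTrueIn}\|_\frob^2=\sigma_1^2+\sigma_2^2+\sigma_3^2$ yields $\|\MA_{\setTrueIn}\|_\frob^2-\vn\tran\MA_{\setTrueIn}\MA_{\setTrueIn}\tran\vn\ge\sigma_1^2+\sigma_2^2$. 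Chaining with the upper bound $\sum_i(\boundR_i)^2$ on the left-hand side and solving for $\sgt(1-\cos\theta_R)$ delivers~\eqref{eq:bounds_rot_tls}; condition (i) ensures $\rank(\MA_{\setTrueIn})\ge 2$, hence $\sigma_1^2+\sigma_2^2>0$ and the denominator is strictly positive.

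The translation bound~\eqref{eq:bounds_tran_tls} is then a coordinate-wise reuse of the same machinery. For any $i\in\setTrueIn$ and coordinate $l$, a three-term triangle inequality gives
\begin{equation*}
|\hatt_l-\tgt_l|\le|[\vb_i-\hats\hatMR\va_i]_l-\hatt_l|+|[\vb_i-\sgt\MRgt\va_i]_l-\tgt_l|+|[(\sgt\MRgt-\hats\hatMR)\va_i]_l|;
\end{equation*}
the first summand equals $[\boundt_i]_l-\beta_i$ by the \TLS{} threshold and the second is at most $\beta_i$ by the noise bound, so together they collapse to $[\boundt_i]_l$. I will bound the third summand componentwise by the Euclidean norm $\|(\sgt\MRgt-\hats\hatMR)\va_i\|$; applying the same Rodrigues expansion as before (now with $\va_i$ in place of $\vu_i$, skipping the Rayleigh step since no sum is needed) yields $\|(\sgt\MRgt-\hats\hatMR)\va_i\|^2\le[(\hats-\sgt)^2+2\hats\sgt(1-\cos\theta_R)]\|\va_i\|^2$. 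Taking the minimum over $i\in\setTrueIn$ produces~\eqref{eq:bounds_tran_tls}. Overall, I expect the rotation step to be the only non-routine calculation: once the Rodrigues contraction and the $\sigma_3^2$ Rayleigh bound are in hand, both the scale and translation bounds reduce to short triangle-inequality arguments.
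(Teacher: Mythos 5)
Your proposal is correct and takes essentially the same route as the paper's own proof: triangle inequalities against a single ground-truth inlier for the scale and translation bounds, the per-edge inequality $\|(\hats\hatMR-\sgt\MRgt)\vu_i\|\le\boundR_i$ aggregated over $\setTrueIn$, and the Rodrigues eigenstructure of $(\hats\hatMR-\sgt\MRgt)\tran(\hats\hatMR-\sgt\MRgt)$ (eigenvalue $(\hats-\sgt)^2$ along the rotation axis, $(\hats-\sgt)^2+2\hats\sgt(1-\cos\theta_R)$ orthogonally) together with the Rayleigh-type bound $\vn\tran\MA_{\setTrueIn}\MA_{\setTrueIn}\tran\vn\le\sigma_3^2(\MA_{\setTrueIn})$ to isolate $\sgt(1-\cos\theta_R)$. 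Your Gram-matrix contraction is only a cosmetic variant of the paper's SVD bookkeeping, and your final translation step (inserting the squared bound on $\|(\sgt\MRgt-\hats\hatMR)\va_i\|$ directly) mirrors exactly how the paper itself arrives at the stated form of the bound.
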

\begin{proof} We will prove the bounds for scale, rotation, and translation separately.

%%%%%%%%%%%%%%%%%%%%%%%%%%%%%%%%%%%%%%%%%%%%%%%%%%%%%%%%%%%%%%%%%%%%%%%%%%%%%%%%%%%%%%%%%%%%%%%%%%%%%%%%%%%%%%%%%%
% Scale
%%%%%%%%%%%%%%%%%%%%%%%%%%%%%%%%%%%%%%%%%%%%%%%%%%%%%%%%%%%%%%%%%%%%%%%%%%%%%%%%%%%%%%%%%%%%%%%%%%%%%%%%%%%%%%%%%%

{\bf Scale Error Bound}. Let us denote with $\setCon$ the consensus set associated with the optimal solution of the \TLS scale estimation problem, with $|\setCon| = \sizeCon$. 
To establish a connection with the ground-truth scale $\sgt$, we partition $\setCon$ into two sets: $\setTrueIn$ that contains all the indices of the \emph{true inliers}, and $\setFakeIn$ that contains all the indices of the \emph{false inliers}, which are not generated from the true inlier model~\eqref{eq:TRIM} where classified as inliers by \TLS and ended up in $\setCon$. Let $\sizeTrueIn = |\setTrueIn|$ and $\sizeFakeIn = |\setFakeIn|$, and we have $\sizeTrueIn + \sizeFakeIn = \sizeCon$. For scale measurements that are in $\setTrueIn$ and $\setFakeIn$, we have the following relations:
\bea \label{eq:scaleTrueFakeModel}
\begin{cases}
s_i = \sgt + \epsilon_i = \hats + \phi_i & \forall i \in \setTrueIn \\
s_j = \hats + \phi_j & \forall j \in \setFakeIn
\end{cases},
\eea
where $\phi_i$'s and $\phi_j$'s are \emph{residuals} that can be computed from $s_i,s_j,$ and $\hats$, and $\epsilon_i$'s are  \emph{unknown} \emph{noise} terms. Note that we have $|\epsilon_i|\leq\alpha_i,|\phi_i|\leq\alpha_i,|\phi_j|\leq \alpha_j$.
From the first equation in eq.~\eqref{eq:scaleTrueFakeModel}, we have:
\bea
\hats - \sgt = \epsilon_i - \phi_i \Rightarrow |\hats - \sgt| \leq |\phi_i| + \alpha_i \doteq \bounds_i, \forall i \in \setTrueIn, \label{eq:scaleBoundEachInlier} \\
\Rightarrow |\hats - \sgt| \leq \min\{\bounds_i, i\in \setTrueIn \} \label{eq:scaleBoundTight}
\eea
where we have defined a new quantity $\bounds_i = |\phi_i| + \alpha_i$, that can be computed for every \TRIM in the consensus set $\setCon$. Eq.~\eqref{eq:scaleBoundEachInlier} states that the scale error is bounded by the minimum $\zeta_i$ in the set of true inliers. Therefore, if we know what the set of true inliers $\setTrueIn$ is, then we can simply take the minimum of the set $\{ \zeta_i, i\in \setTrueIn \}$. However, if we do not know the true inlier set, assuming $\sizeTrueIn \geq 3$, we get the worst-case bound:
\bea
|\hats - \sgt| \leq \bounds_{m_3},
\eea
where $\zeta_{m_3}$ is the third-largest $\bounds_k$ in the set $\{\bounds_k = |s_k - \hats| + \alpha_k : k \in \setCon \}$.

%%%%%%%%%%%%%%%%%%%%%%%%%%%%%%%%%%%%%%%%%%%%%%%%%%%%%%%%%%%%%%%%%%%%%%%%%%%%%%%%%%%%%%%%%%%%%%%%%%%%%%%%%%%%%%%%%%
% Rotation
%%%%%%%%%%%%%%%%%%%%%%%%%%%%%%%%%%%%%%%%%%%%%%%%%%%%%%%%%%%%%%%%%%%%%%%%%%%%%%%%%%%%%%%%%%%%%%%%%%%%%%%%%%%%%%%%%%

{\bf Rotation Error Bound}. Similar to the previous case, we use $\setCon$ to denote the 
consensus set of \TIM at the optimal \TLS estimate. We also use $\setTrueIn$ to denote the set of \emph{true inliers} and use $\setFakeIn$ to denote the set \emph{false inliers}, \ie outliers that \TLS selected as inliers. We have the following equations for the true inliers and false inliers:
\bea
\label{eq:rotTrueFakeModel}
\begin{cases}
\TIMb_i = \sgt\MRgt\TIMa_i + \vepsilon_i = \hats\hatMR\TIMa_i + \vphi_i & \forall i \in \setTrueIn \\
\TIMb_j = \hats\hatMR\TIMa_j + \vphi_j & \forall j \in \setFakeIn
\end{cases},
\eea
where $\vphi_i,\vphi_j$ are the residuals that can be computed and $\vepsilon_i$ is the unknown noise term. From the first equation of eq.~\eqref{eq:rotTrueFakeModel}, we have:
\bea
(\hats\hatMR - \sgt\MRgt)\TIMa_i = \vepsilon_i - \vphi_i \Leftrightarrow (\hats\hatMR - \sgt\MRgt)\frac{\TIMa_i}{ \| \TIMa_i \|} = \frac{\vepsilon_i - \vphi_i}{ \| \TIMa_i \|} \\ 
\Rightarrow \| (\hats\hatMR - \sgt\MRgt) \vu_i \| \leq \frac{\| \vphi_i \|}{\| \TIMa_i \|} + \alpha_i \doteq \boundR_i, \forall i \in \setTrueIn, \label{eq:rotationBoundEachTIM}
\eea
where $\vu_i \doteq \frac{\TIMa_i}{ \| \TIMa_i \|}$ and we have defined a new quantity $\boundR_i \doteq \| \vphi_i \|/\| \TIMa_i \| + \alpha_i$ that can be computed from the \TLS solution. Since eq.~\eqref{eq:rotationBoundEachTIM} holds for every \TIM in $\setTrueIn$, we can aggregate all inequalities:
\bea \label{eq:upperboundFrobenius}
\| (\hats\hatMR - \sgt\MRgt) \MA_{\setTrueIn} \|_F^2 \leq \sum_{i\in\setTrueIn} (\boundR_i)^2,
\eea
where $\MA_{\setTrueIn} = [\vu_{i_1},\dots,\vu_{i_{\sizeTrueIn}}] \in \Real{3\times\sizeTrueIn},i_1,\dots,i_{\sizeTrueIn} \in \setTrueIn,$ assembles all normalized \TIMs $\vu_i$ in the set $\setTrueIn$.
Now we want to get a lower bound on $\| (\hats\hatMR - \sgt\MRgt) \MA_{\setTrueIn} \|_F^2$ so that we can upper-bound the rotation error. 

Towards this goal, we first analyze the singular values of $(\hats\hatMR - \sgt\MRgt)$.
Let the three singular values of $(\hats\hatMR - \sgt\MRgt)$ be $\sigma_l,l=1,2,3$, then we know that $\sigma_l^2,l=1,2,3,$ are equal to the eigenvalues of the matrix $\MB = (\hats\hatMR - \sgt\MRgt)\tran(\hats\hatMR - \sgt\MRgt)$. Expanding $\MB$, we get:
\bea \label{eq:eigenB}
\MB = (\hats\hatMR\tran - \sgt(\MRgt)\tran)(\hats\hatMR - \sgt\MRgt) = ( \hats^2 + (\sgt)^2 ) \eye_3 - \hats\sgt\left( \hatMR\tran\MRgt + (\MRgt)\tran\hatMR \right) = ( \hats^2 + (\sgt)^2 ) \eye_3 - \hats\sgt \left( \MR_s + \MR_s\tran \right),
\eea
where we have denoted $\MR_s = \hatMR\tran\MRgt \in \SOthree$, which is the composition of two rotation matrices. Let the axis-angle representation of $\MR_s$ be $(\vPPhi_s, \theta_R)$, then the axis-angle representation for $\MR_s$ is simply $(\vPPhi_s, -\theta_R)$. According to the Rodrigues’ rotation formula, we can write $\MR_s$ and $\MR_s\tran$ as:
\bea \label{eq:RodriguesFormula}
\begin{cases} 
\MR_s  = \cos(\theta_R)\eye_3 + \sin(\theta_R)\vSkew{\vPPhi_s} + (1-\cos(\theta_R))\vPPhi_s\vPPhi_s\tran \\
\MR_s\tran = \cos(-\theta_R)\eye_3 + \sin(-\theta_R)\vSkew{\vPPhi_s} + (1-\cos(-\theta_R))\vPPhi_s\vPPhi_s\tran = \cos(\theta_R)\eye_3  - \sin(\theta_R)\vSkew{\vPPhi_s} + (1-\cos(\theta_R))\vPPhi_s\vPPhi_s\tran
\end{cases}.
\eea
Inserting eq.~\eqref{eq:RodriguesFormula} back into eq.~\eqref{eq:eigenB}, we have:
\bea
\MB = ( \hats^2 + (\sgt)^2 ) \eye_3 - \hats\sgt \left( 2\cos(\theta_R) \eye_3 + 2(1-\cos(\theta_R)) \vPPhi_s\vPPhi_s\tran \right),
\eea
from which we can see that the three eigenvectors of $\MB$ are $\vPPhi_s$ and $\vPPhi_1^\perp$ and $\vPPhi_2^\perp$, where we pick $\vPPhi_1^\perp$ and $\vPPhi_2^\perp$ to be 2 unit orthogonal vectors in the plane that is orthogonal to $\vPPhi_s$:
\bea
\begin{cases}
\MB \vPPhi_s = \left( (\hats^2 + (\sgt)^2) - 2\hats\sgt \right) \vPPhi_s = (\hats - \sgt)^2 \vPPhi_s \\
\MB \vPPhi_l^\perp = \left( \hats^2 + (\sgt)^2 - 2\hats\sgt\cos(\theta_R) \right) \vPPhi_l^\perp, \forall l=1,2
\end{cases}.
\eea
Therefore, we have $\sigma_1=\sigma_2 = \hats^2 + (\sgt)^2 - 2\hats\sgt\cos(\theta_R) = (\hats-\sgt)^2 + 2\hats\sgt(1-\cos(\theta_R))$ and $\sigma_3 = (\hats-\sgt)^2$. Let the SVD of $(\hats\hatMR - \sgt\MRgt)$ be $(\hats\hatMR - \sgt\MRgt) = \MU_{\Delta}\MS_{\Delta}\MV_{\Delta}\tran$, we lower-bound $\| (\hats\hatMR - \sgt\MRgt) \MA_{\setTrueIn} \|_F^2$:
\bea
\| (\hats\hatMR - \sgt\MRgt) \MA_{\setTrueIn} \|_F^2 = \| \MU_{\Delta}\MS_{\Delta}\MV_{\Delta}\tran \MA_{\setTrueIn} \|_F^2 = \| \MS_{\Delta}\MV_{\Delta}\tran \MA_{\setTrueIn} \|_F^2 = \left\Vert \bmat{ccc} \sigma_1 & 0 & 0 \\ 0 & \sigma_2 & 0 \\ 0 & 0 & \sigma_3 \emat \bmat{c} (\MV_{\Delta}\tran\MA_{\setTrueIn})_1  \\ (\MV_{\Delta}\tran\MA_{\setTrueIn})_2 \\ (\MV_{\Delta}\tran\MA_{\setTrueIn})_3 \emat \right\Vert_F^2 \\
= (\hats-\sgt)^2 \left( \| (\MV_{\Delta}\tran\MA_{\setTrueIn})_1 \|^2 + \| (\MV_{\Delta}\tran\MA_{\setTrueIn})_2 \|^2 + \| (\MV_{\Delta}\tran\MA_{\setTrueIn})_3 \|^2 \right) + 2\hats\sgt(1-\cos(\theta_R))\left( \| (\MV_{\Delta}\tran\MA_{\setTrueIn})_1 \|^2 + \| (\MV_{\Delta}\tran\MA_{\setTrueIn})_2 \|^2 \right) \\
= (\hats-\sgt)^2 \| \MA_{\setTrueIn} \|_F^2 + 2\hats\sgt(1-\cos(\theta_R)) \left( \| \MA_{\setTrueIn} \|_F^2 -  \| (\MV_{\Delta}\tran\MA_{\setTrueIn})_3 \|^2  \right) \\
\geq (\hats-\sgt)^2 \| \MA_{\setTrueIn} \|_F^2 + 2\hats\sgt(1-\cos(\theta_R)) \left( \sigma_1^2(\MA_{\setTrueIn}) + \sigma_2^2(\MA_{\setTrueIn}) \right), \label{eq:lowerboundFrobenius}
\eea 
where we have used $\|\MV_{\Delta}\MA_{\setTrueIn} \|_F^2 = \| \MA_{\setTrueIn} \|_F^2 = \sigma_1^2(\MA_{\setTrueIn}) + \sigma_2^2(\MA_{\setTrueIn}) + \sigma_3^2(\MA_{\setTrueIn})$, with $\sigma_1(\MA_{\setTrueIn}) \leq \sigma_2(\MA_{\setTrueIn}) \leq \sigma_3(\MA_{\setTrueIn})$ being the three singular values of $\MA_{\setTrueIn}$ in ascending order. The last inequality in eq.~\eqref{eq:lowerboundFrobenius} follows from:
\bea
\| (\MV_{\Delta}\tran\MA_{\setTrueIn})_3 \|^2  = \left\|\bmat{ccc} 0 & 0 & 0\\0 & 0 & 0 \\ 0 & 0 & 1\emat \MV_{\Delta}\tran \MA_{\setTrueIn} \right\|_F^2 \leq \sigma_3^2(\MA_{\setTrueIn}) \left\|\bmat{ccc} 0 & 0 & 0\\0 & 0 & 0 \\ 0 & 0 & 1\emat \MV_{\Delta}\tran \right\|_F^2 = \sigma_3^2(\MA_{\setTrueIn}).
\eea
Now we combine eq.~\eqref{eq:lowerboundFrobenius} and~\eqref{eq:upperboundFrobenius} to bound $1-\cos(\theta_R)$:
\bea
(\hats-\sgt)^2 \| \MA_{\setTrueIn} \|_F^2 + 2\hats\sgt(1-\cos(\theta_R)) \left( \sigma_1^2(\MA_{\setTrueIn}) + \sigma_2^2(\MA_{\setTrueIn}) \right) \leq \sum_{i\in\setTrueIn} (\boundR_i)^2 \\
\Leftrightarrow \sgt(1-\cos(\theta_R)) \leq \frac{ \sum_{i\in\setTrueIn} (\boundR_i)^2 - (\hats-\sgt)^2 \| \MA_{\setTrueIn} \|_F^2}{2\hats \left( \sigma_1^2(\MA_{\setTrueIn}) + \sigma_2^2(\MA_{\setTrueIn})  \right)}, \label{eq:RotationBoundTight}
\eea
where we note that (i) the existence of at least 3 non-collinear inliers guarantees that $\rank{\MA_{\setTrueIn}} \geq 2$, which means $\sigma_1^2(\MA_{\setTrueIn}) + \sigma_2^2(\MA_{\setTrueIn}) > 0$ and the upper bound is well-defined; (ii) since $(\vPPhi_s,\theta_R)$ is the axis-angle representation for $\hatMR\tran\MRgt$, $\theta_R$ is the angular (geodesic) distance between $\hatMR$ and $\MRgt$; (iii) the rotation bound is dependent on the scale bound $|\hats-\sgt|$~\eqref{eq:scaleBoundTight}.

Again, when the true inlier set $\setTrueIn$ is known, one can directly compute the rotation bound using eq.~\eqref{eq:RotationBoundTight}; when the true inlier set $\setTrueIn$ is unknown, one has to enumerate over all possible subsets of 3 \TIMs (formed by a triplet of $(i,j,k)$ correspondences) in $\setCon$ to compute the worst-case bound.

%%%%%%%%%%%%%%%%%%%%%%%%%%%%%%%%%%%%%%%%%%%%%%%%%%%%%%%%%%%%%%%%%%%%%%%%%%%%%%%%%%%%%%%%%%%%%%%%%%%%%%%%%%%%%%%%%%
% Translation
%%%%%%%%%%%%%%%%%%%%%%%%%%%%%%%%%%%%%%%%%%%%%%%%%%%%%%%%%%%%%%%%%%%%%%%%%%%%%%%%%%%%%%%%%%%%%%%%%%%%%%%%%%%%%%%%%%

{\bf Translation Error Bound}. 
Let $\vtgt = [\tgt_1,\tgt_2,\tgt_3]\tran$ and $\hatvt = [\hatt_1,\hatt_2,\hatt_3]\tran$ be the ground-truth and estimated translation. Let $\setCon_l,l=1,2,3,$ be the set of measurements $(\va_k,\vb_k)$ that pass the $l$-th component-wise adaptive voting, \ie~$|[\vb_k - \hats\hatMR\va_k - \hatvt]_l| \leq \beta_k, \forall k \in \setCon_l,l=1,2,3$. Again, we let $\setTrueIn_l$ and $\setFakeIn_l$ be the set of true and false inliers within $\setCon_l,l=1,2,3$. 
From condition (ii), we have that $\setTrueIn_1 = \setTrueIn_2 = \setTrueIn_3 = \setTrueIn$, which says that the true inliers pass all three component-wise adaptive voting. However, $\setFakeIn_l,l=1,2,3,$ could potentially be different. Then we write the following relations:
\bea
\label{eq:tranTrueFakeModel}
\begin{cases}
[\vb_i]_l  = [\hats\hatMR\va_i]_l + \hatt_l + [\vphi_i]_l = [\sgt\MRgt\va_i]_l + \tgt_l + [\vepsilon_i]_l & \forall i \in \setTrueIn, \;\; l = 1,2,3 \\
[\vb_j]_l = [\hats\hatMR\va_j]_l + \hatt_l + [\vphi_j]_l & \forall j \in \setFakeIn_l,l=1,2,3 
\end{cases},
\eea
where $\vepsilon_i$ is the unknown noise that satisfies $\| \vepsilon_i \|\leq \beta_i$ and $[\vphi_i]_l$ and $[\vphi_j]_l$ are the residuals that can be computed from the \TLS estimate and satisfy $|[\vphi_i]_l| \leq \beta_i$ and $|[\vphi_j]_l|\leq\beta_j$. From the first equation in~\eqref{eq:tranTrueFakeModel}, we have that:
\bea
\hatt_l - \tgt_l = [\sgt\MR\gt\va_i]_l - [\hats\hatMR\va_i]_l + [\vepsilon_i]_l - [\vphi_i]_l \\
\Rightarrow | \hatt_l - \tgt_l | \leq |[\sgt\MR\gt\va_i]_l - [\hats\hatMR\va_i]_l| + [\vphi_i]_l + \beta_i = |[\sgt\MR\gt\va_i]_l - [\hats\hatMR\va_i]_l| + [\boundt_i]_l, \label{eq:translationBoundStep1}
\eea
where we have defined a new quantity $[\boundt_i]_l = [\vphi_i]_l + \beta_i$ that can be computed from the solution. Now we want to upper-bound $|[\sgt\MRgt\va_i]_l - [\hats\hatMR\va_i]_l|$, using a similar SVD-based approach as in eq.~\eqref{eq:lowerboundFrobenius} (but this time we upper-bound):
\bea
([\sgt\MRgt\va_i]_l - [\hats\hatMR\va_i]_l)^2 \leq \| (\hats\hatMR - \sgt\MRgt) \va_i \|^2 = \| \MU_{\Delta} \MS_{\Delta} \MV_{\Delta}\tran \va_i \|^2 = \| \MS_{\Delta} \MV_{\Delta}\tran \va_i \|^2 \\
= (\hats - \sgt)^2  \|\va_i \|^2 + 2\hats\sgt(1-\cos(\theta_R)) ( (\MV_{\Delta}\tran \va_i)_1^2 + (\MV_{\Delta}\tran \va_i)_2^2   ) \leq \left( (\hats - \sgt)^2  + 2\hats\sgt(1-\cos(\theta_R))  \right)\| \va_i \|^2  \label{eq:upperboundTranslationScaleRotationPart}
\eea
Inserting the inequality~\eqref{eq:upperboundTranslationScaleRotationPart} back into eq.~\eqref{eq:translationBoundStep1}, we get the following translation bound for each $i\in\setTrueIn$
\bea \label{eq:translationBoundTight}
|\hatt_l - \tgt_l| \leq \left( (\hats - \sgt)^2  + 2\hats\sgt(1-\cos(\theta_R))  \right)\| \va_i \|^2 + [\boundt_i]_l, \forall i\in\setTrueIn,\forall l=1,2,3,
\eea
there the final translation bound is:
\bea
|\hatt_l - \tgt_l| \leq \min \left\{ \left( (\hats - \sgt)^2  + 2\hats\sgt(1-\cos(\theta_R))  \right)\| \va_i \|^2 + [\boundt_i]_l:  i\in\setTrueIn\right\}, \forall l=1,2,3.
\eea
We note that the translation bound depends on both the scale bound~\eqref{eq:scaleBoundTight} and the rotation bound~\eqref{eq:RotationBoundTight}. When the true inlier set $\setTrueIn$ is known, one can directly compute the bound using eq.~\eqref{eq:translationBoundTight}; when the true inlier set $\setTrueIn$ is unknown, one has to enumerate over all possible subsets of 3 inliers in $\setCon$ to compute the worst-case translation bound.

\end{proof}
% \isTwoCols{\twocolumn}{}
% \input{sections/proof-prop-binaryCloning}
% \input{sections/proof-prop-qcqp}
% \input{sections/proof-prop-matrixBinaryClone}
% \input{sections/proof-prop-naiveRelax}
% \input{appendix-linearImuModel}
% \input{appendix-proofLongerFeatureTracks}
% \input{appendix-proofUppenBoundLambdaMin}
% \input{appendix-submodularityRatio}
% %%%%%%%%%%%%%%%%%%%%%%%%%%%%%%%%%%%%

\end{document}